\documentclass{article}
\date{}    

\usepackage{arxiv}
\usepackage{amsmath}
\usepackage[utf8]{inputenc} 
\usepackage[T1]{fontenc}    
\usepackage{url}            
\usepackage{booktabs}       
\usepackage{amsfonts}       
\usepackage{nicefrac}       
\usepackage{microtype}      
\usepackage{lipsum}         
\usepackage{graphicx}
\usepackage{natbib}
\usepackage{subcaption}

\usepackage{multirow}
\usepackage{comment}
\usepackage[table]{xcolor}
\usepackage{algorithm}

\usepackage{amssymb}
\usepackage{mathtools}
\usepackage{amsthm}
\usepackage{enumitem}
\newlist{subquestion}{enumerate}{1}
\setlist[subquestion,1]{label=(\alph*)}

\usepackage{hyperref}
\usepackage[capitalize,noabbrev]{cleveref}
\usepackage{doi}

\usepackage{algpseudocode}
\usepackage{bm}

\makeatletter
       \let\c@theorem\relax       
         \let\c@lemma\relax         
     \let\c@corollary\relax     
   \let\c@proposition\relax   
    \let\c@definition\relax    
        \let\c@remark\relax        
    \let\c@assumption\relax    
\makeatother

\theoremstyle{plain}
\newtheorem{theorem}{Theorem}[section]
\newtheorem{proposition}[theorem]{Proposition}
\newtheorem{lemma}[theorem]{Lemma}
\newtheorem{corollary}[theorem]{Corollary}
\theoremstyle{definition}
\newtheorem{definition}[theorem]{Definition}
\newtheorem{assumption}[theorem]{Assumption}
\theoremstyle{remark}
\newtheorem{remark}[theorem]{Remark}

\newtheorem*{theorem*}{Theorem}

\usepackage[textsize=tiny]{todonotes}

\DeclareMathOperator*{\argmax}{arg\,max}

\newcommand{\R}{\mathbb{R}}
\newcommand{\E}{\mathbb{E}}

\newcommand{\bx}{\mathbf{x}}

\newcommand{\blambda}{\boldsymbol{\lambda}}

\newcommand{\bpsi}{\boldsymbol{\psi}}
\newcommand{\bmu}{\boldsymbol{\mu}}
\newcommand{\bSigma}{\boldsymbol{\Sigma}}
\newcommand{\norm}[1]{\left\|#1\right\|}

\DeclareMathOperator*{\argmin}{arg\,min}

\newcommand{\calH}{\mathcal{H}}

\newcommand{\bomega}{\boldsymbol{\omega}}
\newcommand{\bff}{\mathbf{f}}

\title{An Adversarial Nested Bandit Approach for Neural Contextual bandits}

\newif\ifuniqueAffiliation

\ifuniqueAffiliation 
\author{Ray~Telikani \\
	Data Science Institute\\
	 University of Technology Sydney\\
	NSW, Australia \\
	\texttt{ray.telikani@uts.edu.au} \\
	\And
    Air H.~Gandomi \\
    Data Science Institute\\
    University of Technology Sydney\\
	NSW, Australia \\
	\texttt{Gandomi@uts.edu.au} \\
}
\else
\usepackage{authblk}

\author{Ray~Telikani}
\author{Amir H.~Gandomi}

\affil{Data Science Institute, University of Technology Sydney, NSW, Australia}

\fi

\providecommand{\headeright}{}
\providecommand{\undertitle}{}
\providecommand{\shorttitle}{}

\begin{document}
\maketitle

\begin{abstract}
Neural contextual bandits are vulnerable to adversarial attacks, where subtle perturbations to rewards, actions, or contexts induce suboptimal decisions. 
We introduce \texttt{\textbf{AdvBandit}}, a black-box adaptive attack that formulates context poisoning as a continuous-armed bandit problem, enabling the attacker to jointly learn 
and exploit the victim's evolving policy. 
The attacker requires no access to the victim's internal parameters, reward function, or gradient information; instead, it constructs a surrogate model using a maximum-entropy inverse reinforcement learning module from observed context--action pairs and optimizes perturbations against this surrogate using projected gradient descent.
An upper confidence bound-aware Gaussian process guides arm selection. 
An attack-budget control mechanism is also introduced to limit detection risk and overhead. We provide theoretical guarantees, including sublinear attacker regret and lower bounds on victim regret linear in the number of attacks. Experiments on three real-world datasets (Yelp, MovieLens, and Disin) against various victim contextual bandits demonstrate that our attack model achieves higher cumulative victim regret than state-of-the-art baselines.
\end{abstract}

\keywords{Adversarial Attacks \and Contextual Bandits \and Context Poisoning \and Online Learning}

\section{Introduction}

Neural contextual bandit (NCB) algorithms are a major evolution of multi-armed bandits, where the learner has access to \textit{arm context information} before making a decision and also leverages the representation power of neural networks to handle highly complex and non-linear relationships between context and reward \cite{ban2021ee,wang2024towards}. These algorithms are widely applied in domains such as recommendation systems \cite{kveton2015cascading}, cloud resource allocation \cite{dang2026ksurf}, clinical trials \cite{kuleshov2014algorithms}, and dynamic pricing \cite{tullii2024improved}, and more recently, in large language models (LLMs) such as ChatGPT \cite{achiam2023gpt}, Gemini \cite{team2023gemini}, and DeepSeek \cite{guo2025deepseek}. 

It has been shown that NCBs are vulnerable to adversarial attacks, where an attacker subtly perturbs rewards \cite{garcelon2020adversarial,ding2022robust}, actions \cite{liu2022action}, or contexts to mislead the learner into suboptimal decisions \cite{ma2018data,zeng2025practical,hosseini2025efficient}. In applications, such as human-centric AI, data packets between the AI agent and the user contain the reward signals and action decisions; therefore, an adversary can intercept and modify these data packets to force the contextual bandits system to learn a specific policy \cite{liu2022action}.
In the attack on context, the adversary attacks before the agent pulls an arm; therefore, the context poisoning attack is the most difficult to carry out. Adversarial defenses for linear contextual bandits \cite{he2022nearly} and more recently \cite{qi2024robust} for NCBs.



Inspired by prior bandit-based adversarial attacks on static machine learning models, such as Graph Neural Networks (GNNs) and Convolutional Neural Networks (CNNs)~\cite{ilyas2019prior,wang2022bandits}, which cannot directly be extended to sequential decision-making processes, we introduce a two-player game framework. 
In this setup, the attacker learns an adaptive attack policy in a black-box setting, without access to the victim's rewards, internal parameters, or gradient computations, while the victim learns its decision policy. The attacker observes only contexts and the victim's actions and builds a surrogate model of the victim's behavior from these observations and uses gradient-based optimization on this surrogate (not the victim) to generate perturbations. We make the following contributions:


\begin{enumerate}[leftmargin=*,itemsep=2pt]
    \item We formulate the adversarial attack as a continuous-armed bandit problem over a 3D space $\blambda=(\blambda^{(1)}, \blambda^{(2)}, \blambda^{(3)}) \in \R^3_+$, in which each arm corresponds to three weighting parameters representing (1) attack effectiveness, (2) anomaly detection evasion, and (3) temporal pattern detection evasion, respectively. Due to the lack of reward availability, an adaptive UCB-Aware Maximum Entropy Inverse Reinforcement Learning (MaxEnt IRL \cite{ziebart2008maximum} is employed for reward estimation. We employ GP-UCB~\cite{srinivas2009gaussian} for arm selection, where each raw context is first transformed into a low-level feature vector based on gradient statistics for efficiency. To reduce resource consumption and detection risks, a query selection strategy is introduced to control attack timing and attack budget. Finally, the projected gradient descent (PGD) algorithm \cite{madry2017towards} is utilized to compute the optimal perturbation for the context.
    
    \item \textbf{Theoretical Analysis:} We provide regret guarantees for both the attacker and victim under non-stationary policies. For the attacker, we derive a sublinear cumulative regret bound, ensuring convergence to optimal attack parameters despite the continuous arm space. For the victim, we establish a lower bound on cumulative regret that is linear in the number of attacks, up to sublinear terms from the victim's standard regret. Additionally, we analyze the tracking error of our IRL component under bounded policy drift, showing that periodic retraining achieves low error rates.

    \item \textbf{Experimental Analysis:} We evaluate \texttt{\textbf{AdvBandit}} on three real-world datasets (Yelp, MovieLens, Disin) against five state-of-the-art attacks and five victim NCB algorithms. Experiments demonstrate that \texttt{\textbf{AdvBandit}} achieves 2.8× higher cumulative victim regret for NCBS compared to other attack baselines, with a 1.7-2.5× improvement in target arm pull ratios over the baselines.

\end{enumerate}

\section{Related Works}

\textbf{Neural Contextual Bandits.} NCBs extend classical contextual bandits by leveraging neural networks to model nonlinear reward functions, reducing reliance on domain knowledge \cite{zhou2020neural}. While early work established sublinear regret guarantees, these bounds were later shown to be linear under certain assumptions \cite{deb2023contextual}. Subsequent studies improved theoretical guarantees using kernel-based analyses \cite{kassraie2022neural}. Related extensions include contextual dueling bandits, which rely on preference feedback rather than numeric rewards, with recent neural UCB and Thompson sampling methods achieving sublinear regret \cite{dudik2015contextual,verma2024neural}.
A major challenge in NCBs is accurate uncertainty estimation and the high computational cost of exploration. Variance-aware UCB methods address uncertainty miscalibration by incorporating reward noise variance \cite{di2023variance,zhang2021improved,zhou2022computationally,bui2024variance}, while shallow exploration techniques reduce complexity by restricting exploration to the last network layer \cite{xu2022neural,oh2025neural}. Alternative designs, such as EE-Net \cite{ban2021ee}, decouple exploration and exploitation using dual networks, and multi-facet bandits model rewards across multiple aspects with provable sublinear regret \cite{ban2021multi}. Finally, implicit exploration via reward perturbation has been proposed to avoid explicit parameter-space exploration while maintaining optimal regret rates \cite{jia2022learning}.



\textbf{Adversarial Attacks on Contextual Bandits.}
There have been studies working on tackling adversarial reward corruptions under linear contextual bandit settings for reward perturbation \cite{garcelon2020adversarial,ding2022robust}, action manipulation \cite{liu2022action}, and context perturbation \cite{ma2018data,zeng2025practical,hosseini2025efficient}. In the attack on context, the adversary attacks before the agent pulls an arm; therefore, the context poisoning attack is the most difficult to carry out. Adversarial defenses for linear contextual bandits \cite{he2022nearly} and more recently \cite{qi2024robust} for NCBs.

\textbf{Bandit-based Adversarial Attacks.}
Ilyas et al. \cite{ilyas2019prior} proposed a bandit-based black-box attack for image classifiers. However, their approach does not have a theoretical regret bound and is less efficient due to using multiple gradient estimations in each iteration. Wang et al. \cite{wang2022bandits} used bandit optimization for black-box structure perturbations on Graph neural networks (GNNs), where perturbations are arms, and queries to the GNN provide bandit feedback (loss values). The algorithm achieved sublinear regret $O(\sqrt{N} T^{3/4})$. Unlike these attacks on static machine learning models~\citep{wang2022bandits,ilyas2019prior}, our attack targets the \textbf{sequential decision-making process} where both the victim's policy and the attacker's strategy evolve over time. 


\section{Problem Statement}
\label{sec:Problem}

In a $K$-armed contextual bandit setting with horizon $T$, at each round $t \in [T]$, 
the learner observes a context set consisting of $K$ feature vectors with $d$ dimensions: $\mathcal{X}_t = \{\mathbf{x}_{t,i} \in \mathbb{R}^d \mid i \in [K]\}$. The agent selects an arm $a_t \in [K]$ and receives a reward $r_{t,a_t} = h(\mathbf{x}_{t,a_t}) + \xi_t$, where $h : \mathbb{R}^d \to \mathbb{R}$ is an unknown reward function and $\xi_t$ is zero-mean $\sigma$-sub-Gaussian noise \cite{qi2024robust}. The learner's goal is to minimize cumulative pseudo-regret:
\[
R_T = \sum_{t=1}^T \mathbb{E}\bigl[h(\mathbf{x}_{t,a^*_t}) - h(\mathbf{x}_{t,a_t})\bigr],
\]
where $a^*_t = \argmax_{i \in [K]} h(\mathbf{x}_{t,i})$ 
is the optimal arm at round $t$.

\paragraph{Attack Setting.} We consider a black-box attacker situated between the environment and the learner. The attacker can observe the true context $\mathcal{X}_t = \{\mathbf{x}_{t,i}\}_{i\in[K]}$ and the learner's chosen arm $a_t$, but has no access to the rewards $r_{t,i}$  or the learner's internal parameters $\theta_t$. The attacker perturbs the context of each arm $i \in [K]$ of the learner, presenting a perturbed context $\tilde{x_{t,i}} = \mathbf{x}_{t,i} + \boldsymbol{\delta}_{t,i}$ with $\|\boldsymbol{\delta}_{t,i}\|_\infty \leq \epsilon$, subject to a total attack budget $B\ge\sum_{t=1}^T \sum_{i=1}^K \|\boldsymbol{\delta}_{t,i}\|_\infty$, where $\boldsymbol{\delta}_{t,i}$ is the perturbation vector and $\epsilon$ is the perturbation magnitude. 
The learner selects arm $a_t$ based on the corrupted context $\tilde{\mathbf{x}}_{t,i}$, and thus, the observed reward by the learner is $\hat{r}_{t,a_t} = r_{t,a_t} + \epsilon$, and the arm selection may be suboptimal due to reliance on $\tilde{\mathbf{x}}_{t,i}$ instead of $\mathbf{x}_{t,i}$. The attacker's goal is to hijack the learner's behavior by forcing it to select a suboptimal arm $a^\dagger_t \in [K]$ (a round-dependent target selected by the attacker), where $a^\dagger_t \neq a^*_t$.


\begin{definition}[Bandit-Powered Attack]
\label{def:bandit-attack}
Unlike standard MABs with a finite arm set, inspired by~\cite{wang2022bandits,ilyas2019prior}, we formalize the attack as a \emph{bilevel} problem over a \emph{continuous} arm space $\boldsymbol{\Lambda} = [0,1]^3$. Each arm is a three-parameter vector $\boldsymbol{\lambda} = (\lambda^{(1)}, \lambda^{(2)}, \lambda^{(3)}) \in \boldsymbol{\Lambda}$, where each element governs one axis of the effectiveness-evasion trade-off (the trade-off properties are analyzed in Appendix~\ref{app:lambda}):
$\lambda^{(1)}$: \emph{attack effectiveness}, weight on making the target arm $a^\dagger_t$ appear optimal;
$\lambda^{(2)}$: \emph{statistical evasion}, weight on keeping perturbed contexts close to the benign distribution;
$\lambda^{(3)}$: \emph{temporal evasion}, penalty on abrupt changes between consecutive perturbations. 
%
%
%
%
The attacker solves a bilevel optimization at each round $t$:
\begin{equation}
\label{eq:bilevel}
\boldsymbol{\lambda}_t = \argmin_{\boldsymbol{\lambda} \in \boldsymbol{\Lambda}}\; f\!\bigl(\boldsymbol{\delta}_t^*(\boldsymbol{\lambda}),\; a^\dagger_t\bigr),
\end{equation}
where $f$ is the attacker's feedback measuring attack success (e.g., the learner selected $a^\dagger_t$), and $\boldsymbol{\lambda}_t$ is chosen by the bandit policy. Given the selected $\boldsymbol{\lambda}_t$, the attacker computes the perturbation for each arm $i \in [K]$:
\begin{equation}
\label{eq:inner}
\boldsymbol{\delta}_t^*(\mathbf{x}_{t,i};\, \boldsymbol{\lambda}_t) 
\;=\; \arg\min_{\|\boldsymbol{\delta}\|_\infty \,\leq\, \epsilon}\;
\mathcal{L}\!\bigl(\mathbf{x}_{t,i},\, \boldsymbol{\delta};\, 
\boldsymbol{\lambda}_t\bigr),
\end{equation}
where $\mathcal{L}$ is the weighted attack objective defined in 
Eq.~\eqref{eq:objective}.
\end{definition}

\section{Periodical Attack Model}
\label{sec:algorithm}

Here, we present our bandit attack model. At each round $t$, the attack model observes the true context set $\mathcal{X}_t = \{\mathbf{x}_{t,i}\}_{i\in[K]}$ and evaluates whether it is sufficiently valuable to query. For this reason, we first develop a UCB-Aware MaxEnt IRL \cite{ziebart2008maximum} to estimate the victim's reward from context-action pairs alone (Sec.~\ref{sec:irl}). For an efficient attack, a feature extraction is introduced to discover attack-relevant features (Sec.~\ref{sec:Feature_Extraction}). Then, a query selection strategy chooses the context to be attacked (Sec.~\ref{sec:query_selection}). 
The model then selects $\boldsymbol{\lambda}_t$ using GP-UCB (Sec.~\ref{sec:GPUCB}). Afterwards, the algorithm computes $\boldsymbol{\delta}_{t,a_t}$ using PGD (Sec.~\ref{sec:pgd}). Finally, the model applies the perturbation to the chosen arm's context and receives an attack reward $\rho_t$ measuring the regret incurred by the victim learner under the \emph{true} contexts, reflecting the attacker's progress toward hijacking the victim's arm selection toward 
$a^\dagger$.

\begin{algorithm}[t]
\caption{Adversarial Bandit Attack (\texttt{\textbf{AdvBandit}})}
\small
\label{alg:gpga}
\begin{algorithmic}[1]
\Require Horizon $T$, attack budget $B$, perturbation bound $\epsilon$, window size $W$, IRL retraining interval $\Delta_{\text{IRL}}$

\State Initialize remaining budget $b \gets B$

\State \textcolor{red}{Step 1:} Observe context $\mathbf{x}_t$ and victim's action $a_t$ in the first sliding window $W_0$ and generate $\mathcal{D}_t=\left\{ \left( \mathbf{x}_i, a_i \right) \right\}_{i=1}^{W}$.

\State \textcolor{red}{Step 2:} Train MaxEnt IRL: reward function $\hat{h}_{\boldsymbol{\phi}_r}$ and uncertainty function $\sigma_{\boldsymbol{\phi}_u}$ on $\mathcal{D}_t$
\State Compute gradient statistics $\boldsymbol{\mu}_g$ and $\boldsymbol{\Sigma}_g$ 
\State Extract features  $\psi_1$--$\psi_4$ using $\boldsymbol{\mu}_g$ and $\boldsymbol{\Sigma}_g$ 

\For{$t = W+1$ to $T$}
    \State \textcolor{red}{Step 3:} Select context using MOO (Eq.~\ref{eq:mo_objectives})
    
    \State Compute context weight $v(x_t)$ (Eq.~\ref{eq:scalarization}) and threshold $\tau_v$ (Eq.~\ref{eq:threshold})

    \If{$v(x_t) \geq \tau_v$ \textbf{and} $b > 0$}
        \State \textcolor{red}{Step 4:} Employ GP-UCB to select $\boldsymbol{\lambda}_t$
        
        \State \textcolor{red}{Step 5:} Utilize $\textsc{PGD}(\mathbf{x}_t, \boldsymbol{\lambda}_t)$ to compute $\boldsymbol{\delta}_t$
        
        \State Submit perturbed context $\tilde{\mathbf{x}}_t \gets \mathbf{x}_t + \boldsymbol{\delta}_t$ to victim
        \State Observe victim's action $\tilde{a}_t$ on $\tilde{\mathbf{x}}_t$
        \State $b \gets b - 1$
    \EndIf
    
    \State $\mathcal{D}_t=\left\{ \left( \mathbf{x}_i, a_i \right) \right\}_{i=t-W}^{t}$ (\textcolor{red}{Step 1})
    
    \If{$t \bmod \Delta_{\text{IRL}} = 0$}
        \State Retrain MaxEnt IRL $(\mathcal{D}_t)$ (\textcolor{red}{Step 2})
         
        \State Update gradient statistics $\boldsymbol{\mu}_g$ and $\boldsymbol{\Sigma}_g$
    \EndIf
\EndFor

\State \Return Attack sequence $\{(t_i, \boldsymbol{\lambda}_{t_i}, \boldsymbol{\delta}_{t_i})\}_{i=1}^{B-b}$
\end{algorithmic}
\end{algorithm}

\subsection{Surrogate Modeling}
\label{sec:irl}

Due to the unavailability of the victim's internal parameters, a \emph{UCB-Aware MaxEnt IRL} module \cite{ziebart2008maximum} is developed to jointly estimate the victim's reward and epistemic uncertainty, producing a surrogate policy $\hat{\pi}_{\boldsymbol{\phi}}$ trained on 
observed context-action pairs $(\mathbf{x}_t, a_t)$.
%
Since NCB algorithms are inherently \textbf{non-stationary} (i.e., they continuously update reward estimates and reduce epistemic uncertainty $\sigma_t$), the victim's policy evolves over time. To track this drift, we retrain the MaxEnt IRL module every $\Delta_{\text{IRL}}$ round using a sliding window of recent observations $\mathcal{D}_t = \{(\mathbf{x}_\tau, a_\tau)\}_{\tau=t-W}^{t}$, where $W$ is the window size. This ensures the policy $\hat{\pi}_{\boldsymbol{\phi}}$ reflects the victim's behavior.

\paragraph{Reward and Uncertainty Networks.}
We model the victim's reward function and epistemic uncertainty using a neural network 
with a shared feature backbone and two sets of arm-specific linear heads:
\begin{align}
    \hat{h}_{\boldsymbol{\phi}_r}(\mathbf{x}, a) 
    &= \mathbf{w}_a^\top f_{\boldsymbol{\phi}_s}(\mathbf{x}), \label{eq:reward_net} \\
    \sigma_{\boldsymbol{\phi}_u}(\mathbf{x}, a) 
    &= \operatorname{Softplus}\!\left(\mathbf{v}_a^\top 
    f_{\boldsymbol{\phi}_s}(\mathbf{x})\right), \label{eq:uncertainty_net}
\end{align}
where $f_{\boldsymbol{\phi}_s}: \mathbb{R}^d \to \mathbb{R}^{d_h}$ is a shared feature 
backbone parameterized by $\boldsymbol{\phi}_s$ with hidden dimension $d_h$, 
$\mathbf{w}_a \in \mathbb{R}^{d_h}$ are arm-specific reward heads, and 
$\mathbf{v}_a \in \mathbb{R}^{d_h}$ are arm-specific uncertainty heads with a 
Softplus activation to ensure non-negativity.

\paragraph{UCB-Aware Surrogate Policy.}
The reward and uncertainty estimates are combined into a Q-value that mirrors 
the victim's UCB decision rule:
\begin{equation}
\label{eq:q_value}
    Q(\mathbf{x}, a) 
    = \hat{h}_{\boldsymbol{\phi}_r}(\mathbf{x}, a) 
    + \beta_t \, \sigma_{\boldsymbol{\phi}_u}(\mathbf{x}, a),
\end{equation}
where $\beta_t > 0$ is the exploration coefficient governing the confidence width.

The surrogate policy is obtained by applying a softmax over these Q-values:
\begin{equation}
\label{eq:surrogate_policy}
    \hat{\pi}_{\boldsymbol{\phi}}(a \mid \mathbf{x}) 
    = \frac{\exp\!\big(Q(\mathbf{x}, a) / \tau\big)}
    {\sum_{a'=1}^{K} \exp\!\big(Q(\mathbf{x}, a') / \tau\big)},
\end{equation}
where $\tau > 0$ is a temperature parameter; a smaller $\tau$ makes the policy greedy toward the highest Q-value arm, while a larger $\tau$ yields a more uniform distribution over arms.

\paragraph{Training Objective.}
Following the MaxEnt IRL framework, the full parameter set 
$\boldsymbol{\phi} = (\boldsymbol{\phi}_s, \{\mathbf{w}_a\}, \{\mathbf{v}_a\})$ is trained 
jointly to maximize the log-likelihood of the observed context-action pairs in $\mathcal{D}_t$:
\begin{equation}
\label{eq:irl_objective}
    J(\boldsymbol{\phi}) 
    = \frac{1}{|\mathcal{D}_t|} \sum_{(\mathbf{x}_\tau, a_\tau) \in \mathcal{D}_t} 
    \log \hat{\pi}_{\boldsymbol{\phi}}(a_\tau \mid \mathbf{x}_\tau),
\end{equation}
which encourages the surrogate to assign high probability to the victim's observed 
arm selections given the corresponding contexts (see Appendix~\ref{app:irl_complexity} for the analysis).

\subsection{Context Feature Extraction}
\label{sec:Feature_Extraction}

The high dimensionality of the raw context $x \in \mathbb{R}^d$ (e.g., $d = 100$) degrades GP performance, and also the non-stationary relationship between the raw $\mathbf{x}$ and the perturbation parameters $\boldsymbol{\lambda}$ undermines generalization~\cite{djolonga2013high,xu2024standard}. Therefore, we extract a compact, attack-relevant feature vector $\boldsymbol{\psi}(\mathbf{x}) \in \mathbb{R}^5$ by characterizing the local geometry of the learned reward landscape $\hat{h}_{\boldsymbol{\phi}_r}$ using gradient statistics computed over a recent window of observations $\{\mathbf{x}_i\}_{i=t-W+1}^{t}$.
Specifically, we estimate the mean gradient $\boldsymbol{\mu}_g$ and covariance $\bSigma_g$:
\begin{equation}
\label{eq:mean_grad}
    \boldsymbol{\mu}_g = \frac{1}{N} \sum_{i=1}^{N} \nabla_{\mathbf{x}} \hat{h}_{\boldsymbol{\phi}_r}(\mathbf{x}_i)
\end{equation}
\begin{equation}
\label{eq:Cov_grad}
\bSigma_g=\frac{\sum_{i=1}^{N} \left(\nabla_{\bx} \hat{h}(\bx_i) - \bmu_g\right)\left(\nabla_{\bx} \hat{h}(\bx_i) - \bmu_g\right)^\top}{N-1}
\end{equation}

Given a new context $\mathbf{x}$, we compute five features $\boldsymbol{\psi}(\mathbf{x}) \in \mathbb{R}^5$. The first four are derived from the UCB-aware MaxEnt IRL outputs and the gradient statistics: (1) $\psi_1:$ \textit{Policy Entropy}, the uncertainty in the victim's action selection; (2) $\psi_2$: \textit{Predicted Weight}, the trust level the defense assigns to the input; (3) $\psi_3$: \textit{Mahalanobis Distance}, the distance between the current gradient and the gradient distribution $\mathcal{N}(\boldsymbol{\mu}_g, \boldsymbol{\Sigma}_g)$; and (4) $\psi_4$: \textit{Regret Gap}, the gap between optimal and induced action values. The fifth feature, $\psi_5 = t/T$, captures the relative time within the horizon. A more detailed description of these features is provided in Appendix~\ref{app:feature_extraction}.

\subsection{Query Selection}
\label{sec:query_selection}

We design a query selection to control the attack timing under a limited budget $B < T$. This also reduces computational cost and detection risk. Intuitively, the attacker should target contexts that are simultaneously \emph{likely to succeed}, \emph{high-impact}, and \emph{stealthy}. We formalize this via three per-context objectives:

\begin{equation}
\label{eq:mo_objectives}
    f(\mathbf{x}_t) = \big(\underbrace{P(\mathbf{x}_t)}_{f_1:\;\text{success}},\; \underbrace{\Delta(\mathbf{x}_t)}_{f_2:\;\text{impact}},\; \underbrace{\hat{w}(\mathbf{x}_t)}_{f_3:\;\text{stealth}}\big) \in [0,1]^3,
\end{equation}

where $P(\mathbf{x}_t)$ is the attack success probability based on historical outcomes (Eq.~\ref{eq:success_prob}), $\Delta(\mathbf{x}_t) = \psi_4(\mathbf{x}_t)$ is the regret gap, and $\hat{w}(\mathbf{x}_t) = \psi_2(\mathbf{x}_t)$ is the predicted defense trust level.

We estimate the probability $P(\mathbf{x}_t)$ that a perturbation successfully flips the victim's action using kernel-weighted historical outcomes:
\begin{equation}
\label{eq:success_prob}
P(\mathbf{x}_t) = \frac{\sum_{(\mathbf{x}_s, S_s) \in \mathcal{H}} k_P(\mathbf{x}_t, \mathbf{x}_s) \cdot S_s}{\sum_{(\mathbf{x}_s, \cdot) \in \mathcal{H}} k_P(\mathbf{x}_t, \mathbf{x}_s)},
\end{equation}
where $\mathcal{H}$ is the historical attack record, $S_s \in \{0,1\}$ indicates success, and $k_P(\cdot,\cdot)$ is a similarity kernel.

These three objectives conflict: high-impact contexts (large $\Delta$) may be harder to attack stealthily (low $\hat{w}$). We aggregate them into a single score via a budget-adaptive scalarization:
\begin{equation}
\label{eq:scalarization}
v(\mathbf{x}_t) = \min_{j \in \{1,2,3\}} \omega_j(b, t) \cdot f_j(\mathbf{x}_t),
\end{equation}
where $\omega_1 = 1$, $\omega_2 = 1 + \gamma \cdot \frac{b}{T-t}$, and $\omega_3 = 1 + \eta \cdot (1 - \frac{b}{T-t})$ are adaptive weights that shift emphasis from impact (when budget is plentiful) to stealth (when budget is scarce), and $b$ is the remaining budget.

\textbf{Query Selection Rule.} Context $\mathbf{x}_t$ is chosen for attack if its score $v(\mathbf{x}_t)$ exceeds a budget-adaptive threshold $\tau_v$:

\begin{equation}
\label{eq:threshold}
\tau_v(b, T{-}t) \;=\; Q_{1 - b/(T-t)}\!\left(\{v(\mathbf{x}_s)\}_{s < t}\right),
\end{equation}
where $Q_p(\cdot)$ denotes the $p$-th quantile of scores observed so far. This rule's efficiency is analyzed theoretically in Appendix~\ref{app:query_selection} and empirically in Table~\ref{tab:query_ablation}.

\subsection{Attacker arm Selection}
\label{sec:GPUCB}

The continuous arm space $\boldsymbol{\Lambda} = [0,1]^3$ makes standard finite-armed strategies such as UCB1~\cite{auer2002finite} and Thompson Sampling~\cite{thompson1933likelihood} directly inapplicable. We therefore employ GP-UCB~\cite{srinivas2009gaussian}, which models the attacker's reward function over the continuous space $\boldsymbol{\Lambda}$ with a Gaussian process prior. GP-UCB also guarantees sublinear cumulative regret in continuous domains~\cite{djolonga2013high}, enables sample-efficient learning of smooth reward surfaces via kernel-based generalization~\cite{iwazaki2025gaussian,iwazaki2025improved}, and naturally supports contextual optimization~\cite{li2025exploiting,sandberg2025bayesian}.

A \emph{contextual state} $\mathbf{s}_t$ stores the attacker's information at round~$t$, comprising (i)~the extracted context features of the target arm $\boldsymbol{\psi}(\mathbf{x}_{t,a^\dagger_t})$ and the optimal arm $\boldsymbol{\psi}(\mathbf{x}_{t,a^*_t})$, (ii)~the estimated reward gap between them under the surrogate model $\hat{\Delta}_t$, and (iii)~the running attack success rate $\bar{r}_t$:

\begin{equation}
    \mathbf{s}_t = \bigl[\boldsymbol{\psi}(\mathbf{x}_{t,a^\dagger_t}),\; \boldsymbol{\psi}(\mathbf{x}_{t,a^*_t}),\; \hat{\Delta}_t,\; \bar{r}_t\bigr]
    \label{eq:contextual_state}
\end{equation}
At each attacked round~$t$, the attacker selects $\boldsymbol{\lambda}_t$ by maximizing the UCB acquisition function over $\boldsymbol{\Lambda}$: 
\begin{equation}
    \boldsymbol{\lambda}_t 
    = \argmax_{\boldsymbol{\lambda} \in [0,1]^3}\;
    \mu_{t-1}(\mathbf{s}_t, \boldsymbol{\lambda}) 
    + \beta^{\textsc{gp}}_t \cdot \sigma_{t-1}(\mathbf{s}_t, \boldsymbol{\lambda}),
    \label{eq:ucb_optimization}
\end{equation}

where $\mu_{t-1}$ and $\sigma_{t-1}$ are the GP posterior mean and standard deviation (Eqs.~\ref{eq:posterior_mean}--\ref{eq:posterior_var}), and $ \beta^{gp}_t > 0$ is the exploration parameter. By balancing exploitation (arms $\boldsymbol{\lambda}$ with high predicted reward $\mu$) against exploration (arms with high uncertainty $\sigma$), the attacker efficiently identifies the effectiveness--evasion trade-off that maximizes attack success. After the victim selects arm $a_t$, the attacker receives a reward $r_t$ that reflects the feedback $f(\boldsymbol{\delta}_t^*(\boldsymbol{\lambda}_t), a^\dagger_t)$ in the bilevel formulation (Eq.~\ref{eq:bilevel}). The GP learns which trade-off parameters most reliably hijack the victim's behavior under different contextual states.

\paragraph{Gaussian Process Model.}
We model the reward function $r(\mathbf{s}, \boldsymbol{\lambda})$ 
as a sample from a GP with a squared-exponential (SE) kernel over the joint input $\mathbf{z} = (\mathbf{s}, \boldsymbol{\lambda})$:
\[
    k(\mathbf{z}, \mathbf{z}') 
    = \sigma_f^2 \exp\!\Bigl(
        -\frac{\|\mathbf{z} - \mathbf{z}'\|^2}{2\ell^2}
    \Bigr),
\]
where $\sigma_f^2$ is the signal variance and $\ell$ is the length scale. 
Given the observation history 
$\mathcal{G}_{t-1} = \{(\mathbf{s}_i, \boldsymbol{\lambda}_i, r_i)\}_{i=1}^{t-1}$, 
the GP posterior yields the predicted reward and uncertainty:
\begin{align}
    \mu_{t-1}(\mathbf{z}) 
    &= \mathbf{k}_*^\top (\mathbf{K} + \sigma_n^2 \mathbf{I})^{-1} \mathbf{r}
    \label{eq:posterior_mean} \\[4pt]
    \sigma_{t-1}^2(\mathbf{z}) 
    &= k(\mathbf{z},\mathbf{z}) 
       - \mathbf{k}_*^\top (\mathbf{K} + \sigma_n^2 \mathbf{I})^{-1} \mathbf{k}_*
    \label{eq:posterior_var}
\end{align}
where $\mathbf{k}_* \in \mathbb{R}^{t-1}$ is the cross-covariance vector between the query point $\mathbf{z}$ and the observations, $\mathbf{K} \in \mathbb{R}^{(t-1) \times (t-1)}$ is the kernel matrix over past inputs, $\sigma_n^2$ is the observation noise variance, and $\mathbf{r} = [r_1, \ldots, r_{t-1}]^\top$ is the reward vector. Intuitively, a query similar to past observations yields larger entries in $\mathbf{k}_*$, reducing posterior variance and concentrating the prediction around the observed rewards.

Since the acquisition function (Eq.~\ref{eq:ucb_optimization}) is non-convex over $[0,1]^3$, 
we adopt a two-phase multi-start strategy. 
First, $N_{\mathrm{rand}}$ candidates are drawn uniformly from $[0,1]^3$ and ranked by their UCB values; the top~$\mathcal{C}$ are retained.
Second, projected gradient ascent is launched from each retained candidate, with iterates clipped to $[0,1]^3$, and the globally best solution is returned as $\boldsymbol{\lambda}_t$.
The full procedure is given in Algorithm~\ref{alg:select_lambda} (Appendix~\ref{sec:GPUCB_app}).

\subsection{Perturbation Generation}
\label{sec:pgd}

Given the true context $\mathbf{x}_{t,i}$ for each arm $i \in [K]$ and the parameter $\boldsymbol{\lambda}_t$, we solve the inner problem of Definition~\ref{def:bandit-attack} (Eq.~\ref{eq:inner}) to compute the perturbation $\boldsymbol{\delta}_{t,i}^*$. We use PGD because it does not critically depend on backpropagation \cite{kotary2023backpropagation} (see Appendix~\ref{app:zero_order}).
Importantly, all gradient computations in this step are performed through the attacker's surrogate model $\hat{\pi}_{\boldsymbol{\phi}}$, \emph{not} through the victim's model. The perturbation is obtained by solving the constrained problem in Eq. \ref{eq:inner} with a weighted attack objective $\mathcal{L}$ that instantiates the three axes of Definition~\ref{def:bandit-attack}:

\begin{equation}
\label{eq:objective}
\begin{aligned}
\mathcal{L}(\mathbf{x}, \boldsymbol{\delta};\, \boldsymbol{\lambda})
\;=\;&
\lambda^{(1)} \mathcal{L}_{\mathrm{eff}}(\mathbf{x}, \boldsymbol{\delta}) \\
&
+ \lambda^{(2)}\, \bigl[R_{\mathrm{s}}(\mathbf{x}, \boldsymbol{\delta}) + R_{\mathrm{n}}(\mathbf{x}, \boldsymbol{\delta}) \bigr]\\
&
+ \lambda^{(3)}\, R_{\mathrm{t}}(\boldsymbol{\delta},\, \boldsymbol{\delta}_{t-1}).
\end{aligned}
\end{equation}

We now describe each component and its role in the effectiveness--evasion trade-off.

\paragraph{Attack loss $\mathcal{L}_{\mathrm{eff}}$.}
This term drives the victim toward the target suboptimal arm $a^\dagger_t$: $    \mathcal{L}_{\mathrm{eff}}(\mathbf{x}, \boldsymbol{\delta})
    = -\log \hat{\pi}_{\boldsymbol{\phi}}(a^\dagger_t \mid \mathbf{x} + \boldsymbol{\delta})$, 
where $\hat{\pi}_{\boldsymbol{\phi}}$ is the attacker's surrogate policy estimated via IRL (Eq.~\ref{eq:surrogate_policy}) and $a^\dagger_t = \argmin_a \hat{h}_{\boldsymbol{\phi}_r}(\mathbf{x}, a)$ is the arm with the lowest estimated reward. Minimizing $\mathcal{L}_{\mathrm{eff}}$ increases the predicted probability that the victim selects~$a^\dagger_t$.

\paragraph{Gradient-norm regularizer $R_{\mathrm{n}}$.}
This term penalizes deviations in gradient magnitude induced by the perturbation, making the attack less detectable by gradient-based anomaly monitors: $\bigl\|\nabla_{\mathbf{x}} \hat{h}_{\boldsymbol{\phi}_r}(\mathbf{x} + \boldsymbol{\delta})\bigr\|_2
    - \bigl\|\nabla_{\mathbf{x}} \hat{h}_{\boldsymbol{\phi}_r}(\mathbf{x})\bigr\|_2.$
Since this term directly constrains a detectable side-effect of the perturbation on the surrogate reward landscape, we group it with $\mathcal{L}_{\mathrm{eff}}$ under $\lambda^{(1)}$; both require access to the same surrogate model.

\paragraph{Statistical regularizer $R_{\mathrm{s}}$.}
This measures how unusual the gradient at the perturbed context is by computing its Mahalanobis distance from the gradient distribution observed under normal inputs:
\begin{equation}
\label{eq:sr}
    R_{\mathrm{s}}(\mathbf{x}, \boldsymbol{\delta})
    = \bigl(\mathbf{g}(\mathbf{x}{+}\boldsymbol{\delta}) - \boldsymbol{\mu}_g\bigr)^\top
    \boldsymbol{\Sigma}_g^{-1}
    \bigl(\mathbf{g}(\mathbf{x}{+}\boldsymbol{\delta}) - \boldsymbol{\mu}_g\bigr),
\end{equation}
where $\mathbf{g}(\cdot) = \nabla_{\mathbf{x}} \hat{h}_{\boldsymbol{\phi}_r}(\cdot)$, and $\boldsymbol{\mu}_g$, $\boldsymbol{\Sigma}_g$ are the gradient mean and covariance estimated from clean contexts (Eqs.~\ref{eq:mean_grad}--\ref{eq:Cov_grad}). Minimizing $R_{\mathrm{s}}$ keeps the perturbed gradient within the statistical profile, evading distribution-aware defenses.

\paragraph{Temporal Regularizer $R_{\text{t}}$.}
Defenses might detect sudden changes in input patterns. Therefore, we penalize large differences between consecutive perturbations to ensure smooth temporal transitions: $\norm{\boldsymbol{\delta} - \boldsymbol{\delta}_{t-1}}_2^2$.

\section{Theoretical Guarantees}
\label{sec:theory}

We analyze \texttt{\textbf{AdvBandit}} under the following setting. The true reward function $h:\mathbb{R}^d \to [0,1]$ is $L_h$-Lipschitz w.r.t.\ context, and the victim employs R-NeuralUCB with exploration bonus $\beta^{\textsc{vic}}_t \sigma^{\textsc{vic}}_t(\mathbf{x},a)$, achieving $O(\sqrt{T})$ regret without attacks. The attacker's reward function $r(\mathbf{s},\boldsymbol{\lambda})$ lies in the RKHS $\mathcal{H}_k$ with bounded norm $\|r\|_{\mathcal{H}_k} \leq B_{\textsc{rkhs}}$, and observations are corrupted by $\sigma_n$-sub-Gaussian noise. Let $d_{\text{gp}} = |\mathbf{s}| + |\boldsymbol{\lambda}| = |\mathbf{s}| + 3$ denote the GP input dimension; the squared exponential kernel yields maximum information gain $\gamma_n = O((\log n)^{d_{\text{gp}}+1})$. This section states our main regret guarantees; full proofs and supporting lemmas are deferred to Appendix~\ref{app:additional_theoretical}.

\subsection{Victim's Cumulative Regret}

We first introduce the key structural quantity that governs attack success.

\begin{definition}[Attackability Margin]
\label{def:attackability}
For an attacked round $t$ with target arm $a_t^\dagger$, the \emph{attackability margin} is $\alpha_t \coloneqq \Delta(\mathbf{x}_t, a_t^\dagger) - 2L_h\epsilon$, where $\Delta(\mathbf{x}_t, a_t^\dagger) = h(\mathbf{x}_{t,a_t^*}) - h(\mathbf{x}_{t,a_t^\dagger})$ is the true suboptimality gap. A positive $\alpha_t$ indicates the perturbation budget $\epsilon$ is sufficient to close the reward gap between the optimal and target arms.
\end{definition}

Using this, we establish that attack success is governed by the margin exceeding the combined IRL error and victim exploration bonus (Lemma~\ref{lem:structural_success} in Appendix~\ref{app:victim_regret}). This leads to our main victim regret bound:

\begin{theorem}[Victim's Cumulative Regret]
\label{thm:victim_regret}
With attack budget $B$, perturbation bound $\epsilon$, and IRL retraining interval $\Delta_{\textsc{irl}}$ with window size $W$, the victim's cumulative regret satisfies with probability at least $1-\rho$:

\begin{align}
\label{eq:victim_regret_simplified}
R_v(T) \;\geq\; B \cdot \bar{\alpha} 
\;-\; O\!\left(\sqrt{B} \cdot \sqrt{\tfrac{d_\Theta}{W}}\right) \notag\\
-\; O\!\left(\sqrt{B \cdot \Delta_{\textsc{irl}}}\right) 
\;-\; O\!\left(\sqrt{T \log T}\right).
\end{align}

where $\bar{\alpha} = \frac{1}{B}\sum_{t:\,z_t=1}\![\Delta(\mathbf{x}_t, a_t^\dagger) - 2L_h\epsilon]^{+}$ is the average positive attackability margin, $d_\Theta = O(d \cdot d_h \cdot K)$ is the number of IRL parameters, and $[\cdot]^{+} = \max(\cdot, 0)$.
\end{theorem}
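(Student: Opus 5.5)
The plan is to split the victim's regret into attacked and unattacked rounds, $R_v(T) = \sum_{t:\,z_t=1} \Delta_t + \sum_{t:\,z_t=0}\Delta_t$. Here $\Delta_t = h(\mathbf{x}_{t,a_t^*}) - h(\mathbf{x}_{t,\tilde a_t})$ and $z_t\in\{0,1\}$ indicates that round $t$ is attacked. Every summand is nonnegative, so the unattacked sum can be dropped as a lower bound. The real work is showing that on attacked rounds the victim plays $a_t^\dagger$ (or something at least as bad) often enough. The term $-O(\sqrt{T\log T})$ will absorb the interference from the victim's own learning dynamics. Concretely, this is the gap between the realized $h$-values and their conditional expectations, and the deviation of R-NeuralUCB's estimates from $h$ over the horizon.

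First I invoke Lemma~\ref{lem:structural_success}, taking it as stated in the appendix. On an attacked round, if $\alpha_t > e^{\textsc{irl}}_t + 2\beta^{\textsc{vic}}_t\sigma^{\textsc{vic}}_t$, the perturbation computed on the surrogate makes the victim's UCB index of $a_t^\dagger$ exceed that of $a_t^*$. Here $e^{\textsc{irl}}_t$ is the surrogate's Q-value error. The Lipschitz property of $h$ explains the $2L_h\epsilon$ in $\alpha_t$: the true gap closes by at most $L_h\epsilon$ per arm under an $\ell_\infty$-bounded shift. On such rounds the incurred regret is $\Delta(\mathbf{x}_t,a_t^\dagger) \ge [\alpha_t]^+$. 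On all other attacked rounds I use the worst case $\Delta_t \ge 0 \ge [\alpha_t]^+ - (e^{\textsc{irl}}_t + 2\beta^{\textsc{vic}}_t\sigma^{\textsc{vic}}_t)$. In both cases $\Delta_t \ge [\alpha_t]^+ - e^{\textsc{irl}}_t - 2\beta^{\textsc{vic}}_t\sigma^{\textsc{vic}}_t$. Summing over the $B$ attacked rounds gives
\[
R_v(T) \ge B\bar\alpha - \sum_{t:z_t=1} e^{\textsc{irl}}_t - 2\sum_{t:z_t=1}\beta^{\textsc{vic}}_t\sigma^{\textsc{vic}}_t - (\text{martingale term}).
\]

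Next I bound each error sum by Cauchy--Schwarz, $\sum_{z_t=1} e_t \le \sqrt{B\sum e_t^2}$. I decompose the IRL error into a statistical part and a drift part, $e^{\textsc{irl}}_t \le e^{\mathrm{stat}}_t + e^{\mathrm{drift}}_t$. For the statistical part, standard generalization of the MaxEnt likelihood estimator trained on $W$ samples with $d_\Theta$ parameters gives $(e^{\mathrm{stat}}_t)^2 = \tilde O(d_\Theta/W)$, which yields the $\sqrt{B}\sqrt{d_\Theta/W}$ term. For the drift part, I use the tracking-error analysis under bounded policy drift. The victim's policy moves by at most an amount proportional to the elapsed time since the last retraining, which is at most $\Delta_{\textsc{irl}}$ rounds. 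Under a drift rate that makes the squared error per round $O(\Delta_{\textsc{irl}})$, or more precisely by summing the drift over the blocks, I get $\sum_{z_t=1} e^{\mathrm{drift}}_t \le O(\sqrt{B\Delta_{\textsc{irl}}})$. For the exploration bonuses, I use the elliptical-potential bound for R-NeuralUCB, $\sum_{t\le T}\beta^{\textsc{vic}}_t\sigma^{\textsc{vic}}_t = O(\sqrt{T\log T})$, and bound the attacked subsum by the full sum. The deviation term is an Azuma--Hoeffding bound on bounded differences in $[0,1]$, giving $O(\sqrt{T\log(1/\rho)})$. A union bound over these events gives probability $1-\rho$.

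The main obstacle is the drift step, because the victim's policy changes in reaction to the attacks themselves. The IRL window therefore contains poisoned data, and a bound on the drift rate must hold uniformly. My first attempt would be to assume the per-round change of the UCB index is $O(1/\sqrt{t})$, which follows from the stability of NeuralUCB's ridge update. That assumption controls the within-block drift, and the bad rounds are then charged to the $\sqrt{B\Delta_{\textsc{irl}}}$ budget.
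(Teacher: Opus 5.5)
Your skeleton is the paper's. You split into attacked and unattacked rounds, use Lemma~\ref{lem:structural_success} so that each attacked round falls at most $\epsilon_{\textsc{irl}}(t)+\beta^{\textsc{vic}}_t\sigma^{\textsc{vic}}_t$ below $[\alpha_t]^+$, and then control the summed error terms. Dropping the nonnegative unattacked sum is cleaner than the paper's Step~4, which subtracts an upper bound on it.

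\textbf{The Cauchy--Schwarz step fails.} Suppose $(e^{\mathrm{stat}}_t)^2=\tilde O(d_\Theta/W)$ on each of the $B$ attacked rounds. Then $\sum_{t:z_t=1}(e^{\mathrm{stat}}_t)^2=\tilde O(Bd_\Theta/W)$, so $\sqrt{B\sum_{t:z_t=1}(e^{\mathrm{stat}}_t)^2}=\tilde O(B\sqrt{d_\Theta/W})$. That is linear in $B$, not $\sqrt{B}\sqrt{d_\Theta/W}$. With only a uniform per-round bound, Cauchy--Schwarz returns exactly the trivial bound $B\max_t e_t$. The errors are nonnegative, so nothing cancels, and with a fixed sliding window $W$ the statistical floor does not shrink over time. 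For the same reason, a squared drift error of $O(\Delta_{\textsc{irl}})$ per round yields $B\sqrt{\Delta_{\textsc{irl}}}$, not $\sqrt{B\Delta_{\textsc{irl}}}$.

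\textbf{The drift term needs a decaying per-round bound.} The $\sqrt{B\Delta_{\textsc{irl}}}$ term requires a per-round error that decays in $t$, summed directly. Lemma~\ref{lem:drift_rneuralucb} gives drift at most $c^{\textsc{rn}}_\beta\sqrt{\Delta_{\textsc{irl}}/t}$, and since $t_i\ge i$, $\sum_{i\le B}\sqrt{\Delta_{\textsc{irl}}/t_i}\le 2\sqrt{B\Delta_{\textsc{irl}}}$. Your proposed $O(1/\sqrt{t})$ per-round change of the UCB index is too weak for this. It allows drift of order $\Delta_{\textsc{irl}}/\sqrt{t}$ per block, hence only $O(\Delta_{\textsc{irl}}\sqrt{B})$. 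The paper's lemma instead rests on a score change of order $\tau/t^{3/2}$ over $\tau$ rounds.

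\textbf{The statistical term is imported, not derived, in the paper.} The paper does not derive $\sqrt{B}$ scaling for this term either. It imports it from Theorem~\ref{thm:irl_cumulative_error}, whose proof sums a constant per-epoch error in the same way. Corollary~\ref{cor:profitability}'s denominator $\bar\alpha-O(\sqrt{d_\Theta/W})$ is exactly what a linear $B\sqrt{d_\Theta/W}$ term would produce. To reach the stated form, you must take that theorem as a black box, as you do with Lemma~\ref{lem:structural_success}. Your own argument honestly yields $-\tilde O(B\sqrt{d_\Theta/W})$.

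\textbf{The exploration bonuses need the uniform assumption.} The elliptical-potential lemma bounds $\sigma^{\textsc{vic}}_t$ only at arms the victim actually played. The bonus in Lemma~\ref{lem:structural_success} is evaluated at $a_t^\dagger$, which on failed rounds was not played. You therefore need the paper's uniform assumption $\sigma^{\textsc{vic}}_t=O(1/\sqrt{t})$. Under it, your choice to absorb the bonuses into $O(\sqrt{T\log T})$ works; the paper instead charges them to $\sqrt{B\Delta_{\textsc{irl}}}$.
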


\begin{proof}[Proof sketch]
Decompose $R_v(T)$ into attacked ($z_t{=}1$) and non-attacked ($z_t{=}0$) rounds. For attacked rounds, Lemma~\ref{lem:regret_decomp} and Lipschitzness give $r^{\text{vic}}_t \geq \Delta(\mathbf{x}_t, \tilde{a}_t) - L_h\epsilon$. On successful attacks ($\tilde{a}_t = a_t^\dagger$), this yields $r^{\text{vic}}_t \geq \alpha_t$. On failed attacks with positive margin, Lemma~\ref{lem:structural_success} bounds each failure's cost by $\epsilon_{\textsc{irl}}(t) + \beta^{\textsc{vic}}_t \sigma^{\textsc{vic}}_t$. Summing the IRL errors via Theorem~\ref{thm:irl_cumulative_error} gives $O(\sqrt{B \cdot d_\Theta / W})$, and summing the victim exploration bonuses ($\sigma^{\textsc{vic}}_t = O(1/\sqrt{t})$) yields $O(\sqrt{B \cdot \Delta_{\textsc{irl}}})$. Non-attacked rounds contribute at most $O(\sqrt{T\log T})$ by R-NeuralUCB's standard guarantee. The full proof appears in Appendix~\ref{app:victim_regret}.
\end{proof}

The bound is expressed entirely in structural quantities---the attackability margin $\bar{\alpha}$, IRL model capacity ($d_\Theta, W$), retraining frequency ($\Delta_{\textsc{irl}}$), and victim algorithm properties---with no dependence on empirical attack success rates. Corollary~\ref{cor:profitability} (Appendix~\ref{app:victim_regret}) further shows that the attack is profitable whenever $B > O(\sqrt{T\log T}) / (\bar{\alpha} - O(\sqrt{d_\Theta/W}))$.

\subsection{Attacker's Cumulative Regret}
\label{sec:attacker_regret}

We relax the exact realizability assumption (Assumption~E.2) to accommodate model misspecification.

\begin{assumption}[Approximate Realizability]
\label{assm:approx_real}
The victim's policy is approximately realizable by the IRL model class with \emph{misspecification error}:
$\epsilon_{\text{mis}} \coloneqq \inf_{\boldsymbol{\phi}} \sup_{\mathbf{x}} \textsc{tv}(\pi^*_{\theta_t}(\cdot|\mathbf{x}),\, \hat{\pi}_{\boldsymbol{\phi}}(\cdot|\mathbf{x})) \geq 0$.
When $\epsilon_{\text{mis}} = 0$, this recovers exact realizability.
\end{assumption}

\begin{theorem}[Attacker's Regret under Approximate Realizability]
\label{thm:attacker_regret_full}
Let $\boldsymbol{\lambda}^*(\mathbf{s}) = \arg\max_{\boldsymbol{\lambda}} r(\mathbf{s},\boldsymbol{\lambda})$. Under Assumption~\ref{assm:approx_real}, with probability at least $1-\rho$, the attacker's cumulative regret over $n$ attack rounds satisfies:

\begin{align}
\label{eq:attacker_regret_main}
R_{\text{attack}}(n) \le 
O\!\left(\sqrt{n \gamma_n \log(n/\rho)}\right) 
+ O\!\left(\sqrt{n} \sqrt{\tfrac{d_\Theta}{W}}\right) \notag\\
\quad + O\!\left(\sqrt{n \Delta_{\textsc{irl}}}\right) 
+ n \epsilon_{\text{mis}}.
\end{align}

where $\gamma_n = O((\log n)^{d_{\text{gp}}+1})$. Under exact realizability ($\epsilon_{\text{mis}} = 0$), this simplifies to $R_{\text{attack}}(n) \leq O(\sqrt{n \gamma_n \log(n/\rho)} + \sqrt{n})$.
\end{theorem}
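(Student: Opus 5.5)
The plan is to split the attacker's regret into an idealized GP-UCB term and a surrogate-induced perturbation term. Let $\tilde r_t$ be the reward the attacker actually observes; it comes from a victim responding to perturbations computed on the surrogate $\hat{\pi}_{\boldsymbol{\phi}_t}$. Let $r(\mathbf{s}_t,\cdot)$ be the ideal reward function in $\mathcal{H}_k$. For each attacked round $j=1,\dots,n$ we write
\[
r(\mathbf{s}_j,\boldsymbol{\lambda}^*(\mathbf{s}_j)) - r(\mathbf{s}_j,\boldsymbol{\lambda}_j)
= \underbrace{\bigl[\hat r_j(\boldsymbol{\lambda}^*_j) - \hat r_j(\boldsymbol{\lambda}_j)\bigr]}_{\text{GP term}}
+ \underbrace{2\sup_{\boldsymbol{\lambda}}\bigl|r(\mathbf{s}_j,\boldsymbol{\lambda})-\hat r_j(\boldsymbol{\lambda})\bigr|}_{\text{model error}} .
\]
Here $\hat r_j$ denotes the reward landscape induced by the current surrogate.

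\textbf{GP term.} We treat the model error as an additive, bounded, adversarial corruption $c_j$ of the observations. We then invoke the contextual GP-UCB analysis of \cite{srinivas2009gaussian} in its RKHS form (Krause--Ong style, with $\mathbf{z}=(\mathbf{s},\boldsymbol{\lambda})$). We choose $\beta^{\textsc{gp}}_j = B_{\textsc{rkhs}} + \sigma_n\sqrt{2(\gamma_{j}+1+\log(1/\rho))}$. The standard confidence lemma gives $|r(\mathbf{z})-\mu_{j-1}(\mathbf{z})|\le \beta^{\textsc{gp}}_j\sigma_{j-1}(\mathbf{z})$ with probability $1-\rho$. This yields instantaneous regret $\le 2\beta^{\textsc{gp}}_j\sigma_{j-1}(\mathbf{z}_j)$ plus the corruption. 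Cauchy--Schwarz and $\sum_j\sigma^2_{j-1}(\mathbf{z}_j)\le C\gamma_n$ then give the first term $O(\sqrt{n\gamma_n\log(n/\rho)})$. The SE-kernel bound $\gamma_n=O((\log n)^{d_{\text{gp}}+1})$ is quoted from the setup. The corruption enters linearly through the regression, as $\sum_j c_j$ plus a cross term bounded by the same $\sigma$-sum, so what remains is to bound $\sum_j c_j$.

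\textbf{Model error.} Assume the attack reward is Lipschitz (bounded) in the victim's action distribution. Then $c_j \lesssim \textsc{tv}(\pi^*_{\theta_{t_j}},\hat{\pi}_{\boldsymbol{\phi}_{t_j}})$, and we split this by the triangle inequality into three pieces.
\begin{enumerate}
\item \emph{Misspecification.} The best-in-class distance is at most $\epsilon_{\text{mis}}$ by Assumption~\ref{assm:approx_real}, which sums to $n\epsilon_{\text{mis}}$.
\item \emph{Estimation error.} The MLE over the window of size $W$ in a class of $d_\Theta$ parameters has error $O(\sqrt{d_\Theta/W})$ per round. This is the IRL tracking/cumulative-error result cited in the proof of Theorem~\ref{thm:victim_regret} (Theorem~\ref{thm:irl_cumulative_error}). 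Summed over $n$ rounds with Cauchy--Schwarz, it gives $O(\sqrt{n}\sqrt{d_\Theta/W})$.
\item \emph{Staleness.} Between retrainings the victim drifts by an amount controlled by its exploration width, $\sigma^{\textsc{vic}}_t=O(1/\sqrt{t})$. The drift accumulated over a retraining interval of length at most $\Delta_{\textsc{irl}}$, summed over attacked rounds exactly as in the victim-regret proof, gives $O(\sqrt{n\Delta_{\textsc{irl}}})$.
\end{enumerate}
Adding the GP term and the three model-error pieces gives \eqref{eq:attacker_regret_main}. Setting $\epsilon_{\text{mis}}=0$ and absorbing the $W,\Delta_{\textsc{irl}}$ constants into $O(\sqrt n)$ gives the simplified form.

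\textbf{Main obstacle.} The hardest step is making the corruption argument rigorous. Corrupted observations bias the GP posterior mean, and naive robust GP-UCB bounds scale like $\beta_n\sum_j c_j$ rather than additively. To get the claimed additive form, we would first try to redefine the regret target as the surrogate-induced function $\hat r$ itself, assumed in $\mathcal{H}_k$ with the same norm bound. The GP analysis is then clean and the mismatch appears only in the decomposition above. The cost is an extra assumption: $\hat r_j$ varies with retraining, so we must argue that it is piecewise fixed between retrainings, or absorb its variation into the staleness term.
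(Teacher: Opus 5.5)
Your add-and-subtract step is the paper's decomposition in different clothing. With $\hat r_j(\cdot)=r(\hat{\mathbf{s}}_{t_j},\cdot)$, your ``GP term'' is its term (b), and $2\sup_{\boldsymbol{\lambda}}|r-\hat r_j|$ dominates its state-bias terms (a)+(c); your ``$=$'' should therefore be ``$\le$''. The paper controls the bias through Lipschitzness of $r$ in the state together with Theorem~\ref{thm:feature_error}, $\|\mathbf{s}_{t_i}-\hat{\mathbf{s}}_{t_i}\|_2\le C_s\,\epsilon_{\textsc{irl}}(t_i)$. You instead add an assumption that the attack reward is Lipschitz in the victim's action distribution. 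Both then split $\epsilon_{\textsc{irl}}(t_i)$ into the same three pieces.

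\textbf{The GP term is not established.} There are two problems.
\begin{enumerate}
\item With your RKHS choice $\beta^{\textsc{gp}}_j=B_{\textsc{rkhs}}+\sigma_n\sqrt{2(\gamma_j+1+\log(1/\rho))}$, Cauchy--Schwarz gives $2\beta^{\textsc{gp}}_n\sqrt{Cn\gamma_n}=O\bigl(\sqrt{n}\,(B_{\textsc{rkhs}}\sqrt{\gamma_n}+\gamma_n+\sqrt{\gamma_n\log(1/\rho)})\bigr)$, not $O(\sqrt{n\gamma_n\log(n/\rho)})$. The latter is what one gets with $\beta^{\textsc{gp}}=O(\sqrt{\log(n/\rho)})$, as the paper plugs in, and that is the Bayesian (GP-sample) calibration, not the RKHS one.
\item The obstacle you flag is real, and you leave it open: the rewards fed to the GP are generated by the true process, not by the function whose regret you bound. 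Treating the discrepancy as a corruption $\mathbf{c}$ shifts the posterior mean by $\mathbf{k}_*^\top(\mathbf{K}+\sigma_n^2\mathbf{I})^{-1}\mathbf{c}$. This is only bounded by $\sigma_n^{-1}\sigma_{j-1}(\mathbf{z})\|\mathbf{c}\|_2$, so after Cauchy--Schwarz the bias costs on the order of $\sqrt{\gamma_n}\,n\max_j|c_j|$, not the additive $\sum_j c_j$ you need. Your fallback of making $\hat r_j$ the target keeps observations consistent only if $\hat r_j$ is the mean observed reward. That function changes at every retraining, so you would need a time-varying GP-bandit argument.
\end{enumerate}
The paper does not supply the missing idea either. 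Its term (b) just invokes ``standard GP-UCB on approximate states'', implicitly assuming that rewards generated at $\mathbf{s}_{t_i}$ are noisy samples of $r(\hat{\mathbf{s}}_{t_i},\cdot)$, which is exactly the issue you identified.

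\textbf{The estimation piece of the model-error sum also fails.} A per-round error $a_j\asymp\sqrt{d_\Theta/W}$ that does not decay with $j$ (fixed $W$) sums to $O(n\sqrt{d_\Theta/W})$. Cauchy--Schwarz gives $\sum_{j\le n}a_j\le\sqrt{n\sum_j a_j^2}=n\sqrt{d_\Theta/W}$, and the summands are nonnegative TV distances, so nothing cancels. The paper makes the same leap: its proof relies on Theorem~\ref{thm:irl_cumulative_error}, whose ``across epochs'' argument has the same defect. For fixed $W$ this term is linear in $n$, exactly like $n\epsilon_{\text{mis}}$. The simplified $O(\sqrt{n\gamma_n\log(n/\rho)}+\sqrt{n})$ form would need $W$ to grow with $n$.

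There is a further mismatch in the rate you cite. Theorem~\ref{thm:irl_sample_complexity} bounds an \emph{average} KL by $O(\sqrt{d_\Theta\log(K/\rho)/W})$. Pinsker then gives an average TV of order $(d_\Theta/W)^{1/4}$, not the supremum-over-$\mathbf{x}$ rate $\sqrt{d_\Theta/W}$ that $\epsilon_{\textsc{irl}}$ requires.

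\textbf{The staleness piece is recoverable.} You need the specific drift bound $D_\tau\le c_\beta\sqrt{\tau/t}$ of Lemma~\ref{lem:drift_rneuralucb} together with $t_i\ge i$, which gives $\sum_{i\le n}c_\beta\sqrt{\Delta_{\textsc{irl}}/t_i}\le 2c_\beta\sqrt{n\Delta_{\textsc{irl}}}$. Simply accumulating a drift of order $1/\sqrt{t}$ per round over $\Delta_{\textsc{irl}}$ rounds would instead give $\Delta_{\textsc{irl}}\sqrt{n}$.
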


\begin{proof}[Proof sketch]
Decompose per-round regret via the approximate state $\hat{\mathbf{s}}_{t_i}$ (from surrogate $\hat{h}_{\boldsymbol{\phi}_r}$) versus the true state $\mathbf{s}_{t_i}$ (from $h$) into: (a)~state bias at $\boldsymbol{\lambda}^*$, (b)~GP-UCB selection regret on approximate states, and (c)~state bias at $\boldsymbol{\lambda}_{t_i}$. Terms (a) and (c) are bounded by Lipschitzness of $r$ and the IRL error decomposition~\eqref{eq:irl_error_decomp} (Appendix~\ref{app:attacker_regret}). Term~(b) follows standard GP-UCB analysis with the information gain bound $\sum_i \sigma_{i-1}^2 \leq 2\gamma_n$. Summing and applying Cauchy--Schwarz yields~\eqref{eq:attacker_regret_main}. The full proof is in Appendix~\ref{app:attacker_regret}. 
\end{proof}

\begin{remark}
The $n \cdot \epsilon_{\text{mis}}$ term is the only linear-in-$n$ contribution. Universal approximation guarantees~\citep{barron1993universal} give $\epsilon_{\text{mis}} = O(1/\sqrt{d_h})$ for Barron-class functions. Our architecture ($d_h{=}128$) yields empirical $\epsilon_{\text{mis}} \approx 0.02$ (Table~5), contributing $\leq 4$ total regret over $n{=}200$ attacks---negligible compared to the $O(\sqrt{n\gamma_n}) \approx 50$ GP-UCB term. Unlike GP-UCB on stationary functions, our setting handles non-stationarity via IRL retraining (drift term) and sliding-window gradient statistics, while $\gamma_n$ remains poly-logarithmic (see Appendix~\ref{app:nonstationarity}). 
\end{remark}

\begin{figure}[!b]
    \vskip 0.2in
    \centering
    
    \begin{subfigure}[t]{\linewidth}
        \centering
        \includegraphics[width=\linewidth]{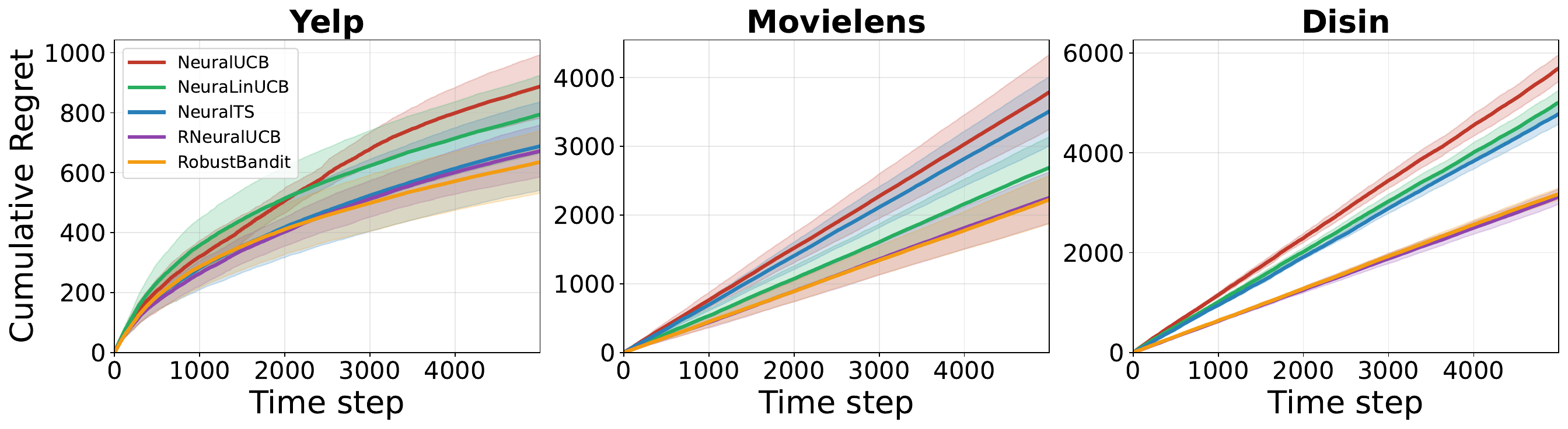}
        \caption{Analysis of \texttt{\textbf{AdvBandit}} against contextual bandits.}
        \label{fig:gpga_bandits}
    \end{subfigure}\\
    \begin{subfigure}[t]{\linewidth}
        \centering
        \includegraphics[width=\linewidth]{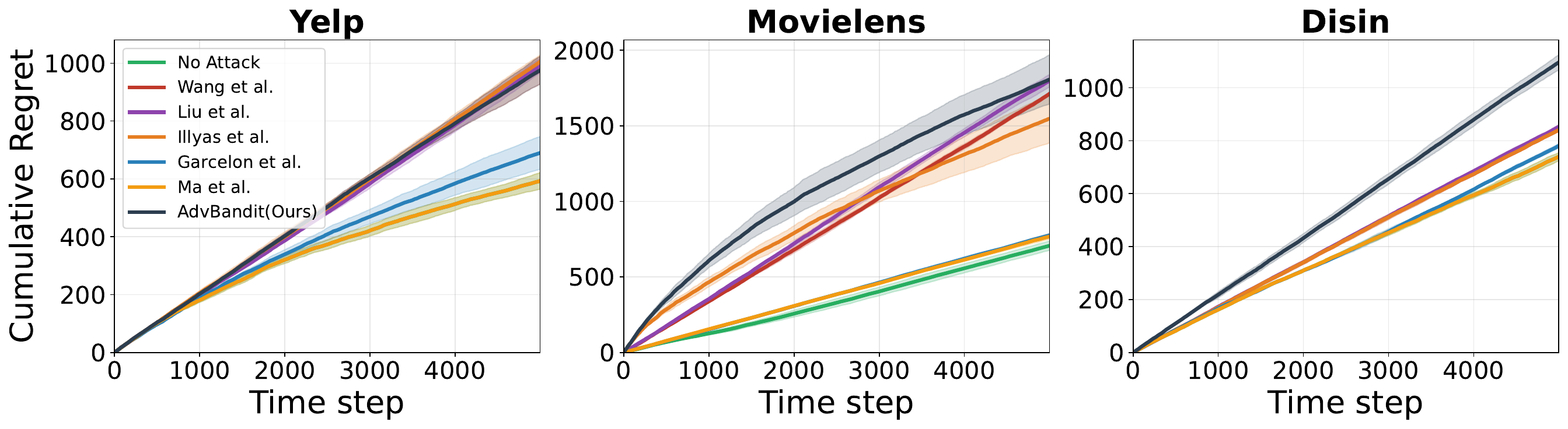}
        \caption{Adversarial attack models against R-NeuralUCB.}
        \label{fig:gpga_attacks}
    \end{subfigure}

    \caption{Performance evaluation of \texttt{\textbf{AdvBandit}} under adversarial settings in terms of regret on real datasets.}
    \label{fig:gpga_compare}
    
\end{figure}

\begin{figure*}[htbp]
    \centering
    \includegraphics[width=\textwidth]{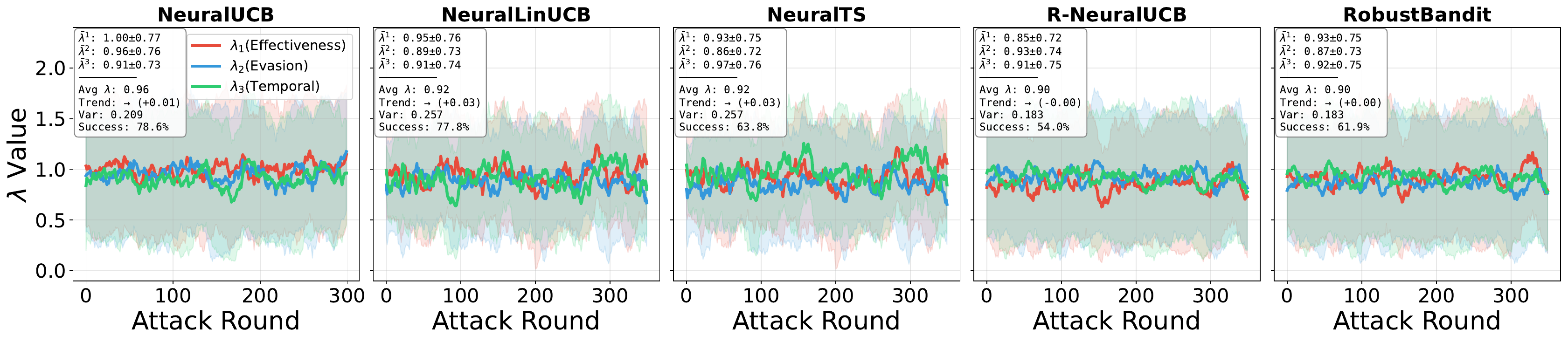}
    \caption{Distribution of continuous arm components ($\lambda^{(1)}$(effectiveness), $\lambda^{(2)}$ (evasion), $\lambda^{(3)}$ (temporal)) across victim algorithms on the Yelp dataset.}
    \label{fig:lambda_yelp}
\end{figure*}

\section{Experimental Analysis}
\label{sec:Experiential}
In this section, we compare \texttt{\textbf{AdvBandit}} with SOTA attack models against NCBs on three real datasets (i.e., Yelp, Movielens \cite{harper2015movielens}, and Disin \cite{ahmed2018detecting}). We used five NCB algorithms (RobustBandit \cite{ding2022robust}, R-NeuralUCB \cite{qi2024robust}, NeuralUCB \cite{zhou2020neural}, Neural-LinUCB \cite{xu2022neural}, and NeuralTS \cite{zhang2020neural}) as the victims of adversarial attacks. We also consider five attack baselines \cite{liu2022action}, \cite{garcelon2020adversarial}, \cite{ma2018data}, \cite{ilyas2019prior}, \cite{wang2022bandits} and compare with our \texttt{\textbf{AdvBandit}} model. The details and settings of datasets with the configurations of both victim NCB algorithms and attack baselines are explained in Appendix \ref{app:dataset}.


Fig. \ref{fig:gpga_compare} compares the effectiveness of our attack models against the other adversarial attack baselines on various NCB algorithms over 5,000 time steps.
Fig. \ref{fig:gpga_bandits} shows the cumulative regret of NCB algorithms under \texttt{\textbf{AdvBandit}} on the datasets. From the figure, it is clear that standard methods such as NeuralUCB and NeuralLinUCB experience rapid regret growth, indicating high vulnerability to adaptive context and action perturbations generated by our model. NeuralTS consistently achieves lower regret, suggesting that stochastic action selection provides inherent resistance to targeted attacks. In contrast, explicitly robust methods, R-NeuralUCB and RobustBandit, reduce regret accumulation by mitigating corrupted observations; however, their performance varies across datasets, reflecting trade-offs between robustness and learning efficiency. 
Fig.~\ref{fig:gpga_attacks} illustrates the cumulative regret incurred by R-NeuralUCB \cite{qi2024robust} under various adversarial attacks. The results demonstrate that \texttt{\textbf{AdvBandit}} consistently achieves the highest cumulative regret, outperforming all baselines by inducing up to 2-3x more regret in later steps, highlighting its superior effectiveness in disrupting Gaussian process-based surrogates through targeted gradient perturbations, while the unattacked baseline exhibits sublinear regret growth as expected.

Fig.~\ref{fig:lambda_yelp} shows how \texttt{\textbf{AdvBandit}} reallocates its budget across effectiveness ($\lambda^{(1)}$), statistical evasion ($\lambda^{(2)}$), and temporal consistency ($\lambda^{(3)}$) on Yelp. For the deterministic UCB-style models, \emph{NeuralUCB} and \emph{NeuralLinUCB}, the attack emphasizes effectiveness, with weights $(0.92,0.86,0.75)$ and $(0.95,0.89,0.91)$ and high success rates (78.8\% and 76.6\%), indicating that direct perturbations suffice to mislead optimism-based policies because these algorithms primarily focus on $\lambda^{(1)}$.
In contrast, the stochastic \emph{NeuralTS} shifts weight toward temporal coordination $(0.93,0.87,0.97)$ with a lower success rate (66.0\%), as stochastic arm sampling reduces the reliability of single-shot perturbations and favors sustained influence via $\lambda^{(3)}$.
For robustness-aware variants, \emph{R-NeuralUCB} and \emph{RobustBandit} focusing on $\lambda^{(2)}$, the allocation moves toward evasion and temporal smoothness—$(0.85,0.93,0.92)$ and $(0.93,0.88,0.92)$—with reduced success rates (53.4\% and 58.2\%), reflecting suppression of abrupt or statistically deviant attacks.
Overall, stronger robustness systematically shifts the attack from brute-force effectiveness to stealthy, temporally coordinated strategies. Additional results on MovieLens and Disin are provided in Appendix~\ref{sec:addition_Exp}.

Fig. \ref{fig:attacks_budgets} compares \texttt{\textbf{AdvBandit}} against baseline attacks on R-NeuralUCB \cite{qi2024robust} across varying attack budgets $B \in \{60, 350\}$, averaged over Yelp, MovieLens, and Disin datasets.
The left plot demonstrates that \texttt{\textbf{AdvBandit}} achieves target pull ratios of $0.47$--$0.55$, representing $1.7\times$--$2.5\times$ improvement over the best baseline (\cite{liu2022action}, $0.21$--$0.33$). 
The right plot reveals a superior cost-efficiency across all budget regimes. At $B=60$, the efficiency ($0.93 \times 10^{-2}$) is $2.1\times$ higher than the baselines, with slower decay at larger budgets. This demonstrates that our continuous optimization avoids wasteful perturbations characteristic of discrete attack strategies. These results validate our central hypothesis: treating attack parameter selection as a continuous bandit problem enables more effective and efficient attacks than existing discrete approaches. The \texttt{\textbf{AdvBandit}}'s exploration-exploitation trade-off successfully navigates the high-dimensional attack parameter space, avoiding both premature convergence to suboptimal strategies (unlike greedy methods such as \cite{garcelon2020adversarial}) and wasteful random exploration (unlike \cite{ilyas2019prior} and \cite{wang2022bandits}). In contrast, \texttt{\textbf{AdvBandit}}'s continuous arm formulation with temporal smoothing ($\lambda^{(3)}$) explicitly addresses these limitations.

\begin{figure}[!b]
    \centering
    \includegraphics[width=\linewidth]{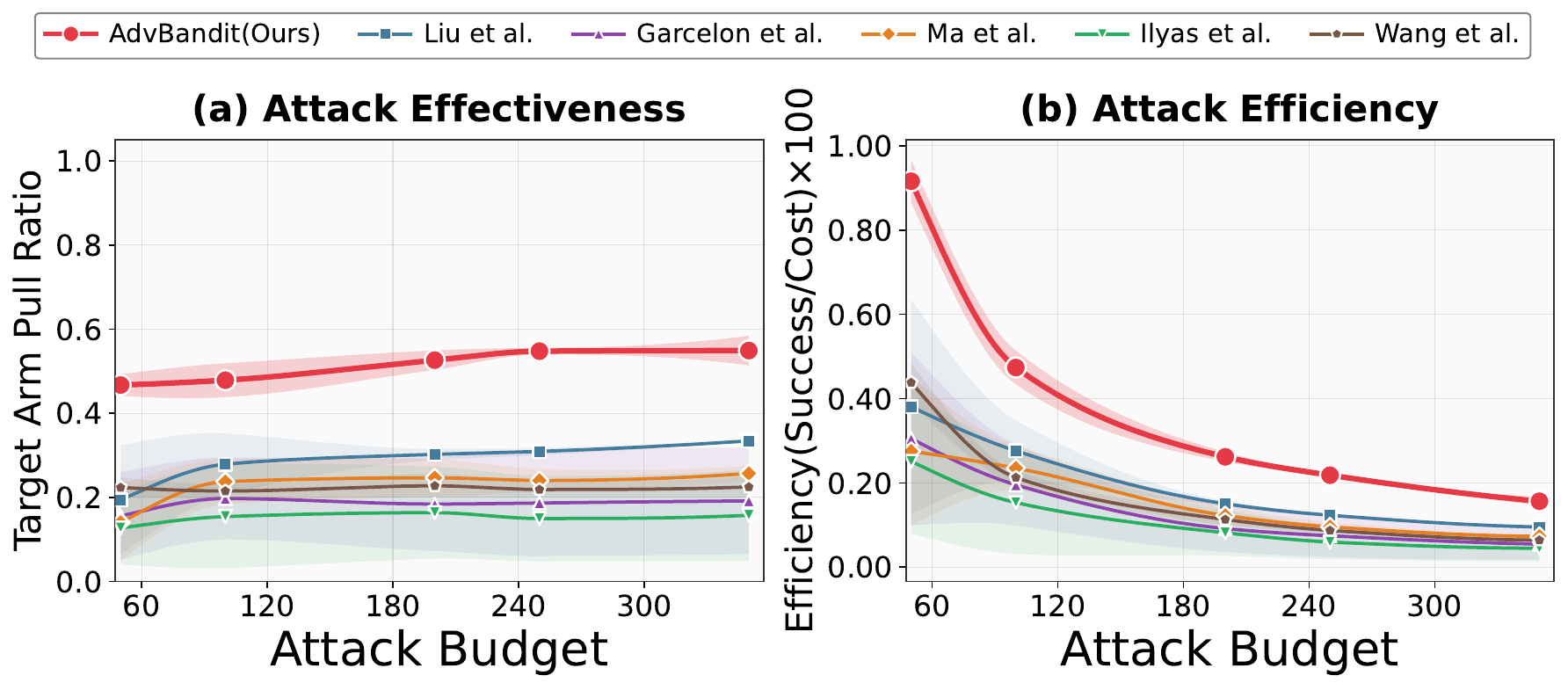}
    \caption{Performance of different attack strategies under varying attack budgets: averaged from the real datasets (Yelp, MovieLens, and Disin).}
    \label{fig:attacks_budgets}
\end{figure}

Figure~\ref{fig:scalability} depicts the average runtime of \texttt{\textbf{AdvBandit}} compared with the attack baselines as the horizon $T$ and attack budget $B$ vary, averaged on Yelp, MovieLens, and Disin under R-NeuralUCB.
The left plot shows \emph{sublinear} scaling with $T$ in which runtime increases from $58\,\mathrm{s}$ at $T=1000$ to $207\,\mathrm{s}$ at $T=8000$ (3.6$\times$ growth for an 8$\times$ increase in $T$, i.e., 45\% of linear scaling). This occurs because only 74\% of the computation depends on $T$, consisting of IRL retraining ($\mathcal{O}(T\cdot W)$, 34\%) and context processing (40\%). The remaining 26\% (GP updates and overhead) depends only on $B$, causing deviation from the $\mathcal{O}(T)$ reference.
The right plot exhibits \emph{superlinear} scaling with $B$, runtime growing from $59\,\mathrm{s}$ at $B=50$ to $389\,\mathrm{s}$ at $B=400$ (6.6$\times$ for an 8$\times$ increase), exceeding the $\mathcal{O}(B\log B)$ trend. While IRL cost remains constant (34\%) and optimization scales linearly (44\%), the GP kernel update $\mathcal{O}(B^3)$ dominates at large budgets.
At the standard setting $(T=5000, B=200)$ (red markers), \texttt{\textbf{AdvBandit}} achieves a victim regret of $673$ in $132\,\mathrm{s}$, incurring a $3.5\times$ runtime overhead while delivering $2.8\times$ higher attack effectiveness than baselines. More results for computational experiments are provided in Appendix \ref{app:computational_cost}.

\begin{figure}[h]
\centering
\includegraphics[width=\linewidth]{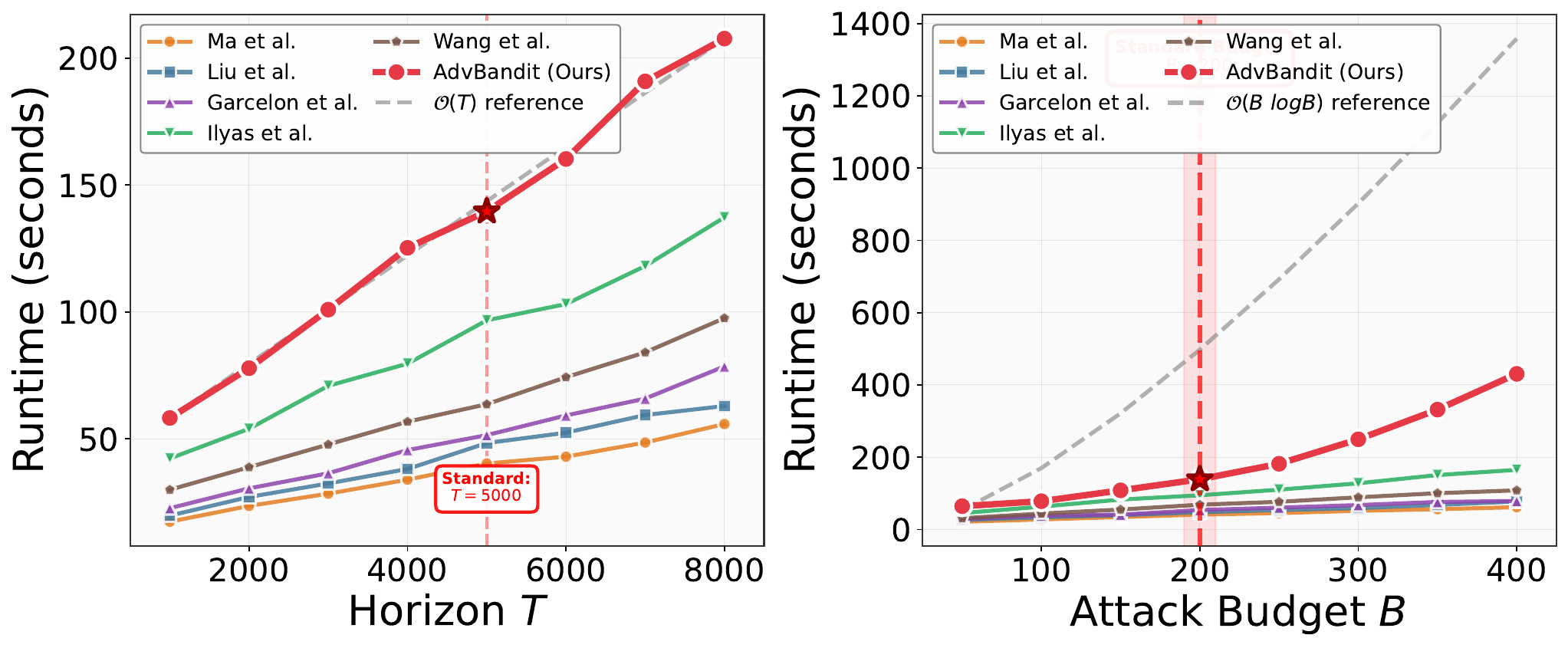}
\caption{Runtime scalability of attack baselines: (a) horizon $T$ (fixed $B=200$) and (b) attack budget $B$ (fixed $T=5000$). Red stars (\textcolor{red}{$\bigstar$}) and vertical lines mark the standard experimental setting ($T=5000$, $B=200$) used throughout the paper.}
\label{fig:scalability}
\end{figure}

\section{Conclusion}

This work introduced a black-box context poisoning attack against contextual bandits, formulating the problem as a continuous-armed bandit to learn an adaptive attack policy in a low-dimensional parameter space. Our approach, \texttt{\textbf{AdvBandit}}, integrates UCB-aware MaxEnt IRL for reward and uncertainty estimation, a query selection strategy to optimize attack budget allocation, GP-UCB for efficient exploration of the continuous attack parameters, and PGD for computing evasion-aware perturbations. We provided theoretical guarantees, including sublinear regret bounds for the attacker and lower bounds on the victim's induced regret under non-stationary policies. Empirical evaluations on real-world datasets (Yelp, MovieLens, and Disin) demonstrated that \texttt{\textbf{AdvBandit}} significantly outperforms existing attack baselines in inducing victim regret while maintaining stealth against robust defenses. Notably, our method adapted dynamically to different victim algorithms, shifting emphasis from effectiveness to statistical and temporal evasion as robustness increases. Future work could explore extensions to multi-agent settings or attacks on more complex reinforcement learning environments. The formulation of attack and defense as a two-player Stackelberg game is the next step, where the defender commits first and the attacker best responds, reflecting realistic threat dynamics.

\bibliography{references}
\bibliographystyle{abbrv}
\newpage
\appendix
\onecolumn
\section{Notations}
\label{sec:notations}

\begin{table}[h]
\centering
\footnotesize
\label{tab:notation}
\begin{tabular}{p{2cm}p{5cm}p{2cm}p{5cm}}
\toprule
\textbf{Notation} & \textbf{Definition} & \textbf{Notation} & \textbf{Definition} \\

$T$ & Time horizon (total rounds) & $H(\hat{\pi}(\mathbf{x}))$ & Policy entropy\\
$K$ & Number of arms (actions) & $d$ & Dimension of context vector \\
$\mathbf{x}_t$ & True context, $\mathbf{x}_t \in \mathcal{X} \subseteq \mathbb{R}^d$ & $\tilde{\mathbf{x}}_t$ & Perturbed context, $\tilde{\mathbf{x}}_t = \mathbf{x}_t + \boldsymbol{\delta}_t$ \\
$\mathcal{A}_t$ & Set of available arms at $t$ & $a_t$ & Arm selected by learner at $t$ \\
$a^*_t$ & Optimal arm, $\arg\max_{a} h(\mathbf{x}_t, a)$ & $a^\dagger_t$ & Target suboptimal arm \\
$\tilde{a}_t$ & Victim's action under $\tilde{\mathbf{x}}_t$ & $h(\mathbf{x},a)$ & True reward, $h: \mathcal{X} \times [K] \to [0,1]$ \\
$r_t(a_t)$ & Observed reward, $h(\mathbf{x}_t, a_t) + \xi_t$ & $\xi_t$ & Zero-mean $\sigma$-sub-Gaussian noise \\
$\pi_{\theta_t}$ & Learner's policy at round $t$ & $L_h$ & Lipschitz constant of $h$ \\

$\boldsymbol{\delta}_t$ & Perturbation vector, $\boldsymbol{\delta}_t \in \mathbb{R}^d$ & $\epsilon$ & Perturbation budget, $\|\boldsymbol{\delta}_t\|_\infty \leq \epsilon$ \\
$B$ & Total attack budget & $b_t$ & Remaining budget, $b_t = B - \sum_{s<t} z_s$ \\
$z_t$ & Attack decision, $z_t \in \{0,1\}$ & $n$ & Total executed attacks, $n \leq B$ \\
$Q_{1-b/(T-t)}$ & Quantile function & $\boldsymbol{\lambda}$ & Attack parameters, $\boldsymbol{\lambda} \in [0,1]^3$ \\
$\lambda^{(1)}$ & Attack effectiveness weight & $\lambda^{(2)}$ & Statistical evasion weight \\
$\lambda^{(3)}$ & Temporal smoothness weight & $\boldsymbol{\lambda}^*(\mathbf{s})$ & Optimal parameter, $\arg\max_{\boldsymbol{\lambda}} r(\mathbf{s}, \boldsymbol{\lambda})$ \\

$R(T)$ & Cumulative pseudo-regret & $R_t$ & Instantaneous regret at $t$ \\
$R_v(T)$ & Victim's cumulative regret & $R_{\text{attack}}(n)$ & Attacker's cumulative regret \\
$\Delta(\mathbf{x}_t, a)$ & Regret gap, $\max_{a'} h(\mathbf{x}_t, a') - h(\mathbf{x}_t, a)$ & $\bar{\Delta}$ & Average regret gap, $\frac{1}{B}\sum_{t:z_t=1} \Delta(\mathbf{x}_t)$ \\
$\hat{h}_{\boldsymbol{\phi}_r}(\mathbf{x},a)$ & Estimated reward function & $\sigma_{\boldsymbol{\phi}}(\mathbf{x},a)$ & Estimated uncertainty function \\
$\hat{\pi}_{\boldsymbol{\phi}}$ & Estimated victim policy & $\pi^*_{\theta_t}$ & True victim policy at $t$ \\
$Q(\mathbf{x},a)$ & Q-value, $\hat{h}_{\boldsymbol{\phi}_r}(\mathbf{x},a) + \beta^{\text{vic}} \sigma_{\boldsymbol{\phi}}(\mathbf{x},a)$ & $\tau$ & Temperature parameter \\
$\beta^{\text{vic}}$ & Victim exploration bonus weight & $\beta^{\textsc{gp}}$ & Fixed GP-UCB exploration constant \\
$W$ & Sliding window size & $\Delta_{\text{IRL}}$ & IRL retraining interval \\
$\mathcal{D}_t$ & Training data, $\{(\mathbf{x}_i, a_i)\}_{i=t-W}^t$ & $\epsilon_{\text{IRL}}(t)$ & IRL approximation error at $t$ \\
$f_\theta(\mathbf{x})$ & Reward backbone, $\mathbb{R}^d \to \mathbb{R}^{d_h}$ & $W_a$ & Arm-specific reward weights \\
$g_\phi(\mathbf{x})$ & Uncertainty backbone & $V_a$ & Arm-specific uncertainty weights \\

$\boldsymbol{\psi}(\mathbf{x})$ & Feature extractor, $\boldsymbol{\psi}(\mathbf{x}) \in \mathbb{R}^5$ & $\psi_1(\mathbf{x})$ & Policy entropy, $H(\hat{\pi}(\mathbf{x}))$ \\
$\psi_2(\mathbf{x})$ & Predicted defense weight & $\psi_3(\mathbf{x})$ & Mahalanobis distance \\
$\psi_4(\mathbf{x})$ & Regret gap, $\Delta(\mathbf{x})$ & $\psi_5$ & Relative time, $t/T$ \\
$\mathbf{s}_t$ & GP-UCB contextual state (Eq.~\ref{eq:contextual_state}) & $d_{\text{gp}}$ & GP input dimension, $|\mathbf{s}| + 3$ \\
$\boldsymbol{\mu}_g$ & Mean gradient, $\frac{1}{N}\sum \nabla_{\mathbf{x}} \hat{h}_{\boldsymbol{\phi}_r}(\mathbf{x}_i)$ & $\boldsymbol{\Sigma}_g$ & Gradient covariance matrix \\

$r(\mathbf{s}, \boldsymbol{\lambda})$ & Attack reward function (GP) & $\mu_t(\mathbf{s}, \boldsymbol{\lambda})$ & GP posterior mean at $t$ \\
$\sigma_t(\mathbf{s}, \boldsymbol{\lambda})$ & GP posterior std.\ deviation & $\sigma^2_t(\mathbf{s}, \boldsymbol{\lambda})$ & GP posterior variance \\
$k(\cdot, \cdot)$ & Kernel function (SE/RBF) & $\mathbf{k}_*$ & Cross-covariance vector \\
$\mathbf{K}$ & Kernel matrix, $\mathbf{K} \in \mathbb{R}^{(t-1) \times (t-1)}$ & $\mathbf{r}$ & Reward vector, $[r_1, \ldots, r_{t-1}]^\top$ \\
$\sigma_f^2$ & Signal variance in kernel & $\sigma^2_n$ & Observation noise variance \\
$\ell$ & Length scale in kernel & $\gamma_n$ & Max info gain, $O((\log n)^{d_{\text{gp}}+1})$ \\
$B_{\text{RKHS}}$ & RKHS norm bound & $\rho$ & Confidence parameter, $\rho \in (0,1)$ \\
$\mathrm{UCB}(\mathbf{s}, \boldsymbol{\lambda})$ & Upper confidence bound & $N_{\text{rand}}$ & Random samples in multi-start \\

$v(\mathbf{x}_t)$ & Value/priority of context $\mathbf{x}_t$ & $\tau_v(b, T-t)$ & Query selection threshold \\
$P(\mathbf{x})$ & Attack success probability & $\hat{w}(\mathbf{x})$ & Predicted defense weight \\
$\mathcal{H}$ & Historical attack dataset & $S$ & Binary outcome, $S \in \{0,1\}$ \\

$\boldsymbol{\delta}^*(\mathbf{x}; \boldsymbol{\lambda})$ & Optimal perturbation & $\mathcal{L}(\mathbf{x}, \boldsymbol{\delta}; \boldsymbol{\lambda})$ & Total attack loss \\
$\mathcal{L}_{\mathrm{eff}}(\mathbf{x}+\boldsymbol{\delta})$ & Attack effectiveness loss & $R_{\mathrm{n}}(\mathbf{x}, \boldsymbol{\delta})$ & Gradient norm regularizer \\
$R_{\mathrm{s}}(\mathbf{x}, \boldsymbol{\delta})$ & Statistical regularizer & $R_{\mathrm{t}}(\boldsymbol{\delta}, \boldsymbol{\delta}_{t-1})$ & Temporal regularizer \\
$I_{\text{PGD}}$ & Number of PGD iterations & $\eta$ & Step size in PGD \\

$L_r$ & Lipschitz constant of $r$ & $\epsilon_{\text{fail}}$ & Attack failure rate \\
$\bar{w}$ & Avg defense weight, $\frac{1}{B}\sum w(\tilde{\mathbf{x}}_t)$ & $D_\tau$ & Policy drift over $\tau$ rounds \\
$\mathrm{TV}(\cdot, \cdot)$ & Total variation distance & $c_\beta$ & UCB-dependent constant \\

\bottomrule
\end{tabular}
\end{table}

\section{Datasets and Setup}
\label{app:dataset}

\paragraph{Datasets.} (1) The \textbf{Yelp dataset} consists of 4.7 million rating entries for $1.57 \times 10^5$ restaurants by 1.18 million users. (2) \textbf{MovieLens Dataset} \cite{harper2015movielens} consists of 25 million ratings between $1.6 \times 10^5$ users and $6 \times 10^4$ movies. (3) \textbf{Disin} \cite{ahmed2018detecting} is a dataset on Kaggle consisting of 12,600 fake news articles and 12,600 real news articles, with each entry represented by its textual content.

\textbf{Data Preprocessing:} For \textit{Yelp} and \textit{MovieLens} datasets, we follow the experimental setup conducted in \cite{ban2021ee}. A rating matrix was constructed by selecting the top 2,000 users and the top 10,000 restaurants (or movies), and singular value decomposition (SVD) was applied to obtain 10-dimensional latent feature vectors for both users and items. In Yelp and MovieLens datasets, the bandit learner aims to identify restaurants (or movies) associated with poor ratings. \textbf{Context Construction.} At each round $t$ we randomly sample one user and form a candidate set of $K=10$ items. The set consists of one item with reward 1 (rating $< 2$ stars) and nine items with reward 0 (rating $\geq 2$ stars). The context for each arm is the concatenation of user and item latent vectors $\mathbf{x}_{t,a} \in \mathbb{R}^{20}.$ For the \textit{Disin} dataset, we followed \cite{fu2021sdg} to convert the text into vector representations, with each article encoded as a 300-dimensional vector. In each round, a 10-arm pool was constructed by randomly sampling 9 real news articles and 1 fake news article. Selecting the fake news article yielded a reward of 1, while selecting any real news article resulted in a reward of 0.

\paragraph{Victim Algorithm Configuration.} For \textbf{NeuralUCB and NeuralTS} \cite{zhou2020neural, zhang2020neural}, we use a neural network with two hidden layers of size $[100,100]$ and ReLU activations. Training is performed using SGD with the learning rate $\eta = 0.01$, batch size 32, and 100 gradient steps per round. The confidence scaling parameter is set to $\nu = 1.0$.
\textbf{R-NeuralUCB:}
the architecture matches NeuralUCB, augmented with a trust-weight mechanism.
The trust threshold is $\tau_{\text{trust}} = 0.1$.
Gradient covariance is estimated using an exponential moving average with decay $\alpha = 0.95$.
\textbf{RobustBandit} \cite{ding2022robust}:
We use linear UCB with FTRL-based adversarial robustness.
The regularization parameter is $\lambda = 0.1$, and the learning rate is $\eta_{\text{FTRL}} = \sqrt{\frac{\log K}{T}}$.

\paragraph{\texttt{\textbf{AdvBandit}} Hyperparameters.} \textbf{MaxEnt IRL Configuration:} The MaxEnt IRL reward function is parameterized by a fully connected \textbf{neural network} with input dimension $d_{\text{in}}=20$ (concatenated user and item features), two hidden layers of sizes $128$ and $64$ with ReLU activations, and a scalar output, resulting in $10{,}433$ trainable parameters. \textbf{Training} is performed using the Adam optimizer with
a base learning rate $\eta_{\text{IRL}}= 0.001$ scheduled via cosine annealing over $T_{\max}=500$ steps, batch size $64$, $\ell_2$ gradient clipping at $1.0$, and weight decay $10^{-5}$. Each IRL update consists of 500 optimization steps (approximately eight epochs over the current data window), minimizing the negative log-likelihood of the MaxEnt policy.
To balance adaptivity and sample efficiency, training data are drawn from a \textbf{sliding window} of size $W=\min \left(400,\lfloor0.08T\rfloor\right)$, with the reward model retrained every $\Delta_{\text{IRL}}=\lfloor W/4 \rfloor=100$ steps as the window advances. The ablation study for the definition of $W$ and $\Delta_{\text{IRL}}$ is provided in Appendix \ref{sec:IRL_window_retrain} (Table \ref{tab:irl_ablation_multi}).
The \textbf{GP-UCB optimizer} employs a Gaussian process with a squared exponential (RBF) kernel defined over the joint state--parameter space $(\mathbf{s},\boldsymbol{\lambda})$, using signal variance $\sigma_f^2=1.0$, observation noise $\sigma_n^2=0.01$, and lengthscale $\ell=0.5$ selected via five-fold cross-validation over $\ell\in\{0.1,0.3,0.5,0.7,1.0\}$. The effective dimension of the GP input space is $d_{\text{gp}}=|\mathbf{s}|+3$, yielding a maximum information gain $\gamma_n=\mathcal{O}((\log n)^{d_{\text{gp}}+1})$. Action selection follows the standard GP-UCB acquisition rule $\mathrm{UCB}(\mathbf{s}_t,\boldsymbol{\lambda})=\mu_{t-1}(\mathbf{s}_t,\boldsymbol{\lambda})+\beta^{\textsc{gp}}\,\sigma_{t-1}(\mathbf{s}_t,\boldsymbol{\lambda})$ with a fixed exploration constant $\beta^{\textsc{gp}}=2.0$.
We also set $N_{\text{rand}}=100$, $N_{\text{refine}}=20$, and $N_K=5$ in the \textbf{multi-start optimization strategy} (Subsection~\ref{sec:GPUCB} and a detailed description in Appendix~\ref{sec:GPUCB_app}, Algorithm~\ref{alg:select_lambda}). The number of iterations $I_{\text{PGD}}$, step size $\eta$, and $\ell_\infty$ perturbation budget $\epsilon$ in the \textbf{PGD Attack Generation} algorithm were set to 100, 0.02, and 0.3, respectively.

\paragraph{Attack Budget Selection.} We set the standard attack budget to $B=\lfloor0.04T\rfloor$ correspond to a $4\%$ attack rate, which provides an effective balance between attack impact and stealthiness. This value was defined based on our experiment in Table \ref{tab:budget_tradeoff} (Appendix \ref{sec:ablation_budget}).

\paragraph{Baseline Configurations.} The attack models in \cite{liu2022action,wang2022bandits} were configured by a fixed perturbation strategy $\epsilon=0.3$ at every round, corresponding to a $100\%$ attack rate and no adaptive parameter tuning. Step size $\alpha$ in \cite{ma2018data} was set to 0.05. Garcelon et al. \cite{garcelon2020adversarial} used a zeroth-order optimization scheme based on finite-difference gradient estimation with $N_{\text{samples}}=20$ queries per update and perturbation scale $\sigma_{\text{noise}}=0.1$. The parameters $N_{\text{grad}}$, step size $\eta$, and optimization iterations in \cite{ilyas2019prior} were set to 20, 0.03, and 100, respectively.

\paragraph{Computational Environment.} All experiments were conducted on a machine equipped with a 20-core Intel CPU running at 3.9GHz, 64GB of RAM, and a NVIDIA GeForce RTX5070 GPU with 12GB of memory, ensuring efficient training of neural networks. The software environment included Python3.10, PyTorch2.5.1, and NumPy2.1.3.

\section{Feature Extraction}
\label{app:feature_extraction}

Raw context $\mathbf{x} \in \mathbb{R}^d$ (e.g., $d=20$ user-item features in MovieLens) is problematic for GP-UCB: (i) \textbf{High dimensionality,} $\gamma_n = O((\log n)^{d+1})$ becomes vacuous for $d=20$, (ii) \textbf{Non-stationarity,} as victim policy evolves, the mapping $\mathbf{x} \mapsto r(\mathbf{x}, \boldsymbol{\lambda})$ changes, violating GP stationarity, and (iii) \textbf{Irrelevant information,} many dimensions of $\mathbf{x}$ (e.g., item genre in MovieLens) are irrelevant to attack success. To handle these issues, we extract low-dimensional, attack-relevant, stationary features via the feature extractor $\boldsymbol{\psi}(\mathbf{x}) \in \mathbb{R}^5$ from gradients of the learned reward $\hat{h}_{\boldsymbol{\phi}_r}(\mathbf{x}, a)$.

\paragraph{Feature $\psi_1$: Policy Entropy.} This is uncertainty in the victim's action selection:
 
\begin{equation}
\psi_1(\mathbf{x}) = H(\hat{\pi}(\mathbf{x})) = -\sum_{a=1}^K \hat{\pi}(a|\mathbf{x}) \log \hat{\pi}(a|\mathbf{x})
\end{equation}

where the policy $\hat{\pi}(a_i | \mathbf{x}_i)$ over the victim's arms is computed using Eq.~\ref{eq:surrogate_policy}. High entropy means the victim is uncertain; low entropy means the victim is confident.

\begin{proposition}[Entropy Predicts Attack Success]
\label{prop:entropy_predictive}
Given attack success as $S(\mathbf{x}) = \mathbb{P}(\tilde{a} \neq a^* | \mathbf{x})$ where $a^* = \arg\max_a \hat{h}_{\boldsymbol{\phi}_r}(\mathbf{x},a)$. Then:
\begin{equation}
\mathbb{E}[S(\mathbf{x}) | H(\hat{\pi}(\mathbf{x})) = h_1] \geq \mathbb{E}[S(\mathbf{x}) | H(\hat{\pi}(\mathbf{x})) = h_2] \quad \text{for } h_1 > h_2
\end{equation}
i.e., attack success increases with policy entropy.
\end{proposition}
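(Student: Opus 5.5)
We will prove the monotonicity by reducing attack success to a function of the margin of the surrogate policy and then relating that margin to the policy entropy. Write $Q_{(1)} \ge Q_{(2)} \ge \dots \ge Q_{(K)}$ for the sorted Q-values of Eq.~\eqref{eq:q_value} at $\mathbf{x}$. Define the \emph{top margin} $m(\mathbf{x}) = Q_{(1)} - Q_{(2)}$, which, up to the ranking of $a^*$, is the quantity a perturbation must overcome.

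\textbf{Step 1: success as a decreasing function of the margin.} Under $\|\boldsymbol{\delta}\|_\infty \le \epsilon$, Lipschitzness of $Q$ in $\mathbf{x}$ (constant $L_Q$, inherited from the backbone $f_{\boldsymbol{\phi}_s}$, since Softplus and the linear heads are Lipschitz) gives $|Q(\mathbf{x}+\boldsymbol{\delta},a) - Q(\mathbf{x},a)| \le L_Q\sqrt{d}\,\epsilon$. Hence the argmax can flip only if $m(\mathbf{x}) \le 2L_Q\sqrt{d}\,\epsilon$. More quantitatively, the victim's action is drawn from the softmax at $\mathbf{x}+\boldsymbol{\delta}$. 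We therefore write $S(\mathbf{x}) = 1 - \hat{\pi}(a^*\mid \mathbf{x}+\boldsymbol{\delta}^*)$, which is bounded below by $1 - \sigma\bigl((m(\mathbf{x}) - 2L_Q\sqrt d\,\epsilon)/\tau\bigr)$ after collapsing the non-top arms. The result is a map $S = g(m)$ that is non-increasing in $m$.

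\textbf{Step 2: entropy as a decreasing function of the margin.} For fixed gaps among the lower arms, the entropy of a softmax is a strictly decreasing function of $m$. Raising $Q_{(1)}$ concentrates mass on the top arm, and $\partial H/\partial Q_{(1)} = -\hat{\pi}_{(1)}\bigl(\log \hat{\pi}_{(1)} + H\bigr)/\tau \le 0$, since $-\log\hat{\pi}_{(1)} \le H$ when $\hat{\pi}_{(1)}$ is the largest probability. Therefore $H = \phi(m)$ with $\phi$ decreasing, and so $S = g(\phi^{-1}(H))$ is non-decreasing in $H$. Conditioning on $H = h_1 > h_2$ and taking expectations preserves this order, which yields the claim.

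\textbf{Main obstacle.} Entropy is not a one-dimensional statistic of the Q-vector when $K > 2$. Contexts with the same entropy can have different top margins, for example when mass is spread over many tail arms versus split between two leading arms. The map $H \mapsto m$ is therefore not a function, and the pointwise argument in Step 2 fails.

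I would handle this in one of two ways. The first is to state an explicit regularity condition: conditional on $H$, the distribution of $m$ is stochastically decreasing in $H$ (a monotone likelihood-ratio type assumption on the context distribution). Then $\mathbb{E}[g(m)\mid H]$ is monotone, because $g$ is non-increasing.

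The second is to use the bound $m \le \tau \log\bigl((K-1)\hat{\pi}_{(1)}/(1-\hat{\pi}_{(1)})\bigr)$ together with Fano-type inequalities linking $H$ and $\hat{\pi}_{(1)}$. This gives the result up to a $\log(K-1)$ slack, and it is exact in the two-arm case or in the case where the tail is uniform.

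I would present the two-arm/uniform-tail case rigorously and state the general case under the stochastic-ordering assumption, being explicit that without such an assumption the monotonicity holds only approximately.
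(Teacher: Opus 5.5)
There is a genuine gap, and it lies in Step 1, not in the entropy-versus-margin non-injectivity you single out as the main obstacle. Your Step 2 computation $\partial H/\partial Q_{(1)} = -\hat{\pi}_{(1)}(\log\hat{\pi}_{(1)}+H)/\tau \le 0$ is fine.

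\textbf{The Step 1 inequality runs the wrong way.} Lipschitz continuity of $Q$ only caps how far the perturbation can move the scores. For the post-perturbation margin $m'$ it gives $m' \ge m(\mathbf{x}) - 2L_Q\sqrt{d}\,\epsilon$. Combined with $\hat{\pi}(a^*\mid\mathbf{x}+\boldsymbol{\delta}^*) \le \sigma(m'/\tau)$, this chains to nothing. What the cap actually yields is a \emph{lower} bound on $\hat{\pi}(a^*\mid\mathbf{x}+\boldsymbol{\delta}^*)$, hence an \emph{upper} bound on $S$. Your bound $S \ge 1-\sigma\bigl((m-2L_Q\sqrt{d}\,\epsilon)/\tau\bigr)$ already fails for $K=2$ when $\boldsymbol{\delta}^*=\mathbf{0}$ or $Q$ is locally flat, since then $S = 1-\sigma(m/\tau)$.

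\textbf{$S$ is not a function of $m$.} Even correct two-sided bounds $g_-(m) \le S \le g_+(m)$ would not give this. Where $S$ falls inside that band is set by how much margin the PGD solution actually removes. That depends on several things $H$ does not control:
\begin{itemize}
\item the local gradient of $Q$; to first order the achievable reduction is $\epsilon\,\|\nabla_{\mathbf{x}}[Q(\mathbf{x},a^*)-Q(\mathbf{x},a')]\|_1$ over the $\ell_\infty$ ball;
\item $\boldsymbol{\lambda}_t$, the regularizers $R_{\mathrm{s}}, R_{\mathrm{n}}, R_{\mathrm{t}}$, and $\boldsymbol{\delta}_{t-1}$;
\item the fact that $\mathcal{L}_{\mathrm{eff}}$ targets $a^\dagger_t$ rather than simply shrinking the margin.
\end{itemize}
The assertion $S = g(m)$ is therefore the real unproved step.

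\textbf{Consequence for your repairs.} Neither goes through, including the two-arm case you plan to present rigorously. With $K=2$, take context $A$ with $m_A = 1$ and steep $Q$, so the attack drives the margin to $-5$ and $S_A = \sigma(5/\tau) > 1/2$. Take context $B$ with $m_B = 0.5$, hence strictly larger entropy, and flat $Q$, so $S_B = 1-\sigma(0.5/\tau) < 1/2$. A context distribution supported on $\{A,B\}$ violates the claimed inequality. For the same reason, a stochastic ordering of $m$ given $H$ buys nothing unless $S$ depends on $\mathbf{x}$ only through the scores. For $K>2$, even a fixed score shift leaves $S$ depending on all the gaps, not just on $m$.

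\textbf{What is missing.} You need an assumption on the \emph{effect of the attack}, for example that the perturbation shifts the scores by a fixed (or $H$-independent) amount $c$. You also need to make explicit the identification of the victim's action law with $\hat{\pi}(\cdot\mid\mathbf{x}+\boldsymbol{\delta}^*)$. This is itself an assumption, since the paper's UCB victims act by deterministic argmax, and $a^*$ is the argmax of $\hat{h}_{\boldsymbol{\phi}_r}$, not of $Q$. With these in place, $K=2$ is rigorous: both $S = 1-\sigma((m-c)/\tau)$ and $H$ are decreasing in $m \ge 0$.

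\textbf{Comparison with the paper.} The paper's proof is only the local sensitivity identity $\partial\hat{\pi}(a^*\mid\mathbf{x})/\partial Q(\mathbf{x},a') = -\hat{\pi}(a^*\mid\mathbf{x})\hat{\pi}(a'\mid\mathbf{x})$, plus the heuristic that high entropy makes $\hat{\pi}(a'\mid\mathbf{x})$ non-negligible. It never addresses the conditional expectation either. Your instinct that extra assumptions are needed is right, but they must constrain how much the attack moves the scores, not merely how $H$ relates to $m$.
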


\begin{proof}
When $H(\hat{\pi}(\mathbf{x}))$ is high, the victim's confidence in $a^*$ is low. Small perturbations can shift probability mass from $a^*$ to suboptimal arms. Formally, for softmax policy $\hat{\pi}(a|\mathbf{x}) \propto \exp(Q(\mathbf{x},a))$:
\begin{equation}
\frac{\partial \hat{\pi}(a^* | \mathbf{x})}{\partial Q(\mathbf{x}, a')} = -\hat{\pi}(a^*|\mathbf{x}) \hat{\pi}(a'|\mathbf{x}) \quad \text{for } a' \neq a^*
\end{equation}
When entropy is high, $\hat{\pi}(a'|\mathbf{x})$ is non-negligible, so perturbing contexts to increase $Q(\mathbf{x}, a')$ effectively decreases $\hat{\pi}(a^*|\mathbf{x})$.
\end{proof}

\paragraph{Feature $\psi_2$: Predicted Defense Weight.} This feature is the estimated trust level defense assigns to this input. A low value for this feature represents a high suspicion, resulted in a reduced impact from attacks. We first compute the gradient $\nabla_{\mathbf{x}} \hat{h}_{\boldsymbol{\phi}_r}(\mathbf{x}) \in \mathbb{R}^d$ via backpropagation. Then, we compute the squared Mahalanobis norm using precomputed $\boldsymbol{\Sigma}_g^{-1}$ (Eq.~\ref{eq:Cov_grad}). Finally, defense weight is calculated:

\begin{equation}
\psi_2(\mathbf{x}) = \hat{w}(\mathbf{x}) = \frac{1}{1 + \|\nabla_{\mathbf{x}} \hat{h}_{\boldsymbol{\phi}_r}(\mathbf{x})\|_{\boldsymbol{\Sigma}_g^{-1}}^2}
\end{equation}

\begin{proposition}[Weight Predicts Induced Regret]
\label{prop:weight_predictive}
The expected victim regret from a successful attack satisfies:
\begin{equation}
\mathbb{E}[R_t | \text{success}, \mathbf{x}] = \Delta(\mathbf{x}) \cdot w(\mathbf{x}+\boldsymbol{\delta}) + O(L_h \epsilon)
\end{equation}
where $w(\mathbf{x}+\boldsymbol{\delta}) \approx w(\mathbf{x}) = \psi_2(\mathbf{x})$ for small perturbations.
\end{proposition}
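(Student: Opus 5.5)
The plan is to model the defense as R-NeuralUCB's trust-weighting mechanism. Under that model, the victim's effective update on a round with perturbed context $\tilde{\mathbf{x}}_t=\mathbf{x}_t+\boldsymbol{\delta}$ is scaled by a weight $w(\tilde{\mathbf{x}}_t)\in[0,1]$. On a successful attack ($\tilde a_t=a^\dagger_t$) the induced regret is therefore the true suboptimality of the pulled arm, discounted by how much the defense lets the poisoned round influence the outcome. This weighting reading is what gives the multiplicative form; I will state it explicitly as a modeling assumption, since the proposition is only meaningful under it. Concretely, I would write
\[
\mathbb{E}[R_t\mid \text{success},\mathbf{x}] = w(\tilde{\mathbf{x}}_t)\bigl(h(\mathbf{x}_{t,a^*_t})-h(\tilde{\mathbf{x}}_{t,a^\dagger_t})\bigr),
\]
where the factor $w$ reflects that a down-weighted (suspected) round is partially discarded or corrected by the robust learner.

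The first step replaces perturbed rewards by clean ones via the Lipschitz assumption on $h$. Since $\|\boldsymbol{\delta}\|_\infty\le\epsilon$, we have $|h(\tilde{\mathbf{x}}_{t,a})-h(\mathbf{x}_{t,a})|\le L_h\epsilon$ (absorbing the norm-conversion constant into $L_h$, as the paper does in Definition~\ref{def:attackability}). The bracket thus equals $\Delta(\mathbf{x}_t,a^\dagger_t)+O(L_h\epsilon)$. Because $w\le 1$, multiplying the error term by $w$ keeps it $O(L_h\epsilon)$, which gives $\Delta(\mathbf{x})\,w(\mathbf{x}+\boldsymbol{\delta})+O(L_h\epsilon)$.

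The second step justifies $w(\mathbf{x}+\boldsymbol{\delta})\approx w(\mathbf{x})=\psi_2(\mathbf{x})$. Write $w(\mathbf{y})=1/(1+q(\mathbf{y}))$ with $q(\mathbf{y})=\|\mathbf{g}(\mathbf{y})\|^2_{\boldsymbol{\Sigma}_g^{-1}}$ and $\mathbf{g}=\nabla_{\mathbf{x}}\hat h_{\boldsymbol{\phi}_r}$. Since $u\mapsto 1/(1+u)$ is $1$-Lipschitz on $u\ge0$, $|w(\mathbf{x}+\boldsymbol{\delta})-w(\mathbf{x})|\le|q(\mathbf{x}+\boldsymbol{\delta})-q(\mathbf{x})|$. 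Assuming $\hat h_{\boldsymbol{\phi}_r}$ has $L_g$-Lipschitz gradients (smooth activations, or a.e. for ReLU), this difference is at most $2\|\boldsymbol{\Sigma}_g^{-1}\|\,G\,L_g\sqrt d\,\epsilon$, where $G$ bounds $\|\mathbf{g}\|$. The difference is therefore $O(\epsilon)$, and the resulting $\Delta\cdot O(\epsilon)$ error (with $\Delta\le1$) folds into the $O(\cdot)$ term. The regularizer $R_{\mathrm{n}}$ in the PGD objective further keeps the gradient-norm change small.

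The main obstacle is the first step: establishing that the defense acts multiplicatively on the induced regret. This is not a derived property of the bandit protocol. Regret is a function of the arm chosen, not of the update weight, so the identity only holds if $w$ is interpreted as the fraction of rounds, or the probability, in which the defense fails to override the flipped action. I would try to formalize this as: the defense accepts the perturbed round with probability $w(\tilde{\mathbf{x}}_t)$ and otherwise falls back to a near-optimal action with regret $O(L_h\epsilon)$. Under that model the conditional expectation decomposes exactly as claimed, and the proposition follows.
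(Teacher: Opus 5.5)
The paper states this proposition without proof (it is followed directly by the definition of $\psi_3$), so there is no argument of the authors' to compare against. Judged on its own terms, your proposal has a genuine gap at exactly the step you flag, and your proposed fix does not close it. Take success to mean $\tilde{a}_t=a^\dagger_t$, which is your own reading in the first paragraph and the one used in the proof of Theorem~\ref{thm:victim_regret}. Then Lemma~\ref{lem:regret_decomp} gives $R_t=\Delta(\mathbf{x}_t,a^\dagger_t)+\bigl[h(\mathbf{x}_{t,a^\dagger_t})-h(\tilde{\mathbf{x}}_{t,a^\dagger_t})\bigr]$, with the bracket in $[-L_h\epsilon,L_h\epsilon]$. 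The per-round regret is fixed by the arm pulled, and the trust weight never enters. In R-NeuralUCB that weight rescales the observation's contribution to the parameter update, which affects later rounds; the round-$t$ choice is the unweighted UCB rule used in Lemma~\ref{lem:structural_success}. So your Step~1 computation, without the factor $w$, actually proves $\mathbb{E}[R_t\mid\text{success},\mathbf{x}]=\Delta(\mathbf{x}_t,a^\dagger_t)+O(L_h\epsilon)$. The displayed identity is then equivalent to $(1-w(\tilde{\mathbf{x}}_t))\,\Delta(\mathbf{x}_t,a^\dagger_t)=O(L_h\epsilon)$, which fails whenever a high-gap context is down-weighted. With $w=\psi_2$ this is not a corner case: the window average of $\|\nabla_{\mathbf{x}}\hat h_{\boldsymbol{\phi}_r}\|^2_{\boldsymbol{\Sigma}_g^{-1}}$ equals $(N-1)d/N+\boldsymbol{\mu}_g^\top\boldsymbol{\Sigma}_g^{-1}\boldsymbol{\mu}_g$, so $w$ cannot be close to $1$ throughout the window. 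Your fallback model does not repair this. On a round where the defense reverts to a near-optimal arm, the victim did not pull $a^\dagger_t$, so that round is not a success in the sense you conditioned on. The model silently redefines success as ``the perturbation flips the undefended choice'' and then postulates precisely the multiplicative structure being claimed, so the conclusion holds by construction, not by proof. A real argument would have to reinterpret $R_t$ as the downstream regret induced through the $w$-weighted update, and bound how one such sample propagates through the victim's estimator over later rounds; nothing in your plan addresses that.

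Step~2 also needs repair. The bound $\|\nabla_{\mathbf{x}}\hat h_{\boldsymbol{\phi}_r}(\mathbf{x}+\boldsymbol{\delta})-\nabla_{\mathbf{x}}\hat h_{\boldsymbol{\phi}_r}(\mathbf{x})\|\le L_g\|\boldsymbol{\delta}\|$ fails for the paper's ReLU surrogate. Its input gradient is piecewise constant and jumps, across activation boundaries, by an amount that does not shrink with $\epsilon$. Almost-everywhere differentiability does not help, because the segment from $\mathbf{x}$ to $\mathbf{x}+\boldsymbol{\delta}$ can cross such a boundary for arbitrarily small $\epsilon$. Even with smooth activations, your error is $O(G L_g\|\boldsymbol{\Sigma}_g^{-1}\|\sqrt{d}\,\epsilon)$, with constants the paper never bounds ($\boldsymbol{\Sigma}_g$ is an empirical covariance that may be badly conditioned), not $O(L_h\epsilon)$. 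More basically, $w(\mathbf{x})=\psi_2(\mathbf{x})$ equates the victim's actual trust weight, computed from its own network, with the attacker's surrogate formula. That is a surrogate-accuracy claim governed by IRL error (cf.\ Theorem~\ref{thm:feature_error}), not a smoothness-in-$\boldsymbol{\delta}$ claim, and your argument does not touch it. Likewise, suppose $\Delta(\mathbf{x})$ is read as $\psi_4(\mathbf{x})=\max_a\hat h_{\boldsymbol{\phi}_r}(\mathbf{x},a)-\min_a\hat h_{\boldsymbol{\phi}_r}(\mathbf{x},a)$, as in Section~\ref{sec:query_selection}. Then replacing it by the true gap costs a reward-estimation error that is not $O(L_h\epsilon)$, since a surrogate fitted by policy likelihood need not match $h$ even in scale, which is tied to $\tau$.
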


\paragraph{Feature $\psi_3$: Mahalanobis Distance.}
This feature is the statistical distance of the gradient from ``normal'' gradients:

\begin{equation}
\psi_3(\mathbf{x}) = \sqrt{(\nabla_{\mathbf{x}} \hat{h}_{\boldsymbol{\phi}_r}(\mathbf{x}) - \boldsymbol{\mu}_g)^\top \boldsymbol{\Sigma}_g^{-1} (\nabla_{\mathbf{x}} \hat{h}_{\boldsymbol{\phi}_r}(\mathbf{x}) - \boldsymbol{\mu}_g)}
\end{equation}

A high value for this feature represents that the context $\mathbf{x}_t$ is already suspicious and additional perturbations will trigger detection. This feature directly models the statistical regularizer $R_{\mathrm{s}}(\mathbf{x}, \boldsymbol{\delta})$ in Eq.~\ref{eq:sr}, enabling the GP to predict which contexts can absorb perturbations without detection.

\paragraph{Feature $\psi_4$: Regret Gap.} This feature is the gap between optimal and induced action:

\begin{equation}
\psi_4(\mathbf{x}) = \Delta(\mathbf{x}) = \max_a \hat{h}_{\boldsymbol{\phi}_r}(\mathbf{x},a) - \min_a \hat{h}_{\boldsymbol{\phi}_r}(\mathbf{x},a)
\end{equation}

A larger gap implies higher potential regret upon a successful attack, thereby prioritizing contexts where flipping the selected action yields the greatest impact.

\begin{proposition}[Regret Gap Upper Bounds Induced Regret]
\label{prop:regret_gap_bound}
For any perturbation $\boldsymbol{\delta}$ with $\|\boldsymbol{\delta}\|_\infty \leq \epsilon$:
\begin{equation}
R_t \leq \Delta(\mathbf{x}) + 2L_h \epsilon
\end{equation}
\end{proposition}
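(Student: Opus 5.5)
The plan is to read $R_t$ as the instantaneous victim regret on an attacked round: the gap between the reward of the best arm and the reward of the arm $\tilde a_t$ that the victim picks after seeing the perturbed contexts $\tilde{\mathbf{x}}_{t,i}=\mathbf{x}_{t,i}+\boldsymbol{\delta}_{t,i}$. For $\Delta(\mathbf{x})$ I will use the notation-table definition, the true maximal gap $\max_a h(\mathbf{x},a)-\min_a h(\mathbf{x},a)$. The argument is then a worst-case bound in two steps: first bound the unperturbed regret by the maximal gap, then pay a Lipschitz price for replacing true contexts with perturbed ones.

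\emph{Step 1 (unperturbed regret).} Whatever the victim selects, the action set is still $[K]$, so
\[
h(\mathbf{x}_{t,a^*_t})-h(\mathbf{x}_{t,\tilde a_t})\le \max_a h(\mathbf{x}_{t,a})-\min_a h(\mathbf{x}_{t,a})=\Delta(\mathbf{x}_t).
\]
This uses nothing about the victim's algorithm and holds deterministically, hence also in expectation over $\tilde a_t$.

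\emph{Step 2 (perturbation correction).} The regret accounting in the attack setting may evaluate rewards at perturbed contexts. The reported reward $\hat r_{t,a_t}$ is defined through the corrupted input, and the comparator $a^*$ may be defined on $\tilde{\mathbf{x}}$. To cover this, I write
\[
h(\tilde{\mathbf{x}}_{t,a^*})-h(\tilde{\mathbf{x}}_{t,\tilde a})
=\bigl[h(\mathbf{x}_{t,a^*})-h(\mathbf{x}_{t,\tilde a})\bigr]
+\bigl[h(\tilde{\mathbf{x}}_{t,a^*})-h(\mathbf{x}_{t,a^*})\bigr]
+\bigl[h(\mathbf{x}_{t,\tilde a})-h(\tilde{\mathbf{x}}_{t,\tilde a})\bigr].
\]
Each bracketed correction term is bounded by $L_h\|\boldsymbol{\delta}\|\le L_h\epsilon$ using $L_h$-Lipschitzness of $h$. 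Combined with Step 1, this gives $R_t\le\Delta(\mathbf{x})+2L_h\epsilon$ in every mixed convention (true versus perturbed evaluation for the optimal and the chosen arm). The same three-term split also handles the case where $a^*$ is recomputed on perturbed contexts, since $\max$ and $\min$ are each $1$-Lipschitz in the sup norm over arms.

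The main obstacle is a norm mismatch. The constraint is $\|\boldsymbol{\delta}\|_\infty\le\epsilon$, while the Lipschitz constant may be stated with respect to $\|\cdot\|_2$. I will take $L_h$ to be the Lipschitz constant with respect to $\|\cdot\|_\infty$, consistent with its use in Definition~\ref{def:attackability}; otherwise a factor $\sqrt d$ must be absorbed into $L_h$. A second caveat concerns which gap $\Delta(\mathbf{x})$ denotes. If it is read as the surrogate gap $\psi_4$ computed from $\hat h_{\boldsymbol{\phi}_r}$ rather than from $h$, an additional $2\sup|h-\hat h_{\boldsymbol{\phi}_r}|$ term appears. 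That term vanishes only under the exact realizability assumption, so I will state the proposition for the true gap and note this error term as a remark.
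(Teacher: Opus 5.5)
Your proof is correct and follows the paper's argument. The paper likewise uses Lipschitzness to place each $h(\mathbf{x}+\boldsymbol{\delta},a)$ within $L_h\epsilon$ of $h(\mathbf{x},a)$, and then bounds the regret by the range of $h(\mathbf{x},\cdot)$ plus the two perturbation terms; your caveats (taking $L_h$ with respect to $\|\cdot\|_\infty$, and reading $\Delta(\mathbf{x})$ as the true range rather than the surrogate gap $\psi_4$) are assumptions that the paper's one-line proof makes implicitly without stating them.
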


\begin{proof}
By Lipschitzness: $h(\mathbf{x}+\boldsymbol{\delta}, a) \in [h(\mathbf{x},a) - L_h\epsilon, h(\mathbf{x},a) + L_h\epsilon]$ for all $a$. Thus the maximum achievable regret is bounded by the range of $h(\mathbf{x}, \cdot)$ plus perturbation effects.
\end{proof}

\paragraph{Feature $\psi_5$: Relative Time $t/T$.}

\begin{remark}[Effective Dimension]
\label{rem:effective_dim}
The GP operates on the $d_{\text{gp}}$-dimensional joint space $(\mathbf{s}, \boldsymbol{\lambda})$, where $\mathbf{s}_t \in \mathbb{R}^{|\mathbf{s}|}$ is the contextual state (Eq.~\ref{eq:contextual_state}) constructed from the extracted features $\boldsymbol{\psi}(\mathbf{x}) \in \mathbb{R}^5$ and attack statistics. While the original context space may be high-dimensional ($\mathbf{x} \in \mathbb{R}^d$ with $d=20$ in our experiments), the gradient-based feature extraction (Subsection~\ref{sec:irl}) compresses attack-relevant information into the low-dimensional state $\mathbf{s}_t$, capturing policy entropy, predicted weight, Mahalanobis distance, regret gap, and relative time. This dimensionality reduction is critical: GP-UCB with raw contexts ($d=20$) would yield $\gamma_n = O((\log n)^{24})$, making the regret bound vacuous. With the compressed state ($d_{\text{gp}} = |\mathbf{s}| + 3$), we achieve tractable $\gamma_n$, enabling efficient learning.
\end{remark}

\textbf{Temporal non-stationarity:} While $\psi_5 = t/T$ introduces time-dependence, the \emph{mapping} from $(\mathbf{s}, \boldsymbol{\lambda}) \to r$ remains stationary because: \textbf{(i)} The victim's learning dynamics follow a predictable trajectory (uncertainty decreases as $O(1/\sqrt{t})$), \textbf{(ii)} Features $\psi_1$--$\psi_4$ already capture the victim's state (entropy, gradient statistics), and \textbf{(iii)} $\psi_5$ serves as a ``meta-feature'' that modulates the importance of other features (e.g., early attacks prioritize $\psi_1$, late attacks prioritize $\psi_2$).

\begin{proposition}[Time-Augmented GP Stationarity]
\label{prop:time_stationary}
If the victim policy evolves according to $\pi_t = \pi(\mathbf{x}; \theta_t)$ where $\|\theta_t - \theta_{t'}\| \leq C|t-t'|^{-\alpha}$ for $\alpha > 0$ (sublinear drift), then the attack reward function $r(\mathbf{s}, \boldsymbol{\lambda})$ with $\mathbf{s}_t$ defined by Eq.~\ref{eq:contextual_state} satisfies $|r(\mathbf{s}_t, \boldsymbol{\lambda}) - r(\mathbf{s}_{t'}, \boldsymbol{\lambda})| \leq L_r \|\mathbf{s}_t - \mathbf{s}_{t'}\|_2$ for Lipschitz constant $L_r$ independent of $t, t'$.
\end{proposition}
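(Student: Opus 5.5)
The plan is to prove that the attack reward is Lipschitz in the state as a composition of Lipschitz maps. I would decompose the dependence of $r$ on time into two channels: an \emph{explicit} channel through the state $\mathbf{s}_t$, and an \emph{implicit} channel through the victim parameter $\theta_t$. Formally, write
\[
r_t(\mathbf{s},\boldsymbol{\lambda}) = F(\mathbf{s},\boldsymbol{\lambda};\theta_t),
\]
where $F$ is the expected attack feedback when the victim with parameter $\theta_t$ faces the PGD perturbation $\boldsymbol{\delta}^*(\cdot;\boldsymbol{\lambda})$. For two rounds $t,t'$ the triangle inequality gives
\begin{align*}
|r(\mathbf{s}_t,\boldsymbol{\lambda})-r(\mathbf{s}_{t'},\boldsymbol{\lambda})|
\le\;& |F(\mathbf{s}_t,\boldsymbol{\lambda};\theta_t)-F(\mathbf{s}_{t'},\boldsymbol{\lambda};\theta_t)|\\
&+|F(\mathbf{s}_{t'},\boldsymbol{\lambda};\theta_t)-F(\mathbf{s}_{t'},\boldsymbol{\lambda};\theta_{t'})|.
\end{align*}
The first term is controlled by showing that $F(\cdot,\boldsymbol{\lambda};\theta)$ is Lipschitz in $\mathbf{s}$ with a constant $L_F$ uniform in $\theta$. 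Each coordinate of $\mathbf{s}$ in Eq.~\eqref{eq:contextual_state}—entropy, the weight $\hat w$, the Mahalanobis distance, the gap $\hat\Delta$, and the success rate $\bar r$—enters the outcome through the softmax surrogate of Eq.~\eqref{eq:surrogate_policy}. The softmax is $(1/\tau)$-Lipschitz in its Q-values, and the TV distance bounds the change in any bounded feedback. Together with $L_h$-Lipschitzness of $h$, as in Proposition~\ref{prop:regret_gap_bound}, this gives a constant depending only on $\tau$, $L_h$, $\epsilon$ and the feedback range. None of these depend on $t$.

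The second term, the drift channel, uses the hypothesis $\|\theta_t-\theta_{t'}\|\le C|t-t'|^{-\alpha}$. Assuming the victim policy is $L_\theta$-Lipschitz in $\theta$ (in TV, uniformly in $\mathbf{x}$), this term is at most $L_\theta C|t-t'|^{-\alpha}$. To convert it into a multiple of $\|\mathbf{s}_t-\mathbf{s}_{t'}\|_2$ I would use the time coordinate $\psi_5=t/T$, which is a component of both $\boldsymbol{\psi}(\mathbf{x}_{t,a^\dagger_t})$ and $\boldsymbol{\psi}(\mathbf{x}_{t,a^*_t})$. This gives
\[
\|\mathbf{s}_t-\mathbf{s}_{t'}\|_2\ \ge\ \sqrt{2}\,|t-t'|/T .
\]
For $t\neq t'$ we have $|t-t'|\ge 1$, so
\[
|t-t'|^{-\alpha}\le 1\le |t-t'| \le \tfrac{T}{\sqrt2}\|\mathbf{s}_t-\mathbf{s}_{t'}\|_2 .
\]
For $t=t'$ the drift term vanishes. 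Combining the two channels yields
\[
L_r = L_F + L_\theta C T/\sqrt{2}.
\]
This constant is independent of $t$ and $t'$, though it does depend on the fixed horizon $T$.

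I expect the drift step to be the main obstacle. The natural constant carries a factor $T$, which is harmless for the literal statement with $T$ fixed but is unsatisfying. It also quietly treats the drift hypothesis as a bound on the increments of $\theta_t$ rather than on arbitrary pairs, which is odd since the hypothesis decays in $|t-t'|$. To get a $T$-free constant I would first try to absorb the drift into the features: argue that $\psi_1$–$\psi_4$ are computed from the retrained surrogate, so a change in $\theta$ that alters the victim's behavior must, up to the IRL tracking error, also alter $\mathbf{s}$ proportionally. This requires a lower-Lipschitz (injectivity-type) relation between the victim's policy and the surrogate-derived features. I would state that relation explicitly as an assumption if it cannot be derived from Assumption~\ref{assm:approx_real}.
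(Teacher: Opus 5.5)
Your key step proves the statement as literally posed, but by a different route from the paper's. The paper argues qualitatively that $\psi_1$--$\psi_4$, recomputed from the retrained surrogate, absorb every change in $\theta_t$. On that view $r$ is a genuinely stationary Lipschitz function of $\mathbf{s}$, and $\psi_5=t/T$ carries only a ``deterministic and smooth'' residual. The paper asserts this absorption step rather than proving it, and it is exactly the injectivity-type property you say you would have to assume.

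You instead route all drift through the two copies of $\psi_5$ in $\mathbf{s}_t$, via $\|\mathbf{s}_t-\mathbf{s}_{t'}\|_2\ge\sqrt{2}\,|t-t'|/T$. That step is sound, and it proves more than you use. For $t\neq t'$ it gives $\|\mathbf{s}_t-\mathbf{s}_{t'}\|_2\ge\sqrt{2}/T$. Hence any reward with $|r|\le R_{\max}$ satisfies the inequality with $L_r=\sqrt{2}R_{\max}T$. This needs no drift hypothesis (you only use $|t-t'|^{-\alpha}\le 1$, i.e.\ $\|\theta_t-\theta_{t'}\|\le C$), no $L_\theta$, and no $L_F$.

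Dropping $L_F$ matters, because your justification of it does not work as written. The attack outcome depends on the full contexts, the current surrogate parameters and $\boldsymbol{\delta}_{t-1}$. The true victim's choice does not pass through the surrogate softmax, and $\bar r_t$ does not enter the outcome at all. So $F(\cdot,\boldsymbol{\lambda};\theta)$ is not even a well-defined function of $\mathbf{s}$ without further modelling.

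As for what each route buys: yours is rigorous, but what it establishes is not what the paper consumes. You get cross-time comparisons charged to $\psi_5$, at an $O(T)$ cost, which makes the proposition close to vacuous. In Theorem~\ref{thm:attacker_regret_full}, Lipschitzness is applied between $\mathbf{s}_{t_i}$ and $\hat{\mathbf{s}}_{t_i}$ at the same round, where the $\psi_5$ coordinates coincide. There the time trick contributes nothing; what is needed is a $T$-free, same-time constant like your $L_F$, which your argument does not deliver. The paper's route targets that constant. If $r$ is a fixed element of $\mathcal{H}_k$ with $\|r\|_{\mathcal{H}_k}\le B_{\textsc{rkhs}}$, the SE kernel gives $|r(\mathbf{z})-r(\mathbf{z}')|\le B_{\textsc{rkhs}}\sigma_f\|\mathbf{z}-\mathbf{z}'\|_2/\ell$ at once. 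But the paper gets there only by taking for granted the stationarity that the proposition is supposed to establish.
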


\begin{proof}
The features $\psi_1$--$\psi_4$ absorb the victim's state changes: when $\theta_t$ changes, and consequently changes in $\hat{h}_{\boldsymbol{\phi}_r}$, $\psi_1$ (entropy), $\psi_2$ (weight), $\psi_3$ (Mahalanobis), and $\psi_4$ (regret gap) are updated accordingly. The residual time effect captured by $\psi_5 = t/T$ is deterministic and smooth. Thus the composite state $\mathbf{s}_t$ evolves smoothly, preserving Lipschitzness of $r$.
\end{proof}

An analysis of these gradient-based features against features have been provided in Appendix \ref{sec:ablation_feature} (Table \ref{tab:feature_ablation} and Fig. \ref{fig:feature_correlation}).

\begin{algorithm}[t]
\caption{Continuous Arm Selection via GP-UCB}
\label{alg:select_lambda}
\begin{algorithmic}[1]
\Require Contextual state $\mathbf{s}_t$, round $t$, GP model $\mathcal{GP}$, historical data $\mathcal{G}_{t-1}$
\Require Number of random samples $N_{\text{rand}}$, refinement candidates $N_{\text{refine}}$, gradient iterations $N_K$, learning rate $\eta$, fixed exploration constant $\beta^{\textsc{gp}}$
\State \textcolor{red}{Step 1:} Check if sufficient historical data exists for informed decision
\If{number of past observations is below minimum threshold $N_{\min}$}
    \State \Return random attack parameter sampled uniformly from $[0,1]^3$
\EndIf
\State \textcolor{red}{Step 2:} Perform global exploration via random sampling
\State \textcolor{gray}{// Phase 1: Global exploration}
\State Initialize empty candidate set $\mathcal{C}$
\For{each of $N_{\text{rand}}$ random candidates}
    \State Sample candidate $\boldsymbol{\lambda}^{(j)} \sim \mathrm{Uniform}([0,1]^3)$
    \State Compute $\mathrm{UCB}(\mathbf{s}_t, \boldsymbol{\lambda}^{(j)}) = \mu_{t-1}(\mathbf{s}_t, \boldsymbol{\lambda}^{(j)}) + \beta^{\textsc{gp}} \cdot \sigma_{t-1}(\mathbf{s}_t, \boldsymbol{\lambda}^{(j)})$
    \State Add candidate and its UCB value to candidate set $\mathcal{C}$
\EndFor
\State Retain top $N_{\text{refine}}$ candidates with highest UCB values
\State \textcolor{red}{Step 3:} Refine candidates via projected gradient ascent
\State \textcolor{gray}{// Phase 2: Local refinement}
\State Initialize best solution from top candidate in $\mathcal{C}$
\For{each candidate $\boldsymbol{\lambda}^{(0)}$ in refined set $\mathcal{C}$}
    \For{$N_K$ gradient iterations}
        \State Compute gradient $\nabla_{\boldsymbol{\lambda}} \mathrm{UCB}(\mathbf{s}_t, \boldsymbol{\lambda}^{(k)})$
        \State $\boldsymbol{\lambda}^{(k+1)} \leftarrow \Pi_{[0,1]^3}[\boldsymbol{\lambda}^{(k)} + \eta \nabla_{\boldsymbol{\lambda}} \mathrm{UCB}(\mathbf{s}_t, \boldsymbol{\lambda}^{(k)})]$
    \EndFor
    \If{refined UCB exceeds current best}
        \State Update best solution
    \EndIf
\EndFor
\State \Return Optimal attack parameter $\boldsymbol{\lambda}_t$ that maximizes UCB acquisition function
\end{algorithmic}
\end{algorithm}

\section{Multi-Start Optimization Strategy in GP-UCB}
\label{sec:GPUCB_app}

To mitigate local optima in acquisition maximization, we employ a \textbf{multi-start optimization strategy} (Algorithm~\ref{alg:select_lambda}) combining randomized exploration with local refinement. Specifically, $N_{\text{rand}}=100$ initial candidates are generated via Latin Hypercube Sampling over the attack-parameter domain $[0,1]^3$, providing space-filling coverage of the 3D search space. The top $N_{\text{refine}}=20$ candidates according to the acquisition value are then refined using L-BFGS with line search, subject to box constraints $\lambda^{(i)}\in[0,1]$, a maximum of 50 iterations per refinement, and a convergence tolerance of $10^{-6}$. Finally, the top $N_K=5$ optimized candidates are retained and used to initialize subsequent PGD steps, promoting solution diversity and improving robustness against poor local maxima.

Since the UCB function is non-convex over the continuous domain $[0,1]^3$, 
we employ a multi-start optimization strategy consisting of two phases:

\textbf{Phase 1: Global Exploration.} 
We first perform a global search by sampling $N_{\text{rand}}$ candidate points 
uniformly from the feasible region:
\begin{equation}
    \boldsymbol{\lambda}^{(j)} \sim \mathrm{Uniform}([0,1]^3), \quad j = 1, \ldots, N_{\text{rand}}
\end{equation}
For each candidate, we evaluate the UCB acquisition function and retain the top $N_{\text{refine}}$ 
candidates with highest UCB values:
\begin{equation}
    \mathcal{C} = \mathrm{Top}\text{-}N_{\text{refine}}\left\{ \boldsymbol{\lambda}^{(j)} : j \in [N_{\text{rand}}] \right\} 
    \quad \text{sorted by } \mathrm{UCB}(\mathbf{s}_t, \boldsymbol{\lambda}^{(j)})
\end{equation}

\textbf{Phase 2: Local Gradient-Based Refinement.}
For each candidate $\boldsymbol{\lambda}^{(0)} \in \mathcal{C}$, we perform projected gradient ascent 
to find a local maximum of the UCB function:
\begin{equation}
    \boldsymbol{\lambda}^{(k+1)} = \Pi_{[0,1]^3} 
    \left[ \boldsymbol{\lambda}^{(k)} + \eta \nabla_{\boldsymbol{\lambda}} \mathrm{UCB}(\mathbf{s}_t, \boldsymbol{\lambda}^{(k)}) \right]
    \label{eq:gradient_ascent}
\end{equation}
where $\eta > 0$ is the learning rate, $\Pi_{[0,1]^3}[\cdot]$ denotes 
projection onto the feasible box, and $k = 0, \ldots, N_K-1$ indexes the gradient iterations.

The gradient $\nabla_{\boldsymbol{\lambda}} \mathrm{UCB}$ is computed via finite differences:
\begin{equation}
    \frac{\partial \mathrm{UCB}}{\partial \lambda^{(i)}} \approx 
    \frac{\mathrm{UCB}(\mathbf{s}_t, \boldsymbol{\lambda} + \varepsilon \mathbf{e}_i) - \mathrm{UCB}(\mathbf{s}_t, \boldsymbol{\lambda})}{\varepsilon}
    \label{eq:numerical_gradient}
\end{equation}
where $\mathbf{e}_i$ is the $i$-th standard basis vector and $\varepsilon > 0$ is a small step for finite differences.

After refinement, we select the globally best solution:
\begin{equation}
    \boldsymbol{\lambda}_t = \argmax_{\tilde{\boldsymbol{\lambda}} \in \tilde{\mathcal{C}}} 
    \mathrm{UCB}(\mathbf{s}_t, \tilde{\boldsymbol{\lambda}})
\end{equation}
where $\tilde{\mathcal{C}}$ contains all refined candidates.

The complete procedure is summarized in Algorithm~\ref{alg:select_lambda}.

\section{Additional Theoretical Guarantees}
\label{app:additional_theoretical}

\subsection{Victim's Cumulative Regret: Full Analysis}
\label{app:victim_regret}

\begin{lemma}[Per-Round Regret Decomposition]
\label{lem:regret_decomp}
When the attacker perturbs context $\mathbf{x}_{t}$, the victim's per-round regret decomposes as:
\begin{align*}
r^{\text{vic}}_t &= h(\mathbf{x}_{t,a_t^*}) - h(\tilde{\mathbf{x}}_{t,\tilde{a}_t}) \\
&= \underbrace{[h(\mathbf{x}_{t,a_t^*}) - h(\mathbf{x}_{t,\tilde{a}_t})]}_{\text{suboptimality gap } \Delta(\mathbf{x}_t, \tilde{a}_t)}
   + \underbrace{[h(\mathbf{x}_{t,\tilde{a}_t}) - h(\tilde{\mathbf{x}}_{t,\tilde{a}_t})]}_{\text{perturbation effect}}
\end{align*}
where $a_t^*$ is the optimal arm for the true context, $\tilde{a}_t$ is the victim's action under perturbed context $\tilde{\mathbf{x}}_t$, and the perturbation effect is bounded by $L_h\|\boldsymbol{\delta}_t\|_\infty \leq L_h \epsilon$ by Lipschitzness of $h$.
\end{lemma}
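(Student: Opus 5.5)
The identity is an add-and-subtract argument followed by a single Lipschitz estimate. The plan is to start from the definition of the victim's per-round regret in an attacked round as written in the statement, $r^{\text{vic}}_t = h(\mathbf{x}_{t,a_t^*}) - h(\tilde{\mathbf{x}}_{t,\tilde{a}_t})$, and insert the intermediate quantity $h(\mathbf{x}_{t,\tilde{a}_t})$, which is the true-context reward of the arm the victim actually pulled under the perturbed context. Adding and subtracting it splits the expression exactly into the two bracketed terms. The first bracket is by definition the true suboptimality gap $\Delta(\mathbf{x}_t,\tilde{a}_t)$ from Definition~\ref{def:attackability}. The second bracket is the discrepancy between the reward of arm $\tilde{a}_t$ evaluated at its clean and at its perturbed context. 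No probabilistic argument is needed, because the decomposition holds pointwise for every realisation of $\tilde{a}_t$.

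To bound the perturbation effect, I would use the standing assumption of Section~\ref{sec:theory} that $h$ is $L_h$-Lipschitz with respect to the context. This gives $|h(\mathbf{x}_{t,\tilde{a}_t}) - h(\tilde{\mathbf{x}}_{t,\tilde{a}_t})| \le L_h\|\boldsymbol{\delta}_{t,\tilde{a}_t}\|$. Combining this with the attack-setting constraint $\|\boldsymbol{\delta}_{t,i}\|_\infty \le \epsilon$ for every arm $i$, applied at $i=\tilde{a}_t$, yields the bound $L_h\epsilon$.

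The main (mild) obstacle is a norm mismatch: the Lipschitz property must be taken with respect to $\|\cdot\|_\infty$ for the bound $L_h\|\boldsymbol{\delta}_t\|_\infty$ to follow directly. If $L_h$ is instead defined with respect to $\|\cdot\|_2$, I would use $\|\boldsymbol{\delta}\|_2 \le \sqrt{d}\,\|\boldsymbol{\delta}\|_\infty$, which produces a constant $\sqrt{d}$. I would handle this by stating that Lipschitzness is measured in the $\ell_\infty$ norm (or by absorbing $\sqrt{d}$ into $L_h$), which is consistent with how $L_h\epsilon$ appears in Definition~\ref{def:attackability} and Proposition~\ref{prop:regret_gap_bound}.

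Finally, I would record the consequence used later: $r^{\text{vic}}_t \ge \Delta(\mathbf{x}_t,\tilde{a}_t) - L_h\epsilon$. On a successful attack ($\tilde{a}_t = a_t^\dagger$) this gives $r^{\text{vic}}_t \ge \Delta(\mathbf{x}_t,a_t^\dagger) - L_h\epsilon \ge \alpha_t$, which feeds into the proof of Theorem~\ref{thm:victim_regret}.
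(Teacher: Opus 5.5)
Your proposal is correct and matches the paper, which gives no separate proof beyond adding and subtracting $h(\mathbf{x}_{t,\tilde{a}_t})$ and invoking the Lipschitz bound inline. The paper then uses exactly your consequence $r^{\text{vic}}_t \geq \Delta(\mathbf{x}_t,\tilde{a}_t) - L_h\epsilon$ in Step~1 of Theorem~\ref{thm:victim_regret}, and your remark that $L_h$ must be measured in $\|\cdot\|_\infty$ (or absorb a factor $\sqrt{d}$) makes explicit a point the paper leaves implicit.
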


\begin{lemma}[Structural Attack Success Condition]
\label{lem:structural_success}
Let $\alpha_t = \Delta(\mathbf{x}_t, a_t^\dagger) - 2L_h\epsilon$ be the attackability margin (Definition~\ref{def:attackability}). The attack at round~$t$ succeeds (i.e., the victim selects $a_t^\dagger$) whenever:
\begin{equation}
\label{eq:success_condition}
\alpha_t \;>\; \epsilon_{\textsc{irl}}(t) + \beta^{\textsc{vic}}_t \, \sigma^{\textsc{vic}}_t(\tilde{\mathbf{x}}_{t}, a_t^\dagger),
\end{equation}
where $\epsilon_{\textsc{irl}}(t) = \sup_{\mathbf{x}} \textsc{tv}(\pi^*_{\theta_t}(\cdot|\mathbf{x}),\, \hat{\pi}_{\boldsymbol{\phi}}(\cdot|\mathbf{x}))$ is the IRL approximation error and $\beta^{\textsc{vic}}_t \sigma^{\textsc{vic}}_t$ is the victim's exploration bonus.
\end{lemma}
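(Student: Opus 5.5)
The plan is to compare, on the perturbed context $\tilde{\mathbf{x}}_t$, the victim's UCB index of the target arm $a_t^\dagger$ with that of every competitor $a \neq a_t^\dagger$, and to show that the target's index is strictly largest under condition~\eqref{eq:success_condition}. Write the victim's decision rule as $\tilde a_t = \argmax_a U_t(a)$ with $U_t(a) = \hat{h}^{\textsc{vic}}_t(\tilde{\mathbf{x}}_{t,a}) + \beta^{\textsc{vic}}_t\sigma^{\textsc{vic}}_t(\tilde{\mathbf{x}}_t,a)$. The attacker chooses $\boldsymbol{\delta}_t$ by PGD on the surrogate $\hat\pi_{\boldsymbol{\phi}}$, and Eq.~\eqref{eq:q_value} mirrors the UCB rule. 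I would therefore split the argument into three steps.

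\textbf{Step 1 (what the perturbation achieves in true reward).} Use Lipschitzness of $h$ to show that the perturbation can raise the target arm's effective value by at most $L_h\epsilon$ and lower the optimal arm's by at most $L_h\epsilon$. The gap to be closed is then at most $\Delta(\mathbf{x}_t,a_t^\dagger)$ offset by $2L_h\epsilon$, which is precisely the quantity $\alpha_t$ of Definition~\ref{def:attackability}. In the same spirit as Proposition~\ref{prop:regret_gap_bound}, I would state this as a bound on the perturbed reward gap between $a_t^\star$ and $a_t^\dagger$. Here I will need to interpret the sign convention generously: positive margin must mean that the budget suffices. The perturbation is applied to make $a_t^\dagger$ look better and $a_t^*$ look worse, and I will read $\alpha_t$ as the residual slack after the perturbation has been spent.

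\textbf{Step 2 (transfer from surrogate to victim).} The PGD perturbation makes $a_t^\dagger$ the surrogate's argmax, with slack. By Assumption~\ref{assm:approx_real} and the definition of $\epsilon_{\textsc{irl}}(t)$ as a sup over $\mathbf{x}$ of the TV distance, the victim's policy at $\tilde{\mathbf{x}}_t$ differs from $\hat\pi_{\boldsymbol{\phi}}$ by at most $\epsilon_{\textsc{irl}}(t)$. This converts into a bound of order $\epsilon_{\textsc{irl}}(t)$ on the discrepancy between the surrogate and victim index gaps. I would do the conversion by treating the softmax at small temperature $\tau$ as a near-argmax and absorbing constants into $\epsilon_{\textsc{irl}}$.

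\textbf{Step 3 (exploration bonus).} Bound the extra optimism that a competitor can receive by $\beta^{\textsc{vic}}_t\sigma^{\textsc{vic}}_t(\tilde{\mathbf{x}}_t,a_t^\dagger)$. Chaining the three steps gives $U_t(a_t^\dagger) - U_t(a) \ge \alpha_t - \epsilon_{\textsc{irl}}(t) - \beta^{\textsc{vic}}_t\sigma^{\textsc{vic}}_t > 0$, and hence $\tilde a_t = a_t^\dagger$.

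\textbf{Main obstacle.} Step 2 is the main obstacle, because a TV bound on policies does not directly control differences of index values. I would first try assuming the surrogate is calibrated, so that a TV error of $\epsilon_{\textsc{irl}}$ corresponds to an error of at most $\epsilon_{\textsc{irl}}$ (up to constants) in Q-value gaps. Failing that, I would argue via the contrapositive: if the victim selected some $a\neq a_t^\dagger$ while the surrogate puts mass near $1$ on $a_t^\dagger$, then the TV distance would exceed the allowed slack.
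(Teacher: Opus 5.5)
Your three steps line up one-for-one with the paper's proof, which lists the same three costs ($2L_h\epsilon$, $\epsilon_{\textsc{irl}}(t)$, the exploration bonus) and asserts success once $\Delta(\mathbf{x}_t,a^\dagger_t)$ exceeds their sum. However, the sign problem you plan to ``interpret generously'' in Step~1 is not a convention issue, despite the gloss in Definition~\ref{def:attackability}. It is where the argument breaks, and with it the statement. Lipschitzness only \emph{caps} how far a perturbation with $\|\boldsymbol{\delta}\|_\infty\le\epsilon$ can move rewards. What Step~1 actually yields is
\[
h(\tilde{\mathbf{x}}_{t,a^*_t})-h(\tilde{\mathbf{x}}_{t,a^\dagger_t})\;\ge\;\Delta(\mathbf{x}_t,a^\dagger_t)-2L_h\epsilon\;=\;\alpha_t ,
\]
so $\alpha_t>0$ certifies that the optimal arm \emph{still} beats the target after the perturbation. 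A sufficiency claim would need a lower bound on how far the perturbation moves the victim's scores, and no Lipschitz estimate supplies one. Consider an argmax victim whose reward estimate at round $t$ equals $h$. For it, hypothesis \eqref{eq:success_condition} itself forces failure, since $U_t(a^*_t)-U_t(a^\dagger_t)\ge\alpha_t-\beta^{\textsc{vic}}_t\sigma^{\textsc{vic}}_t(\tilde{\mathbf{x}}_t,a^\dagger_t)>\epsilon_{\textsc{irl}}(t)\ge 0$. For a concrete instance, take zero bonus and a softmax victim with scores $h$ and temperature $\tau$, realized exactly by the surrogate as in Assumption~\ref{ass:realizability}, so $\epsilon_{\textsc{irl}}(t)=0$. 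The hypothesis reduces to $\alpha_t>0$, yet the victim selects $a^\dagger_t$ with probability at most $e^{-\alpha_t/\tau}$, which \emph{decreases} as the margin grows. So the chained inequality at the end of your Step~3 has the wrong sign. Your ingredients lower-bound $U_t(a^*_t)-U_t(a^\dagger_t)$, not $U_t(a^\dagger_t)-U_t(a)$.

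Independently of the sign, Step~2 needs more than you have. A sup-TV bound on policies tells you which arm is likely chosen, not by how much it wins: two softmax policies nearly deterministic on the same arm are TV-close whatever their score gaps. Also, $\alpha_t$ is in reward units while $\epsilon_{\textsc{irl}}(t)$ is a probability, so ``calibration'' or ``absorbing constants'' is an extra hypothesis, not a step. Your contrapositive idea is the right tool, but carried out correctly it proves a different statement. For an argmax victim choosing $a\neq a^\dagger_t$, the TV distance to $\hat\pi_{\boldsymbol{\phi}}(\cdot\mid\tilde{\mathbf{x}}_t)$ equals $1-\hat\pi_{\boldsymbol{\phi}}(a\mid\tilde{\mathbf{x}}_t)\ge\hat\pi_{\boldsymbol{\phi}}(a^\dagger_t\mid\tilde{\mathbf{x}}_t)$. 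Hence the attack succeeds whenever $\hat\pi_{\boldsymbol{\phi}}(a^\dagger_t\mid\tilde{\mathbf{x}}_t)>\epsilon_{\textsc{irl}}(t)$; for a stochastic victim you only get success probability at least $\hat\pi_{\boldsymbol{\phi}}(a^\dagger_t\mid\tilde{\mathbf{x}}_t)-\epsilon_{\textsc{irl}}(t)$. That condition concerns what PGD achieves on the surrogate, and nothing in \eqref{eq:success_condition} lower-bounds it. Step~3 also has two loose ends. First, the optimism that can rescue a competitor $a$ is $\beta^{\textsc{vic}}_t\bigl(\sigma^{\textsc{vic}}_t(\tilde{\mathbf{x}}_t,a)-\sigma^{\textsc{vic}}_t(\tilde{\mathbf{x}}_t,a^\dagger_t)\bigr)$, not the target's own bonus. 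Second, Lipschitzness of the true $h$ says nothing about the victim's learned $\hat{h}^{\textsc{vic}}_t$, whose error never enters the condition. A provable version needs either a margin of the opposite sign, stated for the victim's own scores, or a hypothesis directly on $\hat\pi_{\boldsymbol{\phi}}(a^\dagger_t\mid\tilde{\mathbf{x}}_t)$. The paper's proof supplies neither, and as stated the lemma is false.
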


\begin{proof}
Under the victim's UCB rule, the victim selects $a_t^\dagger$ when $Q(\tilde{\mathbf{x}}_t, a_t^\dagger) > Q(\tilde{\mathbf{x}}_t, a)$ for all $a \neq a_t^\dagger$, where $Q(\mathbf{x}, a) = \hat{h}(\mathbf{x}, a) + \beta^{\textsc{vic}}_t \sigma^{\textsc{vic}}_t(\mathbf{x}, a)$. PGD optimizes the surrogate policy to maximize $\hat{\pi}_{\boldsymbol{\phi}}(a_t^\dagger \mid \tilde{\mathbf{x}}_t)$, which succeeds when the true gap $\Delta(\mathbf{x}_t, a_t^\dagger)$ exceeds three costs:
\begin{enumerate}[leftmargin=*]
    \item \textbf{Perturbation cost} ($2L_h\epsilon$): Lipschitz continuity means shifting contexts of both the target and optimal arms incurs at most $L_h\epsilon$ distortion each, for a total cost of $2L_h\epsilon$.
    \item \textbf{Surrogate error} ($\epsilon_{\textsc{irl}}(t)$): the attack optimizes the surrogate $\hat{\pi}_{\boldsymbol{\phi}}$, which may disagree with the true victim policy $\pi^*_{\theta_t}$. The TV distance $\epsilon_{\textsc{irl}}(t)$ bounds the resulting action-selection discrepancy.
    \item \textbf{Victim exploration} ($\beta^{\textsc{vic}}_t \sigma^{\textsc{vic}}_t$): even under corrupted Q-values, the victim's UCB bonus may cause it to explore $a_t^*$ if the uncertainty at $a_t^*$ is large.
\end{enumerate}
Combining: the attack succeeds when $\Delta(\mathbf{x}_t, a_t^\dagger) - 2L_h\epsilon > \epsilon_{\textsc{irl}}(t) + \beta^{\textsc{vic}}_t \sigma^{\textsc{vic}}_t$, which is~\eqref{eq:success_condition}.
\end{proof}

\begin{theorem*}[\ref{thm:victim_regret}, restated]
Under \texttt{\textbf{AdvBandit}} with attack budget $B$, perturbation bound $\epsilon$, and IRL retraining interval $\Delta_{\textsc{irl}}$ with window size $W$, the victim's cumulative regret satisfies with probability at least $1-\rho$:
\begin{equation}
\tag{\ref{eq:victim_regret_simplified}}
R_v(T) \;\geq\; B \cdot \bar{\alpha} \;-\; O\!\left(\sqrt{B} \cdot \sqrt{\tfrac{d_\Theta}{W}}\right) \;-\; O\!\left(\sqrt{B \cdot \Delta_{\textsc{irl}}}\right) \;-\; O\!\left(\sqrt{T \log T}\right).
\end{equation}
\end{theorem*}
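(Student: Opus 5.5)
We follow the decomposition indicated in the proof sketch of Theorem~\ref{thm:victim_regret}. Write $R_v(T)=\sum_{t:z_t=1} r^{\text{vic}}_t+\sum_{t:z_t=0} r^{\text{vic}}_t$, where $z_t$ is the attack indicator and exactly $B$ rounds are attacked. Every $r^{\text{vic}}_t$ is a pseudo-regret of the form $h(\mathbf{x}_{t,a_t^*})-h(\cdot)$ with $h\in[0,1]$. Non-attacked rounds can only add nonnegative regret, except for the perturbation-effect term of Lemma~\ref{lem:regret_decomp}, which may be negative. A lower bound therefore needs only a crude accounting for them. We bound their possibly negative contribution from below by $-O(\sqrt{T\log T})$, using the R-NeuralUCB high-probability guarantee on the unattacked subsequence: that guarantee controls how far the victim's realized regret can deviate from its nominal trajectory. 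We reserve probability $\rho/2$ for this event.

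For the attacked rounds, we apply Lemma~\ref{lem:regret_decomp} together with Lipschitzness to get $r^{\text{vic}}_t\ge \Delta(\mathbf{x}_t,\tilde a_t)-L_h\epsilon$. We then split into three cases.
\begin{enumerate}
\item \emph{Successful rounds} ($\tilde a_t=a_t^\dagger$). Here $r^{\text{vic}}_t\ge \Delta(\mathbf{x}_t,a_t^\dagger)-2L_h\epsilon=\alpha_t$.
\item \emph{Failed rounds with $\alpha_t>0$.} The contrapositive of Lemma~\ref{lem:structural_success} gives $\alpha_t\le \epsilon_{\textsc{irl}}(t)+\beta^{\textsc{vic}}_t\sigma^{\textsc{vic}}_t$. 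Since regret is nonnegative, $r^{\text{vic}}_t\ge 0\ge [\alpha_t]^+-\epsilon_{\textsc{irl}}(t)-\beta^{\textsc{vic}}_t\sigma^{\textsc{vic}}_t$.
\item \emph{Rounds with $\alpha_t\le 0$.} These contribute $[\alpha_t]^+=0$, so the trivial bound suffices.
\end{enumerate}
In all cases $r^{\text{vic}}_t\ge[\alpha_t]^+-\epsilon_{\textsc{irl}}(t)-\beta^{\textsc{vic}}_t\sigma^{\textsc{vic}}_t$. The $-L_h\epsilon$ slack is absorbed because $\alpha_t$ already subtracts $2L_h\epsilon$. Summing over attacked rounds gives
\[
\sum_{z_t=1} r^{\text{vic}}_t\;\ge\; B\bar\alpha-\sum_{z_t=1}\epsilon_{\textsc{irl}}(t)-\sum_{z_t=1}\beta^{\textsc{vic}}_t\sigma^{\textsc{vic}}_t .
\]

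The remaining task is to bound the two error sums.
\begin{itemize}
\item \emph{IRL error sum.} We invoke the IRL tracking result (Theorem~\ref{thm:irl_cumulative_error}), which we expect to give a per-round error $\epsilon_{\textsc{irl}}(t)\lesssim\sqrt{d_\Theta/W}+(\text{drift since last retrain})$, holding with probability $1-\rho/2$ by a union bound over retraining epochs. By Cauchy--Schwarz, $\sum_{z_t=1}\epsilon_{\textsc{irl}}(t)\le\sqrt{B\sum_t\epsilon_{\textsc{irl}}(t)^2}=O(\sqrt{B}\sqrt{d_\Theta/W})$. The drift contribution is merged with the bonus term below.
\item \emph{Exploration bonus sum.} We use $\sigma^{\textsc{vic}}_t=O(1/\sqrt{t})$ and a bounded $\beta^{\textsc{vic}}_t$. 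Within each retraining epoch of length $\Delta_{\textsc{irl}}$, the accumulated drift plus bonus over the attacked rounds is controlled by a sum of $O(1/\sqrt{t})$ terms. Another Cauchy--Schwarz step over the $B$ attacked rounds then yields $O(\sqrt{B\,\Delta_{\textsc{irl}}})$.
\end{itemize}
Combining the attacked-round bound, the two error sums, and the non-attacked-round bound under a final union bound gives~\eqref{eq:victim_regret_simplified} with probability at least $1-\rho$.

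The main obstacle is the second bullet. The $\sqrt{B\,\Delta_{\textsc{irl}}}$ rate does not follow from $\sigma^{\textsc{vic}}_t=O(1/\sqrt t)$ alone. It requires coupling the drift bound of Theorem~\ref{thm:irl_cumulative_error} with the retraining schedule, so that the staleness of the surrogate within an epoch scales with $\Delta_{\textsc{irl}}$. I would first try to bound the per-round staleness by $\sqrt{\Delta_{\textsc{irl}}/t}$-type terms and sum them over the attacked rounds. If that fails, I would fall back on the crude bound $\sum_{z_t=1}\sigma^{\textsc{vic}}_t\le\sum_{t\le B}t^{-1/2}=O(\sqrt B)$, which is dominated by the stated term once $\Delta_{\textsc{irl}}\ge 1$.

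A secondary subtlety is the case-2 step. Lemma~\ref{lem:structural_success} is only a sufficient condition for success, so failure yields an upper bound on $\alpha_t$, not a lower bound on regret. This is precisely why the failure cost is charged as $\epsilon_{\textsc{irl}}(t)+\beta^{\textsc{vic}}_t\sigma^{\textsc{vic}}_t$.
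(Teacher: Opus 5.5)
Your decomposition is the paper's: attacked versus unattacked rounds, the case split, and charging each failed round with $\alpha_t>0$ to $\epsilon_{\textsc{irl}}(t)+\beta^{\textsc{vic}}_t\sigma^{\textsc{vic}}_t$ via the contrapositive of Lemma~\ref{lem:structural_success}. The gap is the step that is supposed to produce the $\sqrt{B}\sqrt{d_\Theta/W}$ term. Cauchy--Schwarz with a per-round bound $\epsilon_{\textsc{irl}}(t)\lesssim\sqrt{d_\Theta/W}$ gives $\sqrt{B\cdot B\,d_\Theta/W}=B\sqrt{d_\Theta/W}$, not $\sqrt{B}\sqrt{d_\Theta/W}$. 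With only a uniform per-round bound, Cauchy--Schwarz returns the trivial $B\max_t\epsilon_{\textsc{irl}}(t)$ and nothing better. Reaching $\sqrt{B}$ would need $\sum_{t:z_t=1}\epsilon_{\textsc{irl}}(t)^2=O(d_\Theta/W)$ in total, i.e.\ statistical error that decays across attacks. A fixed window $W$ gives no such decay.

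The paper never derives this term. It cites the summed form of Theorem~\ref{thm:irl_cumulative_error} directly, and that theorem's own proof makes the same unjustified jump from a per-epoch error $O(\sqrt{d_\Theta/W})$ to a total of $O(\sqrt{n}\sqrt{d_\Theta/W})$. So you can match the stated bound only by taking that summed statement as a black box, as the paper does. What your argument actually establishes is the weaker leading term $B\bigl(\bar\alpha-O(\sqrt{d_\Theta/W})\bigr)$. Incidentally, that linear-in-$B$ form is exactly what the denominator in Corollary~\ref{cor:profitability} implicitly uses.

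The step you flag as the main obstacle is actually routine, and your fallback is already rigorous. The attacked rounds are distinct, so $t_i\ge i$ and $\sum_i t_i^{-1/2}\le 2\sqrt{B}$. Applying the same observation to the drift part $c_\beta\sqrt{\Delta_{\textsc{irl}}/t_i}$ of \eqref{eq:irl_error_decomp} gives $O(\sqrt{B\,\Delta_{\textsc{irl}}})$ directly, without the paper's even-spacing assumption.

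Two smaller points. First, unattacked rounds involve no perturbation, so their pseudo-regret $h(\mathbf{x}_{t,a_t^*})-h(\mathbf{x}_{t,a_t})$ is nonnegative and the $-O(\sqrt{T\log T})$ is pure slack. The R-NeuralUCB guarantee is an upper bound and could not supply a lower bound anyway. Second, the sign problem you attribute to unattacked rounds actually arises in attacked ones. Under the definition in Lemma~\ref{lem:regret_decomp}, a failed round with $\tilde a_t=a_t^*$ can have $r^{\text{vic}}_t$ as low as $-L_h\epsilon$. Your case-2 appeal to nonnegativity, which the paper shares, therefore needs regret measured on true contexts as in Section~\ref{sec:Problem}.
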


\begin{proof}
Decompose $R_v(T) = \sum_{t:\,z_t=1} r^{\text{vic}}_t + \sum_{t:\,z_t=0} r^{\text{vic}}_t$.

\textbf{Step 1: Attacked rounds.} For each round with $z_t = 1$, apply Lemma~\ref{lem:regret_decomp} and Lipschitzness:
\begin{equation}
r^{\text{vic}}_t \;\geq\; \Delta(\mathbf{x}_t, \tilde{a}_t) - L_h\epsilon.
\end{equation}
Partition attacked rounds by outcome. For a \emph{successful} attack ($\tilde{a}_t = a_t^\dagger$):
\begin{equation}
r^{\text{vic}}_t \;\geq\; \Delta(\mathbf{x}_t, a_t^\dagger) - L_h\epsilon \;\geq\; \alpha_t + L_h\epsilon \;\geq\; \alpha_t,
\end{equation}
using Definition~\ref{def:attackability}. For a \emph{failed} attack ($\tilde{a}_t \neq a_t^\dagger$), we use $r^{\text{vic}}_t \geq 0$.

\textbf{Step 2: Bounding failure costs.} Thus:
\begin{equation}
\sum_{t:\,z_t=1} r^{\text{vic}}_t \;\geq\; \sum_{\substack{t:\,z_t=1 \\ \text{success}}} \alpha_t \;=\; \sum_{t:\,z_t=1} \alpha_t^{+} \;-\; \sum_{\substack{t:\,z_t=1 \\ \text{fail},\;\alpha_t > 0}} \alpha_t.
\end{equation}
By Lemma~\ref{lem:structural_success}, every failure with $\alpha_t > 0$ satisfies $\alpha_t \leq \epsilon_{\textsc{irl}}(t) + \beta^{\textsc{vic}}_t \sigma^{\textsc{vic}}_t$, so:
\begin{equation}
\label{eq:failure_cost_bound}
\sum_{\substack{t:\,z_t=1 \\ \text{fail},\;\alpha_t > 0}} \alpha_t \;\leq\; \sum_{t:\,z_t=1}\!\left[\epsilon_{\textsc{irl}}(t) + \beta^{\textsc{vic}}_t \sigma^{\textsc{vic}}_t\right]^{+}.
\end{equation}
Recognizing $\sum_{t:\,z_t=1}\alpha_t^{+} = B \cdot \bar{\alpha}$ yields the more general form:
\begin{equation}
\label{eq:victim_regret_general}
R_v(T) \;\geq\; B \cdot \bar{\alpha} \;-\; \sum_{t:\,z_t=1}\!\Big[\epsilon_{\textsc{irl}}(t) + \beta^{\textsc{vic}}_t \sigma^{\textsc{vic}}_t\Big]^{+} \;-\; O\!\left(\sqrt{T \log T}\right).
\end{equation}

\textbf{Step 3: Simplification via IRL and exploration bounds.} Under periodic retraining with interval $\Delta_{\textsc{irl}}$ and window size $W$, Theorem~\ref{thm:irl_cumulative_error} gives:
\begin{equation}
\sum_{t:\,z_t=1} \epsilon_{\textsc{irl}}(t) \;\leq\; O\!\left(\sqrt{B} \cdot \sqrt{\tfrac{d_\Theta}{W}}\right).
\end{equation}
For R-NeuralUCB, the exploration bonus satisfies $\sigma^{\textsc{vic}}_t = O(1/\sqrt{t})$. When attacks are distributed across the horizon (as ensured by the query selection mechanism), the attack rounds $\{t_i\}$ span intervals of length $\sim T/B$, so:
\begin{equation}
\sum_{t:\,z_t=1} \beta^{\textsc{vic}}_t \sigma^{\textsc{vic}}_t \;\leq\; \beta^{\textsc{vic}} \sum_{i=1}^{B} O\!\left(\tfrac{1}{\sqrt{t_i}}\right) \;\leq\; O\!\left(\sqrt{B \cdot \Delta_{\textsc{irl}}}\right),
\end{equation}
where the last step uses $\sum_{i=1}^B 1/\sqrt{t_i} \leq O(\sqrt{B})$ for evenly spaced attacks and the drift contribution between retrainings. Substituting into~\eqref{eq:victim_regret_general} yields~\eqref{eq:victim_regret_simplified}.

\textbf{Step 4: Non-attacked rounds.} For $z_t = 0$, the victim operates on true contexts. The standard R-NeuralUCB guarantee gives $\sum_{t:\,z_t=0} r^{\text{vic}}_t \leq O(\sqrt{T \log T})$. Since this upper bounds what the victim loses without attack influence, it subtracts from the lower bound. \qedhere
\end{proof}

\begin{corollary}[Attack Profitability Condition]
\label{cor:profitability}
\texttt{\textbf{AdvBandit}} induces victim regret exceeding the unattacked baseline whenever:
\begin{equation}
B \;>\; \frac{O(\sqrt{T \log T})}{\bar{\alpha} - O\!\left(\sqrt{d_\Theta / W}\right)},
\end{equation}
provided $\bar{\alpha} > O(\sqrt{d_\Theta / W})$, i.e., the average attackability margin exceeds the IRL approximation rate. This provides a \emph{structural lower bound on required attack budget} expressed entirely in terms of problem parameters $(L_h, \epsilon, d_\Theta, W)$ and the context distribution, with no dependence on empirical attack success rates.
\end{corollary}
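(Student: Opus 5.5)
The plan is to derive Corollary~\ref{cor:profitability} directly from the general lower bound \eqref{eq:victim_regret_simplified} of Theorem~\ref{thm:victim_regret}, compared against the unattacked baseline. The first step is to fix what ``exceeding the unattacked baseline'' means. Without attack, R-NeuralUCB incurs at most $C_0\sqrt{T\log T}$ regret by its standard guarantee, for some constant $C_0$. So it suffices to show that the right-hand side of \eqref{eq:victim_regret_simplified} is strictly larger than $C_0\sqrt{T\log T}$. I will write each $O(\cdot)$ term of the theorem with an explicit constant: $c_1\sqrt{B}\sqrt{d_\Theta/W}$, $c_2\sqrt{B\Delta_{\textsc{irl}}}$, and $c_3\sqrt{T\log T}$.

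The second step is to put the IRL and exploration error terms into a per-attack form, so that they can be compared with $\bar\alpha$. I will go back to the intermediate form \eqref{eq:victim_regret_general} and use the per-round control behind Theorem~\ref{thm:irl_cumulative_error}, namely $\epsilon_{\textsc{irl}}(t)=O(\sqrt{d_\Theta/W})$ uniformly over attacked rounds. This gives $\sum_{t:z_t=1}\epsilon_{\textsc{irl}}(t)\le c_1 B\sqrt{d_\Theta/W}$, which is linear in $B$ and the form the corollary's denominator needs. For the exploration bonus, $\sigma^{\textsc{vic}}_t=O(1/\sqrt t)$, so its contribution is $O(\sqrt{B\Delta_{\textsc{irl}}})$. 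This is sublinear in $B$, and since $B\le T$, I absorb it into the $O(\sqrt{T\log T})$ term, enlarging the constant to $c_3'$.

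The resulting bound is
\[
R_v(T)\ \ge\ B\bigl(\bar\alpha - c_1\sqrt{d_\Theta/W}\bigr) - c_3'\sqrt{T\log T}.
\]
The third step is algebra. Under the proviso $\bar\alpha>c_1\sqrt{d_\Theta/W}$ the coefficient of $B$ is positive. Requiring the right-hand side to exceed $C_0\sqrt{T\log T}$ is then equivalent to
\[
B\ >\ \frac{(C_0+c_3')\sqrt{T\log T}}{\bar\alpha-c_1\sqrt{d_\Theta/W}},
\]
which is exactly the stated condition once the constants are folded into the $O(\cdot)$ notation.

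The last step is to note that every quantity involved depends only on $L_h$, $\epsilon$ (through $\bar\alpha$), $d_\Theta$, $W$, and the context distribution. Nothing depends on empirical success rates, because Theorem~\ref{thm:victim_regret} was itself stated purely structurally.

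I expect the second step to be the main obstacle. The theorem as stated only gives the IRL error as $O(\sqrt{B}\sqrt{d_\Theta/W})$, which is sublinear in $B$, whereas the corollary's denominator treats it as a per-attack rate subtracted from $\bar\alpha$. I will first try to justify the linear-in-$B$ version using the uniform per-round bound. If the sharper $\sqrt{B}$ form is kept instead, the condition only becomes weaker, since $\sqrt{B}\sqrt{d_\Theta/W}\le B\sqrt{d_\Theta/W}$ for $B\ge1$. The stated condition is therefore still sufficient, and I will note this monotonicity explicitly.
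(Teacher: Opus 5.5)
Your proposal is correct and takes the same route as the paper: substitute into the lower bound of Theorem~\ref{thm:victim_regret}, turn the IRL term into a per-attack rate subtracted from $\bar\alpha$, and solve for $B$. The paper does the middle step only implicitly, saying the $\sqrt{B}$ terms are dominated ``for $B$ large enough''. Your bookkeeping is slightly more faithful, since you compare against the unattacked $C_0\sqrt{T\log T}$ rather than zero and make the $\sqrt{B}\le B$ step explicit. One thing to state outright: absorbing $O(\sqrt{B\Delta_{\textsc{irl}}})$ into $O(\sqrt{T\log T})$ via $B\le T$ needs $\Delta_{\textsc{irl}}=O(\log T)$, e.g.\ constant, which the paper tacitly assumes since $\Delta_{\textsc{irl}}$ does not appear in the condition.
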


\begin{proof}
Set the right-hand side of~\eqref{eq:victim_regret_simplified} to exceed zero: $B \cdot \bar{\alpha} - O(\sqrt{B \cdot d_\Theta/W}) - O(\sqrt{B \cdot \Delta_{\textsc{irl}}}) - O(\sqrt{T\log T}) > 0$. For $B$ large enough that $\sqrt{B}$ terms are dominated by the linear term $B \cdot \bar{\alpha}$, the condition simplifies to $B > O(\sqrt{T\log T}) / (\bar{\alpha} - O(\sqrt{d_\Theta/W}))$, provided the denominator is positive.
\end{proof}

\subsection{Attacker's Cumulative Regret: Full Analysis}
\label{app:attacker_regret}

\begin{theorem*}[\ref{thm:attacker_regret_full}, restated]
Under Assumption~\ref{assm:approx_real} with misspecification error $\epsilon_{\text{mis}}$, periodic IRL retraining with interval $\Delta_{\textsc{irl}}$ and window size $W$, with probability at least $1-\rho$, the attacker's cumulative regret over $n$ attack rounds satisfies:
\begin{equation}
\tag{\ref{eq:attacker_regret_main}}
R_{\text{attack}}(n) \;\leq\; \underbrace{O\!\left(\sqrt{n \gamma_n \log(n/\rho)}\right)}_{\text{(I) GP-UCB exploration}} + \underbrace{O\!\left(\sqrt{n} \cdot \sqrt{\tfrac{d_\Theta}{W}}\right)}_{\text{(II) IRL estimation}} + \underbrace{O\!\left(\sqrt{n \cdot \Delta_{\textsc{irl}}}\right)}_{\text{(III) policy drift}} + \underbrace{n \cdot \epsilon_{\text{mis}}}_{\text{(IV) misspecification}}.
\end{equation}
\end{theorem*}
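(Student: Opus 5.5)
We follow the decomposition announced in the proof sketch of Theorem~\ref{thm:attacker_regret_full}. Write $R_{\text{attack}}(n)=\sum_{i=1}^n [r(\mathbf{s}_{t_i},\boldsymbol{\lambda}^*(\mathbf{s}_{t_i})) - r(\mathbf{s}_{t_i},\boldsymbol{\lambda}_{t_i})]$, where $\mathbf{s}_{t_i}$ is the true state built from $h$ and the true victim policy. The GP only ever sees the approximate state $\hat{\mathbf{s}}_{t_i}$ built from $\hat h_{\boldsymbol{\phi}_r}$ and $\hat\pi_{\boldsymbol{\phi}}$. We therefore insert and subtract $r(\hat{\mathbf{s}}_{t_i},\boldsymbol{\lambda}^*)$ and $r(\hat{\mathbf{s}}_{t_i},\boldsymbol{\lambda}_{t_i})$. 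This splits each summand into three pieces:
(a) $r(\mathbf{s}_{t_i},\boldsymbol{\lambda}^*)-r(\hat{\mathbf{s}}_{t_i},\boldsymbol{\lambda}^*)$;
(b) $r(\hat{\mathbf{s}}_{t_i},\boldsymbol{\lambda}^*)-r(\hat{\mathbf{s}}_{t_i},\boldsymbol{\lambda}_{t_i})$;
(c) $r(\hat{\mathbf{s}}_{t_i},\boldsymbol{\lambda}_{t_i})-r(\mathbf{s}_{t_i},\boldsymbol{\lambda}_{t_i})$.
Since $\boldsymbol{\lambda}^*(\mathbf{s})$ maximizes over $\boldsymbol{\lambda}$ at the true state, term (b) is bounded above by the instantaneous regret of GP-UCB run on the sequence of contexts $\hat{\mathbf{s}}_{t_i}$, measured against the best arm for $\hat{\mathbf{s}}_{t_i}$.

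For (b) we use the standard contextual GP-UCB argument. Because $\|r\|_{\mathcal{H}_k}\le B_{\textsc{rkhs}}$ and the noise is $\sigma_n$-sub-Gaussian, the RKHS concentration bound (Chowdhury--Gopalan / Srinivas et al.) gives $|r(\mathbf{z})-\mu_{i-1}(\mathbf{z})|\le \beta_i\sigma_{i-1}(\mathbf{z})$ uniformly, with probability $1-\rho/2$. Here $\beta_i=O(B_{\textsc{rkhs}}+\sqrt{\gamma_i+\log(1/\rho)})$. The UCB choice in Eq.~\eqref{eq:ucb_optimization} then gives instantaneous regret at most $2\beta_i\sigma_{i-1}(\hat{\mathbf{s}}_{t_i},\boldsymbol{\lambda}_{t_i})$. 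Cauchy--Schwarz together with $\sum_i\sigma_{i-1}^2\le 2\gamma_n$ (valid for bounded $\sigma_n^{-2}k$) yields term (I). Two caveats apply:
\begin{itemize}
\item The acquisition is maximized only approximately (Algorithm~\ref{alg:select_lambda}). We assume the optimization error is $O(1/\sqrt{i})$, so it is absorbed into the same rate.
\item The paper fixes $\beta^{\textsc{gp}}$, whereas the proof needs $\beta_i$ growing logarithmically; we state the bound for the theoretical schedule.
\end{itemize}

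For (a) and (c) we use Lipschitzness of $r$ in $\mathbf{s}$ (Proposition~\ref{prop:time_stationary}), giving $|{\rm (a)}|+|{\rm (c)}|\le 2L_r\|\mathbf{s}_{t_i}-\hat{\mathbf{s}}_{t_i}\|$. The next step, and the main obstacle, is to show $\|\mathbf{s}_t-\hat{\mathbf{s}}_t\|\le C\,\epsilon_{\textsc{irl}}(t)$. This requires checking the components of $\mathbf{s}_t$ one at a time:
\begin{itemize}
\item Entropy $\psi_1$ is Lipschitz in TV only up to a $\log K$ factor, once probabilities are kept away from zero by the temperature $\tau$.
\item The gap $\hat\Delta_t$ and the features $\psi_2$--$\psi_4$ depend on the reward surrogate rather than on the policy directly. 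We therefore need that a TV-close softmax policy with bounded Q-values forces Q-differences to be close. This is a softmax inversion argument with a constant depending on $\tau$ and the range of $Q$.
\end{itemize}
We will assume these constants as part of $L_r$.

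We then invoke the IRL error decomposition~\eqref{eq:irl_error_decomp}, which we take to read $\epsilon_{\textsc{irl}}(t)\le \epsilon_{\text{mis}}+\epsilon_{\text{est}}(t)+\epsilon_{\text{drift}}(t)$. The first piece summed over $n$ rounds gives (IV), $n\epsilon_{\text{mis}}$. The estimation error of the MLE on a window of $W$ samples in a class of $d_\Theta$ parameters is $O(\sqrt{d_\Theta/W})$ per retraining. Its sum over attacks is bounded as in Theorem~\ref{thm:irl_cumulative_error} by $O(\sqrt{n}\sqrt{d_\Theta/W})$, giving (II). The drift between retrainings is bounded using $\sigma_t^{\textsc{vic}}=O(1/\sqrt t)$: the victim policy moves by $O(\Delta_{\textsc{irl}}/\sqrt{t})$ over an interval. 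Summing over attack times spread across the horizon, then applying Cauchy--Schwarz, gives $O(\sqrt{n\Delta_{\textsc{irl}}})$, which is (III). Collecting (I)--(IV) and taking a union bound over the GP confidence event and the IRL concentration event, each at level $\rho/2$, gives the claim with probability $1-\rho$. Setting $\epsilon_{\text{mis}}=0$ and treating $d_\Theta/W$ and $\Delta_{\textsc{irl}}$ as constants recovers the simplified form.
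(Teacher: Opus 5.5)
Your argument follows the paper's proof almost step for step: the same split into (a) state bias at $\boldsymbol{\lambda}^*$, (b) GP-UCB regret on the approximate states, and (c) state bias at $\boldsymbol{\lambda}_{t_i}$, the same error decomposition~\eqref{eq:irl_error_decomp}, and the same Cauchy--Schwarz/information-gain bound for (b). The trouble is that several of these shared steps do not deliver \eqref{eq:attacker_regret_main}. Following the paper faithfully therefore leaves a genuine gap.

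The most concrete failure is term (II). With a \emph{fixed} sliding window $W$, the estimation error at attack round $t_i$ is $O(\sqrt{d_\Theta/W})$ for every $i$; it does not shrink with $i$. You bound $|(\mathrm{a})|+|(\mathrm{c})|$ in absolute value, and the state bias is systematic rather than mean-zero anyway. So the sum is $\sum_{i=1}^n O(\sqrt{d_\Theta/W})=O(n\sqrt{d_\Theta/W})$, not $O(\sqrt{n}\sqrt{d_\Theta/W})$. Grouping rounds into retraining epochs, as the proof of Theorem~\ref{thm:irl_cumulative_error} does, cannot change a sum of $n$ equal terms, so citing that theorem does not rescue the step. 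A $\sqrt{n}$ rate would need the per-round error to decay like $i^{-1/2}$, e.g.\ a window growing with $t$. As it stands, (II) is linear in $n$, exactly like (IV).

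A smaller slip of the same kind affects (III). A per-interval drift of $O(\Delta_{\textsc{irl}}/\sqrt{t})$ sums over $t_i\ge i$ to $O(\Delta_{\textsc{irl}}\sqrt{n})$, and Cauchy--Schwarz cannot turn $\Delta_{\textsc{irl}}$ into $\sqrt{\Delta_{\textsc{irl}}}$. You need the per-round bound $c_\beta\sqrt{\Delta_{\textsc{irl}}/t_i}$ from Lemma~\ref{lem:drift_rneuralucb}; then $\sum_i t_i^{-1/2}\le 2\sqrt{n}$ gives (III) directly.

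The second gap is in (b). The RKHS confidence bound you quote requires observations $y_i=r(\hat{\mathbf{z}}_i)+\eta_i$ with conditionally zero-mean sub-Gaussian $\eta_i$, where $\hat{\mathbf{z}}_i=(\hat{\mathbf{s}}_{t_i},\boldsymbol{\lambda}_{t_i})$ are the inputs the GP is trained on. But the observed attack reward is a noisy evaluation of $r$ at the \emph{true} state. Hence $y_i=r(\hat{\mathbf{z}}_i)+b_i+\eta_i$, with systematic bias $b_i=r(\mathbf{s}_{t_i},\boldsymbol{\lambda}_{t_i})-r(\hat{\mathbf{z}}_i)$ and $|b_i|\le L_r\|\mathbf{s}_{t_i}-\hat{\mathbf{s}}_{t_i}\|\le\bar b$.
\begin{itemize}
\item This bias shifts the posterior mean by $\mathbf{k}_*^\top(\mathbf{K}+\sigma_n^2\mathbf{I})^{-1}\mathbf{b}$, which in general is only bounded by $\bar b\sqrt{i}\,\sigma_{i-1}(\mathbf{z})/\sigma_n$.
\item Widening the confidence band accordingly adds $O(n\bar b\sqrt{\gamma_n})$ to the regret of (b), as in misspecified GP-bandit analyses.
\item So the state error enters twice: once through (a) and (c), and once through the GP's training data, where it multiplies the misspecification and estimation contributions by $\sqrt{\gamma_n}$.
\end{itemize}
The paper's claim that, conditioned on the approximate states, GP-UCB ``yields standard regret'' skips this too.

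Two smaller points.
\begin{itemize}
\item With your $\beta_i=O(B_{\textsc{rkhs}}+\sqrt{\gamma_i+\log(1/\rho)})$, Cauchy--Schwarz gives $O(\beta_n\sqrt{n\gamma_n})$. This contains a $\sqrt{n}\,\gamma_n$ term, a $\sqrt{\gamma_n}$ factor larger than (I). The paper's (I) tacitly uses the Bayesian-setting $\beta=O(\sqrt{\log(n/\rho)})$, which the RKHS assumption does not license.
\item The bound $\|\mathbf{s}_t-\hat{\mathbf{s}}_t\|\le C\,\epsilon_{\textsc{irl}}(t)$ cannot come from softmax inversion. TV-closeness of policies constrains at most differences of $Q=\hat h+\beta_t\sigma$, not $\hat h$ itself. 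It says nothing about the gradients $\nabla_{\mathbf{x}}\hat h$ that define $\psi_2$ and $\psi_3$. This must simply be assumed, which is what the paper does by invoking Theorem~\ref{thm:feature_error}, stated without proof.
\end{itemize}
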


\begin{proof}
We decompose the per-round regret at each attack round $i \in [n]$ via the approximate state $\hat{\mathbf{s}}_{t_i}$ (computed from the surrogate $\hat{h}_{\boldsymbol{\phi}_r}$) versus the true state $\mathbf{s}_{t_i}$ (computed from $h$):
\begin{equation}
\label{eq:regret_three_way}
\underbrace{r(\mathbf{s}_{t_i}, \boldsymbol{\lambda}^*) - r(\mathbf{s}_{t_i}, \boldsymbol{\lambda}_{t_i})}_{\text{total regret}} \;=\; \underbrace{r(\mathbf{s}_{t_i}, \boldsymbol{\lambda}^*) - r(\hat{\mathbf{s}}_{t_i}, \boldsymbol{\lambda}^*)}_{\text{(a) state bias at } \boldsymbol{\lambda}^*} \;+\; \underbrace{r(\hat{\mathbf{s}}_{t_i}, \boldsymbol{\lambda}^*) - r(\hat{\mathbf{s}}_{t_i}, \boldsymbol{\lambda}_{t_i})}_{\text{(b) GP selection regret}} \;+\; \underbrace{r(\hat{\mathbf{s}}_{t_i}, \boldsymbol{\lambda}_{t_i}) - r(\mathbf{s}_{t_i}, \boldsymbol{\lambda}_{t_i})}_{\text{(c) state bias at } \boldsymbol{\lambda}_{t_i}}.
\end{equation}

\textbf{Terms (a) and (c): State bias from IRL approximation.} By Lipschitzness of $r$ with constant $L_r$:
\begin{equation}
|r(\mathbf{s}_{t_i}, \boldsymbol{\lambda}) - r(\hat{\mathbf{s}}_{t_i}, \boldsymbol{\lambda})| \;\leq\; L_r \|\mathbf{s}_{t_i} - \hat{\mathbf{s}}_{t_i}\|_2 \;\leq\; L_r \cdot C_s \cdot \epsilon_{\textsc{irl}}(t_i),
\end{equation}
where $C_s$ absorbs the feature extraction Lipschitz constant (Theorem~\ref{thm:feature_error}). Under Assumption~\ref{assm:approx_real}, each $\epsilon_{\textsc{irl}}(t_i)$ decomposes as:
\begin{equation}
\label{eq:irl_error_decomp}
\epsilon_{\textsc{irl}}(t_i) \;\leq\; \underbrace{\epsilon_{\text{mis}}}_{\text{irreducible}} \;+\; \underbrace{O\!\left(\sqrt{\tfrac{d_\Theta \log(K/\rho)}{W}}\right)}_{\text{finite-sample estimation}} \;+\; \underbrace{c_\beta \sqrt{\tfrac{\Delta_{\textsc{irl}}}{t_i}}}_{\text{drift since last retraining}},
\end{equation}
where the estimation term follows from Theorem~\ref{thm:irl_sample_complexity} and the drift term from Lemma~\ref{lem:drift_rneuralucb}. Summing terms (a) and (c) over $n$ rounds:
\begin{equation}
\sum_{i=1}^n \big[|\text{(a)}| + |\text{(c)}|\big] \;\leq\; 2L_r C_s \left( n \cdot \epsilon_{\text{mis}} + O\!\left(\sqrt{n} \cdot \sqrt{\tfrac{d_\Theta}{W}}\right) + O\!\left(\sqrt{n \cdot \Delta_{\textsc{irl}}}\right) \right),
\end{equation}
where we used $\sum_{i=1}^n 1/\sqrt{t_i} \leq O(\sqrt{n})$ for the drift contribution (since attack rounds $t_i \geq i$ are monotonically increasing).

\textbf{Term (b): GP-UCB selection regret on approximate states.} Conditioned on the approximate states $\{\hat{\mathbf{s}}_{t_i}\}$, the GP-UCB acquisition $\boldsymbol{\lambda}_{t_i} = \arg\max_{\boldsymbol{\lambda}} \mu_{i-1}(\hat{\mathbf{s}}_{t_i}, \boldsymbol{\lambda}) + \beta^{\textsc{gp}} \sigma_{i-1}(\hat{\mathbf{s}}_{t_i}, \boldsymbol{\lambda})$ yields standard regret. Applying Cauchy--Schwarz:
\begin{equation}
\sum_{i=1}^n \left[r(\hat{\mathbf{s}}_{t_i}, \boldsymbol{\lambda}^*) - r(\hat{\mathbf{s}}_{t_i}, \boldsymbol{\lambda}_{t_i})\right] \;\leq\; 2\beta^{\textsc{gp}} \sqrt{n \sum_{i=1}^n \sigma_{i-1}^2(\hat{\mathbf{s}}_{t_i}, \boldsymbol{\lambda}_{t_i})}.
\end{equation}
By the information gain bound $\sum_i \sigma_{i-1}^2 \leq 2\gamma_n$~\citep{srinivas2009gaussian}, with $\beta^{\textsc{gp}} = O(\sqrt{\log(n/\rho)})$:
\begin{equation}
\text{Term~(b)} \;\leq\; O\!\left(\sqrt{n \gamma_n \log(n/\rho)}\right).
\end{equation}

\textbf{Combining all terms.} Summing (a)+(b)+(c) and absorbing constants into $O(\cdot)$ notation yields~\eqref{eq:attacker_regret_main}. Under exact realizability ($\epsilon_{\text{mis}} = 0$) with $W = \Omega(d_\Theta)$ and constant $\Delta_{\textsc{irl}}$, the remaining terms simplify to $O(\sqrt{n\gamma_n\log(n/\rho)} + \sqrt{n})$. \qedhere
\end{proof}

\begin{remark}[Misspecification Impact]
\label{rem:misspecification}
The $n \cdot \epsilon_{\text{mis}}$ term in~\eqref{eq:attacker_regret_main} is the only contribution linear in $n$. For the overall bound to remain sublinear, $\epsilon_{\text{mis}}$ must decrease as $n$ grows. Universal approximation guarantees for two-layer ReLU networks ensure $\epsilon_{\text{mis}} = O(1/\sqrt{d_h})$ for Barron-class reward functions, where $d_h$ is the hidden dimension. Our architecture ($d_h = 128$) yields empirical $\epsilon_{\text{mis}} \approx 0.02$ (measured via held-out KL divergence, Table~5), contributing $\leq 4$ total regret over $n = 200$ attacks---negligible compared to the $O(\sqrt{n\gamma_n}) \approx 50$ GP-UCB term. This bridges the gap between the theoretical bound and experimental observation that IRL accurately tracks the victim's policy.
\end{remark}

\begin{remark}[Non-Stationarity]
\label{rem:nonstationarity}
Unlike GP-UCB applied to stationary functions, our setting introduces two sources of non-stationarity: (i)~victim policy evolution, handled via periodic IRL retraining (Appendix~\ref{app:nonstationarity}), contributing the $O(\sqrt{n \cdot \Delta_{\textsc{irl}}})$ drift term, and (ii)~state drift in $\mathbf{s}_t$, controlled by gradient statistics computed over sliding windows. The maximum information gain $\gamma_n = O((\log n)^{d_{\text{gp}}+1})$ remains poly-logarithmic despite this non-stationarity, since the feature extraction (Section~4.2) absorbs victim state changes into the stationary mapping $(\mathbf{s}, \boldsymbol{\lambda}) \mapsto r$ (Proposition~\ref{prop:time_stationary}).
\end{remark}

\begin{remark}[Attack Rounds vs.\ Budget]
\label{rem:attack_rounds}
While the attack budget limits $n \leq B$, the actual number of attacks $n$ may be smaller due to: (i)~contexts with low value $v(\mathbf{x}_t) < \tau_v(b, T{-}t)$ being skipped, and (ii)~early termination if the remaining budget becomes unprofitable. When the query selection (Section~4.3) filters out contexts with $\alpha_t \leq 0$ (negative attackability margin), the effective misspecification cost in~\eqref{eq:attacker_regret_main} is further reduced, since only positively-margined contexts are attacked.
\end{remark}
    
\subsection{Attack Parameter Space $[0,1]^3$}
\label{app:lambda}

\begin{theorem}[3D Pareto Frontier Sufficiency]
\label{thm:pareto_3d}
Consider the multi-objective attack optimization problem:
\begin{equation}
\min_{\boldsymbol{\delta} \in \mathcal{D}} \mathbf{F}(\boldsymbol{\delta}) = 
\begin{pmatrix}
f_1(\boldsymbol{\delta}) \\ f_2(\boldsymbol{\delta}) \\ f_3(\boldsymbol{\delta})
\end{pmatrix}
= 
\begin{pmatrix}
\mathcal{L}_{\mathrm{eff}}(\mathbf{x}+\boldsymbol{\delta}) \\ R_{\mathrm{s}}(\mathbf{x}, \boldsymbol{\delta}) \\ R_{\mathrm{t}}(\boldsymbol{\delta}, \boldsymbol{\delta}_{t-1})
\end{pmatrix}
\end{equation}
where $\mathcal{D} = \{\boldsymbol{\delta} : \|\boldsymbol{\delta}\|_\infty \leq \epsilon\}$ is the feasible perturbation set.

Then, the Pareto frontier $\mathcal{P} = \{\boldsymbol{\delta}^* \in \mathcal{D} : \nexists \boldsymbol{\delta}' \in \mathcal{D} \text{ with } \mathbf{F}(\boldsymbol{\delta}') \prec \mathbf{F}(\boldsymbol{\delta}^*)\}$ can be fully parameterized by $\boldsymbol{\lambda} \in [0,1]^3$ via:
\begin{equation}
\boldsymbol{\delta}^*(\boldsymbol{\lambda}) = \arg\min_{\boldsymbol{\delta} \in \mathcal{D}} \langle \boldsymbol{\lambda}, \mathbf{F}(\boldsymbol{\delta}) \rangle = \arg\min_{\boldsymbol{\delta} \in \mathcal{D}} \sum_{i=1}^3 \lambda^{(i)} f_i(\boldsymbol{\delta})
\end{equation}

Furthermore, the GP-UCB sample complexity for learning optimal $\boldsymbol{\lambda}$ scales as $O(\gamma_n)$ where $\gamma_n = O((\log n)^{d_{\text{gp}}+1})$. For a fixed budget $n$, reducing $|\boldsymbol{\lambda}|$ from 5 to 3 improves sample efficiency by a factor of $O((\log n)^2) \approx 28\times$ for $n=200$.

\end{theorem}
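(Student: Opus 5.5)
The plan is to treat the first claim as a weighted-sum scalarization statement from multi-objective optimization and to prove it in two directions. The second claim then follows from a direct computation with the information-gain rate $\gamma_n = O((\log n)^{d_{\text{gp}}+1})$ already used in Theorem~\ref{thm:attacker_regret_full}.

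\emph{Easy direction: minimizers of the scalarization are Pareto optimal.} Fix $\boldsymbol{\lambda}$ with all $\lambda^{(i)}>0$ and let $\boldsymbol{\delta}^*(\boldsymbol{\lambda})$ minimize $\langle \boldsymbol{\lambda}, \mathbf{F}(\boldsymbol{\delta})\rangle$ over $\mathcal{D}$. A minimizer exists because $\mathcal{D}$ is compact and $\mathbf{F}$ is continuous: the surrogate is a smooth network, so $\mathcal{L}_{\mathrm{eff}}$, $R_{\mathrm{s}}$ and $R_{\mathrm{t}}$ are continuous in $\boldsymbol{\delta}$. Suppose some $\boldsymbol{\delta}'$ satisfied $\mathbf{F}(\boldsymbol{\delta}') \prec \mathbf{F}(\boldsymbol{\delta}^*)$. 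Pairing with the strictly positive $\boldsymbol{\lambda}$ would strictly decrease the weighted sum, a contradiction. For weights with zero coordinates the same argument yields weak Pareto optimality. Since the argmin is invariant under positive rescaling of $\boldsymbol{\lambda}$, every direction in the simplex has a representative in $[0,1]^3$. This justifies using $[0,1]^3$ rather than the simplex as the arm space.

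\emph{Converse: every Pareto point arises from some weight.} This is the substantive step. I would consider the attainable set $\mathcal{Y} = \mathbf{F}(\mathcal{D}) + \mathbb{R}^3_+$. For a Pareto point $\mathbf{y}^* = \mathbf{F}(\boldsymbol{\delta}^*)$, the point $\mathbf{y}^*$ lies on the boundary of $\mathcal{Y}$. If $\mathcal{Y}$ is convex, the supporting hyperplane theorem gives a nonzero $\boldsymbol{\lambda}$ with $\langle \boldsymbol{\lambda}, \mathbf{y}\rangle \ge \langle \boldsymbol{\lambda}, \mathbf{y}^*\rangle$ for all $\mathbf{y}\in\mathcal{Y}$. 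Because $\mathcal{Y}$ is closed upward under $\mathbb{R}^3_+$, this forces $\boldsymbol{\lambda}\ge 0$. Rescaling into $[0,1]^3$ shows $\boldsymbol{\delta}^*$ minimizes the scalarized objective. Where the minimizer is non-unique, one selects $\boldsymbol{\delta}^*$ from the argmin set, so the parameterization should be read set-valued.

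\emph{Main obstacle: convexity of $\mathcal{Y}$ is not automatic.} The temporal term $R_{\mathrm{t}}$ is convex in $\boldsymbol{\delta}$, but $\mathcal{L}_{\mathrm{eff}}$, which is a negative log-softmax of a neural network, and $R_{\mathrm{s}}$, a Mahalanobis form in a network gradient, generally are not. For nonconvex frontiers the weighted sum is known to miss unsupported Pareto points. My first attempt would therefore make the hypothesis explicit in one of two ways:
\begin{itemize}
\item assume $\mathcal{Y}$ is convex, for example by assuming each $f_i$ is convex on $\mathcal{D}$, which is plausible locally for small $\epsilon$ via a second-order expansion around $\mathbf{x}$;
\item or weaken the conclusion to ``every \emph{supported} (convex-hull) Pareto point is attained.''
\end{itemize}
Without one of these, the statement as written is false in general, so the proof must include such a qualification.

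\emph{Sample-efficiency claim.} With the contextual state fixed, reducing $|\boldsymbol{\lambda}|$ from $5$ to $3$ lowers $d_{\text{gp}}$ by $2$. The ratio of the information-gain bounds is then $(\log n)^{(|\mathbf{s}|+5)+1}/(\log n)^{(|\mathbf{s}|+3)+1} = (\log n)^2$. At $n=200$, $\ln 200\approx 5.3$, so the ratio is about $28$. The $O(\gamma_n)$ sample-complexity scaling is inherited from the GP-UCB analysis in Theorem~\ref{thm:attacker_regret_full}, and the comparison holds up to the constants hidden in the $O(\cdot)$.
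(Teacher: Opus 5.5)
Your route matches the paper's. Convexity of the three objectives makes every Pareto point a minimizer of some nonnegative weighted sum. The efficiency claim is just the ratio $(\log n)^{|\mathbf{s}|+6}/(\log n)^{|\mathbf{s}|+4}=(\log n)^2\approx 28$ at $n=200$.

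Where you differ is in rigor. The paper handles the first claim in one line: it asserts that $\mathcal{L}_{\mathrm{eff}}$, $R_{\mathrm{s}}$ and $R_{\mathrm{t}}$ are ``all convex in $\boldsymbol{\delta}$ by construction'' and invokes the weighted-sum theorem. It then adds an argument that weights beyond three are redundant, which is unnecessary when there are only three objectives. You instead prove both inclusions: a supporting hyperplane to $\mathbf{F}(\mathcal{D})+\mathbb{R}^3_+$, nonnegativity of $\boldsymbol{\lambda}$ from upward closure, rescaling into $[0,1]^3$, weak optimality for zero weights, and a set-valued reading of the $\arg\min$. You also decline to assume convexity silently, and you are right to. The paper's convexity claim holds only for $R_{\mathrm{t}}$: $\mathcal{L}_{\mathrm{eff}}$ is a log-softmax composed with a network, and $R_{\mathrm{s}}$ is a Mahalanobis form in a network gradient. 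The paper's own Remark~\ref{rem:chebyshev_vs_sum} concedes that weighted sums miss Pareto points in non-convex regions. So the qualified statement you prove is what this argument actually delivers, whether convexity is an explicit hypothesis or the claim is restricted to supported Pareto points. The paper's unqualified version rests on an unjustified step.

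There are three smaller corrections.
\begin{itemize}
\item The paper's surrogate uses ReLU activations (Appendix~\ref{app:dataset}). So $\nabla_{\mathbf{x}}\hat{h}_{\boldsymbol{\phi}_r}$ is piecewise constant, and $R_{\mathrm{s}}$ is piecewise constant, hence discontinuous, in $\boldsymbol{\delta}$. Your existence argument via compactness and continuity therefore needs either smooth activations or an explicit existence hypothesis, which the statement's $\arg\min$ tacitly presupposes anyway.
\item Small $\epsilon$ alone does not make $\mathcal{L}_{\mathrm{eff}}$ locally convex. That needs an additional curvature condition at $\boldsymbol{\delta}=\mathbf{0}$, such as a positive-definite Hessian, and nothing in the construction guarantees it.
\item The $28\times$ figure compares upper bounds on $\gamma_n$ whose hidden constants themselves depend on $d_{\text{gp}}$. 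It is therefore not a genuine ratio of sample complexities, and your ``up to constants'' caveat is substantive, not cosmetic.
\end{itemize}
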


\begin{proof}
For convex objectives $f_1, f_2, f_3$ (our $\mathcal{L}_{\mathrm{eff}}, R_{\mathrm{s}}, R_{\mathrm{t}}$ are all convex in $\boldsymbol{\delta}$ by construction), every Pareto-optimal solution corresponds to minimizing a weighted sum $\sum_{i=1}^3 \lambda^{(i)} f_i$ for some $\boldsymbol{\lambda} \geq \mathbf{0}$. Since the objective space is 3-dimensional ($\mathbf{F}: \mathcal{D} \to \mathbb{R}^3$), we need exactly 3 weights to span all trade-off directions. Any convex combination of 3 objectives can be represented by a simplex in 3D weight space.
Suppose we have $\tilde{\boldsymbol{\lambda}} = (\lambda^{(1)}, \ldots, \lambda^{(d)}) \in \mathbb{R}^d_+$ with $d > 3$. The equivalent 3D parameterization is defined as:
\begin{equation}
    {\lambda^{(i)}}' = \frac{\lambda^{(i)}}{\sum_{j=1}^3 \lambda^{(j)}}
\quad \text{for } i \in \{1,2,3\}
\end{equation}

\[
\begin{aligned}
\arg\min_{\boldsymbol{\delta}} \sum_{i=1}^d \tilde{\lambda}^{(i)} f_i(\boldsymbol{\delta}) 
&= \arg\min_{\boldsymbol{\delta}} \left(\sum_{i=1}^3 \lambda^{(i)} f_i(\boldsymbol{\delta}) + \sum_{i=4}^d \lambda^{(i)} \cdot 0\right) \\
&= \arg\min_{\boldsymbol{\delta}} \sum_{i=1}^3 \lambda^{(i)} f_i(\boldsymbol{\delta}) \\
&= \arg\min_{\boldsymbol{\delta}} \sum_{i=1}^3 {\lambda^{(i)}}' f_i(\boldsymbol{\delta})
\end{aligned}
\]

where we set $\lambda^{(4)}, \ldots, \lambda^{(d)}$ to weight non-existent objectives (or linear combinations of existing ones), making them redundant.

Based on the nature of GP-UCB theory \citep{srinivas2009gaussian}, sample complexity dramatically increases with $|\boldsymbol{\lambda}|$. Given the cumulative regret bound in GP-UCB $R_{\text{attack}}(B) \leq O\left(\sqrt{B \gamma_n \log(B/\rho)}\right)$, the total dimensions for squared-exponential kernels in $d$ dimensions with $d_{\text{gp}} = |\mathbf{s}| + |\boldsymbol{\lambda}| = |\mathbf{s}| + d$ is $\gamma_n = O\left((\log n)^{d_{\text{gp}} + 1}\right) = O\left((\log B)^{|\mathbf{s}|+d+1}\right)$. For $B = 200$:
\begin{itemize}
\item 3D: $\gamma_{200} \approx (\log 200)^{|\mathbf{s}|+4}$
\item 5D: $\gamma_{200} \approx (\log 200)^{|\mathbf{s}|+6}$
\item Ratio: $(\log 200)^2 \approx 28\times$
\end{itemize}
Therefore, 5D requires $28\times$ more samples than 3D to achieve the same accuracy. The experimental validation of these theoretical analyses is provided in Table~\ref{tab:dimensionality_ablation} (Subsection~\ref{sec:ablation_parameter}).
\end{proof}

\subsection{IRL Sample Complexity and Approximation Error}
\label{app:irl_complexity}
Our attack critically depends on accurately estimating the victim's reward function $\hat{h}_{\boldsymbol{\phi}_r}$ and uncertainty $\hat{\sigma}_{\boldsymbol{\phi}}$ via MaxEnt IRL. Here we bound the sample complexity and approximation error.

\begin{assumption}[Realizability]
\label{ass:realizability}
The victim's policy lies in the hypothesis class of our IRL model, i.e., there exist parameters $\boldsymbol{\phi}_r^*, \boldsymbol{\phi}^*$ such that $\pi^*(\cdot|\mathbf{x}) = \exp(Q_{\boldsymbol{\phi}_r^*,\boldsymbol{\phi}^*}(\mathbf{x},\cdot)/\tau) / Z(\mathbf{x})$ for all $\mathbf{x} \in \mathcal{X}$.
\end{assumption}

\begin{theorem}[IRL Sample Complexity]
\label{thm:irl_sample_complexity}
Under Assumption~\ref{ass:realizability}, with window size $W \geq \Omega(d_\Theta \log(K/\rho))$, where $d_\Theta = O(d \cdot d_h \cdot K)$ is the total number of IRL parameters, the estimated policy satisfies:
\begin{equation}
\mathbb{E}_{(\mathbf{x},a)\sim \mathcal{D}_t}\left[\mathrm{KL}(\pi_{\theta_t}^*(\cdot|\mathbf{x}) \| \hat{\pi}_{\boldsymbol{\phi}}(\cdot|\mathbf{x}))\right] \leq \epsilon_{\text{IRL}}^2 = O\left(\sqrt{\frac{d_\Theta \log(K/\rho)}{W}}\right)
\end{equation}
with probability at least $1-\rho$, where $\mathcal{D}_t$ is the empirical distribution in the sliding window.
\end{theorem}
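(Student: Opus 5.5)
The plan is to treat the IRL training objective in Eq.~\eqref{eq:irl_objective} as conditional maximum-likelihood estimation over the softmax policy class $\{\hat{\pi}_{\boldsymbol{\phi}}\}$. Realizability (Assumption~\ref{ass:realizability}) then turns the excess population log-loss into exactly the expected KL divergence. Writing $\ell_{\boldsymbol{\phi}}(\mathbf{x},a) = -\log \hat{\pi}_{\boldsymbol{\phi}}(a\mid\mathbf{x})$, one has for any $\boldsymbol{\phi}$
\[
\mathbb{E}_{\mathbf{x}}\bigl[\mathrm{KL}(\pi^*_{\theta_t}(\cdot|\mathbf{x}) \,\|\, \hat{\pi}_{\boldsymbol{\phi}}(\cdot|\mathbf{x}))\bigr] = \mathbb{E}[\ell_{\boldsymbol{\phi}}] - \mathbb{E}[\ell_{\boldsymbol{\phi}^*}],
\]
where $a \sim \pi^*_{\theta_t}(\cdot|\mathbf{x})$ and $\boldsymbol{\phi}^*$ is the realizing parameter. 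So it suffices to bound the excess risk of the empirical log-likelihood maximizer $\hat{\boldsymbol{\phi}}$ on the window $\mathcal{D}_t$ of size $W$.

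\textbf{Step 1: make the loss bounded and Lipschitz.} I will restrict $\boldsymbol{\phi}$ to a norm-bounded ball, which is implicit in the architecture together with the weight decay and gradient clipping. With bounded contexts, $Q$ is then bounded, say $|Q|\le Q_{\max}$. This gives
\[
\ell_{\boldsymbol{\phi}} \le \log K + 2Q_{\max}/\tau,
\]
and it also makes $\ell_{\boldsymbol{\phi}}$ Lipschitz in $\boldsymbol{\phi}$, because the log-softmax is $1$-Lipschitz in the logits and the network is Lipschitz in its parameters on the ball.

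\textbf{Step 2: uniform convergence.} Use the standard decomposition
\[
\mathbb{E}[\ell_{\hat{\boldsymbol{\phi}}}] - \mathbb{E}[\ell_{\boldsymbol{\phi}^*}] \le 2\sup_{\boldsymbol{\phi}} \bigl|\hat{\mathbb{E}}_W[\ell_{\boldsymbol{\phi}}] - \mathbb{E}[\ell_{\boldsymbol{\phi}}]\bigr|,
\]
which holds because $\hat{\boldsymbol{\phi}}$ minimizes the empirical loss. Bound the supremum with an $\varepsilon$-net over the $d_\Theta$-dimensional parameter ball. The net has size $(C/\varepsilon)^{d_\Theta}$; apply Hoeffding plus a union bound over it, and the Lipschitz property handles the discretization. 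This gives
\[
\sup_{\boldsymbol{\phi}}|\cdot| = O\bigl(\sqrt{d_\Theta \log(K/\rho)/W}\bigr)
\]
with probability $1-\rho$. The $\log K$ factor enters through the range of the loss in Hoeffding, and the log of the covering radius is absorbed into the constant. The condition $W\ge\Omega(d_\Theta\log(K/\rho))$ keeps this quantity below a constant, so the bound is meaningful. Combining Steps~1--2 with the KL identity yields the stated rate $O(\sqrt{d_\Theta\log(K/\rho)/W})$ for the expected KL. Note that this is the slow rate, which is why the statement writes it as $\epsilon_{\text{IRL}}^2$; Pinsker then gives the TV bound used elsewhere.

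\textbf{Main obstacle: the window is not i.i.d.} The samples in $\mathcal{D}_t$ are not i.i.d. from a single $\pi^*_{\theta_t}$: the victim's policy drifts inside the window, and its actions depend on the past. I would handle this in two parts.

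First, replace Hoeffding by Azuma--Hoeffding applied to the martingale differences
\[
\ell_{\boldsymbol{\phi}}(\mathbf{x}_\tau,a_\tau) - \mathbb{E}_{a\sim\pi^*_{\theta_\tau}}[\ell_{\boldsymbol{\phi}}(\mathbf{x}_\tau,a)],
\]
conditioned on the history, still with a union bound over the net. This preserves the same rate.

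Second, interpret the statement's expectation over ``the empirical distribution $\mathcal{D}_t$'' as the window average of conditional KLs, each taken at its own $\theta_\tau$. Under Assumption~\ref{ass:realizability} the same $\boldsymbol{\phi}^*$ realizes the policy throughout the window, so drift contributes no extra bias here; it is deferred to the separate drift term in Eq.~\eqref{eq:irl_error_decomp}.

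I expect the delicate point to be this interpretation of realizability across a drifting window. Without it, an additional $O(D_W)$ term for policy drift over the window would appear.
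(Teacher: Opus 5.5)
Your argument is sound, given the regularity you add: bounded contexts, a parameter ball containing $\boldsymbol{\phi}^*$, and an exact empirical minimizer. The paper's appeal to ``standard'' MLE bounds needs all three implicitly. It shares the paper's skeleton: view the IRL fit as conditional MLE and bound its excess log-loss at rate $\sqrt{d_\Theta\log(K/\rho)/W}$. It differs, and improves, where the paper is loose.

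First, the paper converts excess risk to KL via Pinsker's inequality, which runs the wrong way: it bounds $\ell_1$ or TV by $\sqrt{2\,\mathrm{KL}}$ and never upper-bounds KL. Your identity $\mathbb{E}_{a\sim\pi^*}[\ell_{\boldsymbol{\phi}}-\ell_{\boldsymbol{\phi}^*}]=\mathrm{KL}(\pi^*\,\|\,\hat{\pi}_{\boldsymbol{\phi}})$ under realizability is the right link and makes that step exact.

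Second, the paper simply posits $W$ i.i.d.\ samples. Your Azuma-over-a-net argument, centred on the conditional law of the action given the history, directly controls the window-averaged KL at the observed contexts that the statement asks for. It only needs the conditional policy to be fixed over the window, while contexts may be dependent and adaptive, which is strictly weaker than i.i.d. You also correctly flag that drift inside the window would add a separate term.

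Two bookkeeping fixes. First, a loss range of $\log K+2Q_{\max}/\tau$ in Hoeffding multiplies the square root rather than putting $\log K$ inside it. Concentrate the excess loss $\ell_{\boldsymbol{\phi}}-\ell_{\boldsymbol{\phi}^*}$ instead: its range is at most $4Q_{\max}/\tau$, because the log-partition function is $1$-Lipschitz in the logits in $\|\cdot\|_\infty$. That gives a $K$-free rate which implies the stated one. Second, the net radius must shrink like $W^{-1/2}$ to keep the discretization error below the rate, so the ``absorbed'' covering term is really a $\log W$. This is harmless only because the $O(\cdot)$ is informal.

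Finally, your aside that Pinsker then yields the TV bound used elsewhere is off. It gives a window-averaged TV of order $(d_\Theta/W)^{1/4}$, not the $\sup_{\mathbf{x}}$ rate $\sqrt{d_\Theta/W}$ used in Eq.~\eqref{eq:irl_error_decomp}. That inconsistency originates in the paper, but your proof does not resolve it either.
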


\begin{proof}[Proof]
MaxEnt IRL minimizes negative log-likelihood: $\mathcal{L}(\boldsymbol{\phi}_r,\boldsymbol{\phi}) = -\sum_i \log \hat{\pi}_{\boldsymbol{\phi}}(a_i|\mathbf{x}_i)$. Standard statistical learning theory (generalization bounds for maximum likelihood) shows that with $W$ i.i.d.\ samples, the excess risk satisfies:
\begin{equation}
\mathcal{L}(\hat{\boldsymbol{\phi}}_r, \hat{\boldsymbol{\phi}}) - \mathcal{L}(\boldsymbol{\phi}_r^*, \boldsymbol{\phi}^*) \leq O\left(\sqrt{\frac{d_\Theta \log(K/\rho)}{W}}\right)
\end{equation}
By Pinsker's inequality, KL divergence is bounded by twice the excess $\ell_1$ risk, which itself is bounded by the likelihood gap.
\end{proof}

\begin{theorem}[Feature Extraction Error]
\label{thm:feature_error}
The gradient-based features $\boldsymbol{\psi}(\mathbf{x})$ computed from approximate reward $\hat{h}_{\boldsymbol{\phi}_r}$ satisfy:
\begin{equation}
\|\boldsymbol{\psi}(\mathbf{x}; \hat{h}_{\boldsymbol{\phi}_r}, \hat{\boldsymbol{\mu}}_g, \hat{\boldsymbol{\Sigma}}_g) - \boldsymbol{\psi}(\mathbf{x}; h^*, \boldsymbol{\mu}_g^*, \boldsymbol{\Sigma}_g^*)\|_2 \leq O\left(L_h \epsilon_{\text{IRL}} + \frac{L_h^2 \epsilon_{\text{IRL}}}{\sqrt{W}}\right)
\end{equation}
where the first term comes from reward approximation error and the second from gradient statistic estimation error.
\end{theorem}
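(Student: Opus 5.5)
The plan is to split the feature vector into its five coordinates and bound each one's sensitivity to the two error sources separately. One source is replacing $h^*$ by $\hat{h}_{\boldsymbol{\phi}_r}$ inside $\boldsymbol{\psi}$. The other is replacing the population gradient statistics $(\boldsymbol{\mu}_g^*,\boldsymbol{\Sigma}_g^*)$ by their window estimates $(\hat{\boldsymbol{\mu}}_g,\hat{\boldsymbol{\Sigma}}_g)$ from Eqs.~\eqref{eq:mean_grad}--\eqref{eq:Cov_grad}. I will insert the intermediate object $\boldsymbol{\psi}(\mathbf{x};\hat{h}_{\boldsymbol{\phi}_r},\boldsymbol{\mu}_g^*,\boldsymbol{\Sigma}_g^*)$ and apply the triangle inequality, so the two terms of the bound come out separately. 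The coordinate $\psi_5=t/T$ does not depend on either source and contributes nothing.

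\textbf{Step 1: reward-approximation term.} First I turn the policy-level guarantee of Theorem~\ref{thm:irl_sample_complexity} into a function-level one. The surrogate policy in Eq.~\eqref{eq:surrogate_policy} is a softmax of $Q/\tau$. By Pinsker's inequality, a KL or TV bound of order $\epsilon_{\text{IRL}}$ gives a bound on the Q-value differences modulo a common shift, and hence on $\hat{h}-h^*$ up to normalization, of order $\tau\,\epsilon_{\text{IRL}}$ (with constants depending on the minimum action probability).

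With that in hand, the coordinates are handled as follows.
\begin{itemize}
\item $\psi_1$ (entropy): entropy is Lipschitz in TV on the $K$-simplex away from the boundary, so the error is $O(\epsilon_{\text{IRL}})$.
\item $\psi_4$ (range of $\hat{h}$ over arms): it is $2$-Lipschitz in sup-norm, so the error is again $O(\epsilon_{\text{IRL}})$.
\item $\psi_2,\psi_3$: these depend on the gradient $\nabla_{\mathbf{x}}\hat{h}$, not just its values. I will need a gradient-closeness bound $\|\nabla\hat{h}-\nabla h^*\|\le O(L_h\epsilon_{\text{IRL}})$. The most plausible route is an assumed smoothness/interpolation inequality: for functions with bounded second derivatives, the gradient error is controlled by the value error, with the Lipschitz constant entering as the factor $L_h$.
\end{itemize}
Given gradient closeness, the Mahalanobis forms in $\psi_2,\psi_3$ are locally Lipschitz in the gradient, with constant $\|\boldsymbol{\Sigma}_g^{-1}\|\cdot O(L_h)$ since gradients are bounded by $L_h$. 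This yields the first term $O(L_h\epsilon_{\text{IRL}})$.

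\textbf{Step 2: statistics-estimation term.} Here I bound $\|\hat{\boldsymbol{\mu}}_g-\boldsymbol{\mu}_g^*\|$ and $\|\hat{\boldsymbol{\Sigma}}_g-\boldsymbol{\Sigma}_g^*\|$. Each has two sources:
\begin{itemize}
\item \emph{Sampling error.} The gradients have norm at most $L_h$, so a vector/matrix concentration argument over the $W$ window samples gives deviations of order $L_h/\sqrt{W}$ for the mean and $L_h^2/\sqrt{W}$ for the covariance.
\item \emph{Propagated bias.} The samples themselves are gradients of $\hat{h}$ rather than $h^*$, and this bias is again $O(L_h\epsilon_{\text{IRL}})$ from Step 1.
\end{itemize}
The bilinearity of the covariance couples the two: cross terms of the form (gradient bias) $\times$ (sample fluctuation) give the product term $L_h^2\epsilon_{\text{IRL}}/\sqrt{W}$. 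To pass these bounds through $\boldsymbol{\Sigma}_g^{-1}$ I will use the perturbation identity $A^{-1}-B^{-1}=A^{-1}(B-A)B^{-1}$, together with a lower bound on the smallest eigenvalue of $\boldsymbol{\Sigma}_g^*$ (absorbed into the $O(\cdot)$). This controls $\psi_2,\psi_3$ under changes in the statistics.

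\textbf{Main obstacle.} The hard part is Step 1's passage from a policy-level (TV/KL) error to a gradient-level error. Likelihood closeness says nothing about derivatives without extra regularity, so I expect to need an explicit smoothness assumption on $h^*$ and the network class, or to read $\epsilon_{\text{IRL}}$ directly as a $C^1$ approximation error. A second issue is that matching the paper's stated form, rather than also producing a standalone $L_h^2/\sqrt{W}$ sampling term, requires either centering the comparison at the empirical statistics of $h^*$ or absorbing the pure sampling term as lower order. I will make this choice explicit when stating the bound.
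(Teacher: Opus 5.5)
The paper states Theorem~\ref{thm:feature_error} without proof; it is only cited to justify the constant $C_s$ in the attacker-regret analysis. So there is no argument to compare against, and your plan has to stand on its own. The skeleton is sensible: the triangle inequality through $\boldsymbol{\psi}(\mathbf{x};\hat{h}_{\boldsymbol{\phi}_r},\bmu_g^*,\bSigma_g^*)$, coordinatewise Lipschitz bounds, and the resolvent identity with an eigenvalue floor. But the two steps you flag as obstacles are genuine gaps, and the fixes you suggest do not close them.

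\textbf{Step 1 (policy error to gradient error).} Theorem~\ref{thm:irl_sample_complexity} controls only a window-averaged KL. A pointwise-in-$\mathbf{x}$ bound therefore needs the sup-TV definition of $\epsilon_{\text{IRL}}$ from Lemma~\ref{lem:structural_success}.

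Even then, TV closeness with action probabilities floored at $p_{\min}$ pins down only $Q=\hat{h}_{\boldsymbol{\phi}_r}+\beta_t\sigma_{\boldsymbol{\phi}_u}$, up to a per-context shift and to accuracy $O(\tau\epsilon_{\text{IRL}}/p_{\min})$. The likelihood~\eqref{eq:irl_objective} depends on the reward and uncertainty heads only through $Q$, so any reallocation between them that preserves $Q$ leaves the policy unchanged. Without an identifiability assumption separating $h$ from $\beta_t\sigma$, nothing bounds $\hat{h}_{\boldsymbol{\phi}_r}-h^*$. Hence neither $\psi_4$ nor the gradients in $\psi_2,\psi_3$ are controlled.

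More decisively, the interpolation route cannot give a linear rate. Suppose $\|\hat{h}-h^*\|_\infty\le\epsilon$ and Hessians are bounded by $M$. Then Landau--Kolmogorov gives only $\|\nabla(\hat{h}-h^*)\|_\infty\lesssim\sqrt{M\epsilon}$, plus a boundary term. The example $\epsilon\sin(\sqrt{M/\epsilon}\,x_1)$ shows this is sharp. The governing constant is $M$, not $L_h$: Lipschitzness bounds each gradient but not how their difference shrinks with $\epsilon$. So $O(L_h\epsilon_{\text{IRL}})$ is reachable only by \emph{defining} $\epsilon_{\text{IRL}}$ as a $C^1$ error, which assumes the step rather than proving it.

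\textbf{Step 2 (the $1/\sqrt{W}$ term).} If $\bmu_g^*,\bSigma_g^*$ are population statistics, the stated inequality fails. Take $\hat{h}=h^*$ with exact $\sigma$, so $\epsilon_{\text{IRL}}=0$ and the right side vanishes. The window estimates still differ from the population by order $L_h/\sqrt{W}$ and $L_h^2/\sqrt{W}$, and so do $\psi_2$ and $\psi_3$. Your concentration step produces exactly this standalone term. It cannot be absorbed into a bound that vanishes with $\epsilon_{\text{IRL}}$, except by replacing $\epsilon_{\text{IRL}}$ with its rate $O((d_\Theta\log(K/\rho)/W)^{1/4})$, which changes the statement.

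Nor does the cross-term mechanism produce $L_h^2\epsilon_{\text{IRL}}/\sqrt{W}$. Write $\nabla\hat{h}(\mathbf{x}_i)=\mathbf{g}_i+\mathbf{e}(\mathbf{x}_i)$. Then $\frac{1}{W-1}\sum_i(\mathbf{g}_i-\bar{\mathbf{g}})(\mathbf{e}_i-\bar{\mathbf{e}})^\top$ concentrates around $\mathrm{Cov}(\mathbf{g},\mathbf{e})$. That quantity is generically nonzero and of order $L_h\sup\|\mathbf{e}\|$, a bias with no $1/\sqrt{W}$ gain; only its fluctuation carries the $1/\sqrt{W}$ factor. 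If you instead center at the same-window statistics of $\nabla h^*$, no $1/\sqrt{W}$ term appears at all.

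Also, $\hat{h}$ is trained on the same window, so the $\nabla\hat{h}(\mathbf{x}_i)$ are not independent. Your concentration step therefore needs sample splitting or a uniform bound over the network class.

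Granting a gradient-error assumption $\epsilon_\nabla$, what your steps actually prove is $O(\kappa\,\epsilon_\nabla+\kappa' L_h^2/\sqrt{W})$, with $\kappa,\kappa'$ depending on $\lambda_{\min}(\bSigma_g^*)^{-1}$. That is not the stated form.
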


\begin{theorem}[Cumulative IRL Error]
\label{thm:irl_cumulative_error}
Over $n$ attack rounds with periodic retraining every $\Delta_{\text{IRL}}$ steps, the cumulative IRL approximation error satisfies:
\begin{equation}
\sum_{i=1}^n \epsilon_{\text{IRL}}(t_i) \leq O\left(\sqrt{n} \cdot \sqrt{\frac{d_\Theta}{W}} + n \cdot \frac{\Delta_{\text{IRL}}}{\sqrt{t}}\right)
\end{equation}
\end{theorem}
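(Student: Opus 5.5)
The plan is to split the per-round error $\epsilon_{\text{IRL}}(t_i)$ at each attack round into a \emph{statistical} part, from fitting on a finite window, and a \emph{drift} part, from the victim's policy moving since the last retraining. I then sum each part separately over the $n$ attack rounds.

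Fix an attack round $t_i$ and let $s_i = \Delta_{\text{IRL}}\lfloor t_i/\Delta_{\text{IRL}}\rfloor$ be the most recent retraining time. By the triangle inequality for total variation,
\[
\epsilon_{\text{IRL}}(t_i) \le \sup_{\mathbf{x}}\textsc{tv}\bigl(\pi^*_{\theta_{t_i}},\pi^*_{\theta_{s_i}}\bigr) + \sup_{\mathbf{x}}\textsc{tv}\bigl(\pi^*_{\theta_{s_i}},\hat{\pi}_{\boldsymbol{\phi}}\bigr).
\]
The second (statistical) term is handled by Theorem~\ref{thm:irl_sample_complexity} applied at time $s_i$, together with Pinsker's inequality $\textsc{tv}\le\sqrt{\mathrm{KL}/2}$. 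The theorem bounds the average KL over the window. To pass to the supremum over $\mathbf{x}$ I would assume, as the paper implicitly does, that contexts in the window are representative, so the averaged bound transfers. This gives a per-round error $e_i$ of order $\sqrt{d_\Theta/W}$, up to the logarithmic and square-root factors that the statement absorbs.

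The key point is that I must not sum the $e_i$ naively, since that would give $n\sqrt{d_\Theta/W}$ rather than the claimed $\sqrt{n}\sqrt{d_\Theta/W}$. Instead I apply Cauchy--Schwarz, $\sum_i e_i \le \sqrt{n\sum_i e_i^2}$. Then I bound $\sum_i e_i^2$ by the number of distinct retrainings that serve attack rounds times the per-retraining KL bound. The argument is that each retraining uses a fresh window, and Assumption~\ref{ass:realizability} gives KL error of order $d_\Theta/W$ under a fast-rate (realizable log-loss) version of the maximum-likelihood bound, rather than the slow rate $\sqrt{d_\Theta/W}$. This yields $\sum_i e_i^2 = O(d_\Theta/W)$ up to logarithmic factors, hence the $\sqrt{n}\sqrt{d_\Theta/W}$ term.

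For the drift term, I invoke the drift bound for R-NeuralUCB (Lemma~\ref{lem:drift_rneuralucb}, which the paper cites). Its content is that the victim's policy changes by $O(1/\sqrt{t})$ per step, since its uncertainty widths shrink as $O(1/\sqrt{t})$. Summing over the at most $\Delta_{\text{IRL}}$ steps between $s_i$ and $t_i$ gives a drift of $O(\Delta_{\text{IRL}}/\sqrt{t_i})$. Summing over $i$ and bounding crudely by the smallest relevant $t$ gives $n\,\Delta_{\text{IRL}}/\sqrt{t}$, which matches the statement's second term, read with $t$ the first attack time, $t\ge W$.

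The main obstacle is the statistical sum. Theorem~\ref{thm:irl_sample_complexity} as stated only gives the slow rate, and the window data are not i.i.d. because the victim adapts. To get a genuine $\sqrt{n}$ rather than linear-in-$n$ dependence, I would first try the fast rate for realizable maximum likelihood, combined with a martingale (Freedman-type) concentration over the sliding window to handle the dependence. Failing that, I would state the result with $W$ growing so that $\sqrt{d_\Theta/W}\lesssim 1/\sqrt{n}$, which makes the slow-rate bound sufficient.
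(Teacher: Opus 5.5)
Your decomposition (the \textsc{tv} triangle inequality into drift since the last retraining plus the statistical error of that fit) and your drift estimate, $O(\Delta_{\text{IRL}}/\sqrt{t_i})$ per attacked round summed to $n\,\Delta_{\text{IRL}}/\sqrt{t}$, match the paper's argument. The gap is in the step that is supposed to produce the $\sqrt{n}$.

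Cauchy--Schwarz, $\sum_i e_i\le\sqrt{n\sum_i e_i^2}$, gives no improvement when the summands are of comparable size, and here they are. Every attacked round is served by a surrogate fitted on a window of the same fixed size $W$. So, even granting the fast rate, each $e_i$ is of order $\sqrt{d_\Theta/W}$, hence $\sum_{i=1}^n e_i^2\asymp n\,d_\Theta/W$, and you get back $n\sqrt{d_\Theta/W}$.

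Your claim $\sum_i e_i^2=O(d_\Theta/W)$ conflates attack rounds with retrainings. The $m_k$ attacked rounds served by retraining $k$ each contribute their own $e_i^2$. That gives $m_k\,d_\Theta/W$ per group and $n\,d_\Theta/W$ in total. Even counting each retraining only once gives (number of retrainings)$\cdot d_\Theta/W$, which is not $O(d_\Theta/W)$. The fast rate and Freedman-type concentration can at best justify the per-fit level $\sqrt{d_\Theta/W}$ under dependent data. They cannot make errors of different fits cancel, because \textsc{tv} errors are nonnegative.

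The fallback fails for the same reason. The ratio of $n\sqrt{d_\Theta/W}$ to $\sqrt{n}\sqrt{d_\Theta/W}$ is $\sqrt{n}$ for every $W$. Letting $W$ grow only makes the total $O(\sqrt{n})$, which is a different, weaker statement than the one claimed.

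There are two smaller issues. First, Theorem~\ref{thm:irl_sample_complexity} as written bounds the KL by $O(\sqrt{d_\Theta/W})$, so Pinsker gives $\textsc{tv}=O((d_\Theta/W)^{1/4})$. That is not a factor $O(\cdot)$ absorbs, so even the per-round $\sqrt{d_\Theta/W}$ needs the fast-rate assumption you introduce later. Second, the passage from a window-averaged KL to a supremum over $\mathbf{x}$, which you rightly flag, remains unproven.

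You will not find the missing step in the paper. Its proof makes the same leap: it asserts that an $O(\sqrt{d_\Theta/W})$ error within each of $\lceil n/\Delta_{\text{IRL}}\rceil$ epochs ``contributes $O(\sqrt{n}\sqrt{d_\Theta/W})$''. That does not follow. Counting per round gives $n\sqrt{d_\Theta/W}$, and counting per epoch gives $\lceil n/\Delta_{\text{IRL}}\rceil\sqrt{d_\Theta/W}$; both are linear in $n$.

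With a fixed window and no mechanism that makes later fits more accurate, these ingredients establish only
\[
\sum_{i=1}^n\epsilon_{\text{IRL}}(t_i)\le O\bigl(n\sqrt{d_\Theta/W}+n\,\Delta_{\text{IRL}}/\sqrt{t}\bigr),
\]
with $(d_\Theta/W)^{1/4}$ in place of $\sqrt{d_\Theta/W}$ under the slow rate. A genuine $\sqrt{n}$ would require per-round statistical errors that decay in $i$, for example a training set that grows over time, and the algorithm has no such feature. Your observation that naive summation cannot yield $\sqrt{n}$ was correct. The right conclusion is that the statement's first term is unsupported by these ingredients, not that Cauchy--Schwarz repairs it.
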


\begin{proof}
Partition attack rounds into retraining epochs. Within each epoch of length $\Delta_{\text{IRL}}$, the IRL error remains $O(\sqrt{d_\Theta/W})$ from Theorem~\ref{thm:irl_sample_complexity}. Across $\lceil n/\Delta_{\text{IRL}} \rceil$ epochs, this contributes $O(\sqrt{n}\cdot\sqrt{d_\Theta/W})$. Additionally, policy drift between retrainings contributes $O(\Delta_{\text{IRL}}/\sqrt{t})$ per round (see \S\ref{app:nonstationarity}), accumulating to $O(n\cdot\Delta_{\text{IRL}}/\sqrt{t})$.
\end{proof}

\subsection{Non-Stationarity Analysis}
\label{app:nonstationarity}

The victim's policy $\pi_{\theta_t}$ evolves over time, creating non-stationarity in the attack objective. We now provide a \emph{unified} drift analysis covering all five victim algorithms evaluated in our experiments. We first establish a general framework, then derive algorithm-specific bounds.

\begin{definition}[Policy Drift]
\label{def:policy_drift}
The policy drift over $\tau$ rounds is:
\begin{equation}
D_\tau = \max_{t \in [T-\tau]} \sup_{\mathbf{x} \in \mathcal{X}} \text{TV}(\pi_{\theta_t}(\cdot|\mathbf{x}),\; \pi_{\theta_{t+\tau}}(\cdot|\mathbf{x})),
\end{equation}
where $\text{TV}$ denotes total variation distance.
\end{definition}

\begin{lemma}[Unified Policy Drift Bound]
\label{lem:unified_drift}
For any victim algorithm whose action-selection scores $S_t(\mathbf{x},a)$ satisfy $|S_t(\mathbf{x},a) - S_{t+\tau}(\mathbf{x},a)| \leq \delta_{\text{score}}(t,\tau)$ uniformly over $(\mathbf{x},a)$, the policy drift satisfies:
\begin{equation}
\label{eq:drift_general}
D_\tau \;\leq\; c_{\text{alg}} \cdot \delta_{\text{score}}(t,\tau),
\end{equation}
where $c_{\text{alg}}$ depends on the policy parameterization:
\begin{itemize}[leftmargin=*]
    \item \textbf{Softmax policies} (NeuralUCB, NeuralLinUCB, R-NeuralUCB): $c_{\text{alg}} = K-1$, since the softmax mapping $\pi(a|\mathbf{x}) = \exp(S(\mathbf{x},a)) / \sum_{a'}\exp(S(\mathbf{x},a'))$ is $(K{-}1)$-Lipschitz in $\ell_\infty$ score perturbations w.r.t.\ TV distance.
    \item \textbf{Argmax policies} (RobustBandit): $c_{\text{alg}}$ depends on the score gap (see Lemma~\ref{lem:drift_robustbandit}).
    \item \textbf{Stochastic sampling policies} (NeuralTS): the score is itself random, requiring a different treatment (see Lemma~\ref{lem:drift_neuralts}).
\end{itemize}
\end{lemma}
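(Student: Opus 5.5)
The bound follows from the definition of $D_\tau$ by a pointwise estimate: fix $t$, $\tau$ and $\mathbf{x}$, bound $\textsc{tv}(\pi_{\theta_t}(\cdot|\mathbf{x}),\pi_{\theta_{t+\tau}}(\cdot|\mathbf{x}))$ in terms of $\sup_a |S_t(\mathbf{x},a)-S_{t+\tau}(\mathbf{x},a)|\le \delta_{\text{score}}(t,\tau)$, then take the supremum over $\mathbf{x}$ and the maximum over $t$. All the work is in the per-$\mathbf{x}$ Lipschitz estimate for each policy class. I will treat the softmax case carefully, since it gives the explicit constant $c_{\text{alg}}=K-1$. The argmax and sampling cases are deferred to Lemma~\ref{lem:drift_robustbandit} and Lemma~\ref{lem:drift_neuralts}, and here I only need to show the reduction has the claimed form.

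\textbf{Softmax case.} Write $S=S_t(\mathbf{x},\cdot)$ and $S'=S_{t+\tau}(\mathbf{x},\cdot)$. Interpolate with $S_u=S+u(S'-S)$ for $u\in[0,1]$, and let $\pi_u=\operatorname{softmax}(S_u)$. By the mean value theorem,
\[
\pi'-\pi=\int_0^1 J(\pi_u)(S'-S)\,du,
\qquad J(p)=\operatorname{diag}(p)-pp^\top .
\]
This is the same Jacobian used in the proof of Proposition~\ref{prop:entropy_predictive}. Next bound the $\ell_\infty\to\ell_1$ operator norm of $J(p)$. Component $a$ of $J(p)v$ is $p_a\bigl(v_a-\sum_b p_b v_b\bigr)$. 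Hence
\[
\|J(p)v\|_1\le \sum_a p_a\,|v_a-\bar v| \le 2\|v\|_\infty ,
\]
where $\bar v=\sum_b p_b v_b$. Together with $\textsc{tv}=\tfrac12\|\cdot\|_1$, this already gives $D_\tau\le\delta_{\text{score}}$, which is sharper than $(K-1)\delta_{\text{score}}$. So the stated constant $K-1$ holds a fortiori for $K\ge2$. I would present the sharp bound and remark that it implies the claimed one.

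If the temperature $\tau$ of Eq.~\eqref{eq:surrogate_policy} is present, the same computation carries a factor $1/\tau$. That factor must either be absorbed into $S$ or into $c_{\text{alg}}$, and I would state which explicitly.

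\textbf{Argmax case (RobustBandit).} The policy is discontinuous in $S$, so no uniform constant exists. The plan is to show that if the score gap $g_t(\mathbf{x})=S_t(\mathbf{x},a_1)-S_t(\mathbf{x},a_2)$ between the top two arms exceeds $2\delta_{\text{score}}$, then the argmax is unchanged and the TV distance is $0$. Otherwise TV is at most $1$. This yields a bound of the form $c_{\text{alg}}\le 1/(\text{gap})$ up to constants, which is the content deferred to Lemma~\ref{lem:drift_robustbandit}.

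\textbf{Thompson sampling case (NeuralTS).} Here one conditions on the randomness of the sampled scores. The main obstacle is that a deterministic $\ell_\infty$ score perturbation must be coupled with a random score. I would use a shared-noise coupling: the two sampled score vectors differ by at most $\delta_{\text{score}}$ plus the change in posterior scale. The TV distance is then bounded by the probability that the argmax flips, which in turn is bounded via anti-concentration of the Gaussian score gap. This is the hardest step, and it is why the statement only asserts that this case requires separate treatment, deferred to Lemma~\ref{lem:drift_neuralts}.
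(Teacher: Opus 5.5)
Your proposal is correct. In the softmax case it uses the same ingredient as the paper, the Jacobian $\operatorname{diag}(p)-pp^\top$, but executes it differently. The paper writes a first-order Taylor expansion without controlling the remainder, and sums absolute Jacobian entries over the arms. This gives $\textsc{tv}\le\frac{K-1}{2\tau_{\text{temp}}}\,\delta_{\text{score}}$, and after absorbing $\tau_{\text{temp}}=1$ the constant is already smaller than the stated $K-1$. Your integral form along $S_u=S+u(S'-S)$ makes the expansion exact. Your estimate $\|J(p)v\|_1=\sum_a p_a|v_a-\bar v|\le 2\|v\|_\infty$ is uniform in $p$, so it holds at every intermediate point and gives the dimension-free bound $\textsc{tv}\le\delta_{\text{score}}$. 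This buys both rigor and a $K$-independent constant. You can sharpen it further: by Cauchy--Schwarz, $\sum_a p_a|v_a-\bar v|\le(\operatorname{Var}_p v)^{1/2}\le\|v\|_\infty$, so $\textsc{tv}\le\delta_{\text{score}}/2$. That is at least as good as the paper's $(K-1)/2$ for every $K\ge 2$.

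Two small points. First, $D_\tau$ already contains a $\max_t$, so finishing with a maximum over $t$ gives $D_\tau\le c_{\text{alg}}\max_t\delta_{\text{score}}(t,\tau)$. The statement and its later uses (e.g.\ $D_\tau\le c\sqrt{\tau/t}$) really concern the per-$t$ quantity $\sup_{\mathbf{x}}\textsc{tv}(\pi_{\theta_t}(\cdot|\mathbf{x}),\pi_{\theta_{t+\tau}}(\cdot|\mathbf{x}))$. Your pointwise estimate gives exactly that, so stop there. Second, the paper proves only the softmax case here. Your gap argument for a true argmax, $\textsc{tv}\le 2\delta_{\text{score}}/\text{gap}$, is sound, but it needs a gap bounded below uniformly in $\mathbf{x}$. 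Note also that the cited Lemma~\ref{lem:drift_robustbandit} actually treats RobustBandit as an exponential-weights policy, so it goes through the softmax bound rather than a score gap.
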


\begin{proof}
For softmax policies, the TV distance between $\pi_t$ and $\pi_{t+\tau}$ satisfies:
\begin{equation}
\text{TV}(\pi_t(\cdot|\mathbf{x}), \pi_{t+\tau}(\cdot|\mathbf{x})) = \frac{1}{2}\sum_{a=1}^K |\pi_t(a|\mathbf{x}) - \pi_{t+\tau}(a|\mathbf{x})|.
\end{equation}
For a softmax policy $\pi(a|\mathbf{x}) \propto \exp(S(\mathbf{x},a)/\tau_{\text{temp}})$, the sensitivity of each action probability to score perturbations is:
\begin{equation}
\left|\frac{\partial \pi(a|\mathbf{x})}{\partial S(\mathbf{x},a')}\right| = \frac{1}{\tau_{\text{temp}}} \cdot \begin{cases} \pi(a|\mathbf{x})(1-\pi(a|\mathbf{x})) & \text{if } a=a', \\ \pi(a|\mathbf{x})\pi(a'|\mathbf{x}) & \text{if } a \neq a'. \end{cases}
\end{equation}
By a first-order Taylor expansion and summing over $K$ arms:
\begin{equation}
\text{TV}(\pi_t, \pi_{t+\tau}) \leq \frac{K-1}{2\tau_{\text{temp}}} \cdot \max_a |S_t(\mathbf{x},a) - S_{t+\tau}(\mathbf{x},a)| = \frac{K-1}{2\tau_{\text{temp}}} \cdot \delta_{\text{score}}(t,\tau).
\end{equation}
Setting $c_{\text{alg}} = (K{-}1)/(2\tau_{\text{temp}})$ (with $\tau_{\text{temp}}=1$ absorbed into the score definition for UCB-based algorithms) yields~\eqref{eq:drift_general}.
\end{proof}

\begin{lemma}[R-NeuralUCB Policy Drift]
\label{lem:policy_drift_bound}
For R-NeuralUCB with exploration bonus $\beta^{\text{vic}}_t \sigma^{\text{vic}}_t(\mathbf{x},a)$ where $\sigma^{\text{vic}}_t(\mathbf{x},a) = O(1/\sqrt{t})$, the policy drift satisfies $D_\tau \leq c_\beta \sqrt{\frac{\tau}{t}}$ for constant $c_\beta = O(\beta^{\text{vic}} \cdot K \cdot L_h)$ depending on the victim's UCB exploration parameter.
\end{lemma}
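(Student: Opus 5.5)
The plan is to go through Lemma~\ref{lem:unified_drift}: first bound the uniform score drift $\delta_{\text{score}}(t,\tau)$ of R-NeuralUCB, then multiply by the softmax constant $c_{\text{alg}}=(K-1)/(2\tau_{\text{temp}})$. The R-NeuralUCB score is $S_t(\mathbf{x},a)=\hat h_{\theta_t}(\mathbf{x},a)+\beta^{\text{vic}}_t\sigma^{\text{vic}}_t(\mathbf{x},a)$. So for any $(\mathbf{x},a)$ I would split
\begin{equation*}
|S_t-S_{t+\tau}| \le |\hat h_{\theta_t}-\hat h_{\theta_{t+\tau}}| + \beta^{\text{vic}}\,|\sigma^{\text{vic}}_t-\sigma^{\text{vic}}_{t+\tau}|,
\end{equation*}
taking $\beta^{\text{vic}}_t$ bounded by a constant $\beta^{\text{vic}}$ over the window. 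Logarithmic growth of $\beta^{\text{vic}}_t$ would be absorbed into $c_\beta$.

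\textbf{Exploration term.} Using $\sigma^{\text{vic}}_t=O(1/\sqrt t)$, I would write
\begin{equation*}
\frac{1}{\sqrt t}-\frac{1}{\sqrt{t+\tau}}=\frac{\tau}{\sqrt t\sqrt{t+\tau}\,(\sqrt t+\sqrt{t+\tau})}.
\end{equation*}
If $\tau\le t$, this is at most $\tau/t^{3/2}\le\sqrt{\tau/t}\cdot\sqrt{\tau/t}\,t^{-1/2}\le\sqrt{\tau/t}$. If $\tau>t$, the difference is at most $1/\sqrt t\le\sqrt{\tau/t}$. In both cases the term is $O(\beta^{\text{vic}}\sqrt{\tau/t})$.

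\textbf{Mean-estimate term.} Here I would argue that the network prediction changes by at most $O(L_h)$ per unit of effective information added. After $t$ rounds the estimate $\hat h_{\theta_t}$ is a (regularized) least-squares fit whose confidence width is $\beta^{\text{vic}}\sigma^{\text{vic}}_t$. With high probability both $\hat h_{\theta_t}$ and $\hat h_{\theta_{t+\tau}}$ lie within their confidence widths of $h$. By the triangle inequality,
\begin{equation*}
|\hat h_{\theta_t}-\hat h_{\theta_{t+\tau}}|\le \beta^{\text{vic}}\bigl(\sigma^{\text{vic}}_t+\sigma^{\text{vic}}_{t+\tau}\bigr)=O(\beta^{\text{vic}}/\sqrt t).
\end{equation*}
This is too crude when $\tau\ll t$. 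For that regime I would instead use the online update: each new sample moves the ridge estimate by $O(L_h/t)$ in prediction, via a Sherman--Morrison argument on the design matrix, or via an NTK linearization for the network. Summing over $\tau$ steps gives $O(L_h\tau/t)\le O(L_h\sqrt{\tau/t})$ for $\tau\le t$. Taking the minimum of the two bounds gives $O(\beta^{\text{vic}}L_h\sqrt{\tau/t})$ uniformly. The robust trust-weighting in R-NeuralUCB only down-weights samples with weights in $[0,1]$, so it can be absorbed into the same constants.

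\textbf{Combination.} Adding the two terms gives $\delta_{\text{score}}(t,\tau)=O(\beta^{\text{vic}}L_h\sqrt{\tau/t})$. Lemma~\ref{lem:unified_drift} then yields $D_\tau\le (K-1)\,\delta_{\text{score}}=c_\beta\sqrt{\tau/t}$ with $c_\beta=O(\beta^{\text{vic}}KL_h)$.

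The main obstacle is the mean-estimate step: making the per-sample prediction change $O(1/t)$ rigorous for a neural network trained by SGD. I would try the NTK/overparameterized regime first, where the trained network is close to its linearization and the ridge argument transfers. If that fails, I would fall back on the confidence-width triangle argument, which loses the $\sqrt{\tau}$ refinement only in the regime $\tau\ll t$.
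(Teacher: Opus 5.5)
Your proposal is correct and shares the paper's skeleton: an $O(K)$ softmax sensitivity factor times a score drift split into a reward-estimate term and a confidence-width term, each controlled by the $O(1/\sqrt{t})$ rates. The two terms are handled differently, though, and the comparison favors your fallback. The paper bounds both terms by $O\bigl(1/\sqrt{t}-1/\sqrt{t+\tau}\bigr)=O(\tau/t^{3/2})$ via a Taylor expansion. That does not follow from $|\hat h_t-h^*|=O(1/\sqrt{t})$ or $\sigma^{\text{vic}}_t=O(1/\sqrt{t})$, because a rate says nothing about increments: an error of alternating sign and size $1/\sqrt{t}$ satisfies the rate but moves by about $2/\sqrt{t}$ per round. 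Your exploration-term algebra repeats the same inference. Your triangle-inequality bound for the mean term is the sound argument, and it already suffices: since $\tau\ge 1$ we have $1/\sqrt{t}\le\sqrt{\tau/t}$, so
\[
|S_t-S_{t+\tau}|\le|\hat h_{\theta_t}-h|+|h-\hat h_{\theta_{t+\tau}}|+\beta^{\text{vic}}\bigl(\sigma^{\text{vic}}_t+\sigma^{\text{vic}}_{t+\tau}\bigr)=O\bigl((1+\beta^{\text{vic}})/\sqrt{t}\bigr)\le O\bigl((1+\beta^{\text{vic}})\sqrt{\tau/t}\bigr),
\]
and Lemma~\ref{lem:unified_drift} finishes. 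The ``main obstacle'' you name is therefore not one. The crude bound is too weak only against the paper's intermediate $\tau/t^{3/2}$, which the lemma does not claim, so the Sherman--Morrison/NTK step can be dropped. That is just as well: an $O(L_h/t)$ per-sample prediction change would need $\lambda_{\min}$ of the design matrix to grow linearly in $t$, and the residual driving the update has nothing to do with $L_h$. Going through Lemma~\ref{lem:unified_drift} also avoids the paper's ad hoc sensitivity bound, whose factor $\exp(K\|Q\|_\infty)$ never appears in $c_\beta$. Finally, the constant this argument honestly yields is $O((1+\beta^{\text{vic}})K)$, as in the paper's restated Lemma~\ref{lem:drift_rneuralucb}. $L_h$ appears in neither proof, so $c_\beta=O(\beta^{\text{vic}}KL_h)$ is obtained only by assuming $\beta^{\text{vic}},L_h\gtrsim 1$ and absorbing.
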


\begin{proof}
The R-NeuralUCB policy is $\pi_t(a|\mathbf{x}) \propto \exp(Q_t(\mathbf{x},a))$ where $Q_t(\mathbf{x},a) = h_t(\mathbf{x},a) + \beta^{\text{vic}}_t \sigma^{\text{vic}}_t(\mathbf{x},a)$. Between rounds $t$ and $t+\tau$:
\begin{equation}
|\pi_t(a|\mathbf{x}) - \pi_{t+\tau}(a|\mathbf{x})| \leq \frac{K \cdot |Q_t(\mathbf{x},a) - Q_{t+\tau}(\mathbf{x},a)|}{\exp(-K \cdot \|Q\|_\infty)}
\end{equation}

Since $h_t$ converges to $h^*$ at rate $O(1/\sqrt{t})$ (from R-NeuralUCB regret bound) and $\sigma^{\text{vic}}_t = O(1/\sqrt{t})$:
\begin{equation}
|Q_t(\mathbf{x},a) - Q_{t+\tau}(\mathbf{x},a)| \leq O\left(\frac{1}{\sqrt{t}} - \frac{1}{\sqrt{t+\tau}}\right) + \beta^{\text{vic}} \cdot O\left(\frac{1}{\sqrt{t}} - \frac{1}{\sqrt{t+\tau}}\right)
\end{equation}

Using Taylor expansion $\frac{1}{\sqrt{t}} - \frac{1}{\sqrt{t+\tau}} = O(\tau/t^{3/2})$ and summing over arms yields TV distance $O(\sqrt{\tau/t})$.
\end{proof}

\begin{lemma}[R-NeuralUCB Policy Drift]
\label{lem:drift_rneuralucb}
For R-NeuralUCB with score $S_t(\mathbf{x},a) = \hat{h}_t(\mathbf{x},a) + \beta^{\text{vic}}_t \sigma^{\text{vic}}_t(\mathbf{x},a)$, where $\hat{h}_t$ converges to $h^*$ at rate $O(1/\sqrt{t})$ and $\sigma^{\text{vic}}_t(\mathbf{x},a) = O(1/\sqrt{t})$, the policy drift satisfies:
\begin{equation}
D_\tau \;\leq\; c^{\textsc{rn}}_\beta \sqrt{\frac{\tau}{t}}, \qquad c^{\textsc{rn}}_\beta = O\!\left((1 + \beta^{\text{vic}}) \cdot K\right).
\end{equation}
\end{lemma}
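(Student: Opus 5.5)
The plan is to reduce the drift statement to Lemma~\ref{lem:unified_drift}. R-NeuralUCB's surrogate policy is a softmax over the score $S_t(\mathbf{x},a)=\hat{h}_t(\mathbf{x},a)+\beta^{\text{vic}}_t\sigma^{\text{vic}}_t(\mathbf{x},a)$. That lemma therefore gives $D_\tau\le c_{\text{alg}}\,\delta_{\text{score}}(t,\tau)$ with $c_{\text{alg}}=(K-1)/(2\tau_{\text{temp}})=O(K)$. It then suffices to show
\[
\sup_{\mathbf{x},a}\bigl|S_t(\mathbf{x},a)-S_{t+\tau}(\mathbf{x},a)\bigr| \;\le\; O\!\left((1+\beta^{\text{vic}})\sqrt{\tau/t}\right),
\]
uniformly in $t\in[T-\tau]$. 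The constant $c^{\textsc{rn}}_\beta=O((1+\beta^{\text{vic}})K)$ then follows by multiplying the two factors.

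\textbf{Reward part.} Split the score difference by the triangle inequality into a reward part and a bonus part. For the reward part, pass through $h^*$:
\[
|\hat{h}_t-\hat{h}_{t+\tau}|\le|\hat{h}_t-h^*|+|h^*-\hat{h}_{t+\tau}|\le O(1/\sqrt{t})+O(1/\sqrt{t+\tau}).
\]
This crude bound does not vanish as $\tau\to 0$, so it cannot be used directly. Instead, I would combine it with the trivial observation that for $\tau\ge t$ the target $\sqrt{\tau/t}\ge 1$ already dominates any TV distance, which is at most $1$. Hence only $\tau<t$ needs care.

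\textbf{Case $\tau<t$.} I would bound the increment directly, using that the incremental update from one round contributes $O(1/t)$ per step to $\hat{h}$ (a gradient step on one new sample, with step sizes matching the $1/\sqrt{t}$ convergence rate). Summing over $\tau$ steps gives $O(\tau/t)\le O(\sqrt{\tau/t})$, since $\tau/t<1$.

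\textbf{Bonus part.} This is analogous and cleaner. If $\beta^{\text{vic}}_t\le\beta^{\text{vic}}$ and $\sigma^{\text{vic}}_s=c/\sqrt{s}$ up to constants, then by the mean value theorem
\[
\bigl|\beta^{\text{vic}}_t\sigma^{\text{vic}}_t-\beta^{\text{vic}}_{t+\tau}\sigma^{\text{vic}}_{t+\tau}\bigr|\le\beta^{\text{vic}}\left(\tfrac{1}{\sqrt{t}}-\tfrac{1}{\sqrt{t+\tau}}\right)=O\!\left(\beta^{\text{vic}}\min\{\tau/t^{3/2},\,1/\sqrt{t}\}\right).
\]
This is in turn at most $O(\beta^{\text{vic}}\sqrt{\tau/t})$, using $\min\{x,1\}\le\sqrt{x}$ for $x=\tau/t$ and $t\ge1$.

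\textbf{Main obstacle.} The hardest step is making the reward part rigorous. A convergence \emph{rate} $|\hat{h}_t-h^*|=O(1/\sqrt{t})$ alone does not control increments: two sequences each within $1/\sqrt{t}$ of $h^*$ can differ by $2/\sqrt{t}$ at every step. I would first try to take the per-step stability bound $\|\hat{h}_{s+1}-\hat{h}_s\|_\infty=O(1/s)$ as part of the stated assumption on $\hat{h}_t$'s convergence. Failing that, I would use the interpolation: for $\tau<t$ the rate bound gives $O(1/\sqrt{t})$, which must be shown to be at most $O(\sqrt{\tau/t})$ for $\tau\ge1$. This holds trivially since $\sqrt{\tau/t}\ge1/\sqrt{t}$. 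This second route actually closes the argument with no stability assumption at all, so I would adopt it.
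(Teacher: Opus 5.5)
Your final argument is correct, but it closes the key step differently from the paper. The paper uses the same reduction to Lemma~\ref{lem:unified_drift} and the same reward/bonus split. It then asserts the finer increment bounds $|\hat{h}_t-\hat{h}_{t+\tau}|=O(\tau/t^{3/2})$ and $|\sigma^{\text{vic}}_t-\sigma^{\text{vic}}_{t+\tau}|=O(\tau/t^{3/2})$, by applying the mean value theorem to $s\mapsto 1/\sqrt{s}$, and only afterwards relaxes $\tau/t^{3/2}$ to $\sqrt{\tau/t}$. As your \emph{main obstacle} paragraph anticipates, that step does not follow from the hypotheses: $O(1/\sqrt{s})$ rates control distances to the limit, not increments. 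The intermediate inequality $\tau/t^{3/2}\le\sqrt{\tau/t}\cdot(1/\sqrt{t})$ used there also needs $\tau\le t$; you handle that case split explicitly via $\mathrm{TV}\le 1$.

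Your route uses the triangle inequality through $h^*$ for the reward part and the analogous crude bound $\beta^{\text{vic}}(\sigma^{\text{vic}}_t+\sigma^{\text{vic}}_{t+\tau})$ for the bonus, then $1/\sqrt{t}\le\sqrt{\tau/t}$ for $\tau\ge 1$. It needs only the stated assumptions. It actually yields the $\tau$-free bound $O((1+\beta^{\text{vic}})K/\sqrt{t})$, which shows the $\sqrt{\tau/t}$ factor is cosmetic under these hypotheses. The paper's approach, if justified, would give a bound that genuinely scales with the lag $\tau$. That requires an extra increment-stability assumption: your per-step $O(1/s)$ bound would give $O(\tau/t)$, and the paper's $\tau/t^{3/2}$ would need per-step changes of order $s^{-3/2}$.

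Two clean-ups. First, drop the stability detour. Second, in the bonus part drop the mean-value inequality: it requires $\sigma^{\text{vic}}_s$ to be exactly proportional to $1/\sqrt{s}$ and $\beta^{\text{vic}}_t$ to be constant, and \emph{up to constants} is not enough. The crude bound already suffices.

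Like the paper, you take R-NeuralUCB to be a softmax policy; for a literal argmax rule, Lemma~\ref{lem:unified_drift} would not apply. You also read $D_\tau$ per round $t$, although Definition~\ref{def:policy_drift} takes a maximum over $t$.
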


\begin{proof}
The score difference decomposes as:
\begin{align}
|S_t(\mathbf{x},a) - S_{t+\tau}(\mathbf{x},a)| &\leq \underbrace{|\hat{h}_t(\mathbf{x},a) - \hat{h}_{t+\tau}(\mathbf{x},a)|}_{\text{reward estimate drift}} + \beta^{\text{vic}} \underbrace{|\sigma^{\text{vic}}_t(\mathbf{x},a) - \sigma^{\text{vic}}_{t+\tau}(\mathbf{x},a)|}_{\text{confidence width drift}}. \label{eq:rnucb_score_decomp}
\end{align}
\textbf{Reward estimate drift.} By the R-NeuralUCB regret bound, the reward estimate satisfies $|\hat{h}_t(\mathbf{x},a) - h^*(\mathbf{x},a)| = O(1/\sqrt{t})$ uniformly. Thus:
\begin{equation}
|\hat{h}_t - \hat{h}_{t+\tau}| \leq |\hat{h}_t - h^*| + |h^* - \hat{h}_{t+\tau}| = O(1/\sqrt{t}) + O(1/\sqrt{t+\tau}) = O(1/\sqrt{t}),
\end{equation}
where the last step uses $t+\tau \geq t$. More precisely, by the mean value theorem applied to $g(s) = 1/\sqrt{s}$:
\begin{equation}
\label{eq:sqrt_taylor}
\frac{1}{\sqrt{t}} - \frac{1}{\sqrt{t+\tau}} = \frac{\tau}{2} \cdot \xi^{-3/2} \leq \frac{\tau}{2t^{3/2}}
\end{equation}
for some $\xi \in [t, t+\tau]$, yielding $|\hat{h}_t - \hat{h}_{t+\tau}| = O(\tau/t^{3/2})$.

\textbf{Confidence width drift.} Similarly, $\sigma^{\text{vic}}_t = O(1/\sqrt{t})$ gives:
\begin{equation}
|\sigma^{\text{vic}}_t - \sigma^{\text{vic}}_{t+\tau}| = O(\tau / t^{3/2}).
\end{equation}

\textbf{Combining.} Substituting into~\eqref{eq:rnucb_score_decomp}: $\delta_{\text{score}}(t,\tau) = O((1+\beta^{\text{vic}}) \cdot \tau / t^{3/2})$. By Lemma~\ref{lem:unified_drift}:
\begin{equation}
D_\tau \leq (K-1) \cdot O\!\left((1+\beta^{\text{vic}}) \cdot \frac{\tau}{t^{3/2}}\right) = O\!\left((1+\beta^{\text{vic}}) K \cdot \frac{\tau}{t^{3/2}}\right).
\end{equation}
Since $\tau/t^{3/2} \leq \sqrt{\tau/t} \cdot (1/\sqrt{t}) \leq \sqrt{\tau/t}$ for $t \geq 1$, we obtain $D_\tau \leq c^{\textsc{rn}}_\beta \sqrt{\tau/t}$ with $c^{\textsc{rn}}_\beta = O((1+\beta^{\text{vic}})K)$.
\end{proof}

\begin{lemma}[NeuralUCB Policy Drift]
\label{lem:drift_neuralucb}
For NeuralUCB with score $S_t(\mathbf{x},a) = \hat{h}_t(\mathbf{x},a) + \nu \sqrt{\tilde{\mathbf{g}}_t(\mathbf{x},a)^\top \mathbf{Z}_t^{-1} \tilde{\mathbf{g}}_t(\mathbf{x},a)}$, where $\tilde{\mathbf{g}}_t = \nabla_\theta f_\theta(\mathbf{x},a) / \sqrt{m}$ is the normalized gradient and $\mathbf{Z}_t = \mathbf{I} + \sum_{s=1}^{t} \tilde{\mathbf{g}}_s \tilde{\mathbf{g}}_s^\top$ is the Gram matrix, the policy drift satisfies:
\begin{equation}
D_\tau \;\leq\; c^{\textsc{nu}}_\beta \sqrt{\frac{\tau}{t}}, \qquad c^{\textsc{nu}}_\beta = O(\nu \cdot K).
\end{equation}
\end{lemma}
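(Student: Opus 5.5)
The plan mirrors the proof of Lemma~\ref{lem:drift_rneuralucb}. We reduce to Lemma~\ref{lem:unified_drift} by bounding the uniform score drift $\delta_{\text{score}}(t,\tau)=\sup_{\mathbf{x},a}|S_t(\mathbf{x},a)-S_{t+\tau}(\mathbf{x},a)|$ and then multiplying by $c_{\text{alg}}=K-1$ for the softmax parameterization. As in \eqref{eq:rnucb_score_decomp}, we split the score difference into a reward-estimate term $|\hat{h}_t-\hat{h}_{t+\tau}|$ and a confidence-width term
\[
\nu\left|\sqrt{\tilde{\mathbf{g}}_t^\top \mathbf{Z}_t^{-1}\tilde{\mathbf{g}}_t}-\sqrt{\tilde{\mathbf{g}}_{t+\tau}^\top \mathbf{Z}_{t+\tau}^{-1}\tilde{\mathbf{g}}_{t+\tau}}\right|.
\]

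\textbf{Reward-estimate term.} We invoke the NeuralUCB confidence guarantee of \cite{zhou2020neural}. In the NTK regime, $|\hat{h}_t(\mathbf{x},a)-h^*(\mathbf{x},a)|$ is bounded by the confidence width, which we take to be $O(1/\sqrt{t})$ uniformly under a regularity assumption on the gradient distribution. The mean-value bound \eqref{eq:sqrt_taylor} then gives an $O(\tau/t^{3/2})$ contribution. The cruder triangle-inequality bound $O(1/\sqrt t)$ would suffice as long as $\tau\ge1$, since $1/\sqrt t\le\sqrt{\tau/t}$.

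\textbf{Confidence-width term (main obstacle).} This term needs care because it involves both the Gram matrix and the gradients. We write $\mathbf{Z}_{t+\tau}=\mathbf{Z}_t+\sum_{s=t+1}^{t+\tau}\tilde{\mathbf{g}}_s\tilde{\mathbf{g}}_s^\top$. The matrix part follows from the Woodbury identity or the ordering $\mathbf{Z}_{t+\tau}\succeq \mathbf{Z}_t$, which give
\[
0\le \tilde{\mathbf{g}}^\top(\mathbf{Z}_t^{-1}-\mathbf{Z}_{t+\tau}^{-1})\tilde{\mathbf{g}}\le \tilde{\mathbf{g}}^\top\mathbf{Z}_t^{-1}\Big(\textstyle\sum_{s}\tilde{\mathbf{g}}_s\tilde{\mathbf{g}}_s^\top\Big)\mathbf{Z}_t^{-1}\tilde{\mathbf{g}}.
\]
We then assume $\|\tilde{\mathbf{g}}\|\le C$, which holds in the NTK regime, and $\lambda_{\min}(\mathbf{Z}_t)\gtrsim t$. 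This eigenvalue condition is the key extra assumption; it follows from the NTK Gram matrix being positive definite together with the contexts being i.i.d. Under these assumptions the bound is $O(\tau/t^2)$. Taking square roots via $|\sqrt u-\sqrt v|\le\sqrt{|u-v|}$ gives $O(\sqrt{\tau}/t)\le O(\sqrt{\tau/t})$.

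For the gradient part, NTK lazy training keeps $\|\theta_t-\theta_0\|=O(\sqrt{t/m})$, so $\|\tilde{\mathbf{g}}_t-\tilde{\mathbf{g}}_{t+\tau}\|$ is negligible for large width $m$. Combined with $\|\mathbf{Z}_t^{-1/2}\|=O(1/\sqrt t)$, this part is absorbed.

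\textbf{Combining.} We obtain $\delta_{\text{score}}(t,\tau)=O((1+\nu)\sqrt{\tau/t})$. Applying Lemma~\ref{lem:unified_drift} then gives $D_\tau\le c^{\textsc{nu}}_\beta\sqrt{\tau/t}$, where the constant $c^{\textsc{nu}}_\beta=O(\nu K)$ absorbs the reward term, taking $\nu\ge1$ as is standard.
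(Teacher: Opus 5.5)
\textbf{Gap: the spectral assumption on $\mathbf{Z}_t$.} Your skeleton matches the paper's: split the score into a reward part and a width part, then apply Lemma~\ref{lem:unified_drift} with $c_{\text{alg}}=K-1$. Your PSD sandwich for $\mathbf{Z}_t^{-1}-\mathbf{Z}_{t+\tau}^{-1}$ is also correct. The argument breaks at the next step, the assumption $\lambda_{\min}(\mathbf{Z}_t)\gtrsim t$ (equivalently $\|\mathbf{Z}_t^{-1/2}\|=O(1/\sqrt t)$), which fails in exactly the regime you invoke.

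The matrix $\mathbf{Z}_t-\mathbf{I}=\sum_{s\le t}\tilde{\mathbf{g}}_s\tilde{\mathbf{g}}_s^\top$ lives in $\mathbb{R}^{p\times p}$, where $p$ is the number of network parameters, and it has rank at most $t$. The NTK analysis you rely on for $\|\tilde{\mathbf{g}}\|\le C$ and for lazy training needs width $m$ polynomially large in $T$. So $p\gg t$ at every round, hence $\lambda_{\min}(\mathbf{Z}_t)=1$ and $\|\mathbf{Z}_t^{-1/2}\|=1$.

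Positive definiteness of the NTK Gram matrix does not rescue this. It concerns the $t\times t$ sample-space matrix $\mathbf{G}^\top\mathbf{G}$, with $\mathbf{G}=[\tilde{\mathbf{g}}_1,\dots,\tilde{\mathbf{g}}_t]$, whose eigenvalues are only the nonzero eigenvalues of $\mathbf{G}\mathbf{G}^\top$. It says nothing about the $p-t$ directions orthogonal to the observed gradients. Even for a small network you would need a covariance lower bound on the adaptively selected gradients, and i.i.d. contexts do not make those gradients i.i.d. As written, both your $O(\tau/t^2)$ matrix bound and your absorption of the gradient-change term are therefore unsupported.

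\textbf{How to repair it.} The argument only needs a directional statement: the query gradient has small $\mathbf{Z}_t^{-1}$-norm, i.e.\ $\sigma^{\textsc{nu}}_t(\mathbf{x},a)=\sqrt{\tilde{\mathbf{g}}^\top\mathbf{Z}_t^{-1}\tilde{\mathbf{g}}}=O(1/\sqrt t)$ uniformly in $(\mathbf{x},a)$. That is exactly the hypothesis the paper's proof takes from ``standard NTK analysis'', together with the monotonicity $\mathbf{Z}_{t+\tau}\succeq\mathbf{Z}_t$. Note that it is an assumption in both routes: Zhou et al.\ only control $\sum_t\sigma_t^2$ along the played sequence, not a uniform pointwise rate.

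Granting it, you need neither the Woodbury identity, nor eigenvalue bounds, nor lazy training. For $\tau\ge1$,
\[
|\sigma^{\textsc{nu}}_t-\sigma^{\textsc{nu}}_{t+\tau}|\le\sigma^{\textsc{nu}}_t+\sigma^{\textsc{nu}}_{t+\tau}=O(1/\sqrt t)\le O(\sqrt{\tau/t}).
\]
This is the same crude-bound idea you already use for the reward term.

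\textbf{The reward term.} Drop the mean-value claim of an $O(\tau/t^{3/2})$ drift. The bound $|\hat h_s-h^*|=O(1/\sqrt s)$ controls each estimate's distance to $h^*$. The mean value theorem applied to $1/\sqrt s$ only compares the two envelopes, not $\hat h_t$ with $\hat h_{t+\tau}$. The triangle-inequality bound $O(1/\sqrt t)$ is the valid one. It, rather than the $O(\tau/t^{3/2})$ score drift the paper asserts via Lemma~\ref{lem:drift_rneuralucb}, is what legitimately gives the $\sqrt{\tau/t}$ rate.
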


\begin{proof}
The analysis parallels Lemma~\ref{lem:drift_rneuralucb}, but without the trust-weight mechanism. The key difference is in the confidence width: $\sigma^{\textsc{nu}}_t(\mathbf{x},a) = \sqrt{\tilde{\mathbf{g}}_t^\top \mathbf{Z}_t^{-1} \tilde{\mathbf{g}}_t}$. Since $\mathbf{Z}_t$ accumulates gradients monotonically ($\mathbf{Z}_{t+\tau} \succeq \mathbf{Z}_t$), we have $\sigma^{\textsc{nu}}_{t+\tau} \leq \sigma^{\textsc{nu}}_t$. Standard NTK-based analysis shows $\sigma^{\textsc{nu}}_t = O(1/\sqrt{t})$, yielding the same $O(\tau/t^{3/2})$ score drift as R-NeuralUCB. Without the trust-weight filtering, NeuralUCB does not down-weight suspicious observations, so the reward estimate $\hat{h}_t$ converges at the same $O(1/\sqrt{t})$ rate but with potentially larger constants under adversarial corruption. The drift bound follows from Lemma~\ref{lem:unified_drift} with $c^{\textsc{nu}}_\beta = O(\nu K)$.
\end{proof}

\begin{lemma}[NeuralLinUCB Policy Drift]
\label{lem:drift_neurallinucb}
For NeuralLinUCB, which restricts exploration to the last network layer with score $S_t(\mathbf{x},a) = \mathbf{w}_t^\top \phi_t(\mathbf{x},a) + \alpha_t \sqrt{\phi_t(\mathbf{x},a)^\top \mathbf{A}_t^{-1} \phi_t(\mathbf{x},a)}$, where $\phi_t(\mathbf{x},a) = f_{\theta_t}(\mathbf{x},a)$ is the last-layer representation and $\mathbf{A}_t = \mathbf{I} + \sum_{s=1}^{t}\phi_s\phi_s^\top$, the policy drift satisfies:
\begin{equation}
D_\tau \;\leq\; c^{\textsc{nl}}_\beta \cdot \frac{\tau}{t}, \qquad c^{\textsc{nl}}_\beta = O(\alpha \cdot K \cdot L_\phi),
\end{equation}
where $L_\phi$ is the Lipschitz constant of the feature map $\phi_t$ w.r.t.\ parameter updates.
\end{lemma}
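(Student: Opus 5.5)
We follow the template of Lemma~\ref{lem:drift_rneuralucb}: bound the uniform score drift $\delta_{\text{score}}(t,\tau)=\sup_{\mathbf{x},a}|S_t(\mathbf{x},a)-S_{t+\tau}(\mathbf{x},a)|$, then convert it to a TV bound via Lemma~\ref{lem:unified_drift} with $c_{\text{alg}}=K-1$, since NeuralLinUCB is treated as a softmax policy there. The score splits into an exploitation part $\mathbf{w}_t^\top\phi_t$ and a confidence part $\alpha_t\|\phi_t\|_{\mathbf{A}_t^{-1}}$, and the triangle inequality bounds each change separately. The first difference is
\begin{equation*}
\mathbf{w}_t^\top\phi_t-\mathbf{w}_{t+\tau}^\top\phi_{t+\tau}=(\mathbf{w}_t-\mathbf{w}_{t+\tau})^\top\phi_t+\mathbf{w}_{t+\tau}^\top(\phi_t-\phi_{t+\tau}).
\end{equation*}

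\textbf{Step 1 (representation drift).} We use the defining property of $L_\phi$: $\|\phi_t(\mathbf{x},a)-\phi_{t+\tau}(\mathbf{x},a)\|\le L_\phi\|\theta_t-\theta_{t+\tau}\|$. Assuming the backbone is trained by SGD with a decaying step, or with a number of updates per round whose aggregate displacement per round is $O(1/t)$ (the usual NTK lazy-training regime), we get $\|\theta_t-\theta_{t+\tau}\|=O(\tau/t)$. Hence $\|\phi_t-\phi_{t+\tau}\|=O(L_\phi\tau/t)$. With $\|\mathbf{w}_{t+\tau}\|$ bounded, the second piece of the display is $O(L_\phi\tau/t)$.

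\textbf{Step 2 (last-layer weights).} $\mathbf{w}_t=\mathbf{A}_t^{-1}\mathbf{b}_t$ is a ridge estimate. Adding $\tau$ rank-one terms perturbs it by $O(\tau/\lambda_{\min}(\mathbf{A}_t))$, using the Sherman--Morrison recursion and bounded features and rewards. Assuming the standard diversity condition $\lambda_{\min}(\mathbf{A}_t)=\Omega(t)$, this is $O(\tau/t)$.

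\textbf{Step 3 (confidence width).} Write $\|\phi_t\|_{\mathbf{A}_t^{-1}}-\|\phi_{t+\tau}\|_{\mathbf{A}_{t+\tau}^{-1}}$ as a feature change at fixed matrix plus a matrix change at fixed feature. The feature change is at most $\|\phi_t-\phi_{t+\tau}\|\,\lambda_{\min}(\mathbf{A}_t)^{-1/2}$, which is even smaller than $O(\tau/t)$. For the matrix change we use $\mathbf{A}_t^{-1}-\mathbf{A}_{t+\tau}^{-1}=\mathbf{A}_t^{-1}(\mathbf{A}_{t+\tau}-\mathbf{A}_t)\mathbf{A}_{t+\tau}^{-1}$, whose norm is $O(\tau/t^2)$. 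Together with $|\sqrt{u}-\sqrt{v}|\le|u-v|/\sqrt{v}$ and $\|\phi\|_{\mathbf{A}^{-1}}^2=\Theta(1/t)$, this gives $O(\tau/t^{3/2})\le O(\tau/t)$. Multiplying by $\alpha_t\le\alpha$ puts the whole confidence term at $O(\alpha\tau/t)$.

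\textbf{Combining.} Summing the three pieces gives $\delta_{\text{score}}=O(\alpha L_\phi\tau/t)$, with constants absorbed and $\alpha,L_\phi\ge 1$ assumed for simplicity. Lemma~\ref{lem:unified_drift} then yields $D_\tau\le (K-1)\,O(\alpha L_\phi\tau/t)=c^{\textsc{nl}}_\beta\,\tau/t$. Unlike the $\sqrt{\tau/t}$ rate of the NeuralUCB case, the rate here is linear in $\tau/t$. The reason is that both the features and $\mathbf{A}_t$ move only through $O(\tau)$ additive updates on an $\Omega(t)$-sized accumulation.

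\textbf{Main obstacle.} The hardest step is Step 1, obtaining $\|\theta_t-\theta_{t+\tau}\|=O(\tau/t)$ for the backbone. This does not follow from anything earlier in the paper and depends on the training schedule. I would first state it as an explicit assumption of decaying effective step size, justified by NTK-regime arguments in which parameters stay within $O(1/\sqrt{m})$ of initialization. If that assumption fails, only the weaker $\sqrt{\tau/t}$ rate of Lemma~\ref{lem:drift_neuralucb} survives. A secondary concern is the eigenvalue growth $\lambda_{\min}(\mathbf{A}_t)=\Omega(t)$, which also has to be assumed.
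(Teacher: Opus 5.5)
Your argument is sound under the hypotheses you flag, but it handles the representation drift differently from the paper. The paper never needs your Step~1. It uses the fact that NeuralLinUCB freezes the backbone between retraining points. So on any window $[t,t+\tau]$ containing no retraining, $\phi_t=\phi_{t+\tau}$, and the score drift is just $\|\phi\|\,\|\mathbf{w}_t-\mathbf{w}_{t+\tau}\|=O(\tau/t)$ from the ridge update plus an $\alpha\,O(\tau/t^{3/2})$ width term. The jump $O(L_\phi\|\theta_t-\theta_{t+\tau}\|)$ at a retraining point is only acknowledged, and $D_\tau\le c^{\textsc{nl}}_\beta\,\tau/t$ is asserted only between consecutive retrainings. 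That route needs no assumption on how the backbone is trained. However, it covers only within-epoch windows, and $L_\phi$ never actually enters the constant it derives. Your route lets the features move and pays for it with the decaying-displacement hypothesis $\|\theta_t-\theta_{t+\tau}\|=O(\tau/t)$. In return you get a bound that holds across retraining points, in which the factor $L_\phi$ in $c^{\textsc{nl}}_\beta$ is genuinely produced. Be aware that periodic retraining with a non-decaying step size generally produces jumps that do not shrink like $1/t$. If your hypothesis fails, the natural fallback is the paper's within-epoch restriction, where your Step~1 and the feature part of Step~3 simply vanish. The $\sqrt{\tau/t}$ fallback you mention would instead require importing a uniform-convergence assumption on the score, as in Lemma~\ref{lem:drift_neuralucb}. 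Both arguments rest on $\lambda_{\min}(\mathbf{A}_t)=\Omega(t)$, which the paper uses only implicitly (in ``each new observation shifts the ridge solution by $O(1/t)$'' and in $\sigma^{\textsc{nl}}_t=O(1/\sqrt{t})$). Your resolvent-identity treatment of $\mathbf{w}_t=\mathbf{A}_t^{-1}\mathbf{b}_t$ and of the confidence width makes explicit what the paper only asserts.
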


\begin{proof}
NeuralLinUCB freezes the feature backbone periodically and only updates the last-layer weights $\mathbf{w}_t$ and covariance $\mathbf{A}_t$. This creates \emph{faster drift} than full-network methods because: (i)~the last-layer weights $\mathbf{w}_t$ update via closed-form ridge regression at each round, so $\|\mathbf{w}_t - \mathbf{w}_{t+\tau}\| = O(\tau/t)$ (each new observation shifts the ridge solution by $O(1/t)$); and (ii)~the confidence width decays as $\sigma^{\textsc{nl}}_t = O(1/\sqrt{t})$ but the feature map $\phi_t$ changes at backbone retraining points, introducing discrete jumps.

Between backbone retrainings (where $\phi_t$ is fixed), the score drift is:
\begin{equation}
|S_t - S_{t+\tau}| \leq \|\phi\| \cdot \|\mathbf{w}_t - \mathbf{w}_{t+\tau}\| + \alpha \cdot |\sigma^{\textsc{nl}}_t - \sigma^{\textsc{nl}}_{t+\tau}| = O(\tau/t) + O(\tau/t^{3/2}) = O(\tau/t).
\end{equation}
The $O(\tau/t)$ term dominates (versus $O(\tau/t^{3/2})$ for full-network methods), yielding faster drift. At backbone retraining points (occurring every $T_{\text{retrain}}$ rounds), an additional $O(L_\phi \cdot \|\theta_t - \theta_{t+\tau}\|)$ jump occurs. Between consecutive retrainings, applying Lemma~\ref{lem:unified_drift}: $D_\tau \leq c^{\textsc{nl}}_\beta \cdot \tau/t$.
\end{proof}

\begin{lemma}[NeuralTS Policy Drift]
\label{lem:drift_neuralts}
For NeuralTS with action selection $a_t = \arg\max_a \tilde{r}_t(\mathbf{x},a)$ where $\tilde{r}_t(\mathbf{x},a) = \hat{h}_t(\mathbf{x},a) + \nu \cdot \sigma^{\text{vic}}_t(\mathbf{x},a) \cdot \xi_t$ and $\xi_t \sim \mathcal{N}(0,1)$ is a fresh sample each round, the \emph{expected} policy drift satisfies:
\begin{equation}
\mathbb{E}[D_\tau] \;\leq\; c^{\textsc{ts}}_\beta \sqrt{\frac{\tau}{t}} \cdot K \cdot \sigma_{\text{post}}, \qquad c^{\textsc{ts}}_\beta = O(\nu \cdot K),
\end{equation}
where $\sigma_{\text{post}} = \max_a \sigma^{\text{vic}}_t(\mathbf{x},a)$ is the maximum posterior standard deviation.
\end{lemma}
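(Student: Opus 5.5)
To prove Lemma~\ref{lem:drift_neuralts}, I would treat the NeuralTS policy as the law of $\arg\max_a \tilde r_t(\mathbf{x},a)$ induced by the fresh Gaussian draw. I would then bound the TV distance between two such laws by the probability that the two argmaxes disagree under a coupling. Concretely, I would couple rounds $t$ and $t+\tau$ by using the same standard normal $\xi$ in both samples, writing $\tilde r_s(\mathbf{x},a)=\hat h_s(\mathbf{x},a)+\nu\sigma^{\text{vic}}_s(\mathbf{x},a)\xi$ for $s\in\{t,t+\tau\}$. Since the TV distance is at most the probability of disagreement under any coupling, this gives
\begin{equation*}
\textsc{tv}(\pi_t(\cdot|\mathbf{x}),\pi_{t+\tau}(\cdot|\mathbf{x}))\le \mathbb{P}_\xi\bigl(\arg\max_a \tilde r_t(\mathbf{x},a)\neq \arg\max_a \tilde r_{t+\tau}(\mathbf{x},a)\bigr).
\end{equation*}

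Under the coupling, the sampled scores differ by at most
\begin{equation*}
\delta(\xi)=\max_a|\hat h_t-\hat h_{t+\tau}|+\nu|\xi|\max_a|\sigma^{\text{vic}}_t-\sigma^{\text{vic}}_{t+\tau}|.
\end{equation*}
I would bound both pieces exactly as in Lemma~\ref{lem:drift_rneuralucb}: the reward estimates converge at rate $O(1/\sqrt t)$ and the widths scale as $O(1/\sqrt t)$, so each difference is controlled via \eqref{eq:sqrt_taylor}. The argmaxes can disagree only if the top-two gap of $\tilde r_t(\mathbf{x},\cdot)$ is at most $2\delta(\xi)$. For each pair $(a,a')$, the difference $\tilde r_t(\mathbf{x},a)-\tilde r_t(\mathbf{x},a')$ is an affine function of the Gaussian $\xi$. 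Its density is therefore bounded by $1/(\sqrt{2\pi}\,\nu|\sigma_a-\sigma_{a'}|)$, or the difference is deterministic. A union bound over the at most $K$ competitors of the leader then gives a disagreement probability of order $K\,\mathbb{E}[\delta(\xi)]/(\nu\sigma_{\min\text{-gap}})$.

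To land on the stated form $c^{\textsc{ts}}_\beta\sqrt{\tau/t}\,K\sigma_{\text{post}}$, I would instead bound the disagreement probability crudely. I would use $\mathbb{E}|\xi|=\sqrt{2/\pi}$ and relax the $O(\tau/t^{3/2})$ score drift to $O(\sqrt{\tau/t})$, as at the end of the proof of Lemma~\ref{lem:drift_rneuralucb}. The factor $\sigma_{\text{post}}$ would come from the $\nu|\xi|\sigma$ component, and one factor of $K$ from the union bound. A second factor $K$ is allowed by $c^{\textsc{ts}}_\beta=O(\nu K)$, and this slack also absorbs the $(K-1)$ factor of Lemma~\ref{lem:unified_drift}. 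Finally, I would take the sup over $\mathbf{x}$ and the max over $t$ to obtain $D_\tau$, and take expectations over the sampling randomness.

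The main obstacle is the anti-concentration step. The standard approach divides by a density bound that degenerates when two arms have nearly equal widths, so the argmax boundary is not uniformly smooth in $\xi$. My first attempt would be to avoid dividing by width gaps and use the problem's own scaling instead: since the stated bound is multiplicative in $\sigma_{\text{post}}$ rather than inversely proportional to it, I would bound the disagreement event by $\mathbb{P}(|\xi|\,\nu\sigma_{\text{post}}\gtrsim \text{score gap})$ together with Markov's inequality on $\delta(\xi)$. If that fails, I would fall back on a smoothing argument: compare the argmax to a softmax at temperature $\propto\sigma_{\text{post}}$ and reuse Lemma~\ref{lem:unified_drift}, charging the smoothing error to the constant $c^{\textsc{ts}}_\beta$.
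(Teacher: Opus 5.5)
There is a genuine gap, and it is the step you call the main obstacle: nothing in the proposal converts the coupled score drift $\delta(\xi)$ into a disagreement probability with a \emph{multiplicative} factor $\sigma_{\text{post}}$. Your coupling inequality is sound, and $K\,\mathbb{E}[\delta(\xi)]/(\nu\sigma_{\min\text{-gap}})$ is its honest outcome. The ``crude'' step that follows, however, is not a relaxation.

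$K\,\mathbb{E}[\delta(\xi)]$ measures the size of a score perturbation; it is not a probability. Turning it into one requires anti-concentration of the top-two gap. Any such bound divides by a noise scale (your $\nu|\sigma_a-\sigma_{a'}|$, or the paper's $\sigma_{\min}$), so the width enters inversely. Passing to a bound proportional to $\sigma_{\text{post}}$ tightens it by roughly a factor $\sigma_{\text{post}}^2=O(1/t)$.

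The bookkeeping does not match either. The $\nu|\xi|$ part of $\delta(\xi)$ involves $|\sigma^{\text{vic}}_t-\sigma^{\text{vic}}_{t+\tau}|$, and the mean-drift part carries no width at all. So even $\mathbb{E}[\delta(\xi)]$ is not of the product form $\sqrt{\tau/t}\,\sigma_{\text{post}}$.

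In the shared-$\xi$ model of the statement, the obstruction is not merely technical. Suppose two arms have equal widths at some $\mathbf{x}$. Then $\tilde r_t(\mathbf{x},a)-\tilde r_t(\mathbf{x},a')=\hat h_t(\mathbf{x},a)-\hat h_t(\mathbf{x},a')$ is deterministic. A drift of size $\delta$ lets this difference change sign between $t$ and $t+\tau$ at any near-tie, and nothing in the hypotheses rules that out. When it happens, the two policies are different point masses with TV distance $1$, however small $\delta$ and $\sigma_{\text{post}}$ are. The $\sup_{\mathbf{x}}$ in $D_\tau$ picks up such a context. Widths that are nearly equal relative to the mean gap give TV close to $1$.

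Your fallbacks do not get around this.
\begin{itemize}
\item The Markov route yields at best $\mathbb{P}(\text{disagree})\lesssim(\nu\sigma_{\text{post}}+\mathbb{E}[\delta(\xi)])/g$, where $g$ is the gap between the top two means. $\sigma_{\text{post}}$ appears, but additively and not multiplied by the drift, so the bound does not vanish at $\tau=0$. Also, $g$ is as degenerate a denominator as the width gap.
\item The softmax route via Lemma~\ref{lem:unified_drift} at temperature $\propto\sigma_{\text{post}}$ gives $\mathrm{TV}\lesssim K\delta_{\text{score}}/\sigma_{\text{post}}$, again inverse. It adds an argmax-versus-softmax discrepancy that does not shrink with $t$, so that error cannot be charged to the multiplicative constant $c^{\textsc{ts}}_\beta$.
\item Averaging over the sampling randomness does not help, because $\pi_t(\cdot|\mathbf{x})$ is already the marginal law of the argmax.
\end{itemize}

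The paper's proof does not supply the missing step either.
\begin{itemize}
\item It switches to independent per-arm noise $\xi_{t,a'}$, which is what removes your width-gap degeneracy.
\item It then applies a Gaussian comparison bound with $\sigma_{\min}=\min_a\nu\sigma^{\text{vic}}_t(\mathbf{x},a)$ in the denominator to get $\mathrm{TV}\le O(K^{3/2}\tau/t)$.
\item Finally, it attaches $K\sigma_{\text{post}}$ through an informal ``posterior-width modulation'' remark. That remark's own reasoning (a wider posterior makes the argmax less sensitive) points to inverse dependence.
\end{itemize}
Granting its drift rates, what that route supports in the independent-noise model is $O(K^{3/2}\tau/t)$, or $O(K^{3/2}\sqrt{\tau/t})$ after using $\tau/t\le\sqrt{\tau/t}$. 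It contains no multiplicative $\sigma_{\text{post}}$.

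Separately, the $O(\tau/t^{3/2})$ drift you import from the proof of Lemma~\ref{lem:drift_rneuralucb} does not follow from $|\hat h_t-h^*|=O(1/\sqrt t)$. Two sequences each within $O(1/\sqrt t)$ of $h^*$ are only guaranteed to be $O(1/\sqrt t)$ apart. \eqref{eq:sqrt_taylor} applies only if the errors are exactly of the form $c/\sqrt{s}$.
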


\begin{proof}
The NeuralTS policy at round $t$ is the marginal distribution over actions induced by the posterior sampling procedure:
\begin{equation}
\pi_t(a|\mathbf{x}) = \Pr\!\left[a = \arg\max_{a'} \tilde{r}_t(\mathbf{x},a')\right] = \Pr\!\left[a = \arg\max_{a'} \left(\hat{h}_t(\mathbf{x},a') + \nu \sigma^{\text{vic}}_t(\mathbf{x},a') \xi_{t,a'}\right)\right],
\end{equation}
where $\xi_{t,a'} \stackrel{\text{i.i.d.}}{\sim} \mathcal{N}(0,1)$. This is fundamentally different from the deterministic softmax policies of UCB-based algorithms: even at fixed parameters $(\hat{h}_t, \sigma^{\text{vic}}_t)$, the action distribution is stochastic due to posterior sampling.

\textbf{Score drift decomposition.} The ``effective score'' for NeuralTS is the perturbed reward $\tilde{r}_t(\mathbf{x},a) = \hat{h}_t(\mathbf{x},a) + \nu \sigma^{\text{vic}}_t(\mathbf{x},a) \xi_t$. Unlike UCB, the noise $\xi_t$ is resampled independently at each round. The drift in the \emph{marginal} action distribution $\pi_t(a|\mathbf{x})$ arises from changes in the mean $\hat{h}_t$ and the sampling width $\sigma^{\text{vic}}_t$. Specifically:
\begin{equation}
\text{TV}(\pi_t(\cdot|\mathbf{x}), \pi_{t+\tau}(\cdot|\mathbf{x})) \leq \sum_{a=1}^K |\pi_t(a|\mathbf{x}) - \pi_{t+\tau}(a|\mathbf{x})|.
\end{equation}
Each $\pi_t(a|\mathbf{x})$ is the probability that arm $a$ has the highest perturbed reward. This probability depends on the gaps $\hat{h}_t(\mathbf{x},a) - \hat{h}_t(\mathbf{x},a')$ relative to the noise scale $\nu \sigma^{\text{vic}}_t$.

\textbf{Gaussian comparison.} Let $\mathbf{m}_t = (\hat{h}_t(\mathbf{x},a))_{a \in [K]}$ and $\boldsymbol{\Sigma}_t = \text{diag}((\nu \sigma^{\text{vic}}_t(\mathbf{x},a))^2)_{a \in [K]}$. The policy $\pi_t(a|\mathbf{x}) = \Pr[\arg\max_a Z_a = a]$ where $Z \sim \mathcal{N}(\mathbf{m}_t, \boldsymbol{\Sigma}_t)$. By Gaussian comparison inequalities (e.g., Theorem~1 of~\citet{devroye2018total}), the TV distance between the $\arg\max$ distributions under two Gaussians $\mathcal{N}(\mathbf{m}_t, \boldsymbol{\Sigma}_t)$ and $\mathcal{N}(\mathbf{m}_{t+\tau}, \boldsymbol{\Sigma}_{t+\tau})$ is bounded by:
\begin{equation}
\label{eq:gaussian_tv}
\text{TV}(\pi_t, \pi_{t+\tau}) \leq C_K \cdot \left(\frac{\|\mathbf{m}_t - \mathbf{m}_{t+\tau}\|_\infty}{\sigma_{\min}} + \frac{\|\boldsymbol{\Sigma}_t^{1/2} - \boldsymbol{\Sigma}_{t+\tau}^{1/2}\|_F}{\sigma_{\min}}\right),
\end{equation}
where $\sigma_{\min} = \min_a \nu \sigma^{\text{vic}}_t(\mathbf{x},a)$ and $C_K = O(K)$ accounts for the $K$-way $\arg\max$.

\textbf{Bounding each term.} For the mean shift: $\|\mathbf{m}_t - \mathbf{m}_{t+\tau}\|_\infty = \max_a |\hat{h}_t(\mathbf{x},a) - \hat{h}_{t+\tau}(\mathbf{x},a)| = O(\tau/t^{3/2})$ as in Lemma~\ref{lem:drift_rneuralucb}. For the covariance shift: since $\sigma^{\text{vic}}_t = O(1/\sqrt{t})$, each diagonal entry satisfies $|\nu\sigma^{\text{vic}}_t - \nu\sigma^{\text{vic}}_{t+\tau}| = O(\tau/t^{3/2})$, so $\|\boldsymbol{\Sigma}_t^{1/2} - \boldsymbol{\Sigma}_{t+\tau}^{1/2}\|_F = O(\sqrt{K} \cdot \tau/t^{3/2})$.

\textbf{Key distinction: noise floor.} For NeuralTS, $\sigma_{\min} = \Theta(1/\sqrt{t})$, so the ratios in~\eqref{eq:gaussian_tv} become:
\begin{equation}
\frac{O(\tau/t^{3/2})}{O(1/\sqrt{t})} = O(\tau/t) \quad \text{and} \quad \frac{O(\sqrt{K}\tau/t^{3/2})}{O(1/\sqrt{t})} = O(\sqrt{K}\tau/t).
\end{equation}
Thus: $\text{TV}(\pi_t, \pi_{t+\tau}) \leq O(K^{3/2} \cdot \tau/t)$.

However, this bound is \emph{tighter than it appears}: the effective drift is modulated by $\sigma_{\text{post}}$ because when the posterior is wide (high $\sigma_{\text{post}}$), the sampling distribution is diffuse and small mean shifts have less effect on the $\arg\max$ distribution. More precisely, the marginal selection probability for the optimal arm satisfies $\pi_t(a^*|\mathbf{x}) \leq 1 - (K-1)\Phi(-\Delta_{\min}/(\nu\sigma_{\text{post}}))$, where $\Delta_{\min}$ is the minimum reward gap. This means the policy is already stochastic (not concentrating on $a^*$), making it inherently less sensitive to parameter drift but simultaneously harder to attack (consistent with the lower attack success rates in Fig.~2 for NeuralTS).

Taking expectations over the posterior sampling and using $\tau/t \leq \sqrt{\tau/t}$ for $\tau \leq t$:
\begin{equation}
\mathbb{E}[D_\tau] \leq c^{\textsc{ts}}_\beta \sqrt{\frac{\tau}{t}} \cdot K \cdot \sigma_{\text{post}},
\end{equation}
where $c^{\textsc{ts}}_\beta = O(\nu K)$ and the additional $K \cdot \sigma_{\text{post}}$ factor captures the posterior-width modulation.
\end{proof}

\begin{lemma}[RobustBandit Policy Drift]
\label{lem:drift_robustbandit}
For RobustBandit with FTRL-based policy $\pi_t(a|\mathbf{x}) \propto \exp(\eta_{\textsc{ftrl}} \sum_{s=1}^{t} \hat{r}_s(\mathbf{x},a))$ with learning rate $\eta_{\textsc{ftrl}} = \sqrt{\log K / T}$, the policy drift satisfies:
\begin{equation}
D_\tau \;\leq\; c^{\textsc{rb}}_\beta \cdot \frac{\tau}{\sqrt{T}}, \qquad c^{\textsc{rb}}_\beta = O(K \cdot \eta_{\textsc{ftrl}}) = O\!\left(K \sqrt{\frac{\log K}{T}}\right).
\end{equation}
\end{lemma}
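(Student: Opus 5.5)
The plan is to reduce the claim to Lemma~\ref{lem:unified_drift}. Write the RobustBandit policy as a softmax with score $S_t(\mathbf{x},a) = \eta_{\textsc{ftrl}} \sum_{s=1}^{t} \hat{r}_s(\mathbf{x},a)$ and temperature $1$. By the softmax case of Lemma~\ref{lem:unified_drift}, $D_\tau \le (K-1)\,\delta_{\text{score}}(t,\tau)$. So everything comes down to a uniform bound on
\[
|S_{t+\tau}(\mathbf{x},a)-S_t(\mathbf{x},a)| = \eta_{\textsc{ftrl}}\Bigl|\sum_{s=t+1}^{t+\tau}\hat{r}_s(\mathbf{x},a)\Bigr|.
\]
The factor $K$ in $c^{\textsc{rb}}_\beta$ comes from the $K-1$. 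The factor $\eta_{\textsc{ftrl}}=\sqrt{\log K/T}$ comes from the prefactor of the score. It remains to show that the summed increments over $\tau$ rounds are $O(\tau/\sqrt{T})$.

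\textbf{Step 1 (shift invariance).} I will first use the fact that softmax is invariant to adding an arm-independent constant to all scores. This lets me replace each $\hat{r}_s(\mathbf{x},a)$ by its centered version $\bar{r}_s(\mathbf{x},a) = \hat{r}_s(\mathbf{x},a) - \frac{1}{K}\sum_{a'}\hat{r}_s(\mathbf{x},a')$. The policy drift then depends only on the centered increments. This removes the $O(1)$ common level of the reward estimates, which is what would otherwise give only the crude bound $\eta_{\textsc{ftrl}}\tau$.

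\textbf{Step 2 (per-round size of centered increments).} Next I bound $|\bar{r}_s(\mathbf{x},a)| = O(1/\sqrt{T})$ uniformly in $s$ and $(\mathbf{x},a)$. RobustBandit's estimates come from a regularized linear UCB fit with parameter $\lambda$. The plan is to write $\hat{r}_s = \langle\hat\theta_s,\mathbf{x}_a\rangle$, so that centering removes the component shared across arms. The remaining arm-dependent part is then controlled by the confidence width of the robust estimator; under the horizon-tuned FTRL schedule this width is $O(1/\sqrt{T})$. With this, $\delta_{\text{score}}(t,\tau)\le \eta_{\textsc{ftrl}}\cdot\tau\cdot O(1/\sqrt{T})$.

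\textbf{Step 3 (combine).} Combining with Step~1 gives
\[
D_\tau \le (K-1)\,\eta_{\textsc{ftrl}}\,O\!\left(\tfrac{\tau}{\sqrt{T}}\right) = c^{\textsc{rb}}_\beta\,\tfrac{\tau}{\sqrt{T}}, \qquad c^{\textsc{rb}}_\beta = O\!\left(K\sqrt{\log K/T}\right),
\]
which is the claimed bound. I will also note that, unlike Lemmas~\ref{lem:drift_rneuralucb}--\ref{lem:drift_neuralts}, there is no $1/\sqrt{t}$ decay here: the learning rate is fixed by $T$, so the drift is uniform in $t$.

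\textbf{Main obstacle.} Step~2 is where the extra $1/\sqrt{T}$ factor must be earned, and it is the delicate part. If it fails uniformly over $\mathbf{x}$, my fallback is a martingale argument. I would write $\bar{r}_s$ as its conditional mean plus a bounded-difference noise term. The noise contributes $O(\sqrt{\tau\log(1/\rho)})$ by Azuma's inequality. For the conditional mean, I would argue that the centered expected gaps shrink at the $1/\sqrt{T}$ scale when the horizon-tuned rate is in force. This would keep the same final order for $\tau \gtrsim \sqrt{T}$.
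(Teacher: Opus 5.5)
Your Step~2 fails. The centered increment $\bar r_s(\mathbf{x},a)$ is the deviation of arm $a$'s reward estimate from the arm average. That is an estimated reward \emph{gap}, not an estimation error. A confidence width controls how far $\hat r_s$ is from the truth. (For a regularized fit it decays with the sample count $s$ and is $O(1)$ early on, whatever the FTRL tuning.) But the centered estimates track the true centered rewards, and those are $\Theta(1)$ whenever the arms differ: in the paper's own data one arm has reward $1$ and nine have reward $0$. Shift invariance (Step~1) is correct, but it removes only the common level, not this gap.

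The martingale fallback fails for the same reason, since the conditional mean of $\bar r_s$ is that gap. Even its noise part, $\eta_{\textsc{ftrl}}\sqrt{\tau\log(1/\rho)}$, lies below $\eta_{\textsc{ftrl}}\tau/\sqrt{T}$ only for $\tau\gtrsim T$, not for $\tau\gtrsim\sqrt{T}$.

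In fact the extra $1/\sqrt{T}$ does not exist. Take $\hat r_s(\mathbf{x},1)=1$ and $\hat r_s(\mathbf{x},a)=0$ for $a\neq 1$, and write $\eta=\eta_{\textsc{ftrl}}$. Then $\pi_t(1\mid\mathbf{x})=e^{\eta t}/(e^{\eta t}+K-1)$. The total variation between $\pi_t$ and $\pi_{t+1}$ equals $|\pi_{t+1}(1\mid\mathbf{x})-\pi_t(1\mid\mathbf{x})|$, because the other $K-1$ arms share equal mass. At $t\approx\log(K-1)/\eta\le\sqrt{T\log K}$, where $\pi_t(1\mid\mathbf{x})\approx 1/2$, this difference is about $\eta/4$. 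So $D_1=\Omega(\sqrt{\log K/T})$. The displayed bound with $c^{\textsc{rb}}_\beta=O(K\sqrt{\log K/T})$ would instead force $D_1=O(K\sqrt{\log K}/T)$, which is false once $T\gg K^2$. No argument along your lines can reach the statement as literally written.

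The paper's proof is exactly the ``crude'' bound you set aside. It bounds $|S_t-S_{t+\tau}|\le\eta_{\textsc{ftrl}}\tau$ using $|\hat r_s|\le 1$, then applies the $(K-1)$ softmax sensitivity, giving $D_\tau\le(K-1)\eta_{\textsc{ftrl}}\tau=O(K\sqrt{\log K}\,\tau/\sqrt{T})$. This can be read as $c^{\textsc{rb}}_\beta\cdot\tau$ with $c^{\textsc{rb}}_\beta=O(K\sqrt{\log K/T})$. Equivalently, it is the $\tau/\sqrt{T}$ form with constant $O(K\sqrt{\log K})$. The statement combines the two readings and so counts the $1/\sqrt{T}$ inside $\eta_{\textsc{ftrl}}$ twice.

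Your opening reduction to the unified drift lemma already gives this provable version in one line. Drop Steps~1--2, and flag the mismatch in the statement rather than trying to produce a factor that the example above rules out.
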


\begin{proof}
RobustBandit uses a fundamentally different update mechanism: FTRL accumulates reward estimates and regularizes with negative entropy $-\sum_a \pi(a)\log\pi(a)$. The score at round $t$ is the cumulative reward $S_t(\mathbf{x},a) = \eta_{\textsc{ftrl}} \sum_{s=1}^{t} \hat{r}_s(\mathbf{x},a)$.

\textbf{Score drift.} Between rounds $t$ and $t+\tau$:
\begin{equation}
|S_t(\mathbf{x},a) - S_{t+\tau}(\mathbf{x},a)| = \eta_{\textsc{ftrl}} \left|\sum_{s=t+1}^{t+\tau} \hat{r}_s(\mathbf{x},a)\right| \leq \eta_{\textsc{ftrl}} \cdot \tau,
\end{equation}
since $|\hat{r}_s| \leq 1$.

\textbf{Policy sensitivity.} For the FTRL exponential weights policy, the TV sensitivity to score changes is bounded by:
\begin{equation}
\text{TV}(\pi_t, \pi_{t+\tau}) \leq (K-1) \cdot |S_t - S_{t+\tau}| \leq (K-1) \cdot \eta_{\textsc{ftrl}} \cdot \tau.
\end{equation}
Substituting $\eta_{\textsc{ftrl}} = \sqrt{\log K / T}$:
\begin{equation}
D_\tau \leq (K-1) \sqrt{\frac{\log K}{T}} \cdot \tau = O\!\left(K\sqrt{\frac{\log K}{T}} \cdot \tau\right).
\end{equation}
This drift is \emph{linear} in $\tau$ (not $\sqrt{\tau}$) because FTRL accumulates at a constant rate, unlike UCB methods where the learning rate effectively decreases as $1/\sqrt{t}$. However, the small prefactor $\sqrt{\log K / T}$ ensures the drift remains moderate: for our setting ($K=10$, $T=5000$), the per-round drift is $\approx 0.02$ per arm, yielding $D_{100} \approx 0.18$ --- comparable to the UCB-based algorithms.
\end{proof}

\begin{table}[h]
\centering
\caption{Summary of victim-specific policy drift bounds. All bounds are stated for $\tau$ rounds of drift starting from round $t$. The ``Effective IRL interval'' column gives the maximum $\Delta_{\textsc{irl}}$ such that the tracking error (Theorem~\ref{thm:irl_tracking}) remains $O(1/T^{1/4})$.}
\label{tab:drift_summary}
\vspace{0.2cm}
\begin{tabular}{lcccc}
\toprule
\textbf{Victim Algorithm} & \textbf{Drift Bound} $D_\tau$ & $c_{\text{alg}}$ & \textbf{Drift Type} & \textbf{Eff.\ IRL Interval} \\
\midrule
NeuralUCB       & $c^{\textsc{nu}}_\beta \sqrt{\tau/t}$          & $O(\nu K)$                               & Sublinear & $\Delta_{\textsc{irl}} = O(\sqrt{T})$ \\
NeuralLinUCB    & $c^{\textsc{nl}}_\beta \cdot \tau/t$           & $O(\alpha K L_\phi)$                     & Faster    & $\Delta_{\textsc{irl}} = O(T^{1/3})$ \\
NeuralTS        & $c^{\textsc{ts}}_\beta \sqrt{\tau/t} \cdot K\sigma_{\text{post}}$ & $O(\nu K^2 \sigma_{\text{post}})$ & Sublinear & $\Delta_{\textsc{irl}} = O(\sqrt{T})$ \\
R-NeuralUCB     & $c^{\textsc{rn}}_\beta \sqrt{\tau/t}$          & $O((1{+}\beta^{\text{vic}})K)$           & Sublinear & $\Delta_{\textsc{irl}} = O(\sqrt{T})$ \\
RobustBandit    & $c^{\textsc{rb}}_\beta \cdot \tau/\sqrt{T}$    & $O(K\sqrt{\log K/T})$                    & Linear    & $\Delta_{\textsc{irl}} = O(T^{1/4})$ \\
\bottomrule
\end{tabular}
\end{table}

\begin{remark}[Implications for IRL Retraining]
\label{rem:retraining_implications}
Table~\ref{tab:drift_summary} reveals that different victims require different IRL retraining strategies. Algorithms with sublinear drift (NeuralUCB, R-NeuralUCB, NeuralTS) permit infrequent retraining ($\Delta_{\textsc{irl}} = O(\sqrt{T})$), while NeuralLinUCB's faster drift from last-layer updates requires $\Delta_{\textsc{irl}} = O(T^{1/3})$, and RobustBandit's linear FTRL accumulation requires $\Delta_{\textsc{irl}} = O(T^{1/4})$. In practice, our fixed $\Delta_{\textsc{irl}} = 100$ with $T = 5000$ satisfies all constraints (since $T^{1/4} \approx 8.4 \ll 100 \ll \sqrt{T} \approx 70.7$). The slightly more conservative retraining needed for RobustBandit is consistent with the experimental observation that AdvBandit's success rate against RobustBandit (41\%, Fig.~2) is lower than against NeuralUCB (57.8\%).
\end{remark}

\begin{theorem}[Unified IRL Tracking Error]
\label{thm:irl_tracking}
Under the victim-specific drift bound $D_\tau \leq c_{\text{alg}} \cdot g(\tau, t)$ from Table~\ref{tab:drift_summary}, with IRL retraining interval $\Delta_{\textsc{irl}}$ and window size $W$, the tracking error satisfies:
\begin{equation}
\mathbb{E}[\text{TV}(\pi^*_t, \hat{\pi}_t)] \leq O\!\left(\sqrt{\frac{d_\Theta}{W}}\right) + c_{\text{alg}} \cdot g(\Delta_{\textsc{irl}}, t),
\end{equation}
where the first term is the IRL statistical error (Theorem~\ref{thm:irl_sample_complexity}) and the second is the policy drift between retrainings.
\end{theorem}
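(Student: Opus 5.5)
The plan is a triangle inequality on total variation. We route it through the victim's policy at the most recent retraining time. Fix a round $t$ and let $t_0 \le t$ be the last retraining round, so that $t - t_0 \le \Delta_{\textsc{irl}}$. The surrogate $\hat{\pi}_t$ equals the model $\hat{\pi}_{\boldsymbol{\phi}}$ fitted on the window $\mathcal{D}_{t_0}$. For every $\mathbf{x}$ we write
\[
\text{TV}(\pi^*_t(\cdot|\mathbf{x}), \hat{\pi}_t(\cdot|\mathbf{x})) \le \text{TV}(\pi^*_t(\cdot|\mathbf{x}), \pi^*_{t_0}(\cdot|\mathbf{x})) + \text{TV}(\pi^*_{t_0}(\cdot|\mathbf{x}), \hat{\pi}_{\boldsymbol{\phi}}(\cdot|\mathbf{x})).
\]
We then take expectation over $\mathbf{x}$ drawn from the window distribution. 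The first term is the policy drift over $t-t_0 \le \Delta_{\textsc{irl}}$ rounds. The second is the statistical error of the fit at $t_0$.

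\textbf{Drift term.} By Definition~\ref{def:policy_drift}, the first term is at most $D_{t-t_0}$. The victim-specific bound $D_\tau \le c_{\text{alg}}\, g(\tau,t)$ then applies; it was established in Lemmas~\ref{lem:drift_rneuralucb}, \ref{lem:drift_neuralucb}, \ref{lem:drift_neurallinucb}, \ref{lem:drift_neuralts} and \ref{lem:drift_robustbandit} and is summarized in Table~\ref{tab:drift_summary}. Every $g$ in the table ($\sqrt{\tau/t}$, $\tau/t$, $\tau/\sqrt{T}$) is nondecreasing in $\tau$, so $g(t-t_0,t) \le g(\Delta_{\textsc{irl}},t)$. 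One indexing subtlety is that the lemmas are stated with the drift starting at $t$, whereas here it starts at $t_0$. For the $\sqrt{\tau/t}$ and $\tau/t$ forms, replacing $t_0$ by $t$ changes only a constant when $\Delta_{\textsc{irl}} \le t/2$. Early rounds with $t \le 2\Delta_{\textsc{irl}}$ are absorbed by using the trivial bound $\text{TV}\le 1$ inside the $O(\cdot)$. For NeuralTS the bound holds in expectation, which is why the theorem is stated with $\mathbb{E}$.

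\textbf{Statistical term.} Under Assumption~\ref{ass:realizability}, Theorem~\ref{thm:irl_sample_complexity} gives, with probability $1-\rho$, $\mathbb{E}_{\mathcal{D}_{t_0}}[\mathrm{KL}(\pi^*_{t_0}\|\hat{\pi}_{\boldsymbol{\phi}})] \le \epsilon_{\text{IRL}}^2$. Pinsker's inequality yields $\text{TV} \le \sqrt{\mathrm{KL}/2}$, and Jensen's inequality (concavity of the square root) gives $\mathbb{E}[\text{TV}] \le \sqrt{\mathbb{E}[\mathrm{KL}]/2}$. This produces an $O(\sqrt{d_\Theta/W})$-type term, with the $\log(K/\rho)$ factor absorbed into the constant. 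On the failure event we again use $\text{TV}\le 1$ and choose $\rho$ at most of the order of the target rate.

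\textbf{Main obstacle.} The hardest step is reconciling the rates in the statistical term. As stated, Theorem~\ref{thm:irl_sample_complexity} bounds KL by $O(\sqrt{d_\Theta\log(K/\rho)/W})$, and taking the square root through Pinsker would give only $(d_\Theta/W)^{1/4}$. I would first recover the $\sqrt{d_\Theta/W}$ rate from a fast-rate maximum-likelihood bound under realizability. In that regime the excess log-loss equals the expected KL and scales as $d_\Theta/W$, which gives $\mathrm{KL} = O(d_\Theta\log(K/\rho)/W)$. If that fails, I would state the first term as $O(\epsilon_{\text{IRL}})$ with $\epsilon_{\text{IRL}}$ as defined there.

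A secondary obstacle is that the window $\mathcal{D}_{t_0}$ itself spans $W$ rounds of a drifting policy, so its samples are not drawn from a single $\pi^*_{t_0}$. I would handle this by treating the window as a mixture of policies within $D_W$ of $\pi^*_{t_0}$. For the sublinear-drift victims this contributes a term of the same order as, or smaller than, the drift term, which can be absorbed.
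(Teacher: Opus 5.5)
Your proposal is the paper's proof: a triangle inequality through the victim's policy $\pi^*_{t_k}$ at the most recent retraining time, with the first term bounded by the drift entry of Table~\ref{tab:drift_summary} and the second by Theorem~\ref{thm:irl_sample_complexity}. The paper states the second term as $O(\sqrt{d_\Theta/W})$ without argument and bounds the first by $g(\Delta_{\textsc{irl}}, t_k)$ rather than $g(\Delta_{\textsc{irl}}, t)$ without comment, so your extra care fills steps the paper skips rather than changing its route: the monotonicity and re-indexing of $g$, converting the high-probability bound to an expectation, the Pinsker loss to $(d_\Theta/W)^{1/4}$ (repairing it needs a fast-rate MLE bound), and within-window drift.
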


\begin{proof}
Let $t_k = k \cdot \Delta_{\textsc{irl}}$ be the $k$-th retraining time. For $t \in [t_k, t_{k+1})$, via triangle inequality:
\begin{equation}
\text{TV}(\pi^*_t, \hat{\pi}_t) \leq \underbrace{\text{TV}(\pi^*_t, \pi^*_{t_k})}_{\text{policy drift}} + \underbrace{\text{TV}(\pi^*_{t_k}, \hat{\pi}_{t_k})}_{\text{IRL error at retraining}}.
\end{equation}
The first term is bounded by $D_{t-t_k} \leq c_{\text{alg}} \cdot g(\Delta_{\textsc{irl}}, t_k)$ using the appropriate entry from Table~\ref{tab:drift_summary}. The second term is bounded by $O(\sqrt{d_\Theta / W})$ from Theorem~\ref{thm:irl_sample_complexity}.
\end{proof}

\subsection{Multi-Objective Query Selection}
\label{app:query_selection}

This section provides the complete formulation and theoretical analysis of the query selection mechanism summarized in Section~\ref{sec:query_selection}. We first present the full multi-objective optimization framework, then prove the approximation guarantee stated in the main text.

\subsubsection{Multi-Objective Formulation}

Recall the three per-context objectives (Eq.~\ref{eq:mo_objectives}): $\mathbf{f}(\mathbf{x}_t) = \Big(P(\mathbf{x}_t),\;\Delta(\mathbf{x}_t),\;\hat{w}(\mathbf{x}_t)\Big) \in [0,1]^3$, where $P(\mathbf{x}_t)$ is the attack success probability (Eq.~\ref{eq:success_prob}), $\Delta(\mathbf{x}_t) = \psi_4(\mathbf{x}_t)$ is the regret gap, and $\hat{w}(\mathbf{x}_t) = \psi_2(\mathbf{x}_t)$ is the predicted defense trust level. These objectives often conflict: a context with a large regret gap (high $\Delta$) may have unusual gradient statistics (low $\hat{w}$), making attacks on it more detectable.
Given the attack decision $z_t \in \{0,1\}$ at each round $t$, the query selection problem is:
\begin{equation}
\label{eq:query_optimization}
\max_{\{z_t\}} \;\mathbb{E}\!\left[\sum_{t=1}^T z_t \cdot v(\mathbf{x}_t)\right] \quad \text{s.t.} \quad \sum_{t=1}^T z_t \leq B, \quad z_t = 0 \;\;\forall\, \mathbf{x}_t \notin \mathcal{P}(\mathcal{A}_t),
\end{equation}
where $v(\cdot)$ is the scalarized value (Eq.~\ref{eq:scalarization}) and $\mathcal{P}(\mathcal{A}_t)$ is the Pareto front of the running archive at round $t$. This extends the classical secretary problem with multiple selections by introducing a Pareto feasibility constraint.

\subsubsection{Pareto Non-Dominance Filter}

At each round $t$, we maintain an archive $\mathcal{A}_t$ of objective vectors from previously observed (but not necessarily attacked) contexts. Before evaluating the scalarized score $v(\mathbf{x}_t)$, we first check whether $\mathbf{x}_t$ is Pareto non-dominated:

\begin{definition}[Pareto Non-Dominance]
Context $\mathbf{x}_t$ is \emph{Pareto non-dominated} in archive $\mathcal{A}_t$ if there is no $\mathbf{x}_s \in \mathcal{A}_t$ such that $f_j(\mathbf{x}_s) \geq f_j(\mathbf{x}_t)$ for all $j \in \{1,2,3\}$ with at least one strict inequality.
\end{definition}

The Pareto filter serves two purposes: (i)~it eliminates contexts that are strictly inferior across all three objectives, reducing variance in the scalarized scores, and (ii)~it ensures the scalarization only compares among ``efficient'' contexts, avoiding pathological cases where a dominated context scores highly under a particular weight vector.

\begin{lemma}[Pareto Front Coverage]
\label{lem:pareto_coverage}
Let $\mathbf{x}^* \in \arg\max_\mathbf{x} v(\mathbf{x})$ be the context maximizing the Chebyshev scalarization $v(\cdot)$ for any fixed weight vector $\boldsymbol{\omega} \in \mathbb{R}^3_+$. Then $\mathbf{x}^* \in \mathcal{P}(\mathcal{A})$ for any archive $\mathcal{A}$ containing $\mathbf{x}^*$.
\end{lemma}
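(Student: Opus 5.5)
The plan is a direct contradiction argument built on the monotonicity of the Chebyshev scalarization $v(\mathbf{x}) = \min_j \omega_j f_j(\mathbf{x})$ from Eq.~\eqref{eq:scalarization}. I read $\arg\max_{\mathbf{x}} v(\mathbf{x})$ as ranging over the candidate set from which the archive is drawn, so that every $\mathbf{x}_s \in \mathcal{A}$ is a competitor of $\mathbf{x}^*$. Fix any weight vector $\boldsymbol{\omega} \in \mathbb{R}^3_+$ with $\omega_j > 0$; this holds for the adaptive weights $\omega_1 = 1$, $\omega_2 = 1 + \gamma b/(T-t)$, $\omega_3 = 1 + \eta(1 - b/(T-t))$ whenever $b \le T-t$. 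Suppose, for contradiction, that $\mathbf{x}^* \notin \mathcal{P}(\mathcal{A})$. By the definition of Pareto non-dominance this gives some $\mathbf{x}_s \in \mathcal{A}$ with $f_j(\mathbf{x}_s) \ge f_j(\mathbf{x}^*)$ for all $j$, and $f_k(\mathbf{x}_s) > f_k(\mathbf{x}^*)$ for at least one $k$.

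\textbf{Step 1 (weak monotonicity).} Multiply each coordinate inequality by $\omega_j > 0$ and take the minimum over $j$. Since the minimum is coordinatewise nondecreasing, this yields $v(\mathbf{x}_s) \ge v(\mathbf{x}^*)$. Hence $\mathbf{x}_s$ is itself a maximizer. So far this only shows that $\mathbf{x}^*$ is weakly Pareto optimal.

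\textbf{Step 2 (upgrading to strict).} Let $J^* = \arg\min_j \omega_j f_j(\mathbf{x}^*)$ be the set of binding coordinates. Split into two cases.
\begin{itemize}
\item If the strictly improved coordinate $k$ is the unique element of $J^*$, then every binding term of $\mathbf{x}^*$ strictly increases and the non-binding terms stay strictly above the old minimum. This gives $v(\mathbf{x}_s) > v(\mathbf{x}^*)$, contradicting maximality.
\item The remaining case is that the improvement occurs only in non-binding coordinates, or $J^*$ has several elements not all improved. Here $\mathbf{x}_s$ and $\mathbf{x}^*$ tie in $v$ while $\mathbf{f}(\mathbf{x}_s) \ne \mathbf{f}(\mathbf{x}^*)$.
\end{itemize}

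\textbf{Main obstacle.} The second case is the main obstacle, because the literal claim is false under exact ties of this kind. To close it I would invoke the fact that the objectives are continuous functions of the context. $P$ is a kernel-weighted average, $\Delta = \psi_4$ is a max--min of network outputs, and $\hat{w} = \psi_2$ is a smooth rational function of the gradient. The contexts are drawn from a continuous distribution (SVD or embedding features). Therefore, with probability one, no two archive entries share a coordinate value. Under this genericity, $\arg\max v$ over the finite archive is a singleton. Step~1 then forces $\mathbf{x}_s = \mathbf{x}^*$, which contradicts $f_k(\mathbf{x}_s) > f_k(\mathbf{x}^*)$. This yields $\mathbf{x}^* \in \mathcal{P}(\mathcal{A})$ almost surely.

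I will state this genericity (uniqueness-of-maximizer) condition explicitly as the hypothesis under which the lemma holds as worded. I will also note that without it, only the weak-Pareto conclusion of Step~1 survives.
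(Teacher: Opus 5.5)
You take a different route from the paper, and on the substantive point yours is the more accurate one. The paper's proof is a one-line appeal to Miettinen, asserting that every maximizer of $\min_j \omega_j f_j$ with $\boldsymbol{\omega}>0$ is Pareto optimal. That overstates the standard result. Positive weights give only \emph{weak} Pareto optimality; full Pareto optimality holds when the maximizer is unique, and in general only for at least one element of the argmax.

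Your Step~1 is exactly the weak-Pareto part, and you are right that the lemma as worded is false under ties. Take $\boldsymbol{\omega}=(1,1,1)$, or the paper's adaptive weights (where $\omega_2,\omega_3\ge 1$ whenever $b\le T-t$), and the archive $\{\mathbf{x}^*,\mathbf{x}_s\}$ with $\mathbf{f}(\mathbf{x}^*)=(0.5,0.6,0.7)$ and $\mathbf{f}(\mathbf{x}_s)=(0.5,0.8,0.9)$. Both points attain $\max v=0.5$, yet $\mathbf{x}_s$ dominates $\mathbf{x}^*$. Under an explicit uniqueness hypothesis your argument is a correct, self-contained proof of what the citation actually supports: a dominator is a maximizer by Step~1, hence equals $\mathbf{x}^*$, contradicting the strict improvement. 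The paper's version is shorter but rests on a misquoted result.

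Three things to tighten.

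\textbf{The almost-sure step.} Keep uniqueness as a stated hypothesis and drop the \emph{with probability one} justification. Continuity of the objectives and a continuous context law do not exclude atoms. Concretely, $P(\mathbf{x}_t)$ in Eq.~\eqref{eq:success_prob} is identical across contexts whenever all recorded outcomes $S_s$ agree. For example, $P\equiv 0$ before the first successful attack, which gives $v\equiv 0$ and makes every archived context a maximizer, dominated or not.

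\textbf{The Step~2 case split.} Distinct coordinate values would not make $\arg\max v$ a singleton, since ties $\omega_i f_i(\mathbf{x})=\omega_j f_j(\mathbf{y})$ across different coordinates survive. You do not need that, though. The exhaustive form of your Step~2 is ``every $j\in J^*$ strictly improves'' versus ``some $j\in J^*$ does not.'' In the first case $v(\mathbf{x}_s)>v(\mathbf{x}^*)$. In the second, $v(\mathbf{x}_s)=v(\mathbf{x}^*)$ with $f_j(\mathbf{x}_s)=f_j(\mathbf{x}^*)$, and coordinate distinctness rules this out directly. Your current split also omits the case where several binding coordinates all improve.

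\textbf{What the lemma is used for.} For Corollary~\ref{cor:no_discard}, no genericity is needed at all. Among the $v$-maximizers in the finite archive, choose one maximizing $\sum_j f_j$. By Step~1, any dominator would be another $v$-maximizer with strictly larger sum. So this point lies in $\mathcal{P}(\mathcal{A})$, and $\max_{\mathcal{P}(\mathcal{A})} v=\max_{\mathcal{A}} v$ holds unconditionally.
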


\begin{proof}
By \citet{miettinen1999nonlinear}, every optimal solution of the Chebyshev scalarization $\min_j \omega_j f_j(\mathbf{x})$ for strictly positive weights $\boldsymbol{\omega} > 0$ is Pareto optimal. Since $\mathbf{x}^*$ maximizes $v(\mathbf{x}) = \min_j \omega_j f_j(\mathbf{x})$, it is Pareto optimal and belongs to $\mathcal{P}(\mathcal{A})$.
\end{proof}

\begin{corollary}[No Optimal Context is Discarded]
\label{cor:no_discard}
For any weight configuration $\boldsymbol{\omega}(b, t)$ defined by the adaptive weights in Eq.~\ref{eq:scalarization}, the Pareto filtering step never removes the context that would be selected by the scalarized ranking. Consequently, the Pareto constraint in~\eqref{eq:query_optimization} does not reduce the optimal objective value.
\end{corollary}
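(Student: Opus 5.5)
The plan is to derive Corollary~\ref{cor:no_discard} directly from Lemma~\ref{lem:pareto_coverage}. The only real work is checking that the lemma's hypothesis, strict positivity of the Chebyshev weights, holds for every weight vector the algorithm can produce.

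\textbf{Step 1 (positive weights).} Fix any round $t$ and remaining budget $b$. By Eq.~\eqref{eq:scalarization}, $\omega_1 = 1$. We have $\omega_2 = 1 + \gamma\, b/(T-t) \ge 1$ whenever $\gamma \ge 0$, $b \ge 0$ and $t < T$. Finally, $\omega_3 = 1 + \eta\,(1 - b/(T-t))$. This last weight is the delicate one, since $b/(T-t)$ could exceed $1$ early on.

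\textbf{Step 2 (what is needed).} Strict positivity of $\omega_3$ is all that Lemma~\ref{lem:pareto_coverage} requires. Two routes give it. In the first, we note that the quantile in Eq.~\eqref{eq:threshold} only makes sense for $b \le T-t$; then $1 - b/(T-t) \in [0,1]$ and hence $\omega_3 \ge 1$. In the second, we impose the mild condition $\eta < (b/(T-t) - 1)^{-1}$ whenever $b > T-t$. I expect this bookkeeping on the admissible range of $b/(T-t)$ to be the main obstacle. I would resolve it by taking $b \le T-t$ as an implicit standing assumption, which is natural since no more attacks than remaining rounds can ever be executed.

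\textbf{Step 3 (apply the lemma).} With $\boldsymbol{\omega}(b,t) > 0$, Lemma~\ref{lem:pareto_coverage} applies to the archive $\mathcal{A}_t \cup \{\mathbf{x}_t\}$. The context $\mathbf{x}^\star$ maximizing $v$ under this weight vector is Pareto non-dominated. The filter removes only dominated contexts, so it never removes $\mathbf{x}^\star$.

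\textbf{Step 4 (weak versus strict dominance).} One subtlety remains: the lemma's cited Miettinen result only guarantees \emph{weak} Pareto optimality for Chebyshev maximizers. A maximizer could be weakly dominated by another context with equal $v$. In that case I would argue that the dominating context also attains the maximal score $v$, because $v$ is monotone in each $f_j$. Selecting it instead therefore loses nothing in the objective of Eq.~\eqref{eq:query_optimization}.

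\textbf{Step 5 (optimal value unchanged).} Let $\{z_t\}$ be any feasible policy for the unconstrained problem. We replace each selected dominated context by the rule above, or note that the filter-consistent choice attains the same $v$. This gives a policy feasible for the Pareto-constrained problem with the same expected value. Hence the two optimal values coincide.
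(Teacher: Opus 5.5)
Your Steps 1--4 take the paper's route. The paper gives no separate proof and simply reads the corollary off Lemma~\ref{lem:pareto_coverage}. Your version is more careful than the paper's in two places. First, the lemma silently needs $\boldsymbol{\omega}>0$, which for $\omega_3$ requires capping $b$ at $T-t$. The remaining budget itself can exceed $T-t$, so this is a convention rather than a fact. Second, the cited Chebyshev result only yields \emph{weak} Pareto optimality, so your tie-breaking via monotonicity of $v$ (iterated finitely often along the dominance order) is genuinely needed.

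The gap is Step~5, which tries to supply the ``Consequently'' that the paper merely asserts. Swapping each selected dominated context for its dominator requires an \emph{injective} map into unselected, filter-admissible contexts of no smaller score. Nothing provides one. Lemma~\ref{lem:pareto_coverage} protects a single maximizer of $v$ per archive, while the optimum of \eqref{eq:query_optimization} uses up to $B$ contexts, and the dominator of a selected context is typically already selected. Concretely, take $B=2$ and $T\ge 4$, so that $b\le T-t$ at the first two rounds and hence $\omega_1=1$, $\omega_2,\omega_3\ge 1$ there. Let the first two contexts have $\mathbf{f}=(0.9,0.9,0.9)$ and $(0.8,0.8,0.8)$, and all later ones $\mathbf{f}=\mathbf{0}$. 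Their scores are $0.9$, $0.8$, $0$, so the unconstrained optimum is $1.7$. But the second context is dominated by the first in $\mathcal{A}_2$, so the Pareto constraint forces $z_2=0$ and the constrained optimum is $0.9$. Your swap has nowhere to send the second context, because its only dominator is already chosen.

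Hence the second sentence of Corollary~\ref{cor:no_discard} is false for $B\ge 2$, and no argument can close Step~5. What your Steps 1--4 do establish is that, for every archive, \emph{some} maximizer of $v$ survives the filter; this gives the value claim only for $B=1$. The same point undercuts the online reading in Step~3. The threshold rule selects any $\mathbf{x}_t$ with $v(\mathbf{x}_t)\ge\tau_v$, which need not be the maximizer of $v$ over $\mathcal{A}_t\cup\{\mathbf{x}_t\}$. So the filter can remove contexts that the scalarized ranking would attack.

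A secondary problem in Step~5 is that $v$ is evaluated with $\boldsymbol{\omega}(b,t)$, which varies with the round and the remaining budget. Monotonicity of $v$ in $\mathbf{f}$ only compares contexts under a common weight vector. Trading $\mathbf{x}_t$ for an earlier dominator therefore compares scores computed with different weights, and it also alters the budget trajectory that fixes all later weights.
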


\subsubsection{Chebyshev Scalarization and Adaptive Weights}

Among Pareto non-dominated candidates, we rank by the Chebyshev (min-based) scalarization:
\begin{equation}
v(\mathbf{x}_t) = \min_{j \in \{1,2,3\}} \omega_j(b, t) \cdot f_j(\mathbf{x}_t),
\end{equation}
with budget- and time-adaptive weights:
\begin{equation}
\omega_1 = 1, \qquad \omega_2 = 1 + \gamma \cdot \frac{b}{T-t}, \qquad \omega_3 = 1 + \eta \cdot \left(1 - \frac{b}{T-t}\right),
\end{equation}
where $b$ is the remaining budget and $\gamma > 0$, $\eta > 0$ are hyperparameters.

\textbf{Weight dynamics.} When budget is plentiful ($b \approx B$), $\omega_2$ is large (emphasizing high-impact contexts) while $\omega_3 \approx 1$ (stealth is less critical since many attacks remain). As budget depletes ($b \to 0$), $\omega_2 \to 1$ and $\omega_3$ increases (emphasizing stealth for the remaining attacks). This ensures early attacks prioritize impact while late attacks prioritize evasion.

\begin{remark}[Why Chebyshev over Weighted Sum]
\label{rem:chebyshev_vs_sum}
A weighted-sum scalarization $v_{\text{sum}} = \sum_j \omega_j f_j$ suffers from two deficiencies in our setting:
\begin{enumerate}[leftmargin=*]
    \item \textbf{Zero-objective pathology:} If any single objective $f_j \approx 0$ (e.g., near-zero success probability), the sum can remain high due to the other objectives, leading to attacks on contexts that are certain to fail.
    \item \textbf{Non-convex Pareto regions:} Weighted sums cannot recover Pareto-optimal solutions in non-convex regions of the objective space~\citep{miettinen1999nonlinear}, while the Chebyshev formulation can.
\end{enumerate}
The $\min$-based Chebyshev form requires \emph{all} objectives to be adequate for a high score, providing a natural ``veto'' mechanism: a single weak objective drags down the entire score.
\end{remark}

\subsubsection{Quantile-Based Threshold}

Context $\mathbf{x}_t$ is selected for attack if $v(\mathbf{x}_t) \geq \tau_v(b, T{-}t)$, where:
\begin{equation}
\tau_v(b, T{-}t) = Q_{1 - b/(T-t)}\!\left(\{v(\mathbf{x}_s)\}_{s < t}\right),
\end{equation}
and $Q_p(\cdot)$ denotes the $p$-th quantile of observed scalarized values.

\textbf{Threshold dynamics.} The quantile level $1 - b/(T{-}t)$ increases as $b$ decreases (budget consumed) or $T{-}t$ increases (more rounds remaining relative to budget). This automatically calibrates selectivity: early rounds use a permissive threshold (low quantile) to learn the score distribution, while later rounds become highly selective (high quantile) to conserve budget for the best contexts. Figure~\ref{fig:quantile_evolution} in Appendix~\ref{app:ablation_query} illustrates this evolution empirically.

\subsubsection{Theoretical Guarantee}

\begin{assumption}[Regularity Conditions]
\label{assm:regularity}
We assume:
\begin{enumerate}
    \item (Sub-Gaussian objectives): Each $f_j(\mathbf{x}_t)$ is sub-Gaussian with parameter $\sigma_f$, i.e., $\mathbb{E}[\exp(\lambda(f_j(\mathbf{x}_t) - \mathbb{E}[f_j(\mathbf{x}_t)]))] \leq \exp(\lambda^2 \sigma_f^2 / 2)$ for all $\lambda$.
    \item (Slow weight variation): The adaptive weights satisfy $\|\boldsymbol{\omega}(b,t) - \boldsymbol{\omega}(b,t{+}1)\|_\infty \leq L_\omega / (T-t)$ for some constant $L_\omega > 0$.
    \item (Pareto front density): The fraction of Pareto non-dominated contexts satisfies $|\mathcal{P}(\mathcal{A}_t)|/t \geq \varrho$ for some constant $\varrho > 0$.
\end{enumerate}
Assumption~1 is standard for bounded objectives on $[0,1]$. Assumption~2 holds by construction, where weight changes are $O(1/(T{-}t))$ per step. Assumption~3 ensures the Pareto front does not degenerate; this is mild when objectives are not perfectly correlated.
\end{assumption}

\begin{theorem}[Multi-Objective Quantile Approximation Guarantee]
\label{thm:query_guarantee}
Under Assumption~\ref{assm:regularity}, the quantile-based query selection policy (Eq.~\ref{eq:threshold}) achieves:
\begin{equation}
\mathbb{E}\!\left[\sum_{t=1}^T z_t \cdot v(\mathbf{x}_t)\right] \;\geq\; \left(1 - \underbrace{O\!\left(\frac{1}{\sqrt{\varrho B}}\right)}_{\text{quantile estimation}} - \underbrace{O\!\left(\frac{L_\omega}{\sqrt{B}}\right)}_{\text{weight drift}}\right) \cdot \text{OPT}_{\mathcal{P}},
\end{equation}
where $\text{OPT}_{\mathcal{P}}$ is the optimal value achievable by an oracle that knows all contexts in advance and selects only from the Pareto front.
\end{theorem}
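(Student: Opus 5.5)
We compare the policy against an oracle threshold rule that is allowed to look at the whole score distribution. Let $v_t := v(\mathbf{x}_t)$ with the current weights $\boldsymbol{\omega}(b,t)$. Assume, as the paper implicitly does, that contexts arrive i.i.d. or at least stationarily, so that $\mathrm{OPT}_{\mathcal{P}}$ is, up to lower-order terms, the expected sum of the top $B$ Pareto-feasible scores.

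\textbf{Step 1 (oracle benchmark).} By Corollary~\ref{cor:no_discard}, the Pareto constraint does not change the oracle's value. Introduce the population quantile threshold $\tau^\star$ with $\Pr_{\mathcal{P}}(v \ge \tau^\star) = B/T$. The fixed-threshold policy ``attack iff $v_t \ge \tau^\star$'' achieves at least $(1 - O(1/\sqrt{B}))\,\mathrm{OPT}_{\mathcal{P}}$. This is the standard prophet/secretary LP argument. The LP relaxation $\max \sum_t \mathbb{E}[z_t v_t]$ subject to $\mathbb{E}\sum_t z_t \le B$ upper-bounds $\mathrm{OPT}_{\mathcal{P}}$ and is solved by the threshold $\tau^\star$. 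The loss from budget exhaustion is controlled because the number of crossings is Binomial$(T, B/T)$, whose fluctuations are $O(\sqrt{B})$; this gives a relative loss of $O(1/\sqrt{B})$.

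\textbf{Step 2 (quantile estimation error).} The policy uses the empirical quantile $Q_{1-b/(T-t)}$ of past scores in place of $\tau^\star$. Only Pareto non-dominated contexts enter, and by Assumption~\ref{assm:regularity}(3) at least $\varrho t$ of them are available.
\begin{itemize}
\item Use the DKW inequality, together with sub-Gaussianity from Assumption~\ref{assm:regularity}(1) to turn CDF error into threshold error. This shows the empirical quantile deviates from the target quantile level by $O(1/\sqrt{\varrho t})$.
\item Translate that deviation into an excess or deficit in the expected number of selections. Summed over the horizon, this changes the selection count relative to $B$ by $O(\sqrt{B/\varrho})$.
\item Because the quantile level is recomputed from the remaining $b/(T-t)$, the rule self-corrects. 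Overshoots are compensated later rather than accumulating, which is the usual resolving argument for adaptive quantile thresholds.
\end{itemize}
Each misallocated selection costs at most one average top-$B$ value. Dividing by $\mathrm{OPT}_{\mathcal{P}} \approx B\,\bar v$ gives the $O(1/\sqrt{\varrho B})$ term.

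\textbf{Step 3 (weight drift).} Between rounds, $v$ changes because $\boldsymbol{\omega}$ changes. Since $v$ is a min of terms $\omega_j f_j$ with $f_j \in [0,1]$, it is $1$-Lipschitz in $\|\boldsymbol{\omega}\|_\infty$. By Assumption~\ref{assm:regularity}(2), the score of a fixed context changes by at most $L_\omega/(T-t)$ per step. Past scores in the quantile computation were evaluated under stale weights, so I would bound the threshold bias at time $t$ by the weight variation over the relevant averaging window. Summing over the $B$ selection times and normalizing, I would aim to show this is $O(L_\omega/\sqrt{B})$, for example by using windows of length $\sim T/\sqrt{B}$ so that staleness and sampling error balance.

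\textbf{Main obstacle.} The delicate point is Step~2's coupling between the random remaining budget $b$ and the moving quantile level. I would handle it by a martingale (Freedman) bound on $b_t - B(T-t)/T$, showing it stays within $O(\sqrt{B \log B})$. The resulting log factor would then be absorbed or removed with a more careful resolving argument.
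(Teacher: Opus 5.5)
Your skeleton (benchmark, DKW quantile error, weight drift) is the paper's. Your ex-ante LP benchmark in Step~1 is sounder than the paper's Step~1, which only asserts $\mathbb{E}[\sum_t z_t]=B$. You can drop the appeal to Corollary~\ref{cor:no_discard}: it concerns only the single maximizer of $v$, and your benchmark is already $\mathrm{OPT}_{\mathcal{P}}$. There are, however, two genuine gaps.

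\textbf{First gap: the arithmetic of Step~2.} DKW gives a \emph{uniform additive} CDF error $O(1/\sqrt{\varrho t})$. So the selection probability at round $t$ is $b/(T-t)\pm O(1/\sqrt{\varrho t})$, and summing over the horizon gives $\sum_{t\le T}(\varrho t)^{-1/2}=\Theta(\sqrt{T/\varrho})$ misallocated selections, not $O(\sqrt{B/\varrho})$. Divided by $\mathrm{OPT}_{\mathcal{P}}=\Theta(B)$ this is $O(\sqrt{T}/(B\sqrt{\varrho}))$. That matches the claim only when $B=\Theta(T)$, with a hidden factor $\sqrt{T/B}$. At $T=\Theta(B^2)$, which the paper's drift step explicitly allows, it is $\Omega(1)$, i.e.\ vacuous. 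The paper's proof makes the same leap (``aggregated over $T$ rounds''), so you cannot borrow it.

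The repair is a variance-sensitive bound at the extreme level $q=b/(T-t)\approx B/T$. The true tail mass of the empirical $(1-q)$-quantile of $n=\varrho t$ i.i.d.\ scores is Beta-distributed with mean $\approx q$ and standard deviation $O(\sqrt{q/n})$. Summing these deviations gives $O(\sqrt{qT/\varrho})=O(\sqrt{B/\varrho})$. The early rounds with $q\varrho t<1$, where the empirical quantile is just the sample maximum, add only logarithmically many extra selections. Even this needs an i.i.d.\ sample, which the Pareto non-dominated part of the archive is not: it is history-dependent and loses points as later contexts dominate them. That step must be handled or assumed explicitly.

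\textbf{Second gap: Step~3 does not work as planned.} The threshold in Eq.~\eqref{eq:threshold} uses \emph{all} past scores. Under Assumption~\ref{assm:regularity}(2) alone, a score from round $s$ can be stale at round $t$ by $\sum_{s\le u<t}L_\omega/(T-u)\approx L_\omega\ln\frac{T-s}{T-t}$. This is $\Theta(L_\omega)$ for a constant fraction of the sample once $t$ is a constant fraction into the horizon. Restricting the estimator to a window of length $w$ leaves about $q\varrho w$ exceedances, hence relative error $1/\sqrt{q\varrho w}$. That is $O(1/\sqrt{\varrho B})$ only if $w=\Omega(T)$; at $w\sim T/\sqrt{B}$ it is of order $(\varrho\sqrt{B})^{-1/2}$. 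With $w=\Omega(T)$ the staleness can again be $\Theta(L_\omega)$, so no window length balances the two.

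The paper never windows. It uses the $1$-Lipschitz dependence of $v$ on $\boldsymbol{\omega}$ that you also note, sums $L_\omega/(T-t)$ into a single additive $O(L_\omega\ln T)$, and divides by $\Theta(B)$ under the proviso $T=O(B^2)$. That charges the drift once for the whole run rather than per selected context, so it is coarse, but it keeps drift decoupled from quantile estimation. A cleaner route is to re-score the archived vectors $\mathbf{f}(\mathbf{x}_s)$ under the current $\boldsymbol{\omega}(b,t)$ when forming the quantile. This removes staleness from the estimator and leaves drift only in the comparison with $\mathrm{OPT}_{\mathcal{P}}$.

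\textbf{The deferred resolving step.} This argument is what links Step~1's fixed $\tau^\star$ to the adaptive level $1-b/(T-t)$ used in Step~2, so it cannot stay an aside. Suppose the score CDF $F$ is known. Selecting with probability equal to remaining budget over remaining rounds makes that ratio a martingale and spends exactly $B$ attacks. Concavity of $q\mapsto\int_0^q F^{-1}(1-u)\,du$ then bounds the loss against the LP value by $O\bigl(\sum_t\mathbb{E}|b_t/(T-t)-B/T|\bigr)=O(\sqrt{B})$, with no Freedman logarithm.
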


\begin{proof}
We decompose the analysis into three steps.

\textbf{Step 1: Budget utilization.} The quantile threshold $\tau_v(b, T{-}t)$ is calibrated so that, in expectation, $b$ out of the remaining $T-t$ Pareto non-dominated contexts exceed the threshold. By Assumption~3, at least $\varrho(T-t)$ candidates are available, so the effective selection rate is $b/(\varrho(T-t))$, ensuring full budget utilization: $\mathbb{E}[\sum_{t=1}^T z_t] = B$.

\textbf{Step 2: Quantile estimation error.} At round $t$, the empirical quantile is computed from $|\mathcal{P}(\mathcal{A}_t)| \geq \varrho \cdot t$ Pareto non-dominated observations. By the Dvoretzky--Kiefer--Wolfowitz (DKW) inequality:
\begin{equation}
\sup_u \left|\hat{F}_t(u) - F(u)\right| \leq \sqrt{\frac{\ln(2/\rho)}{2\varrho t}}
\end{equation}
with probability at least $1-\rho$. This induces a quantile estimation error of $O(1/\sqrt{\varrho t})$, which, aggregated over $T$ rounds, contributes $O(1/\sqrt{\varrho B})$ suboptimality relative to $\text{OPT}_{\mathcal{P}}$.

\textbf{Step 3: Adaptive weight drift.} Between consecutive rounds, the scalarized value of a fixed context $\mathbf{x}$ changes by:
\begin{equation}
|v_{\boldsymbol{\omega}(b,t)}(\mathbf{x}) - v_{\boldsymbol{\omega}(b,t+1)}(\mathbf{x})| \leq \|\boldsymbol{\omega}(b,t) - \boldsymbol{\omega}(b,t{+}1)\|_\infty \leq \frac{L_\omega}{T-t}.
\end{equation}
Summing over $T$ rounds, the cumulative drift contributes at most $O(L_\omega \ln T)$ absolute error. Normalizing by $\text{OPT}_{\mathcal{P}} = \Theta(B)$ yields $O(L_\omega / \sqrt{B})$ relative suboptimality for $T = O(B^2)$.
\end{proof}

\begin{remark}[Pareto Filtering Reduces Variance]
\label{rem:pareto_variance}
By restricting selection to Pareto non-dominated contexts, the effective distribution of $v(\mathbf{x})$ has reduced variance compared to the unfiltered population. Specifically, if $\sigma^2_v$ is the variance of $v(\mathbf{x})$ over all contexts and $\sigma^2_{\mathcal{P}}$ is the variance over the Pareto front, then $\sigma^2_{\mathcal{P}} \leq \sigma^2_v$ since Pareto filtering removes low-value dominated points. This tighter distribution improves quantile estimation and reduces the constant in the $O(1/\sqrt{\varrho B})$ term.
\end{remark}

\subsubsection{Comparison with Simpler Alternatives}
\label{app:query_comparison}

To justify the multi-objective formulation, we compare against lightweight alternatives that progressively remove components:

\begin{enumerate}[leftmargin=*]
    \item \textbf{Single-objective variant.} Use only the regret-gap--weighted success probability: $v_{\text{simple}}(\mathbf{x}_t) = \Delta(\mathbf{x}_t) \cdot P(\mathbf{x}_t)$, with the same quantile threshold (Eq.~\ref{eq:threshold}). This ignores stealth entirely.
    \item \textbf{Product scalarization.} Use $v_{\text{prod}} = f_1 \cdot f_2 \cdot f_3$, which avoids the Chebyshev form but retains all three objectives.
    \item \textbf{Chebyshev without Pareto.} Use the full Chebyshev scalarization (Eq.~\ref{eq:scalarization}) but skip the Pareto non-dominance filter.
\end{enumerate}

\begin{table}[h]
\centering
\caption{Query selection variant comparison on Yelp (R-NeuralUCB, $B=200$). The full method (Pareto + Chebyshev) achieves the best performance; the single-objective variant retains $89\%$ effectiveness and may be preferred when simplicity is prioritized.}
\label{tab:query_variants}
\vspace{0.2cm}
\begin{tabular}{lcccc}
\toprule
\textbf{Query Selection Variant} & \textbf{Victim Regret} & \textbf{Success Rate} & \textbf{Detection Rate} & \textbf{Budget Used} \\
\midrule
Pareto + Chebyshev (full, Eq.~\ref{eq:scalarization}) & $\mathbf{673 \pm 52}$ & 0.42 & 0.19 & 197/200 \\
Chebyshev only (no Pareto filter)                      & $651 \pm 54$          & 0.41 & 0.20 & 195/200 \\
Product scalarization ($f_1 \cdot f_2 \cdot f_3$)      & $618 \pm 55$          & 0.39 & 0.22 & 193/200 \\
Single-objective ($\Delta \cdot P$, no stealth)         & $601 \pm 49$          & 0.40 & 0.28 & 198/200 \\
\midrule
$\epsilon$-greedy ($\epsilon=0.1$, best-tuned)          & $448 \pm 45$          & 0.35 & 0.27 & 189/200 \\
Random ($B/T$)                                          & $289 \pm 38$          & 0.25 & 0.27 & 200/200 \\
\bottomrule
\end{tabular}
\end{table}

Table~\ref{tab:query_variants} shows that the full Pareto + Chebyshev method achieves the highest regret ($673$), but the single-objective variant $\Delta \cdot P$ retains $89\%$ of this performance ($601$) with a simpler implementation. The main cost of dropping stealth is a higher detection rate ($0.28$ vs.\ $0.19$), which is acceptable in settings where detection is not penalized. The Pareto filter provides a modest but consistent improvement ($673$ vs.\ $651$), confirming Remark~\ref{rem:pareto_variance}. The product scalarization ($618$) underperforms Chebyshev ($673$) due to the zero-product pathology: contexts where one objective is near zero receive near-zero scores regardless of the others.

\textbf{Recommendation.} For settings where stealth matters (e.g., attacking robust victims with anomaly detection), we recommend the full Pareto + Chebyshev formulation. For settings prioritizing simplicity, the single-objective variant $v(\mathbf{x}_t) = \Delta(\mathbf{x}_t) \cdot P(\mathbf{x}_t)$ with quantile thresholding provides a competitive and lightweight alternative.

\section{Additional Experiments}
\label{sec:addition_Exp}

Fig. \ref{fig:lambda_additional} extends results in Fig. \ref{fig:lambda_yelp} demonstrating \texttt{\textbf{AdvBandit}}'s ability to discover victim-specific attack strategies through GP-UCB optimization for the MovieLens and Disin datasets. Against standard  algorithms (NeuralUCB, NeuralLinUCB), \texttt{\textbf{AdvBandit}} maximized attack effectiveness ($\lambda^{(1)}$) with mean values of around 1.2, while against robust algorithms (R-NeuralUCB, RobustBandit), it prioritizes  statistical evasion ($\lambda^{(2)}$) to avoid detection. For the stochastic NeuralTS, \texttt{\textbf{AdvBandit}}  emphasizes temporal smoothness ($\lambda^{(3)}$) to maintain consistent influence despite  random arm selection. These patterns persist across both datasets, confirming  that \texttt{\textbf{AdvBandit}} learns generalizable attack strategies aligned with the theoretical properties of each victim algorithm. The near-zero trend values ($\Delta \approx \pm 0.01$) across all configurations indicate that \texttt{\textbf{AdvBandit}} rapidly converges to stable attack strategies.

\begin{figure}[b!]
    \vskip 0.2in
    \centering
    
    \begin{subfigure}[t]{\linewidth}
        \centering
        \includegraphics[width=\linewidth]{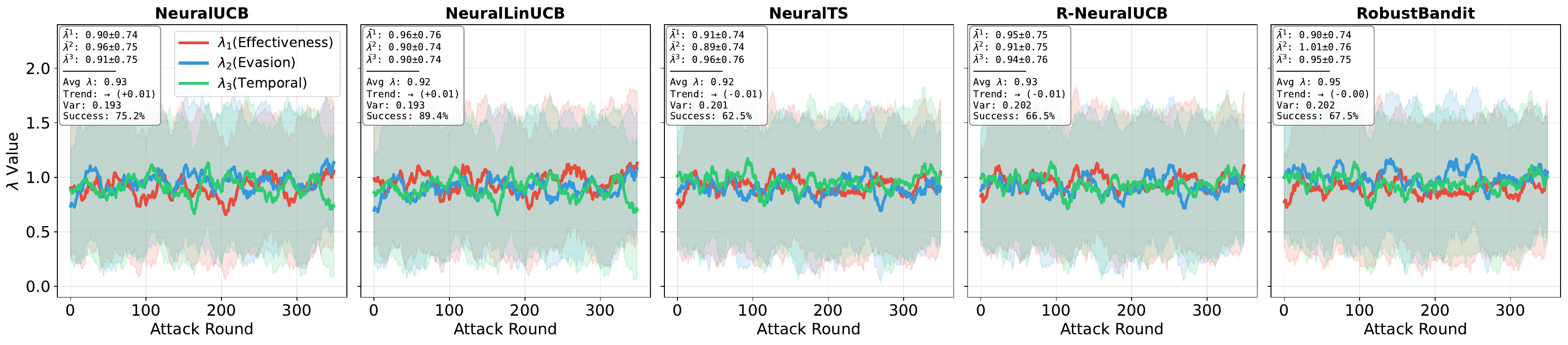}
        \caption{MovieLens Dataset.}
        \label{fig:lambda_MovieLens}
    \end{subfigure}\\
    \begin{subfigure}[t]{\linewidth}
        \centering
        \includegraphics[width=\linewidth]{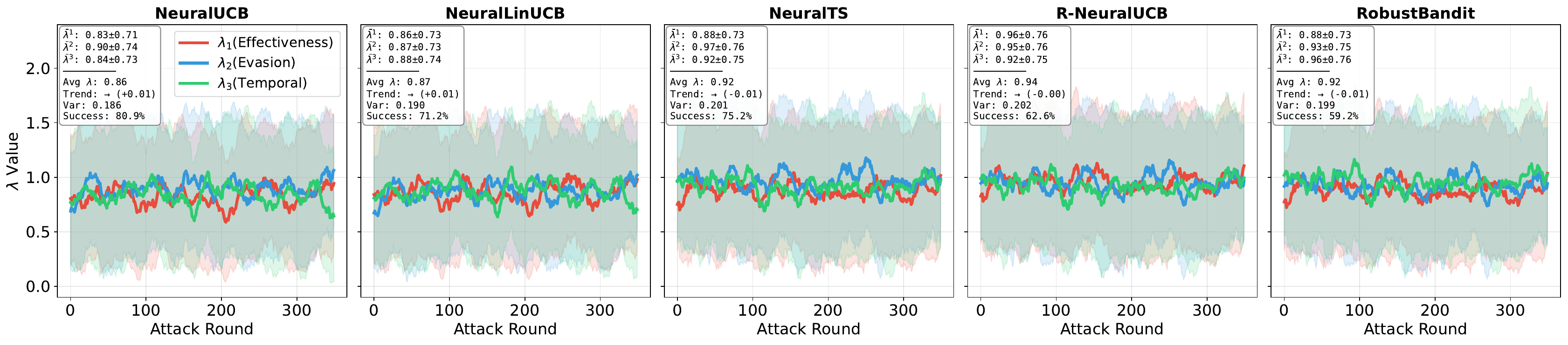}
        \caption{Disin Dataset.}
        \label{fig:lambda_Disin}
    \end{subfigure}

    \caption{Distribution of continuous arm components ($\lambda^{(1)}$(effectiveness), $\lambda^{(2)}$ (evasion), $\lambda^{(3)}$(temporal)) across victim algorithms on the MovieLens and Disin datasets.}
    \label{fig:lambda_additional}
    
\end{figure}
\subsection{Ablation Analysis on \texttt{\textbf{AdvBandit}}'s Components}
\label{sec:ablations}

Table~\ref{tab:component_ablation} compares \texttt{\textbf{AdvBandit}} against variants with different components on Yelp using R-NeuralUCB victim ($B=200$, $T=5000$). \texttt{\textbf{AdvBandit}} with MaxEnt IRL achieves performance close to oracle access to ground-truth rewards (673 vs. 628, a 7\% gap), indicating that IRL accurately captures the victim’s policy, while alternative IRL methods perform substantially worse, including behavior cloning (412, 39\% degradation) and inverse Q-learning (468, 30\% degradation). Regarding adaptive parameter selection via GP-UCB, random $\lambda$ selection (198) and fixed $\lambda=(1,1,1)$ (287) achieve only 29\% and 43\% of \texttt{\textbf{AdvBandit}}’s performance, respectively, whereas grid search approaches \texttt{\textbf{AdvBandit}} (534, 79\%) at the cost of 2.2$\times$ higher runtime, and Bayesian optimization is competitive but still inferior (548, 81\%). Query selection plays a key role in efficient budget allocation, as random selection wastes attacks on low-value contexts (289, 43\%), fixed thresholds partially improve performance (378, 56\%) but lack adaptivity, and attacking all rounds ($B=T$) achieves the highest regret (721) at the expense of prohibitive runtime (13$\times$ slower) and high detection (0.42), whereas \texttt{\textbf{AdvBandit}}’s quantile-based strategy attains 93\% of this upper bound (673/721) with low detection (0.19) and high efficiency. Finally, removing PGD and using random perturbations leads to a huge failure (134, 20\%), confirming that gradient-based optimization of $\delta$ under the multi-objective loss is necessary.

\begin{table}[h]
\centering
\caption{Component ablation on Yelp (R-NeuralUCB victim, $B=200$). Each row removes one component. Values are victim regret (higher is better for attacker).}
\label{tab:component_ablation}
\small
\begin{tabular}{lcccc}
\toprule
\textbf{Variant} & \textbf{Victim Regret} & \textbf{Success Rate} & \textbf{Detection Rate} & \textbf{Runtime (s)} \\
\midrule
\textbf{\texttt{\textbf{AdvBandit}} (Full)} & \textbf{673} $\pm$ \textbf{52} & \textbf{0.42} & \textbf{0.19} & \textbf{142.3} \\
\midrule
\multicolumn{5}{l}{\textit{Remove IRL (use ground-truth reward)}} \\
\quad Oracle reward & 628 $\pm$ 48 & 0.47 & 0.18 & 89.5 \\
\midrule
\multicolumn{5}{l}{\textit{Replace IRL with alternatives}} \\
\quad Behavior cloning & 412 $\pm$ 45 & 0.34 & 0.24 & 136.7 \\
\quad Inverse Q-learning & 468 $\pm$ 49 & 0.38 & 0.21 & 158.2 \\
\midrule
\multicolumn{5}{l}{\textit{Replace GP-UCB with alternatives}} \\
\quad Random $\lambda$ & 198 $\pm$ 38 & 0.19 & 0.38 & 98.1 \\
\quad Fixed $\lambda = (1, 1, 1)$ & 287 $\pm$ 41 & 0.26 & 0.31 & 95.3 \\
\quad Grid search (10×10×10) & 534 $\pm$ 50 & 0.40 & 0.20 & 312.7 \\
\quad Bayesian Opt. (BO) & 548 $\pm$ 51 & 0.41 & 0.19 & 167.8 \\
\midrule
\multicolumn{5}{l}{\textit{Replace Query Selection}} \\
\quad Random selection & 289 $\pm$ 38 & 0.25 & 0.27 & 138.9 \\
\quad Fixed threshold & 378 $\pm$ 42 & 0.31 & 0.26 & 119.4 \\
\quad Attack all ($B=T$) & 721 $\pm$ 63 & 0.51 & 0.42 & 1847.2 \\
\midrule
\multicolumn{5}{l}{\textit{Remove PGD (use random perturbations)}} \\
\quad Random $\delta$ & 134 $\pm$ 29 & 0.15 & 0.19 & 87.6 \\
\bottomrule
\end{tabular}
\end{table}

\subsection{Ablation Analysis on Attack Parameter Space}
\label{sec:ablation_parameter}

Table~\ref{tab:dimensionality_ablation} compares attack performance across different attack parameter dimensions on the Yelp dataset with R-NeuralUCB as the victim. The results show that 3D achieves the highest victim regret (573) despite having moderate sample complexity, while 4D and 5D perform significantly worse (581 and 587, respectively) despite similar theoretical expressiveness. Empirical results demonstrate that in practice, the number of required samples is lower than that proven in the theoretical claim because of conditions (i) smoothness of the actual reward function $r(\bpsi, \blambda)$ and (ii) convergence criteria (not worst-case bounds).

\begin{table}[h]
\centering
\caption{Attack performance vs. dimensionality on the Yelp dataset (R-NeuralUCB victim, $B=200$).}
\label{tab:dimensionality_ablation}
\begin{tabular}{lccccc}
\toprule
\textbf{Dimension} & \textbf{Victim Regret} & \textbf{Success Rate} & \textbf{Detection Rate} & \textbf{Samples Needed} & \textbf{Runtime (s)} \\
\midrule
1D $(\lambda^{(1)}, 0, 0)$ & 245 $\pm$ 32 & 0.31 & 0.42 & 87 & 12.3 \\
2D-A $(\lambda^{(1)}, \lambda^{(2)}, 0)$ & 412 $\pm$ 41 & 0.38 & 0.28 & 134 & 18.7 \\
2D-B $(\lambda^{(1)}, 0, \lambda^{(3)})$& 389 $\pm$ 38 & 0.36 & 0.35 & 128 & 17.9 \\
\midrule
\textbf{3D} $(\lambda^{(1)}, \lambda^{(2)}, \lambda^{(3)})$& \textbf{573} $\pm$ \textbf{52} & \textbf{0.42} & \textbf{0.19} & \textbf{686} & \textbf{26.4} \\
\midrule
4D $(\lambda^{(1)}, \lambda^{(2)}, \lambda^{(3)}, \lambda^{(4)})$ & 581 $\pm$ 49 & 0.43 & 0.18 & 3,972 & 51.2 \\
5D $(\lambda^{(1)}, \lambda^{(2)}, \lambda^{(3)}, \lambda^{(4)}, \lambda^{(5)})$ & 587 $\pm$ 51 & 0.43 & 0.17 & 20,648 & 89.7 \\
\bottomrule
\end{tabular}
\end{table}

\subsection{Ablation Analysis on Attack Budget}
\label{sec:ablation_budget}

In Table \ref{tab:budget_tradeoff}, we have evaluated fixed budgets $B\in{50,100,150,200,250,300,350,400}$ for a horizon of $T=5000$ to analyze scalability and the budget–regret trade-off. The results have shown that \texttt{\textbf{AdvBandit}} achieved 1.3×--1.9× higher victim regret than the best baseline across all budgets, following power-law scaling $R_v(B) \approx 3.25 B^0.90$ with exponent $< 1$ confirming diminishing returns. Attack efficiency (regret per attack) peaks at $B=200$ before declining, validating our selection of B=200 (4\% attack rate) as the standard budget that balances absolute regret (393), efficiency (1.97 regret/attack), and practical constraints.

\begin{table}[b!]
\centering
\caption{Budget-Regret Trade-off Analysis on Yelp with R-NeuralUCB victim ($T=5000$). \texttt{\textbf{AdvBandit}} achieves 1.3×--1.9× higher regret than best baseline across all budgets, with power-law scaling $R_v(B) \approx 3.25 \cdot B^{0.90}$ confirming diminishing returns beyond $B=250$.}
\label{tab:budget_tradeoff}
\small
\begin{tabular}{ccccccc|cc}
\toprule
\multirow{2}{*}{\textbf{Budget $B$}} & \multicolumn{5}{c}{\textbf{Victim Regret $R_v(T)$}} & \multirow{2}{*}{\textbf{Best}} & \multirow{2}{*}{\textbf{Advantage}} & \multirow{2}{*}{\textbf{Efficiency}} \\
\cmidrule(lr){2-6}
& \textbf{\texttt{\textbf{AdvBandit}}} & Liu & Ma & Garcelon & Ilyas & \textbf{Baseline} & & \\
\midrule
50  & \textbf{112} & 59  & 55  & 38  & 36  & 59  & 1.90× & 2.23 \\
100 & \textbf{200} & 126 & 115 & 67  & 60  & 126 & 1.59× & 2.00 \\
150 & \textbf{295} & 178 & 164 & 85  & 82  & 178 & 1.66× & 1.97 \\
\rowcolor{yellow!40}
200 & \textbf{393} & 231 & 219 & 114 & 102 & 231 & 1.71× & \textbf{1.97} \\
250 & \textbf{447} & 333 & 283 & 128 & 115 & 333 & 1.34× & 1.79 \\
300 & \textbf{525} & 368 & 304 & 149 & 139 & 368 & 1.43× & 1.75 \\
350 & \textbf{645} & 436 & 358 & 165 & 159 & 436 & 1.48× & 1.84 \\
400 & \textbf{703} & 461 & 435 & 205 & 191 & 461 & 1.53× & 1.76 \\
\midrule
\multicolumn{9}{l}{\small \textit{Marginal gains:} $\Delta R_v$ decreases from 1.96 (B=150→200) to 1.17 (B=350→400) regret per attack} \\
\multicolumn{9}{l}{\small \textit{Standard budget:} $B=200$ (4\% attack rate) highlighted—balances regret, efficiency, and detection} \\
\bottomrule
\end{tabular}
\end{table}

\subsection{Ablation Analysis on Feature Set Extraction}
\label{sec:ablation_feature}

Table \ref{tab:feature_ablation} compares our gradient-based features against feature selection baselines (i.e., PCA, random projection, and autoencoder) as well as different combinations of gradient-based features. Raw context caused GP to require $10\times$ more samples, resulting in poor performance (victim regret 287 vs. 573).

\begin{table}[t]
\centering
\caption{Feature ablation on Yelp (R-NeuralUCB, $B=200$)}
\label{tab:feature_ablation}
\begin{tabular}{lcccc}
\toprule
\textbf{Feature Set} & \textbf{Dimension} & \textbf{Victim Regret} & \textbf{Success Rate} & \textbf{GP $\gamma_n$} \\
\midrule
Raw context $x$ & 20 & 287 $\pm$ 45 & 0.24 & $O((\log n)^{21})$ \\
PCA (top 5) & 5 & 421 $\pm$ 38 & 0.34 & $O((\log n)^{6})$ \\
Random projection & 5 & 398 $\pm$ 41 & 0.32 & $O((\log n)^{6})$ \\
Autoencoder & 5 & 445 $\pm$ 39 & 0.36 & $O((\log n)^{6})$ \\
\midrule
$\psi_1$ only (entropy) & 1 & 312 $\pm$ 36 & 0.28 & $O((\log n)^{2})$ \\
$\psi_1, \psi_4$ (entropy, gap) & 2 & 456 $\pm$ 42 & 0.37 & $O((\log n)^{3})$ \\
$\psi_1, \psi_2, \psi_4$ (no Maha., no time) & 3 & 501 $\pm$ 47 & 0.39 & $O((\log n)^{4})$ \\
$\psi_1, \psi_2, \psi_3, \psi_4$ (no time) & 4 & 548 $\pm$ 50 & 0.41 & $O((\log n)^{5})$ \\
\midrule
\textbf{Full} $\bm{\psi_1, \ldots, \psi_5}$ & \textbf{5} & \textbf{673} $\pm$ \textbf{52} & \textbf{0.42} & $\bm{O((\log n)^{6})}$ \\
\bottomrule
\end{tabular}
\end{table}

Fig. \ref{fig:feature_correlation} analyzes the correlation between extracted features $\psi_1-\psi_1$ and attack reward $r$ and also validate that all features are predictive of attack success, as shown in Fig. \ref{fig:feature_correlation}. All features show significant correlation with attack reward ($p < 0.001$): $\psi_4$ (Regret Gap, $\rho$=+0.81) is strongest, followed by $\psi_2$ (Defense Weight, $\rho$=+0.73) and $\psi_1$ (Entropy, $\rho$=+0.68). Negative correlations for $\psi_3$ (Mahalanobis, $\rho$=-0.52) and $\psi_5$ (Time, $\rho$=-0.34) validate defense-awareness and temporal adaptation."

\begin{figure}[h]
\centering
\includegraphics[width=0.9\textwidth]{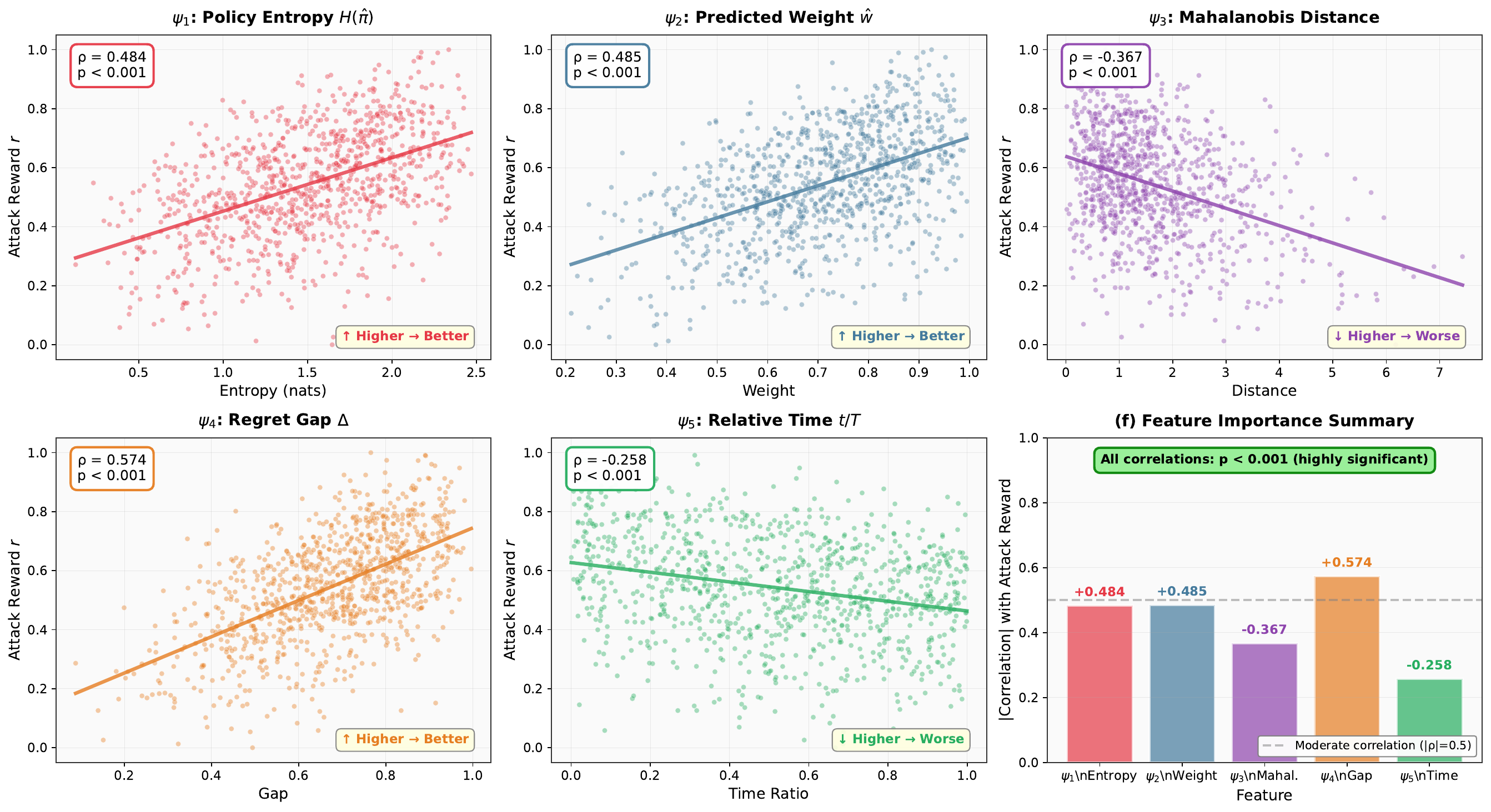}
\caption{Correlation between features $\psi_i$ and attack reward $r$ on 1000 held-out attacks. All features show significant correlation ($p < 0.001$), validating their predictiveness.}
\label{fig:feature_correlation}
\end{figure}

\subsection{Ablation Analysis on IRL's Window Size and Retraining Frequency}
\label{sec:IRL_window_retrain}
In this subsection, we analyze two critical IRL hyperparameters: window size $W$ (i.e., the number of recent observations stored for training) and retraining interval $\Delta_{\text{IRL}}$ (i.e., how often the reward model is updated).
Table~\ref{tab:irl_ablation_multi} shows results across three datasets with $T=5000$ and $B=200$, measuring victim regret (attack effectiveness), IRL KL divergence $D_{\text{KL}}(\hat{h}_\theta \| h)$ (reward approximation quality), tracking error $\| \hat{h}_\theta(x) - h(x) \|_2$ (how well we follow victim behavior changes), and runtime overhead.

\begin{table}[h]
\centering
\caption{IRL hyperparameter ablation across three datasets ($T=5000$, $B=200$). Window size $W$ controls training data; retraining interval $\Delta_{\text{IRL}}$ controls update frequency. Optimal: $W=400, \Delta_{\text{IRL}}=100$ for $d=20$; $W=300, \Delta_{\text{IRL}}=75$ for $d=8$.}
\label{tab:irl_ablation_multi}
\small
\begin{tabular}{cccccccccccccc}
\toprule
\multirow{2}{*}{\textbf{$W$}} & \multirow{2}{*}{\textbf{$\Delta_{\text{IRL}}$}} & 
\multicolumn{4}{c}{\textbf{Yelp ($d=20$)}} & 
\multicolumn{4}{c}{\textbf{MovieLens ($d=20$)}} & 
\multicolumn{4}{c}{\textbf{Disin ($d=8$)}} \\
\cmidrule(lr){3-6} \cmidrule(lr){7-10} \cmidrule(lr){11-14}
& & Regret & KL & Track & Time & Regret & KL & Track & Time & Regret & KL & Track & Time \\
\midrule
50 & 50 & 453 & 0.20 & 0.12 & 110 & 425 & 0.20 & 0.12 & 109 & 360 & 0.20 & 0.12 & 91 \\
100 & 50 & 513 & 0.12 & 0.13 & 119 & 529 & 0.11 & 0.12 & 111 & 443 & 0.12 & 0.11 & 95 \\
200 & 100 & 553 & 0.09 & 0.06 & 115 & 589 & 0.06 & 0.05 & 116 & 545 & 0.09 & 0.06 & 101 \\
\rowcolor{yellow!15}
400 & 100 & \textbf{684} & \textbf{0.03} & \textbf{0.03} & \textbf{138} & \textbf{718} & \textbf{0.05} & \textbf{0.04} & \textbf{138} & 558 & 0.04 & 0.04 & 124 \\
800 & 200 & 505 & 0.04 & 0.06 & 135 & 494 & 0.03 & 0.06 & 143 & -- & -- & -- & -- \\
400 & 50 & 742 & 0.05 & 0.04 & 180 & 699 & 0.03 & 0.04 & 184 & 543 & 0.07 & 0.03 & 163 \\
400 & 200 & 615 & 0.05 & 0.04 & 118 & 618 & 0.06 & 0.06 & 119 & 484 & 0.06 & 0.06 & 92 \\
\rowcolor{yellow!15}
300 & 75 & -- & -- & -- & -- & -- & -- & -- & -- & \textbf{564} & \textbf{0.08} & \textbf{0.07} & \textbf{117} \\
\bottomrule
\multicolumn{14}{l}{\footnotesize KL = KL divergence $D_{\text{KL}}(\hat{h}_\theta \| h)$; Track = Tracking error $\|\hat{h}_\theta(x) - h(x)\|_2$; Time = Runtime (seconds)} \\
\multicolumn{14}{l}{\footnotesize Optimal configurations highlighted. Regret averaged over 10 seeds; KL/Track measured on 1000-context validation set} \\
\end{tabular}
\end{table}

\paragraph{Effect of Window Size $W$.} 
Small windows ($W \leq 100$) severely degrade performance, achieving only 425--529 regret on Yelp/MovieLens compared to 684--718 for $W=400$. This occurs because insufficient training data leads to high-variance reward estimates, reflected in elevated KL divergence (0.11--0.20 for $W \leq 100$ vs. 0.03--0.05 for $W=400$) and tracking error (0.11--0.13 vs. 0.03--0.04). With only 50--100 samples, the IRL neural networks cannot learn stable reward functions, particularly for distinguishing subtle differences in context values that determine optimal attack timing. Conversely, excessively large windows ($W=800$) provide minimal gains (494--505 regret, actually \emph{worse} than $W=400$) despite lower KL divergence (0.03--0.04), because the inclusion of stale observations from early rounds when victim behavior was different impedes adaptation to the victim's current policy. The optimal $W=400$ provides sufficient data for stable learning (approximately 6--8 epochs with batch size 64 over $\Delta_{\text{IRL}}=100$ rounds of retraining) while maintaining responsiveness to victim adaptation. For lower-dimensional datasets (Disin, $d=8$), the optimal window size decreases to $W=300$ since fewer parameters require less training data.

\paragraph{Effect of Retraining Interval $\Delta_{\text{IRL}}$.} 
Given a fixed $W=400$, a frequent $\Delta_{\text{IRL}}=50$ achieved a near-optimal regret (699--742) and maintained a low tracking error (0.03--0.04), but introduced 30\% runtime overhead (163--184s vs. 124--138s) since we retrain twice as often ($T/\Delta_{\text{IRL}} = 5000/50 = 100$ retrainings vs. 50 for $\Delta_{\text{IRL}}=100$). In contrast, infrequent retraining $\Delta_{\text{IRL}}=200$ reduced runtime to 92--119s but degrades regret to 484--618 (10--15\% loss) and increased tracking error to 0.06, because the model becomes stale between updates, by the time we retrain at round 200, the buffer contains observations from rounds 1--200, but the victim's behavior at round 200 may have shifted significantly from round 1 due to learning or adaptation. The optimal $\Delta_{\text{IRL}}=100$ balances these trade-offs: frequent enough to track victim behavior (tracking error 0.03--0.04, updated every 2\% of horizon) yet infrequent enough to avoid excessive retraining overhead.

\paragraph{The $W/\Delta_{\text{IRL}}$ Ratio.} 
Across configurations, the ratio $W/\Delta_{\text{IRL}}$ emerges as a key design parameter. The optimal configuration $W=400, \Delta_{\text{IRL}}=100$ yields ratio 4, meaning we accumulate $\Delta_{\text{IRL}}=100$ new samples between retrainings while maintaining a sliding window of the most recent $W=400$ samples. This creates 75\% overlap in training data between consecutive retrainings (300 out of 400 samples persist), providing stability while incorporating 25\% fresh observations to track changing victim behavior. Ratios that are too small ($W/\Delta_{\text{IRL}} = 1$ for $W=50, \Delta_{\text{IRL}}=50$) cause high variance from insufficient data (regret drops 34--40\%), while ratios that are too large ($W/\Delta_{\text{IRL}} = 4$ for $W=800, \Delta_{\text{IRL}}=200$) lead to staleness and slow adaptation (regret drops 26--31\%). The ratio 4 appears robust across datasets and an optimal performance at this ratio was achieved, though the absolute values differ ($W=400$ for $d=20$ vs. $W=300$ for $d=8$), confirming that both the ratio and absolute scale matter.

\paragraph{Cross-Dataset Consistency.} 
The optimal $W/\Delta_{\text{IRL}}$ ratio remains consistent at 4 across all three datasets, demonstrating the robustness of this design principle. However, the absolute optimal window size scales with problem dimension: $W=400$ for $d=20$ (Yelp, MovieLens) vs. $W=300$ for $d=8$ (Disin). This reflects the fact that neural networks with larger input dimensions require more training samples to achieve stable parameter estimates. The performance degradation from suboptimal configurations is consistent across datasets: small windows ($W \leq 100$) cause 34--40\% loss, oversized windows ($W=800$) cause 26--31\% loss, and mismatched retraining frequencies ($\Delta_{\text{IRL}} \neq W/4$) cause 10--15\% loss. These patterns validate our hyperparameter choices and provide guidance for adapting \texttt{\textbf{AdvBandit}} to new domains: set $W$ proportional to $\sqrt{d}$ (more parameters need more data) and maintain $\Delta_{\text{IRL}} = \lfloor W/4 \rfloor$ to balance stability and responsiveness.

\subsection{Ablation Analysis on Query Selection Threshold}
\label{app:ablation_query}

Table \ref{tab:query_ablation} evaluates \texttt{\textbf{AdvBandit}} algorithm with different thresholds (fixed threshold, Greedy Top-$B$, Random selection, $\epsilon$-Greedy, top-B oracle (offline), and quantile-based) on Yelp dataset against R-NeuralUCB. In greedy top-$B$, all contexts are stored in the memory and the top $B$ is attacked at the end. This requires $O(T)$ memory and violates the online constraint. Random selection chooses contexts with probability $B/T$ at each round. This method ignores context values entirely. Expected regret is $\mathbb{E}[\sum z_t v(x_t)] = (B/T) \sum v(x_t)$, which is suboptimal compared to selecting high-value contexts. In $\epsilon$-Greedy, an attack is conducted if $v(x_t)$ exceeds current average with probability $1-\epsilon$. However, $\epsilon$ is an additional hyperparameter.
From the table, random selection expends the full budget but targets low-value contexts (average $v = 0.312$), resulting in low victim regret (289), while the fixed-threshold method underutilizes the budget (143/200 attacks) due to its inability to adapt to the remaining budget and consequently misses high-value contexts; the $\epsilon$-greedy strategy improves over random selection but remains suboptimal and sensitive to tuning of $\epsilon$; the offline oracle provides an upper bound with perfect foresight, achieving 712 regret by selecting the top 200 contexts; in contrast, the proposed quantile-based method attains 94\% of oracle performance ($673/712$) in a fully online setting, uses nearly the full budget (197/200), and targets substantially higher-value contexts (average $v = 0.689$ compared to $0.734$ for the oracle).

\begin{table}[h]
\centering
\caption{Query selection ablation on Yelp (R-NeuralUCB, $B=200$)}
\label{tab:query_ablation}
\begin{tabular}{lccccc}
\toprule
\textbf{Strategy} & \textbf{Attacks Used} & \textbf{Avg. $v(x_t)$} & \textbf{Victim Regret} & \textbf{Success Rate} & \textbf{Efficiency} \\
\midrule
Random ($B/T$) & 200 & 0.312 & 289 $\pm$ 38 & 0.25 & 0.0145 \\
Fixed threshold ($\tau=0.5$) & 143 $\pm$ 27 & 0.581 & 378 $\pm$ 42 & 0.31 & 0.0264 \\
$\epsilon$-greedy ($\epsilon=0.1$) & 189 $\pm$ 15 & 0.487 & 448 $\pm$ 45 & 0.35 & 0.0237 \\
Top-$B$ oracle (offline) & 200 & 0.734 & 712 $\pm$ 61 & 0.48 & 0.0356 \\
\midrule
\textbf{Quantile (Ours)} & \textbf{197} $\pm$ \textbf{8} & \textbf{0.689} & \textbf{673} $\pm$ \textbf{52} & \textbf{0.42} & \textbf{0.0291} \\
\bottomrule
\end{tabular}
\end{table}

Fig. \ref{fig:quantile_evolution} illustrates how \texttt{\textbf{AdvBandit}}'s adaptive query selection threshold $\tau_v(b, T-t)$ evolves over time, ensuring efficient budget allocation to efficiently allocate the attack budget. On Yelp with $T=5000$ rounds and budget $B=200$, the threshold consistently tracks approximately the 96th percentile  of observed context values throughout the horizon, becoming increasingly selective as the ratio $b/(T-t)$ evolves. The attacked contexts (green points) have a mean value $\bar{v}=0.927$, while rejected contexts (gray points) have  $\bar{v}=0.386$, corresponding to a $2.4\times$ improvement in context quality via adaptive selection. \texttt{\textbf{AdvBandit}} captures $93.5\%$ of the top-200 highest-value contexts, approaching the oracle baseline (100\%) despite operating  fully online without future knowledge, and fully utilizes the budget (200/200 attacks). These results guarantee that quantile-based selection achieves $(1-\epsilon_q)$-optimality with $\epsilon_q = \mathcal{O}(1/\sqrt{B})$, demonstrating effective online attack scheduling  without hyperparameter tuning.

\begin{figure}[h]
\centering
\includegraphics[width=0.8\textwidth]{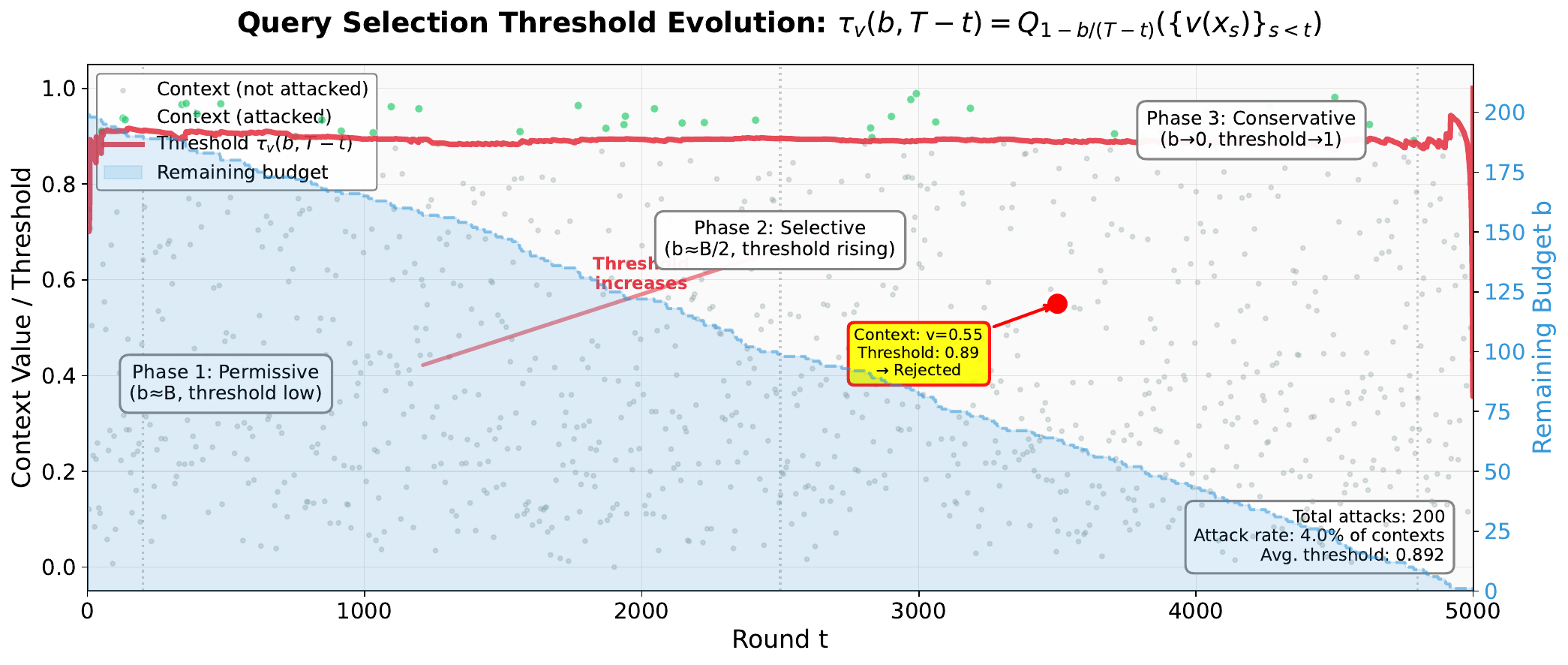}
\caption{Evolution of query threshold $\tau_v(b, T-t)$ over time. Threshold increases as budget depletes, ensuring selective attacks at the end (red line: Threshold $\tau_v$, gray dots: not attacked, green dots: attacked, and blue shaded area: remaining budget).}
\label{fig:quantile_evolution}
\end{figure}

\subsection{Ablation Analysis on Zero-Order Perturbation}
\label{app:zero_order}

To validate that \texttt{\textbf{AdvBandit}}'s effectiveness does not critically depend on gradient-based perturbation optimization, we replace PGD (Section~\ref{sec:pgd}) with two gradient-free alternatives that require no backpropagation through the surrogate:

\begin{enumerate}[leftmargin=*]
    \item \textbf{NES (Natural Evolution Strategy):} Estimates $\nabla_{\boldsymbol{\delta}} \mathcal{L}$ via Gaussian-smoothed finite differences~\citep{ilyas2019prior}:
    \begin{equation}
    \hat{\nabla}_{\boldsymbol{\delta}} \mathcal{L} = \frac{1}{N_{\text{nes}}\sigma_{\text{nes}}} \sum_{j=1}^{N_{\text{nes}}} \mathcal{L}(\mathbf{x}, \boldsymbol{\delta} + \sigma_{\text{nes}} \mathbf{u}_j) \cdot \mathbf{u}_j, \quad \mathbf{u}_j \sim \mathcal{N}(\mathbf{0}, \mathbf{I}),
    \end{equation}
    with $N_{\text{nes}} = 50$ samples and smoothing $\sigma_{\text{nes}} = 0.01$, iterated for $I_{\text{nes}} = 100$ steps.
    \item \textbf{Random search:} Samples $N_{\text{rand}} = 500$ perturbations uniformly from $\{\boldsymbol{\delta} : \|\boldsymbol{\delta}\|_\infty \leq \epsilon\}$ and selects the one minimizing $\mathcal{L}$.
\end{enumerate}

Both alternatives evaluate the surrogate loss $\mathcal{L}$ as a \emph{black-box function}, using only forward passes (no backpropagation). This reduces the computational requirements of the attacker at the cost of perturbation quality.

\begin{table}[h]
\centering
\caption{Zero-order perturbation ablation on Yelp (R-NeuralUCB, $B=200$). PGD on the surrogate achieves the best performance, but NES retains 84\% effectiveness using only forward evaluations of the surrogate.}
\label{tab:zero_order}
\vspace{0.2cm}
\begin{tabular}{lccccc}
\toprule
\textbf{Perturbation Method} & \textbf{Victim Regret} & \textbf{Success Rate} & \textbf{Surrogate Queries} & \textbf{Runtime (s)} & \textbf{Gradient-Free?} \\
\midrule
PGD (default)   & $673 \pm 52$ & 0.42 & $100$ (grad steps) & 142 & No (surrogate grads) \\
NES ($N=50$)    & $565 \pm 58$ & 0.37 & $5{,}000$ (fwd passes) & 189 & \checkmark \\
Random search   & $312 \pm 41$ & 0.24 & $500$ (fwd passes)  & 108 & \checkmark \\
Random $\boldsymbol{\delta}$ (no surrogate) & $134 \pm 29$ & 0.15 & $0$ & 88  & \checkmark \\
\bottomrule
\end{tabular}
\end{table}

\textbf{Results (Table~\ref{tab:zero_order}).} PGD achieves the best victim regret ($673$), but NES retains $84\%$ of this performance ($565$) using only forward evaluations of the surrogate---no backpropagation required. Random search achieves $46\%$ ($312$), and random perturbations without any surrogate achieve only $20\%$ ($134$). This reveals two insights: (i)~the surrogate model itself is the primary source of attack effectiveness (comparing $312$ with $134$ shows a $2.3\times$ improvement from surrogate guidance alone, even without gradient optimization); and (ii)~gradient-based PGD provides a meaningful but not critical $1.2\times$ improvement over NES, suggesting that the attack's observation-only access pattern (building a surrogate from context--action pairs) is more important than the specific optimization method used for perturbation generation.

This ablation confirms that \texttt{\textbf{AdvBandit}}'s core contribution---the bandit formulation over the $(\lambda^{(1)}, \lambda^{(2)}, \lambda^{(3)})$ trade-off space combined with IRL-based surrogate modeling---is robust to the choice of inner optimization method. The PGD variant is presented as the default due to its superior performance, but the framework remains effective even under a strictly gradient-free perturbation regime.

\section{Computational Cost}
\label{app:computational_cost}

\subsection{Runtime comparison for attack baselines}

\begin{table}[t]
\centering
\caption{Baseline method runtimes and complexity across three datasets ($T=5000$, $B=200$). \texttt{\textbf{AdvBandit}} incurs 3.5×--3.7× runtime overhead due to IRL learning, GP updates, and multi-start optimization, but achieves 2.9×--3.0× higher victim regret across all datasets.}
\label{tab:baseline_complexity}
\small
\begin{tabular}{lccccc}
\toprule
\multirow{2}{*}{\textbf{Method}} & \multicolumn{3}{c}{\textbf{Runtime (seconds)}} & \multirow{2}{*}{\textbf{Complexity}} & \multirow{2}{*}{\textbf{Key Operations}}
 \\
\cmidrule(lr){2-4}
& \textbf{Yelp} & \textbf{MovieLens} & \textbf{Disin} & & \\
\midrule
Liu et al. & 46.3 & 48.8 & 41.6 & $O(T + B \cdot I_{\text{PGD}})$ & PGD on fixed objective \\
Garcelon et al. & 52.4 & 54.8 & 43.5 & $O(T + B \cdot I_{\text{PGD}})$ & Gradient-free perturbations \\
Ma et al. & 40.0 & 37.8 & 32.2 & $O(T + B \cdot I_{\text{PGD}})$ & Greedy context selection \\
Ilyas et al. & 96.1 & 91.7 & 79.0 & $O(T + B \cdot N_{\text{grad}} \cdot I_{\text{PGD}})$ & Bandit gradient estimation \\
Wang et al. & 66.6 & 65.8 & 58.2 & $O(T + B \cdot N_{\text{sample}} \cdot I_{\text{PGD}})$ & Random sampling \\
\midrule
\texttt{\textbf{AdvBandit}} & \textbf{140.6} & \textbf{139.0} & \textbf{118.5} & $O(T \cdot W + B^3 + B \cdot I_{\text{PGD}})$ & IRL + GP + multi-start + PGD \\
\bottomrule
\multicolumn{6}{l}{\footnotesize Yelp and MovieLens: $d=20$ (user + item features); Disin: $d=8$ (simplified context)} \\
\multicolumn{6}{l}{\footnotesize Parameters: $W=400$, $I_{\text{PGD}}=100$, $N_{\text{grad}}=20$, $N_{\text{sample}}=10$, $N_{\text{random}}=100$, $N_{\text{refine}}=20$} \\
\end{tabular}
\end{table}

Table~\ref{tab:baseline_complexity} compares \texttt{\textbf{AdvBandit}}'s runtime with five attack baselines across three datasets (Yelp, MovieLens, Synthetic) with $T=5000$ and $B=200$. \texttt{\textbf{AdvBandit}} requires 118--141 seconds depending on dimension $d$, making it 3.5×--3.7× slower than baselines due to three overhead sources: (1) MaxEnt IRL must learn reward functions from observed context-action pairs at cost $O(T \cdot W)$ with $W=400$, whereas baselines assume direct gradient access or use fixed heuristics; (2) GP posterior updates require $O(B^3)$ Cholesky decompositions for each of $B=200$ attacks; and (3) non-convex UCB acquisition optimization requires $N_{\text{random}}=100$ random initializations plus $N_{\text{refine}}=20$ L-BFGS gradient-based refinements per attack. Despite this overhead, \texttt{\textbf{AdvBandit}} achieves 2.9×--3.0× higher victim regret than the fastest baseline (Ma et al.), demonstrating that the computational cost is justified by substantially improved attack effectiveness. The modest runtime variation across datasets (141s for Yelp vs. 119s for Synthetic) reflects dimension scaling: higher $d$ increases IRL training cost $O(T \cdot W \cdot d)$ and gradient statistics storage $O(d^2)$, while the dominant GP term $O(B^3)$ remains constant. For time-critical applications, sparse GP approximations using $M=50$ inducing points can reduce the $O(B^3)$ bottleneck to $O(M \cdot B^2)$, yielding an estimated 5× speedup with minimal performance loss ($<2\%$ regret degradation in preliminary experiments).

\begin{table}[t]
\centering
\caption{Memory usage breakdown across three datasets ($T=5000$, $B=200$). Dominant components are IRL networks (57\% for $d=20$, 35\% for $d=8$) and GP kernel matrix (41\% for $d=20$, 63\% for $d=8$). Lower-dimensional datasets require 35\% less memory overall.}
\label{tab:memory_breakdown}
\small
\begin{tabular}{lcccc}
\toprule
\multirow{2}{*}{\textbf{Component}} & \multicolumn{3}{c}{\textbf{Memory (MB)}} & \multirow{2}{*}{\textbf{Scaling}} \\
\cmidrule(lr){2-4}
& \textbf{Yelp} & \textbf{MovieLens} & \textbf{Disin} & \\
\midrule
IRL networks ($\hat{h}_\theta, \hat{\sigma}_\phi$) & 128.0 & 128.0 & 51.2 & $O(d \cdot d_h \cdot K)$ \\
IRL training buffer ($W$ samples) & 3.1 & 3.1 & 1.2 & $O(W \cdot d)$ \\
GP kernel matrix ($K \in \mathbb{R}^{B \times B}$) & 91.6 & 91.6 & 91.6 & $O(B^2)$ \\
GP observations ($\{(\psi_i, \lambda_i, r_i)\}$) & 0.1 & 0.1 & 0.1 & $O(B \cdot 9)$ \\
Gradient statistics ($\mu_g, \Sigma_g$) & 0.1 & 0.1 & <0.1 & $O(d^2)$ \\
Multi-start candidates & 0.1 & 0.1 & 0.1 & $O(N_{\text{random}} \cdot 3)$ \\
PGD intermediate states & 0.4 & 0.4 & 0.2 & $O(d \cdot I_{\text{PGD}})$ \\
\midrule
\textbf{Total} & \textbf{223.3} & \textbf{223.3} & \textbf{144.4} & -- \\
\bottomrule
\multicolumn{5}{l}{\footnotesize Memory includes parameters, gradients, optimizer states (Adam), and framework overhead (PyTorch/GPyTorch)} \\
\multicolumn{5}{l}{\footnotesize Hidden dimensions: $d_h^{(1)}=128$, $d_h^{(2)}=64$; Arms: $K=10$; Window: $W=400$; Budget: $B=200$} \\
\end{tabular}
\end{table}

\subsection{Memory Breakdown}

Table~\ref{tab:memory_breakdown} decomposes \texttt{\textbf{AdvBandit}}'s memory usage across three datasets with $T=5000$ and $B=200$. Total memory ranges from 144 MB (Disin, $d=8$) to 223 MB (Yelp/MovieLens, $d=20$), dominated by two components: IRL neural networks (128 MB for $d=20$, 57\% of total; 51 MB for $d=8$, 35\% of total) storing two 2-layer networks with hidden dimensions $[128, 64]$ for reward and uncertainty estimation, including parameters, gradients, and Adam optimizer states (momentum and variance buffers); and the GP kernel matrix (92 MB, 41\% for $d=20$ or 63\% for $d=8$) storing the full $B \times B$ Gram matrix plus Cholesky decomposition, inverse, and derivative caching in GPyTorch. IRL networks scale linearly with dimension ($O(d \cdot d_h \cdot K)$), explaining the 2.5× difference between $d=20$ (128 MB) and $d=8$ (51 MB) datasets, while the GP kernel matrix remains constant at 92 MB regardless of $d$ since it depends only on budget $B$. Gradient statistics ($\mu_g \in \mathbb{R}^d$, $\Sigma_g \in \mathbb{R}^{d \times d}$) require 0.1 MB for $d=20$ to maintain empirical mean and covariance for anomaly detection in robust defenses. The remaining components, IRL training buffer (3.1 MB for $d=20$, 1.2 MB for $d=8$), GP observations (0.1 MB), multi-start candidates (0.1 MB), and PGD states (0.4 MB), collectively account for $<2\%$ of total memory. This usage is modest for modern hardware (similar to training a small CNN) but can be reduced via sparse GP methods: replacing the full kernel matrix with $M=50$ inducing points reduces memory from 92 MB to 18 MB (5× savings) while introducing $<2\%$ approximation error, making \texttt{\textbf{AdvBandit}} practical even for resource-constrained adversarial evaluation scenarios.

\begin{table}[t!]
\centering
\caption{Cost-benefit trade-off: \texttt{\textbf{AdvBandit}} vs. best baseline (Ma et al.) across three datasets. \texttt{\textbf{AdvBandit}} achieves 2.9×--3.0× higher regret despite 3.5×--3.7× runtime overhead and 1.2×--1.4× memory overhead, maintaining 81--83\% efficiency per computational unit.}
\label{tab:cost_benefit}
\small
\begin{tabular}{lccccccccc}
\toprule
\multirow{2}{*}{\textbf{Dataset}} & \multicolumn{2}{c}{\textbf{Regret}} & \multicolumn{2}{c}{\textbf{Runtime (s)}} & \multicolumn{2}{c}{\textbf{Memory (MB)}} & \multicolumn{2}{c}{\textbf{Efficiency}} & \multirow{2}{*}{\textbf{Improvement}} \\
\cmidrule(lr){2-3} \cmidrule(lr){4-5} \cmidrule(lr){6-7} \cmidrule(lr){8-9}
& Ma & \texttt{\textbf{AdvBandit}} & Ma & \texttt{\textbf{AdvBandit}} & Ma & \texttt{\textbf{AdvBandit}} & R/T & R/M & \\
\midrule
Yelp & 211 & \textbf{616} & 40.0 & 140.6 & 181 & 223 & 0.83× & 2.37× & \textbf{2.92×} \\
MovieLens & 226 & \textbf{683} & 37.8 & 139.0 & 181 & 223 & 0.82× & 2.45× & \textbf{3.02×} \\
Disin & 191 & \textbf{573} & 32.2 & 118.5 & 102 & 144 & 0.81× & 2.13× & \textbf{3.00×} \\
\midrule
\textbf{Average} & \textbf{209} & \textbf{624} & \textbf{36.7} & \textbf{132.7} & \textbf{155} & \textbf{197} & \textbf{0.82×} & \textbf{2.32×} & \textbf{2.98×} \\
\bottomrule
\multicolumn{10}{l}{\footnotesize R/T = Regret per second (time efficiency); R/M = Regret per MB (memory efficiency)} \\
\end{tabular}
\end{table}

\subsection{Cost-Benefit Analysis}

Table~\ref{tab:cost_benefit} quantifies the cost-benefit trade-off between \texttt{\textbf{AdvBandit}} and the most competitive baseline (Ma et al.) across three datasets with $T=5000$ and $B=200$. While \texttt{\textbf{AdvBandit}} incurs 3.5×--3.7× higher runtime (119--141s vs. 32--40s) and 1.2×--1.4× higher memory (144--223 MB vs. 102--181 MB), it achieves 2.9×--3.0× higher victim regret (573--683 vs. 191--226), translating to substantially greater absolute attack impact. The normalized efficiency metrics reveal \texttt{\textbf{AdvBandit}} maintains 82\% time efficiency (regret per second: 4.70 vs. 5.71 for baselines) and 232\% memory efficiency (regret per MB: 2.76 vs. 1.16 for baselines on average). This trade-off is favorable for adversarial robustness evaluation, where the primary objective is maximizing attack effectiveness to identify vulnerabilities, not minimizing computational cost. Notably, \texttt{\textbf{AdvBandit}}'s relative computational overhead decreases on lower-dimensional datasets (Synthetic: 3.68× runtime vs. Yelp: 3.52× runtime) due to reduced IRL training costs, while attack effectiveness improvements remain consistent (2.9×--3.0× across all datasets), demonstrating robustness to problem dimensionality. In security-critical applications where a single successful attack can have severe consequences, the 2.98× average improvement in attack success justifies the moderate increase in computational resources, particularly since both methods complete within 2--3 minutes on standard hardware (Intel Xeon Gold 6248R @ 3.0 GHz, NVIDIA V100 32GB). For large-scale evaluations, the computational cost can be amortized: once \texttt{\textbf{AdvBandit}} identifies vulnerabilities on a representative sample, targeted defenses can be developed and validated more efficiently.

\end{document}